
\documentclass[sts]{imsart}

\usepackage{amsthm}
\usepackage{amsfonts}
\usepackage{amsmath}
\usepackage{latexsym}
\usepackage{bm}
\newcommand{\blind}{0}
\usepackage{lipsum} 
\usepackage{adjustbox}
\usepackage{array} 
\usepackage{booktabs} 
\usepackage{enumerate}
\RequirePackage[numbers]{natbib}
\RequirePackage[colorlinks,citecolor=blue,urlcolor=blue]{hyperref}
\RequirePackage{graphicx}

\usepackage{url} 


\usepackage{float}
\usepackage{hyperref}
\usepackage{color}
\usepackage{xcolor}
\usepackage{comment}
\usepackage{multirow}
\usepackage{enumitem} 
\usepackage{bbm} 
\usepackage[ruled,linesnumbered]{algorithm2e}

\SetCommentSty{mycommfont}
\usepackage[title, titletoc, toc,header]{appendix} 
\usepackage[]{minitoc}
\noptcrule

\usepackage{mathabx} 
\usepackage{tabularx} 
\newtheorem{assump}{Assumption}

\newcommand{\floor}[1]{\left\lfloor #1 \right\rfloor}

\usepackage{xr}
\usepackage{subfiles} 

\startlocaldefs
\theoremstyle{plain}

\newtheorem{theorem}{Theorem}[section]
\newtheorem{proposition}[theorem]{Proposition}
\newtheorem{lemma}[theorem]{Lemma}
\newtheorem{corollary}[theorem]{Corollary}

\theoremstyle{remark}
\newtheorem{definition}[theorem]{Definition}
\newtheorem{remark}[theorem]{Remark}
\newtheorem{example}{Example}

\endlocaldefs




\begin{document}

\newcommand{\E}{{\mathrm E}}
\newcommand{\Var}{{\mathrm{Var}}}
\newcommand{\Cov}{{\mathrm {Cov}}}

\newcommand{\R}{\mathbb R}
\newcommand{\N}{\mathbb N}
\newcommand{\Z}{\mathbb Z}
\newcommand{\F}{\mathbb F}
\newcommand{\Q}{\mathbb Q}
\newcommand{\C}{\mathbb C}

\newcommand{\eps}{\epsilon}
\newcommand{\lam}{\lambda}
\renewcommand{\l}{\ell}
\newcommand{\la}{\langle}
\newcommand{\ra}{\rangle}
\newcommand{\wh}{\widehat}
\newcommand{\wt}{\widetilde}

\newcommand{\calA}{{\cal A}}
\newcommand{\calB}{{\cal B}}
\newcommand{\calC}{{\cal C}}
\newcommand{\calD}{{\cal D}}
\newcommand{\calE}{{\cal E}}
\newcommand{\calF}{{\cal F}}
\newcommand{\calG}{{\cal G}}
\newcommand{\calH}{{\cal H}}
\newcommand{\calI}{{\cal I}}
\newcommand{\calJ}{{\cal J}}
\newcommand{\calK}{{\cal K}}
\newcommand{\calL}{{\cal L}}
\newcommand{\calM}{{\cal M}}
\newcommand{\calN}{{\cal N}}
\newcommand{\calO}{{\cal O}}
\newcommand{\calP}{{\cal P}}
\newcommand{\calQ}{{\cal Q}}
\newcommand{\calR}{{\cal R}}
\newcommand{\calS}{{\cal S}}
\newcommand{\calT}{{\cal T}}
\newcommand{\calU}{{\cal U}}
\newcommand{\calV}{{\cal V}}
\newcommand{\calW}{{\cal W}}
\newcommand{\calX}{{\cal X}}
\newcommand{\calY}{{\cal Y}}
\newcommand{\calZ}{{\cal Z}}

\renewcommand{\a}{{\boldsymbol a}}
\renewcommand{\b}{{\boldsymbol b}}
\renewcommand{\c}{{\boldsymbol c}}
\renewcommand{\d}{{\boldsymbol d}}
\newcommand{\e}{{\boldsymbol e}}
\newcommand{\f}{{\boldsymbol f}}
\newcommand{\g}{{\boldsymbol g}}
\newcommand{\h}{{\boldsymbol h}}
\renewcommand{\i}{{\boldsymbol i}}
\renewcommand{\j}{{\boldsymbol j}}
\renewcommand{\k}{{\boldsymbol k}}
\newcommand{\m}{{\boldsymbol m}}
\newcommand{\n}{{\boldsymbol n}}
\renewcommand{\o}{{\boldsymbol o}}
\newcommand{\p}{{\boldsymbol p}}
\newcommand{\q}{{\boldsymbol q}}
\renewcommand{\r}{{\boldsymbol r}}
\newcommand{\s}{{\boldsymbol s}}
\renewcommand{\t}{{\boldsymbol t}}
\renewcommand{\u}{{\boldsymbol u}}
\renewcommand{\v}{{\boldsymbol v}}
\newcommand{\w}{{\boldsymbol w}}
\newcommand{\x}{{\boldsymbol x}}
\newcommand{\y}{{\boldsymbol y}}
\newcommand{\z}{{\boldsymbol z}}
\newcommand{\A}{{\boldsymbol A}}
\newcommand{\B}{{\boldsymbol B}}
\newcommand{\D}{{\boldsymbol D}}
\newcommand{\G}{{\boldsymbol G}}
\renewcommand{\H}{{\boldsymbol H}}
\newcommand{\I}{{\boldsymbol I}}
\newcommand{\J}{{\boldsymbol J}}
\newcommand{\K}{{\boldsymbol K}}
\renewcommand{\L}{{\boldsymbol L}}
\newcommand{\M}{{\boldsymbol M}}
\renewcommand{\O}{{\boldsymbol O}}
\renewcommand{\S}{{\boldsymbol S}}
\newcommand{\T}{{\boldsymbol T}}
\newcommand{\U}{{\boldsymbol U}}
\newcommand{\V}{{\boldsymbol V}}
\newcommand{\W}{{\boldsymbol W}}
\newcommand{\X}{{\boldsymbol X}}
\newcommand{\Y}{{\boldsymbol Y}}

\newcommand{\poly}{\mathrm{poly}}

\newcommand{\bits}{\{-1,1\}}
\newcommand{\bitsn}{\{-1,1\}^n}
\newcommand{\bn}{\bitsn}
\newcommand{\isafunc}{{: \bitsn \rightarrow \bits}}
\newcommand{\fisafunc}{{f : \bitsn \rightarrow \bits}}

\newcommand{\half}{{\textstyle \frac12}}



\newcommand{\vu}{\hat V_u}
\newcommand{\Skk}{S^{(2k)}}
\newcommand{\hh}{h_{(0)}}
\newcommand{\fk}{F^{(k)}}
\newcommand{\bfk}{\bar{F}^{(k)}}
\newcommand{\unr}{\underline{r}}
\newcommand{\unx}{\underline{x}}
\newcommand{\uny}{\underline{y}}
\newcommand{\unrMarg}{\underline{r}^{*}}
\newcommand{\pdiff}{p_{\underline{x}, \underline{y}, d_1, d_2, c}^{(diff)} }
\newcommand{\covh}{\rho}
\newcommand{\covhh}{\rho(\unr, d_1, d_2)}
\newcommand{\covhhh}{\rho(\unr', d_1, d_2)}
\newcommand{\rhort}{\tilde \rho(\unr)}
\newcommand{\rhor}{\rho(\unr)}
\newcommand{\checksigone}{\widecheck{\sigma}_{1, 2k}^2}
\newcommand{\checksigc}{\widecheck{\sigma}_{c, 2k}^2}
\newcommand{\etacd}{\eta_{c, 2k}^2 (d_1, d_2)}

\newcommand{\phiib}{\hib}
\newcommand{\phibbar}{\bar h_{(b)}}
\newcommand{\phibbarinc}{\tilde h_{(b)}}
\newcommand{\phibar}{U_n}
\newcommand{\phisqbar}{\frac{1}{M\BB} \sum_{b = 1}^{\BB} \sum_{i = 1}^M (h(S_{i}^{(b)}))^2}
\newcommand{\phibsqbar}{\frac{1}{\BB} \sum_{b = 1}^{\BB}  (\bar h_{(b)})^2}
\newcommand{\hib}{h(S_{i}^{(b)})}
\newcommand{\Sib}{S_{i}^{(b)}}

\newcommand{\allassump}{Assumptions \ref{assumption: k = n^(1/2 - epsilon)}-\ref{assumption:4th moment}}
\newcommand{\alterassump}{Assumptions \ref{assumption: k = n^(1/2 - epsilon)}, \ref{assumption:2nd moment}, \ref{assumption:non-negative and ordinal} - \ref{assumption:low dependency (relaxed)}}

\newcommand{\propEstFull}{Matched Sample Variance Estimator}
\newcommand{\propSmoothFull}{Matched Sample Smoothing Variance Estimator}
\newcommand{\rfVarEst}[1]{ {\hat \sigma_{RF}^2 (\bm x^*_{#1}) } }
\newcommand{\rfVarEstS}{ {\overline{\hat \sigma_{RF}^2} (\bm x^*)} }

\newcommand{\propEst}{MS}
\newcommand{\propSmooth}{MS-s}

\newcommand{\varh}{\Var (h(S))}
\newcommand{\xid}{\xi_{d, k}^2}
\newcommand{\xidk}[1]{\xi_{#1, k}^2}
\newcommand{\xidtilde}{\tilde{\xi}_{d, k}^2}
\newcommand{\hatxidtilde}{\hat{\tilde\xi}_{d, k}^2}
\newcommand{\varU}{\Var(U_n)} 
\newcommand{\gammad}{\gamma_{d,k,n}}
\newcommand{\hatgammad}{{\gamma}_{d,B}}
\newcommand{\Uinc}{U_{n, B}}
\newcommand{\varhtilde}{\widetilde{\Var}_B(h(S))}
\newcommand{\varhtildeComp}{\widetilde{\Var}_{B^*}(h(S))}
\newcommand{\Nd}{N_{d,k,n}}
\newcommand{\Gn}{\calG_{n, k, M}}

\newcommand{\hatVarU}{\widehat{\Var}(U_n)}
\newcommand{\vh}{V^{(h)}}
\newcommand{\vs}{V^{(s)}}
\newcommand{\hatvh}{\hat{V}^{(h)}}
\newcommand{\hatvs}{\hat{V}^{(s)}}
\newcommand{\hatvhB}{\hat{V}_{B, M}^{(h)}}
\newcommand{\hatvsB}{\hat{V}_{B, M}^{(s)}}
\newcommand{\UincMatch}{U_{n, B, M}}
\newcommand{\BB}{\mathbbm{B}}
\newcommand{\NN}{\mathbbm{N}}
\newcommand{\hij}{h(S_{i}^{(j)})}
\newcommand{\hji}{h(S_{i'}^{(j')})}
\newcommand{\vijij}{\hat v_{(i,j,i',j')}}

\newcommand{\est}[1]{\widehat {\Var} (\hat f(#1))}
\newcommand{\red}{\color{red}}
\newcommand{\nTrees}{\texttt{nTrees}}
\newcommand{\mtry}{\texttt{mtry}}

\newcommand{\rx}[1]{r_{#1*}}
\newcommand{\ry}[1]{r_{*#1}}
\newcommand{\px}[1][d_1]{p_{(\rx{0}, \rx{1}, \rx{2}, #1, c)}}
\newcommand{\py}[1][d_2]{p_{(\ry{0}, \ry{1}, \ry{2}, #1, c)}}
\newcommand{\pxx}[1][d]{p_{(x_{0}, x_{1}, x_{2}, #1, c)}}
\newcommand{\pyy}[1][d]{p_{(y_{0}, y_{1}, y_{2}, #1, c)}}
\newcommand{\pr}{p_{(\unr, d_1, d_2, c)}}
\newcommand{\prrMarg}{p_{(\unr,\, \unrMarg)}}
\newcommand{\Rrand}{\mathbf{R}}
\newcommand{\RrandM}{\mathbf{R^*}}
\newcommand{\Rx}[1]{\mathbf{R}_{#1 *}}
\newcommand{\Ry}[1]{\mathbf{R}_{* #1}}

\newcommand{\dpg}{\overline{\Delta pg}(\unrMarg, d_1, d_2, c)}
\newcommand{\pg}[2]{\overline{pg}(\unrMarg, #1, #2, c)}

\newcommand{\tabSpace}{\specialrule{0em}{4pt}{4pt}}

\begin{frontmatter}
\title{On Variance Estimation of Random Forests with Infinite-Order U-statistics}


\begin{aug}
\author[A]{\fnms{Tianning}~\snm{Xu}\ead[label=e1]{tx8@illinois.edu}},
\author[B]{\fnms{Ruoqing}~\snm{Zhu}\ead[label=e2]{rqzhu@illinois.edu}}
\and
\author[C]{\fnms{Xiaofeng}~\snm{Shao}\ead[label=e3]{xshao@illinois.edu}}

\address[A]{Tianning Xu is Ph.D. Student, Department of Statistics,
University of Illinois Urbana-Champaign, Champaign, IL, 61820\printead[presep={\ }]{e1}.}

\address[B]{Ruoqing Zhu is Associate Professor, Department of Statistics,
University of Illinois Urbana-Champaign, Champaign, IL, 61820\printead[presep={\ }]{e2}.}

\address[C]{Xiaofeng Shao is Professor, Department of Statistics,
University of Illinois Urbana-Champaign, Champaign, IL, 61820\printead[presep={\ }]{e3}.}

\end{aug}

\maketitle

\begin{abstract}

Infinite-order U-statistics (IOUS) has been used extensively on subbagging ensemble learning algorithms such as random forests to quantify its uncertainty. While normality results of IOUS have been studied extensively, its variance estimation approaches and theoretical properties remain mostly unexplored. Existing approaches mainly utilize the leading term dominance property in the Hoeffding decomposition. However, such a view usually leads to biased estimation when the kernel size is large or the sample size is small. On the other hand, while several unbiased estimators exist in the literature, their relationships and theoretical properties, especially the ratio consistency, have never been studied. These limitations lead to unguaranteed performances of constructed confidence intervals. To bridge these gaps in the literature, we propose a new view of the Hoeffding decomposition for variance estimation that leads to an unbiased estimator. Instead of leading term dominance, our view utilizes the dominance of the peak region. Moreover, we establish the connection and equivalence of our estimator with several existing unbiased variance estimators. Theoretically, we are the first to establish the ratio consistency of such a variance estimator, which justifies the coverage rate of confidence intervals constructed from random forests. Numerically, we further propose a local smoothing procedure to improve the estimator's finite sample performance. Extensive simulation studies show that our estimators enjoy lower bias and archive targeted coverage rates.

\end{abstract}

\begin{keyword}
\kwd{Infinite-Order U-statistics}
\kwd{Random Forests}
\kwd{Ensemble Learning}
\kwd{Variance Estimation}
\kwd{Hoeffding Decomposition}
\kwd{Ratio Consistency}
\end{keyword}

\end{frontmatter}

\doparttoc 
\faketableofcontents 
\part{} 

\section{Introduction}

Given a set of $n$ i.i.d. observations ${\cal D}_n = \{X_i\}_{i=1}^n$ and an unbiased estimator, $h (X_1, \ldots, X_k)$, of the parameter of interest $\theta$ with $k \leq n$, the U-statistic \citep{hoeffding1948} defined in the following is a minimum-variance unbiased estimator of $\theta$:
\begin{align}
\label{eq:def U-stat}
    U_{n}   
    &= \binom{n}{k} ^{-1} \sum_{S_i \subset {\cal D}_n} h(S_i)  \\
    &= \binom{n}{k} ^{-1} \sum_{1 \leq j_{1}<\cdots<j_{k} \leq n} h\left(X_{j_{1}}, \ldots, X_{i_{k}}\right), \nonumber
\end{align}
where each $S_i$ is a subset of $k$ samples from the original ${\cal D}_n$, where $k$ is called the kernel size. When $k$ grows with $n$, $U_n$ becomes an Infinite-Order U-statistic (IOUS) \citep{frees1989infiniteOrder}. Without the risk of ambiguity, we drop $k$ in the notation.

In recent years, there has been an increasing interest in statistical inference with IOUS, with application to subbagging ensemble approaches, such as random forests \citep{breiman2001random, geurts2006extremely}. It is easy to see that large $\binom{n}{k}$ renders the computationally infeasible to exhaust all subsamples. Instead, random forests sample $B$ subsamples from ${\cal D}_n$ to build trees and average. This leads to incomplete U-statistics \citep{ustatBook}. Further incorporating randomness in the kernel function $h$, \citet{lucas:old} first show the asymptotic normality of random forests under the U-statistics framework when $k$ grows at the rate of $o(\sqrt{n})$. \citet{Romano2020} further relax its assumptions. \citet{lucas:new} set the connection between U- and V-statistics. \citet{peng2019asymptotic} extend the kernel size to $k = o(n)$ under a generalized U-statistic framework. We also note, but mainly omit, a large literature outside the applications of random forests. For example, for incomplete high-dimensional U-statistics, where $h \in \R^d$, \citet{chen2019randomized} and \citet{song2019high_dim_ustat} study the asymptotic normality for fixed and growing $k$, respectively.

With the normality of random forest estimators established under the U-statistics \citep{lucas:old} or other frameworks \citep{2018wager:InfJack, athey2019generalized}, another line of the topic is the variance estimation. \citet{wager2014confidence} propose to use jackknife and infinitesimal jackknife (IJ, \citet{efron2014IJ}). \citet{lucas:old} use Monte Carlo methods to estimate the leading term in the Hoeffding decomposition of $\Var(U_n)$. Recent developments include \citet{lucas:new}, who propose a computationally efficient approach and set the connection with the IJ estimator. \citet{peng2021bias} further study the bias and consistency of the IJ estimator.

However, an essential practical issue is that these estimators can display a significant amount of bias when the sample size $n$ is small or $k$ is large compared to $n$. In practice, it is common to use a fixed proportion of the total sample size \citep{geurts2006extremely} as the kernel size $k$. Variance estimators in the aforementioned literature often suffer from this bias issue because they all rely on some form of leading term dominance phenomenon. However, when $k$ is large compared to $n$, such dominance is weak. Searching through the literature, several unbiased estimators have been proposed in different forms and aspects based on the U-statistics view. Some of them can handle a subsampling size $k$ as large as $n/2$.
\citet{folsom1984probability} propose a variance estimator of complete U-statistics following a sequence of literature on sampling design \citep{horvitz1952generalization, yates1953selection, sen1953estimate}. 
\citet{schucany1989small} propose to estimate all terms in the Hoeffding decomposition \citep{hoeffding1948} of the variance of an order-2 complete U-statistic. However, they do not extend the estimator to a general case with $k \leq n/2$. 
Note that \citet{folsom1984probability, schucany1989small} do not consider the incomplete case; hence their estimators are computationally infeasible for large $k$ or large $n$. 
More recently, \citet{wang2014variance} propose partition-based, unbiased variance estimators of both complete and incomplete U-statistics motivated from the second-moment expression $E(U_n^2) - E^2(U_n)$. \citet{wang2022quantifying} further apply this estimator to random forest variance estimation. However, there is a lack of theoretical justification for these estimators in terms of their ratio consistency, which is crucial for achieving a proper coverage rate based on the derived confidence interval. Moreover, there is a lack of understanding of their connections and differences with the estimators mentioned previously. 

To address these limitations in the literature, the major contribution of our paper is three-fold. 
First, we re-analyze the Hoeffding decomposition and propose a peak region dominance view of the variation estimation of U-statistics to address the bias issue. This leads to a class of unbiased estimation approaches for both complete and incomplete U-statistics, called \textit{\propEstFull}, which can handle a subsampling size $k$ as large as $n/2$. Computationally, our incomplete variance estimator is efficient and can be directly applied to random forests. Besides, we discuss two extensions of our estimators. One is a local smoothing strategy to mitigate negative variance estimations \citep{schucany1989small, wang2014variance}, and the other extends our method to $k > n/2$.
Secondly, we are the first to establish the connection and equivalence of the three existing estimators \citep{folsom1984probability, schucany1989small, wang2014variance}. We show that our proposed estimator coincides with each under specific settings (see Section \ref{sec:connection} for a detailed discussion). 
Thirdly, we establish the ratio consistency for our complete variance estimator under $k = o(\sqrt{n})$. To the best of our knowledge, this is the first result for such estimators, even for fixed $k$. This is a crucial step to achieve the nominal coverage level when we plug in the variance estimator in constructing a confidence interval. To this end, we fill a significant gap in the literature by proposing a set of interpretable conditions. 

We proceed with additional notation and preliminaries of U-statistics to motivate the proposed variance estimator and establish the peak region dominance view.

\section{Variance of U-statistics}
\label{sec:into U-stat}

Our analysis starts with a classical result of the variance of U-statistics. We first review the Hoefdding decomposition of the variance of a complete U-statistics. Then, we present the connection between the complete and incomplete versions. In particular, the variance of an order-$k$ complete U-statistics is given by \citet{hoeffding1948}:
\begin{align}
\label{eq:Decomposition of Var(U_n) Hoeffding}
    \Var\left(U_{n}\right)
    = \binom{n}{k}^{-1} \sum_{d=1}^{k} \binom{k}{d} \binom{n-k}{k-d} \xi^2_{d, k},
\end{align}
where $\xi^2_{d, k}$ is the covariance between two kernels $h(S_1)$ and $h(S_2)$ with $S_1$ and $S_2$ sharing $d$ overlapping observations, i.e., $\xi_{d, k}^2 = \Cov \left( h(S_1), h(S_2) \right)$, with $|S_1 \cap S_2| = d$. Here both $S_1$ and $S_2$ are size-$k$ subsamples. Alternatively, we can represent $\xi_{d, k}^2$ as \citep{ustatBook}
\begin{align}
\label{eq:xi_d,k, Var E(h|X)}
    \xi_{d, k}^2 = \Var\left[\E \left (h(S) | X_1, ..., X_d \right) \right].
\end{align}
This form will be utilized later.

When $k$ grows with $n$, it is computationally almost infeasible to exhaust all subsamples due to large $\binom{n}{k}$. Instead, it is typical in random forests and other ensemble algorithms to build incomplete infinite-order U-statistics \citep{ustatBook} by sampling $B$ many $S_i$'s, which gives
\begin{align}
\label{eq:incomplete U}
    U_{n, B} = \frac{1}{B} \sum_{i = 1}^B h(S_i).
\end{align}
The gap between variances of an incomplete U-statistic and its complete counterpart can be understood as
\begin{align}
\label{eq:var Uinc general}
    \Var(\Uinc) 
    &= \Var\left[ \E(\Uinc | \calX_n)  \right] + \E\left[ \Var(\Uinc | \calX_n)  \right] \\
    &= \varU + \E\left[ \Var(\Uinc | \calX_n)  \right] \nonumber,
\end{align}
where the additional term $\E[ \Var(\Uinc | \calX_n) ]$ depends on the subsampling scheme. In particular, when all subsamples are drawn with replacements from the collection of all such subsamples \citep{ustatBook}, we have
\begin{align}
\label{eq:var Uinc = varU + xik}
    \Var(\Uinc) = (1-\frac{1}{B}) \varU + \frac{1}{B} \xidk{k}.
\end{align}
This suggests that we can close the gap by using a large $B$. Hence, we will first discuss the complete U-statistics setting and then propose the incomplete one. We also note that for applications to random forests, random kernels (trees) are involved. However, the difference can be negligible when using a large $B$ \citet{lucas:old}.

\section{Methodology}
\label{sec:methodology}

The main technical challenge for estimating the variance is when $k$ is relatively large compared with $n$. Besides the aforementioned obvious computational issue in the complete version, most existing methods will also encounter a significant bias due to only estimating the leading term in the Hoefdding decomposition. By establishing a peak region dominance view, we develop a new unbiased estimator for $\varU$ in both complete and incomplete forms whenever $k \leq \frac{n}{2}$. Its connection with existing methods will be discussed in Section \ref{sec:connection}. Its extension to $n/2 < k < n$ setting will be presented in Section \ref{sec:subsec:extension k>n/2}. We demonstrate the application to random forests in Section \ref{sec:apply to rf}, where we also introduce a locally smoothed version for better numerical performances. 

\subsection{Existing Methods and Limitations}

Continuing from the decomposition of $\varU$ in Equation \eqref{eq:Decomposition of Var(U_n) Hoeffding}, we define $\gammad = \binom{n}{k}^{-1} \binom{k}{d} \binom{n-k}{k-d}$ for convenience. Then $\Var(U_n) = \sum_{d = 1}^k \gammad \xi_{d, k}^2$.
It is easy to see that $\gammad$ corresponds to the probability mass function of a hypergeometric distribution with parameters $n, k$ and $d$. A graphical demonstration of such coefficients under different $k$ and $d$ settings, with $n = 100$, is provided in Figure \ref{fig:HG distribution}. Many existing methods \citep{lucas:old, Romano2020} rely on the asymptotic approximation of $\Var(U_n)$ when $k$ is small, e.g., $k = o(n^{1/2})$. Under such settings, the first coefficient $\gamma_{1,k,n} = [1+o(1)] \frac{k^2}{n}$ dominates all remaining ones, as we can see in Figure \ref{fig:HG distribution} when $k= 10$. In this case, to estimate $\varU$, it suffices to estimate the leading covariance term $\xi_{1, k}^2$ if $\xi_{k, k}^2 / (k \xi_{1, k}^2)$ is bounded. 

\begin{figure}[ht]
    \centering
    \includegraphics[width=0.5\textwidth]{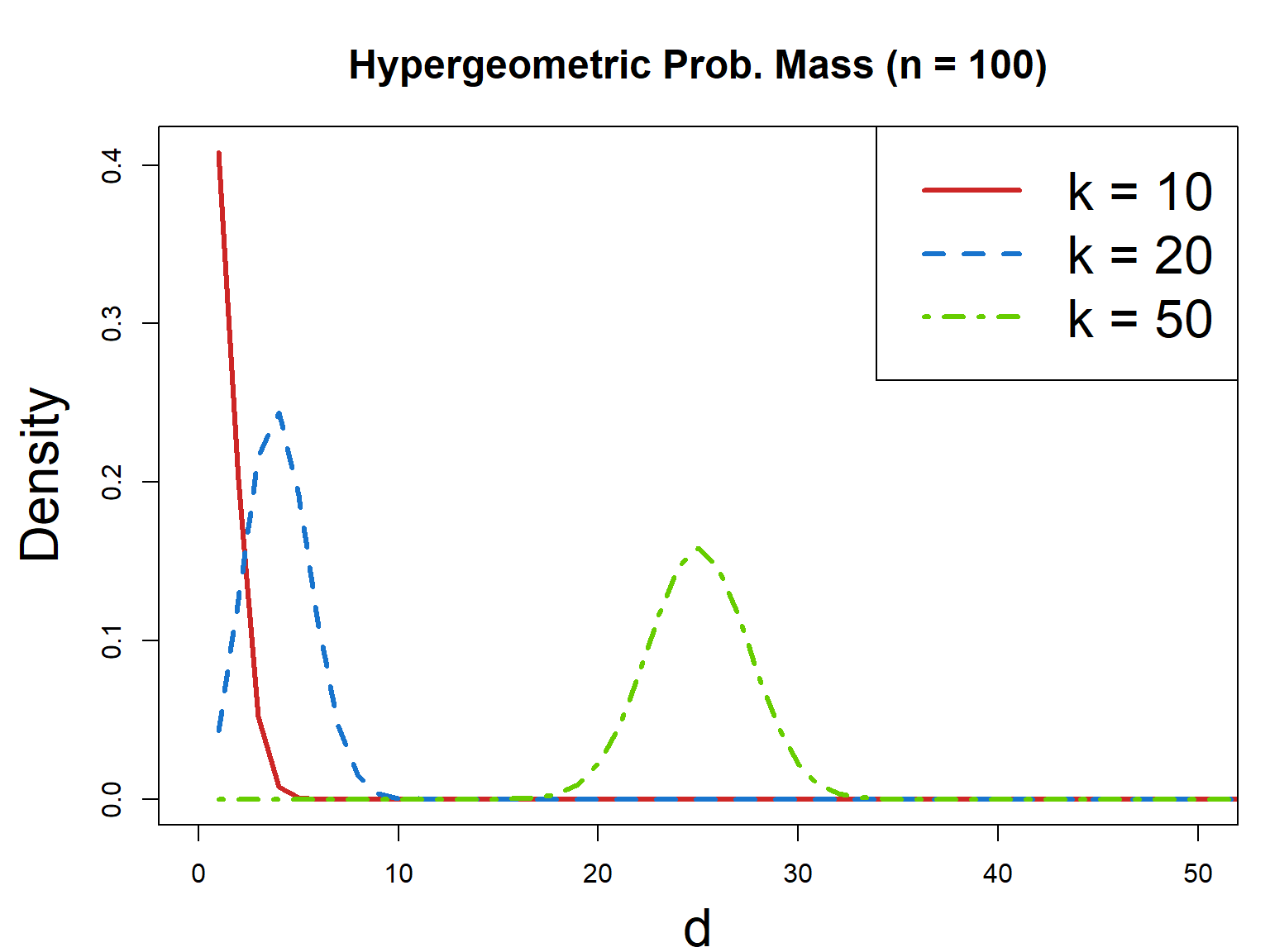}
    \caption{Probability mass function of hypergeometric distribution with $n$ = 100 and different $k$.}
    \label{fig:HG distribution}
\end{figure}

However, as $k$ becomes larger, the density of the hypergeometric distribution concentrates around $d = \beta^2 n$ instead of $d = 1$. Hence, the variance will be mainly determined by terms in a range of large $d$ values, which we refer to as the peak region. In comparison, estimating just $\xi_{1, k}^2$ will introduce a significant bias even if we are able to exhaust all possible subsamples. 


Another source of bias for using the leading term dominance property is the lack of samples to estimate $\xidk{1}$ realistically. Note that the definition involves approximating the $\Var$ and $\E$ operations in Equation \eqref{eq:xi_d,k, Var E(h|X)} \cite{lucas:old, lucas:new}. A natural strategy is to hold one shared sample, e.g., $X_{(1)}$, and vary the remaining samples in $S$ among existing observations ${\cal D}_n$ to approximate $\E \left (h(S) | X_1\right)$. However, this causes trouble for the variance estimator since we won't have enough samples to independently produce estimators of $\E \left (h(S) | X_i\right)$ with varying $X_i$ when $k$ becomes slightly larger. Overall, a new strategy is needed to better utilize the Hoefdding decomposition.

We also note that another theoretical strategy proposed by \citet{2018wager:InfJack, peng2019asymptotic} can be used for $k = o(n)$ if the $U$-statistic can be understood through the Hajek projection with additional regularity conditions. In this case, the variance of a U-statistic can be well approximated by the variance of a linearised version, while the infinitesimal jackknife procedure \citet{efron1981jackknife} provides a valid estimator. However, it is difficult to assess whether the kernel function satisfies these assumptions. In practice, a significant bias can still occur, as seen in the simulation section.

\subsection{An Alternative View}

At this point, estimating $\xi_{d, k}^2$'s for some $d$ values seems inevitable. However, we may utilize the law of total variance to change the estimation procedure, which could gain a significant computational advantage. Note that for any given $d$,
\begin{align}
\label{eq:xi_d,k, decomposition}
    \xi_{d, k}^2 
    =&  \Var\left[\E \left (h(S) | X_1, ..., X_d \right) \right] \nonumber \\
    =&  \Var(h(S)) - \E\left[\Var(h(S) | X_1, ..., X_d) \right] \nonumber \\
    :=& \vh - \xidtilde,
\end{align}
where we define $\xidtilde := \E [\Var(h(S) | X_1, ..., X_d) ]$. In this representation, $\vh$ is equivalent to $\xidk{k}$, the variance of a single kernel. It is also equivalent to $\tilde{\xi}_{0, k}^2$ since $\xi_{0, k}^2 = 0$. Incorporating these into the decomposition formula in Equation \eqref{eq:Decomposition of Var(U_n) Hoeffding}, we obtain an interesting connection:
\begin{align} \label{eq:varU = varh - sum}
    \varU =& \sum_{d = 1}^k \gammad \left(\vh - \xidtilde \right) \nonumber \\
    =& \sum_{d = 0}^k \gammad \vh - \sum_{d=0}^k \gammad \xidtilde \nonumber \\
    :=& \vh - \vs,
\end{align}
where we define $\vs$ as $\sum_{d=0}^k \gammad \xidtilde$. Note that in the second equation, we add and subtract a term with $d = 0$ to complete the hypergeometric distribution coefficients. 

While this alternative view is valid for all $k$, the difficulty lies in finding a computationally feasible estimator, especially when we have to deal with incomplete U-statistics, instead of the complete version. In particular, when $k \leq n/2$, both terms can be unbiasedly estimated with a proper sampling design. In the following, we first present a straightforward formula for estimating $\vh$ and $\vs$ in a complete U-statistics version. The main result is Theorem \ref{prop:var_subTree}, which shows that $\vs$ can be estimated using a sample variance of all trees. Section \ref{sec:var est for incomplete U} extends these estimators to incomplete versions.

\subsection{Variance Estimation for Complete U-statistics}
\label{sec:complete est}

Our goal is to create estimators of $\vs$ and $\vh$ such that they can be directly computed  from the trees (kernels) fitted in the random forest itself. This seems to be a challenging task given that we are estimating an infinite sum $\vs$. However, the fundamental idea we will utilize is to estimate $\Var(h(S) | X_1, ..., X_d)$ using pairs of trees. We proceed with the complete case when all trees are already available.

\subsubsection{Joint Estimation of the Infinite Sum \texorpdfstring{$\vs$}{Vars}.}
\label{sec::est joint}

Suppose we pair subsamples $S_i$ and $S_j$ among $\binom{n}{k}$ subsamples and let $d = |S_i \cap S_j| = 0, 1, ..., k$. 
Then for each $d$, there exist $N_{d,k,n} = \binom{n}{k}^2 \gammad = \binom{n}{k} \binom{k}{d} \binom{n-k}{k-d}$ pairs of subsamples $S_i, S_j$ such that $|S_i \cap S_j| = d$. 
Note that for any such pair, $(h(S_i) - h(S_j))^2 /2$ is an unbiased estimator of $\xidtilde = \Var(h(S) | X_1, ..., X_d)$.
We may then construct an unbiased estimator of $\xidtilde$ by averaging them: 
\begin{align}
\label{eq:def:est of tilde xi}
\hatxidtilde =
    \frac{1}{\binom{n}{k} \binom{k}{d} \binom{n-k}{k-d}}
     \sum_{ |S_i \cap S_j| = d  } \left(h(S_i) - h(S_j)\right)^2 / 2.
\end{align}
This motivates us to combine all such terms in the infinite sum, which surprisingly leads to the sample variance of all kernels. The result is given in the following proposition, with its proof collected in Appendix \ref{sec:append:methoddology}.

\begin{proposition}
\label{prop:var_subTree}
Given a complete U-statistic $U_n$, and the estimator $\hatxidtilde$ defined in Equation \eqref{eq:def:est of tilde xi}, when $k \leq n/2$, we have the following unbiased estimator of $\vs$: 
\begin{align}
\label{eq:hat vs}
\hatvs :=& {\binom{n}{k}}^{-1} \sum_{i} \left( h(S_i) - U_n \right)^2 \\
=& \sum_{d= 0}^k \gammad \hatxidtilde \nonumber
\end{align}

Furthermore, when $k > n/2$, the first $2k - n$ terms in the summation $\sum_{d=0}^k$ is removed, since corresponding $\gammad$ terms are zero. 
\end{proposition}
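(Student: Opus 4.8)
\noindent\emph{Proof strategy.} The plan is to prove the displayed algebraic identity $\hatvs = \sum_{d=0}^k \gammad \hatxidtilde$ first, purely by rearranging sums, and then obtain $\E(\hatvs) = \vs$ by checking term by term that each $\hatxidtilde$ is unbiased for $\xidtilde$.

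First I would rewrite the centered sum of squares as a sum over pairs. For any reals $a_1,\dots,a_N$ with mean $\bar a$ one has $\sum_{i=1}^N (a_i-\bar a)^2 = \frac{1}{2N}\sum_{i,j}(a_i-a_j)^2$; applying this with $N=\binom{n}{k}$, $a_i = h(S_i)$ and $\bar a = U_n$ gives $\hatvs = \frac12 \binom{n}{k}^{-2}\sum_{i,j}[h(S_i)-h(S_j)]^2$, the sum running over all ordered pairs of size-$k$ subsets (the diagonal $i=j$ contributing zero). Next I would partition the ordered pairs according to $d=|S_i\cap S_j|$. For a fixed $S_i$ there are exactly $\binom{k}{d}\binom{n-k}{k-d}$ subsets $S_j$ with $|S_i\cap S_j|=d$, so the number of ordered pairs with overlap $d$ is $\binom{n}{k}\binom{k}{d}\binom{n-k}{k-d}=\binom{n}{k}^2\gammad = N_{d,k,n}$, which matches the definition of $N_{d,k,n}$. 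Grouping the double sum by $d$ and substituting the definition \eqref{eq:def:est of tilde xi} of $\hatxidtilde$ yields $\hatvs = \binom{n}{k}^{-2}\sum_{d=0}^k N_{d,k,n}\hatxidtilde = \sum_{d=0}^k \gammad \hatxidtilde$, which is the first claim.

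For the second claim it suffices to show $\E(\hatxidtilde) = \xidtilde$ for each $d$, since then $\E(\hatvs)=\sum_{d=0}^k\gammad\xidtilde = \vs$ by the definition of $\vs$ in \eqref{eq:varU = varh - sum}. Fix a pair $S_i,S_j$ with $|S_i\cap S_j|=d$ and condition on the $d$ shared observations. Because $S_i\setminus S_j$ and $S_j\setminus S_i$ involve disjoint indices and the data are i.i.d., $h(S_i)$ and $h(S_j)$ are conditionally independent given the shared observations, and by exchangeability each has the conditional distribution of $h(S)$ given its first $d$ arguments; using $\E[(A-B)^2/2]=\Var(A)$ for i.i.d.\ $A,B$, we get $\E\{[h(S_i)-h(S_j)]^2/2\mid \text{shared}\}=\Var(h(S)\mid X_1,\dots,X_d)$. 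Taking expectations and invoking the representation \eqref{eq:xi_d,k, decomposition} of $\xidtilde$ gives $\E\{[h(S_i)-h(S_j)]^2/2\}=\E[\Var(h(S)\mid X_1,\dots,X_d)]=\xidtilde$, hence $\E(\hatxidtilde)=\xidtilde$; in particular the $d=0$ term reproduces $\tilde\xi_{0,k}^2=\vh$. For $k>n/2$ every pair of size-$k$ subsets overlaps in at least $2k-n$ indices, so both $N_{d,k,n}$ and $\gammad$ vanish for $d<2k-n$; those terms drop out of the partition of the double sum and of $\vs$ alike, and the above argument applies verbatim to the range $2k-n\le d\le k$.

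The computations above are elementary; the one step requiring care is the conditional-independence argument — one must condition on the specific $d$ shared observations (not merely on the deterministic event $|S_i\cap S_j|=d$) so that the two tree kernels become i.i.d.\ copies of $h(S)\mid X_1,\dots,X_d$, which is precisely what makes $[h(S_i)-h(S_j)]^2/2$ an honest unbiased estimate of the conditional-variance term defining $\xidtilde$. A minor bookkeeping point is to track which overlaps $d$ actually occur, so that the boundary term $d=0$ (present iff $k\le n/2$) and the degenerate coefficients when $k>n/2$ are handled consistently.
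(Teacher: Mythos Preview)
Your proof is correct and follows essentially the same route as the paper: both arguments hinge on the pair-expansion identity for the centered sum of squares and then partition the resulting double sum by the overlap $d=|S_i\cap S_j|$, invoking the unbiasedness of each $\hatxidtilde$ (via conditional independence given the shared observations) to conclude. The only cosmetic difference is direction---you expand $\hatvs$ into the weighted sum, whereas the paper starts from $\sum_{d}\gammad\hatxidtilde$ and collapses it back to the sample variance---but the substance is identical.
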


Since $\hatvs$ enjoys a sample variance form, its incomplete version would also be easy to calculate. The advantage is that it can be computed without any hassle because all $h(S_i)$'s are ready to use when we calculate $U_n$. However, additional consideration may facilitate the estimation of $\vh$ so that both $\vs$ and $\vh$ can be done using the same set of $h(S_i)$'s.

\subsubsection{Estimation of Kernel Variance \texorpdfstring{$\vh$}{Varh}.}
\label{sec:est varh}

Estimating $\vh$ may follow the same idea using $\big[h(S_i) - h(S_j)\big]^2/2$ if the pair $S_i$ and $S_j$ are disjoint. However, this is only possible when $k \leq n/2$, given a finite sample. In this case, following Equation \eqref{eq:def:est of tilde xi}, we have an unbiased estimator of $\vh$:
\begin{align}
\label{eq:hat vh}
    \hatvh = \hat{\tilde\xi}_{0, k}^2 = 
    \frac{1}{\binom{n}{k} \binom{n-k}{k}} \sum_{|S_i \cap S_j| = 0} \left[h(S_i) - h(S_j)\right]^2 / 2.
\end{align}

Therefore, we combine estimators $\hatvs$ \eqref{eq:hat vs} and $\hatvh$ \eqref{eq:hat vh} to get an unbiased estimator of $\varU$:
\begin{align}
\label{eq:our est, complete}
    \widehat{\Var}(U_n) = \hatvh - \hatvs.
\end{align}

\subsection{Variance Estimation for Incomplete U-statistics}
\label{sec:var est for incomplete U}

In random forests and other ensemble learning models, we often construct incomplete U-statistics by drawing random subsamples instead of exhausting all $\binom{n}{k}$ subsamples. This creates difficulties in calculating $\hatvh$ since very few of these subsamples would be mutually exclusive ($d = 0$). Hence, a new subsampling strategy is needed to allow sufficient pairs of subsamples to estimate both $\vh$ and $\vs$. 

The following ``matched sample'' sampling scheme is proposed to have enough disjoint samples to estimate $\hatvs$. For any $2 \leq M \leq \floor{n/k}$, we can sample a set of the matched sample group that consists $M$ mutually exclusive subsamples $\{S_1, \ldots, S_M\}$ from ${\cal D}_n$. This enables us to estimate $\vh$ by the sample variance of $\left\{ h(S_1), \ldots, h(S_M) \right\}$. Then, we repeat this procedure $B$ times to average the estimator. To be precise, denote the subsamples in the $b$-th matched sample group as $S_1^{(b)},  S_2^{(b)}, ..., S_M^{(b)}$, such that $S_i^{(b)} \cap S_{i'}^{(b)} = \emptyset$ for any $i \neq i'$. Define 
\begin{align}
    \label{eq:def:UincMatch}
    \UincMatch = \frac{1}{MB} \sum_{i=1}^M \sum_{b=1}^B \hib.
\end{align}
This differs from the conventional incomplete U-statistic due to the new sampling scheme. Though $M = 2$ is enough for estimating $\vh$, we recommend using $M = \floor{n/k}$ for a smaller variance. This is guaranteed by the following proposition.

\begin{proposition}
\label{prop:var incU, match}
For an incomplete U-statistic with $M \cdot B$ samples obtained using the matched sample sampling scheme, 
\begin{align}
\label{eq:var Uinc, partition}
    \Var(\UincMatch) = \left( 1-\frac{1}{B} \right) Var(U_n) +  \frac{1}{MB} \vh.
\end{align}
\end{proposition}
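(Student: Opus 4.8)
The plan is to compute $\Var(\UincMatch)$ by conditioning on the data $\calX_n$ and applying the law of total variance, mirroring the structure of Equation~\eqref{eq:var Uinc general}. First I would write $\Var(\UincMatch) = \Var[\E(\UincMatch \mid \calX_n)] + \E[\Var(\UincMatch \mid \calX_n)]$. Since each $\hib$ is a kernel evaluated on a uniformly chosen size-$k$ subset and, conditionally on $\calX_n$, has conditional mean $U_n$, we get $\E(\UincMatch \mid \calX_n) = U_n$, so the first term is exactly $\Var(U_n)$. The work is therefore in the second term: evaluating $\E[\Var(\UincMatch \mid \calX_n)]$ under the matched-group sampling scheme.

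Next I would expand the conditional variance. Conditionally on $\calX_n$, the $B$ matched groups are drawn i.i.d., so $\Var(\UincMatch \mid \calX_n) = \frac{1}{B} \Var\big( \frac{1}{M}\sum_{i=1}^M h(S_i^{(1)}) \,\big|\, \calX_n \big)$, reducing everything to a single matched group of $M$ mutually exclusive subsamples. Expanding that variance gives $\frac{1}{M^2}\sum_{i}\Var(h(S_i^{(1)})\mid\calX_n) + \frac{1}{M^2}\sum_{i\neq i'}\Cov(h(S_i^{(1)}), h(S_{i'}^{(1)})\mid\calX_n)$. The key observation is that within a matched group the subsamples are disjoint, so conditionally on $\calX_n$ the pair $(S_i^{(1)}, S_{i'}^{(1)})$ is a uniformly random pair of disjoint size-$k$ subsets; hence $\E$ over $\calX_n$ of the conditional covariance is $\xidtilde[0] $-type quantity — more precisely, $\E[\Cov(h(S_i^{(1)}), h(S_{i'}^{(1)})\mid \calX_n)] + \Cov(\E(h(S_i^{(1)})\mid\calX_n), \E(h(S_{i'}^{(1)})\mid\calX_n)) = \xi_{0,k}^2 = 0$, since sharing $0$ samples. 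Equivalently, I would argue directly that $\E[h(S_i^{(1)}) h(S_{i'}^{(1)})] = \theta^2$ for disjoint subsamples and $\E[\Var(h(S)\mid\calX_n)] = \vh - \Var(U_n)$-adjacent bookkeeping; the cleanest route is: $\E[\Var(h(S_i^{(1)})\mid\calX_n)] = \vh - \Var(\E(h(S)\mid\calX_n)) = \vh - \Var(U_n)$, while the cross terms contribute $\E[\Cov(\cdot,\cdot\mid\calX_n)] = $ (covariance of two $k$-subset kernels sharing $0$ samples, minus the between-data part) $= \xi_{0,k}^2 - \Var(U_n) = -\Var(U_n)$.

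Assembling: $\E[\Var(\UincMatch\mid\calX_n)] = \frac{1}{B}\Big[\frac{1}{M}(\vh - \Var(U_n)) + \frac{M-1}{M}(-\Var(U_n))\Big] = \frac{1}{B}\Big[\frac{\vh}{M} - \Var(U_n)\Big]$. Adding the first term $\Var(U_n)$ yields $\Var(\UincMatch) = \Var(U_n) + \frac{1}{B}\big(\frac{\vh}{M} - \Var(U_n)\big) = (1 - \tfrac{1}{B})\Var(U_n) + \frac{1}{MB}\vh$, which is exactly the claimed identity. I would present the cross-term computation carefully since that is where the matched (disjoint) structure enters and where sign errors are easy; it is also worth remarking that this recovers the standard formula \eqref{eq:var Uinc = varU + xik} when $M=1$ is replaced by the with-replacement scheme, as a sanity check, but the matched scheme genuinely differs because the within-group covariance is pinned at the $d=0$ value rather than averaged over the hypergeometric weights.

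The main obstacle I anticipate is bookkeeping the two layers of randomness cleanly: the kernel $h$ is itself possibly random (the $w_i$'s) and the subsample draw is random given $\calX_n$, so I must be explicit that all expectations are taken over subsample selection, kernel randomness, \emph{and} the data, and that the conditional-on-$\calX_n$ covariance of two disjoint-support kernels, after taking $\E$ over $\calX_n$, collapses to $\xi_{0,k}^2 = 0$ because the two kernels then depend on disjoint i.i.d. blocks of $\calX_n$. Once that independence-given-disjointness fact is stated precisely, the rest is a short computation.
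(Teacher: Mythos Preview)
Your proposal is correct and follows the same overall strategy as the paper: apply the law of total variance conditioning on $\calX_n$, identify $\E(\UincMatch\mid\calX_n)=U_n$, and reduce the second term to the conditional variance of a single matched-group average $\bar h^{(1)}$. The paper arrives at $\E[\Var(\UincMatch\mid\calX_n)]=\frac{1}{B}\big(\frac{1}{M}\vh-\Var(U_n)\big)$ by viewing the set $\Gn$ of all matched groups as a finite population, citing a result of \citet{wang2012investigation} for the conditional variance, and then using the one-line fact that $\Var(\bar h^{(1)})=\frac{1}{M}\vh$ since disjoint-subset kernels are unconditionally independent. Your route is more elementary and self-contained: you expand $\Var(\bar h^{(1)}\mid\calX_n)$ into diagonal and off-diagonal pieces and handle each with the law of total variance/covariance, obtaining the same expression without the $\Gn$ representation or the external reference. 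Both arguments hinge on exactly the same structural fact (disjointness within a group forces $\xi_{0,k}^2=0$); yours is arguably cleaner as a standalone proof, while the paper's version highlights the connection to the finite-population-sampling literature.
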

The proof is collected in Appendix \ref{sec:append:varUmatch}. 
We should note that when fixing the total number of kernels, $M \cdot B$ and let $M \geq 2$, the variance of $\UincMatch$ is always smaller than the variance of $\Uinc$ given in \eqref{eq:var Uinc = varU + xik}. However, these two are identical when $M = 1$.

Based on this new sampling scheme, we can propose estimators $\hatvhB$ and $\hatvsB$ as analogs to $\hatvh$ and $\hatvs$, respectively.  
Denote the collection of kernels as $\{ \hib\}_{i, b}$, for $i = 1, 2, \ldots, M$, $b = 1, 2, \ldots, B$. A sample variance within each group $b$, $\frac{1}{M-1}\sum_{i = 1}^M [\hib - \bar h^{(b)}]^2$,  is an unbiased estimator of $\vh$. Here $\bar h^{(b)} = \frac{1}{M} \sum_{i=1}^M \hib$ is the group mean. Hence, the average over all groups becomes $\hatvhB$. 
\begin{align}
\label{eq:varh est inc}
    &\hatvhB
    = \frac{1}{B} \sum_{b=1}^{B} \frac{1}{M-1}\sum_{i = 1}^M \left(\hib - \bar h^{(b)} \right)^2.
\end{align}
Similarly, with some algebra, we can define $\hatvsB$ as
\begin{align}
\label{eq:hatvsB}
    &\hatvsB = \frac{1}{MB-1} \sum_{b=1}^B \sum_{i=1}^M 
    \left( \hib  - \UincMatch \right)^2.
\end{align}
Note that $\hatvhB$ is still an unbiased estimator of $\vh$ while $\hatvsB$ introduces a small bias when estimating $\vs$ because these subsamples are not randomly obtained --- there is an over-representation of non-overlapping pairs. The following proposition quantifies this bias. 

\begin{proposition}
\label{prop:bias of var subsamp}
For the sample variance estimator $\hatvsB$ defined on the matched sample groups subsamples with $M \cdot B \geq 2$, we denote $\delta_{B, M} := \frac{M-1}{MB- 1}$. Then, 
\begin{align}
    \E \left( \hatvsB \right) = (1 - \delta_{M, B}) \vs + \delta_{M, B} \vh.
\end{align}
\end{proposition}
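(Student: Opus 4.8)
The plan is to reduce the claim to the elementary algebraic identity for a sample variance together with the variance formula already established in Proposition \ref{prop:var incU, match}. Write $\mu := \E[h(S)]$ for a size-$k$ subsample $S$ drawn uniformly at random, so that $\mu = \E[U_n] = \E[\UincMatch]$; because the matched-group draws are exchangeable and invariant under relabeling of $X_1,\dots,X_n$, every $\hib$ has the same marginal law as $h(S)$, whence $\E\big[(\hib)^2\big] = \vh + \mu^2$ for all $i,b$.

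First I would expand the summed squared deviations in \eqref{eq:hatvsB} via
\begin{align*}
    \sum_{b=1}^B \sum_{i=1}^M \big(\hib - \UincMatch\big)^2
    = \sum_{b=1}^B\sum_{i=1}^M (\hib)^2 - MB\,\UincMatch^2 ,
\end{align*}
so that, taking expectations and using $\E\big[(\hib)^2\big] = \vh + \mu^2$ together with $\E\big[\UincMatch^2\big] = \Var(\UincMatch) + \mu^2$,
\begin{align*}
    \E\big[\hatvsB\big]
    = \frac{MB}{MB-1}\Big( \E[(\hib)^2] - \E[\UincMatch^2] \Big)
    = \frac{MB}{MB-1}\Big( \vh - \Var(\UincMatch) \Big).
\end{align*}

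Next I would substitute $\Var(\UincMatch) = \big(1-\tfrac1B\big)\varU + \tfrac1{MB}\vh$ from Proposition \ref{prop:var incU, match}; the factor $\tfrac{MB}{MB-1}$ cancels against the coefficient $1-\tfrac1{MB}$ of $\vh$, leaving $\E[\hatvsB] = \vh - \tfrac{M(B-1)}{MB-1}\varU$. Finally I would insert the decomposition $\varU = \vh - \vs$ from Equation \eqref{eq:varU = varh - sum} and collect terms: the coefficient of $\vh$ becomes $1 - \tfrac{M(B-1)}{MB-1} = \tfrac{M-1}{MB-1} = \delta_{M,B}$ and that of $\vs$ becomes $\tfrac{M(B-1)}{MB-1} = 1-\delta_{M,B}$, which is exactly the asserted expression.

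The only step that is not pure bookkeeping in the constants $M$ and $B$ — and hence the one I would state most carefully — is the distributional claim that each $\hib$ is marginally distributed as $h(S)$ with $S$ uniform over size-$k$ subsets, so that $\E[(\hib)^2]$ is constant in $(i,b)$ and equals $\vh+\mu^2$, and likewise $\E[\UincMatch] = \mu$. This is what justifies replacing $\E[(\hib)^2]-\E[\UincMatch^2]$ by $\vh - \Var(\UincMatch)$; it follows from the symmetry of the matched-group sampling scheme, but it deserves an explicit line since the whole computation rests on it. Everything else is a routine simplification using the two already-established identities.
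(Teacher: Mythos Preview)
Your proof is correct and takes a genuinely different route from the paper. The paper's argument expands $\hatvsB$ as a double sum of pairwise squared differences $(\hib - h(S_{i'}^{(b')}))^2/2$ and partitions the pairs according to whether $(i,b)$ and $(i',b')$ lie in the same matched group or in different groups; within-group pairs contribute $\vh$ (disjoint subsamples), while between-group pairs are shown via a finite-population conditioning argument to contribute $\vs$, and counting yields the weights $\delta_{M,B}$ and $1-\delta_{M,B}$. Your approach instead uses the one-line second-moment identity $\E[\hatvsB]=\tfrac{MB}{MB-1}\big(\vh-\Var(\UincMatch)\big)$, then plugs in Proposition~\ref{prop:var incU, match} and the decomposition $\varU=\vh-\vs$ to read off the coefficients. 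The advantage of your route is economy: it recycles Proposition~\ref{prop:var incU, match} and avoids the explicit pair-counting and the conditional-expectation step. The paper's argument, by contrast, is self-contained and makes the source of each term transparent (which pairs produce $\vh$ versus $\vs$), and does not logically depend on Proposition~\ref{prop:var incU, match}. Your caveat about the marginal law of each $\hib$ being that of $h(S)$ for a uniformly random size-$k$ subset is exactly the right thing to flag; it holds because the matched-group scheme first draws a uniformly random ordered partition, so any single coordinate is marginally uniform.
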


This proposition leads to the following unbiased estimator of $\Var(\UincMatch)$. The proofs of both Propositions \ref{prop:bias of var subsamp} and \ref{prop:inc var est unbiase} are collected in Appendix \ref{sec:append:methoddology}. 

\begin{proposition}
\label{prop:inc var est unbiase}
Given $M \cdot B$ subsamples from the matched sample sampling scheme, with $B \geq 1$ and $M \geq 2$, the ``{\propEstFull}'' given below is an unbiased estimator of $\Var(\UincMatch)$:
\begin{align}
    \label{eq:est var Uinc match}
    \widehat{\Var}(\UincMatch) = \hatvhB -  \frac{MB-1}{MB} \hatvsB.
\end{align}
\end{proposition}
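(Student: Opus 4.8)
The plan is to verify directly that $\E[\widehat{\Var}(\UincMatch)] = \Var(\UincMatch)$ by combining the three preceding propositions in a bookkeeping argument. First I would recall the target, which by Proposition~\ref{prop:var incU, match} is
\begin{align*}
    \Var(\UincMatch) = \left(1 - \tfrac{1}{B}\right)\varU + \tfrac{1}{MB}\vh,
\end{align*}
and using the alternative decomposition $\varU = \vh - \vs$ from Equation~\eqref{eq:varU = varh - sum}, rewrite this purely in terms of $\vh$ and $\vs$:
\begin{align*}
    \Var(\UincMatch) = \left(1 - \tfrac{1}{B} + \tfrac{1}{MB}\right)\vh - \left(1 - \tfrac{1}{B}\right)\vs.
\end{align*}
This is the quantity the estimator must reproduce in expectation.

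Next I would take expectations of the estimator $\widehat{\Var}(\UincMatch) = \hatvhB - \tfrac{MB-1}{MB}\hatvsB$ term by term. Since $\hatvhB$ is the averaged within-group sample variance and each within-group sample variance is unbiased for $\vh$ (the $M$ subsamples in a matched group are mutually exclusive, so $[\hib - \bar h^{(b)}]^2$-type terms behave like ordinary i.i.d. sample-variance pieces conditional on their being disjoint), we have $\E[\hatvhB] = \vh$. For the second term I would invoke Proposition~\ref{prop:bias of var subsamp}, which gives $\E[\hatvsB] = (1-\delta_{M,B})\vs + \delta_{M,B}\vh$ with $\delta_{M,B} = \tfrac{M-1}{MB-1}$. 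Multiplying by $\tfrac{MB-1}{MB}$, the coefficient of $\vs$ becomes $\tfrac{MB-1}{MB}(1-\delta_{M,B}) = \tfrac{MB-1}{MB}\cdot\tfrac{MB-M}{MB-1} = \tfrac{MB-M}{MB} = 1 - \tfrac{1}{B}$, and the coefficient of $\vh$ becomes $\tfrac{MB-1}{MB}\cdot\tfrac{M-1}{MB-1} = \tfrac{M-1}{MB}$. Hence
\begin{align*}
    \E\big[\widehat{\Var}(\UincMatch)\big] = \vh - \left(1 - \tfrac{1}{B}\right)\vs - \tfrac{M-1}{MB}\vh = \left(1 - \tfrac{M-1}{MB}\right)\vh - \left(1-\tfrac{1}{B}\right)\vs,
\end{align*}
and since $1 - \tfrac{M-1}{MB} = 1 - \tfrac{1}{B} + \tfrac{1}{MB}$, this matches the target expression exactly.

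I expect the genuinely substantive work to have already been discharged in Propositions~\ref{prop:var incU, match} and~\ref{prop:bias of var subsamp}; conditional on those, the present proof is just the algebraic reconciliation above. The one point I would be careful about is the unbiasedness of $\hatvhB$ under the matched-group scheme: I should confirm that within a fixed group $b$, conditioning on the event that the $M$ subsamples are drawn mutually exclusively does not distort the distribution of any single $\hib$ relative to a generic kernel, and that the pairwise structure needed for $\E\big[\tfrac{1}{M-1}\sum_i(\hib-\bar h^{(b)})^2\big] = \vh$ holds — this follows because any two distinct subsamples in the group have $|S_i\cap S_{i'}|=0$, so $\tilde\xi_{0,k}^2 = \vh$ applies pairwise, exactly as in the derivation of $\hatvh$ in Equation~\eqref{eq:hat vh}. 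Once that is in place, collecting coefficients of $\vh$ and $\vs$ and checking they agree with the rewritten $\Var(\UincMatch)$ completes the argument; there is no real obstacle beyond careful tracking of the constants $\delta_{M,B}$ and $\tfrac{MB-1}{MB}$.
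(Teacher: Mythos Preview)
Your proposal is correct and follows essentially the same approach as the paper: both compute $\E[\widehat{\Var}(\UincMatch)]$ via the unbiasedness of $\hatvhB$ and Proposition~\ref{prop:bias of var subsamp}, compute $\Var(\UincMatch)$ via Proposition~\ref{prop:var incU, match} and $\varU = \vh - \vs$, and then match coefficients of $\vh$ and $\vs$. The paper's proof is slightly terser (it does not re-justify the unbiasedness of $\hatvhB$, which is stated just before Proposition~\ref{prop:bias of var subsamp}), but the logic is identical.
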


\subsection{Unifying Existing Unbiased Estimators}\label{sec:connection}

To conclude this section, we discuss the relationships and differences between our view of the variance decomposition versus existing approaches. As noted in the introduction, various variance estimators appeared in the literature to correct the bias when the leading term does not dominate. \citet{folsom1984probability} and \citet{schucany1989small} primarily focus on unbiased estimators for complete U-statistics with a very small sample. In particular, \citet{schucany1989small} propose two estimators of $\xi_{1, 2}^2$ in the Hoeffding decomposition (denoted as $\hat \zeta_{1}^2$ and $\tilde{\zeta}_1^2$ in their page 418 and 422, respectively). Interestingly, the estimators of $\xidk{1}$ introduced in \citet{lucas:old} and \citet{lucas:new} are efficient incomplete approximations of the former, $\hat \zeta_{1}^2 := \frac{1}{n-1} \sum_{i=1} [\hat h_1(X_i) - U_n]^2$, where $\hat h_1(X_i) = \binom{n-1}{k-1}^{-1} \sum_{S_j: X_i \in S_j} h(S_j)$. Meanwhile, our estimator $\hatvh - \hat{\tilde\xi}_{1, k}^2$ is equivalent to the latter, $\tilde{\zeta_1^2}$. A comprehensive derivation is provided in Appendix \ref{sec:append:equivalence,complete}.

\citet{wang2014variance} propose an unbiased estimator motivated by $E(U_n^2) - E^2(U_n)$. Their complete variance estimator \citep[page 1120]{wang2014variance} is,
\begin{align*}
    N_{k}^{-1} \sum_{P_k} h(S_i) h(S_j) - N_{0}^{-1} \sum_{P_0} h(S_0) h(S_0),
\end{align*}
where $P_c = \{(S_i,S_j) \,\, \text{s.t.} \,\, | S_i \cap S_j| \leq c \}$, and $N_c$ is the cardinality of $P_c$. Motivated by this formulation, they further propose an ANOVA form of the estimator and its corresponding incomplete version. 

Although various unbiased estimators exist in the literature, they are all motivated by entirely different perspectives. The unique motivation of our estimator is its peak-region dominance phenomenon and the corresponding conditional variance view, which allows unbiased estimation. However, it is interesting that the connections among existing estimators have never been investigated. To complete our analysis, we further established several connections. In Appendix \ref{sec:append:equivalence,complete} and \ref{sec:append:equivalence,incomplete} we show that all existing unbiased complete estimators are essentially the same estimator, presented in different formats and settings. In particular, we show that \citet{folsom1984probability}'s formula is identical to our complete version and also equivalent to \citet{wang2014variance}. We further restrict a setting with $k = 2$ for a direct comparison with \citet{schucany1989small}. In Appendix \ref{sec:append:equivalence,incomplete}, we show the equivalence between our incomplete estimators and \citet{wang2014variance}.

\section{Theoretical Results}
\label{sec:theory results}

To the best of our knowledge, ratio consistency of variance estimator in the context of infinite-order U-statistics has not been investigated. In this section, we attempt to fill some gaps in the literature by establishing the results of our proposed estimator, meaning that we want to show
\begin{align*}
\frac{\widehat{\Var}(U_n)}{\E (\widehat{\Var}(U_n))} \xrightarrow{P} 1.
\end{align*}
where $\xrightarrow{P}$ denotes convergence in probability. The notion of ratio consistency is important here since the variance of U-statistics would naturally converge to 0 as $n$ grows. Hence any variance estimator that converges to 0 is consistent. However, a consistent estimator does not guarantee normal coverage. In the following, we shall rewrite $\widehat{\Var}(U_n)$ as $\vu$, and show a sufficient condition of the above 
\begin{align*}
    \Var(\vu) /\E^2 (\vu) =  \Var(\vu) /\Var^2 (U_n) \to 0, \,\text{as}\, n \to \infty.
\end{align*}

We want to note that such a result under general $k$ settings is likely impossible without strong assumptions or knowledge of the specific form of t he kernel $h$. The main difficulty in the proof is caused by the fourth-order term in the form of $\Cov [h(S_1) h(S_2), \allowbreak h(S_3) h(S_4)]$ which naturally appears in the variance of $\vu$. Untangling the dependencies of the fourth-order term under large $k$ is a difficult task. Hence, we focus on the $k = o(\sqrt{n})$ setting in which the result is more attainable, although computationally, the estimator can still be applied whenever $k \leq n/2$. Even though the $k = o(\sqrt{n})$ setting is somewhat restrictive, it is still the first in the literature under the context of this paper. And further investigations may be established by extending the proposed strategy to higher orders. 

Our main strategy can be summarized as follows. First, we observe that the proposed estimator $\vu = \hatvh - \hatvs$ can be written an order-$2k$ U-statistic:
\begin{align}
\label{eq:def:vu U-stat size-2k}
    \hat{V}_{u}
    = \binom{n}{2k}^{-1} \sum_{\Skk \subseteq \mathcal{X}_{n}} \psi\left(\Skk \right),
\end{align}
where $\Skk$ is a size-$2k$ subsample set and  $\psi\left(\Skk \right)$ is the corresponding size-$2k$ kernel, defined as 
\begin{align*}
\psi\left(\Skk \right) :=  \psi_{k}\left(\Skk \right)  -  \psi_{0}\left(\Skk \right).
\end{align*}
Here $\psi_{k'}\left(\Skk \right)$ for $k' = 0, 1, 2, \ldots, k$ satisfies
\begin{align}
    \label{eq:psi: a size-2k kernel for Q(k')}
    \psi_{k'}\left(\Skk \right)
    = N_{n, k, k'}
    \sum_{d=0}^{k'} 
    \frac{1}{N_d} 
    \sum_{\substack{S_1, S_2 \subset \Skk \\ | S_1 \cap S_2 | = d}}
    h\left(S_1\right) h\left(S_2\right),
\end{align}
where $N_{n, k, k'} = \binom{n}{2k} \binom{n}{k}^{-1} \binom{n-k +k'}{k}^{-1}$ and $N_d = \binom{n-2k+d}{d}$ is the number of different size-$2k$ sets such that its two size-$k$ subsets $S_1$ and $S_2$ share $d$ overlaps. We remark that in this paper, $S$ refers to a size-$k$ set, and $\Skk$ refers to a size-$2k$ set. 

Similar to a regular U-statistic, the variance of an order-$2k$ U-statistic $\vu$ can be decomposed as
\begin{align}
\label{eq:Var(Vu) expression}
    \Var\left(\vu\right)
    = \binom{n}{2k}^{-1} \sum_{c=1}^{2k} \binom{2k}{c} \binom{n-2k}{2k-c} \sigma_{c, 2k}^2,
\end{align}
where $\sigma_{c, 2k}^2$ is the covariance between $\psi (\Skk_1)$ and $\psi (\Skk_2)$ for $| \Skk_1 \cap \Skk_2 | = c$ and $c = 1, 2, \ldots, 2k$:
\begin{align}
\label{eq:def:sigma_c^2}
    \sigma_{c, 2k}^2 
    := \Cov \left(\psi (\Skk_1), \psi (\Skk_2) \right).
\end{align}

If we follow the existing literature, it is common to impose high-level assumptions on the kernel $\psi$ and also bound the ratio of the last term, $\sigma_{2k, 2k}^2$ over the first term $\sigma_{1, 2k}^2$ \citep{Romano2020}. However, not only such assumptions are difficult to verify and can be possibly violated (see discussion in Appendix \ref{sec:append:assump:discuss Fc, counter example for linear rate}), but also $\psi$ is viewed as some form of a ``black box'', which does not help in analyzing the convergence of $\Var(\vu) / \Var^2(U_n)$. Therefore, we have to establish some connection between $\psi$ kernel $h$. 

Hence, the key strategy of our approach is to avoid direct assumption on $\vu$'s kernel $\psi$ and instead only impose assumptions on a fourth-order term of $h$: $\Cov [h(S_1) h(S_2), \allowbreak h(S_3) h(S_4)]$. This leads to the main technical challenge in this work: since $\Var(\vu)$ is the variance of variance estimator of U-statistics, it becomes inevitable to study the fourth-order term of $h$ instead of a second-order term. To be specific, $\xi_{d, k}^2 = \Cov(h(S_1), h(S_2))$ in the classical U-statistics only involves the overlaps between $S_1, S_2$ while $\Cov [h(S_1) h(S_2), \allowbreak h(S_3) h(S_4)]$ involves the 4-way overlaps among $S_1, S_2, \allowbreak S_3, S_4$, although it shares similar intuition as $\xi_{d, k}^2$. 

We first establish the Double U-statistics notion of $\vu$ in Section \ref{sec:double U-stat}. The double U-statistic structure in Proposition \ref{prop: double U-statistic: alternative form of phi} shows a cancellation effect (see Appendix \ref{sec:technical lemmas}) inside of $\vu$, which helps accelerate the convergence rate of $\Var(\vu)$. Using this structure, we can further decompose each $\sigma_{c, 2k}^2$ in the Hoeffding decomposition \eqref{eq:Var(Vu) expression} into $\etacd$ terms (see Proposition \ref{prop:sigma_c^2 written as eta_c^2}).
Then, we bound all $\etacd$'s by decomposing each term into a basic covariance term $\Cov [h(S_1) h(S_2), \allowbreak h(S_3) h(S_4)]$. Hence, it suffices to impose primitive assumptions on $\Cov [h(S_1) h(S_2), \allowbreak h(S_3) h(S_4)]$ to analyze the behavior of $\vu$. We should highlight the challenge that we need to use 11 parameters to describe the 4-way overlapping among $S_1, S_2, S_3, S_4$. Details are left in the discussion in the assumption section (Section \ref{sec:assumptions and notations}). We also remark that it is easier to understand the difficulties and strategies related to the nature of \emph{Double U-statistic} structure through a simplified example, the linear average kernel, presented in Appendix \ref{sec:append:low order kernel}. And finally, in Section \ref{sec:main result}, we present the ratio consistency. Section \ref{sec:append:roadmap} is used to summarize a roadmap of the proof.

\subsection{Double U-statistic Structure}
\label{sec:double U-stat}

We define a notion of \emph{Double U-statistic} to facilitate our discussion and show that $\vu$ is a \emph{Double U-statistic}. The advantage of this tool is to break down our variance estimator into lower-order terms, which alleviates the difficulty involved in analyzing $\sigma_{c, 2k}^2$. 

\begin{definition}[Double U-statistic]
\label{def:double-Ustat}
For an order-$k$ U-statistic, we call it \emph{Double U-statistic} if its kernel function $h$ is a weighted average of U-statistics.
\end{definition}

Essentially, a \emph{Double U-statistic} is a ``U-statistic of U-statistic''. 
By \eqref{eq:def:vu U-stat size-2k}, 
$
\hat{V}_{u}
= \binom{n}{2k}^{-1} \sum_{\Skk \subseteq \mathcal{X}_{n}} \psi\left(\Skk \right)
$.
$\vu$ involves a size-2k kernel  $\psi$. However, by Equation \eqref{eq:psi: a size-2k kernel for Q(k')}, the kernel $\psi$ has a complicated form.
The following proposition shows that we can further decompose $\psi$ into linear combinations of $\varphi_d$'s, which are still U-statistics. 

\begin{proposition}[$\vu$ is a Double U-statistic]

\label{prop: double U-statistic: alternative form of phi}
The order-$2k$ U-statistic $\vu$ defined in Equation \eqref{eq:def:vu U-stat size-2k} is a \emph{Double U-statistic}. Its kernel $\psi\left(\Skk \right)$ can be represented as a weighted average of U-statistics, such that
\begin{align}
     \psi\left(\Skk \right) 
\label{eq:alternative form of phi}
     := \sum_{d = 1}^{k} w_d \left[\varphi_d \left(\Skk \right) - \varphi_0 \left(\Skk \right)\right].
\end{align}
Here, for $ d = 0, 1, ..., k$, $\varphi_d$ is the U-statistic with size-$(2k-d)$ asymmetric kernel as following
\begin{align}
\label{eq:def:phi}
    \varphi_d \left(\Skk \right) = M_{d, k}^{-1} \sum_{\substack{S_1, S_2 \subset \Skk \\ | S_1 \cap S_2 | = d}} h(S_1) h(S_2);
\end{align}
$M_{d, k} := \binom{2k}{d} \binom{2k-d}{d} \binom{2k-2d}{k-d}$,  which is the number of pairs $S_1, S_2 \subset \Skk$, s.t. $| S_1 \cap S_2 | = d$;
and 
$w_d :=   \binom{n}{2k} \binom{n}{k}^{-2} 
        \binom{2k}{d} \binom{2k-d}{d} \binom{2k-2d}{k-d} 
        / \binom{n-2k+d}{d}, \,\, \forall d \geq 1$, 
    $w_0 =    \left( \binom{n}{k}^{-1} - \binom{n-k}{k}^{-1} \right) \binom{n}{k}^{-1}  \binom{n}{2k}  
         \binom{2k}{k}.$
The $w_d$'s defined above satisfy the following.
$\sum_{d=0}^{k} w_d= 0.   w_d > 0, \, \forall d > 0$.
\begin{align}
\label{eq:weights for phi's: 2}
     w_d  = \calO \left (\frac{k^{2d}}{d!\, n^d}  \right), \,
     \text{for } d = 1, 2, ..., k.
\end{align}
Particularly, for fixed $d$, 
\begin{align}
     w_d  = 
      \left({1 + o(1)} \right) \frac{k^{2d}}{d! \, n^d}.
\end{align}
\end{proposition}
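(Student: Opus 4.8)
The plan is to prove Proposition \ref{prop: double U-statistic: alternative form of phi} by starting from the raw definition of $\psi$ in Equation \eqref{eq:psi: a size-2k kernel for Q(k')} via $\psi = \psi_k - \psi_0$, and then reorganizing the double sum over $(S_1,S_2)$ pairs according to the overlap $d = |S_1 \cap S_2|$. First I would expand $\psi_{k'}(\Skk)$ as a weighted sum of the quantities $\varphi_d(\Skk) := M_{d,k}^{-1}\sum_{|S_1\cap S_2|=d} h(S_1)h(S_2)$, which requires matching the inner normalization $1/N_d$ in \eqref{eq:psi: a size-2k kernel for Q(k')} with the combinatorial count $M_{d,k}$ of pairs of size-$k$ subsets of a fixed size-$2k$ set with overlap $d$; that identification is purely bookkeeping, yielding $\psi_{k'}(\Skk) = \sum_{d=0}^{k'} \tilde w_{d,k'}\,\varphi_d(\Skk)$ for explicit $\tilde w_{d,k'}$. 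Then $\psi = \psi_k - \psi_0$ telescopes into $\sum_{d=0}^{k} w_d \varphi_d$ for suitable $w_d$; I would isolate the $d=0$ term separately (since only $\psi_k$ has $d\geq 1$ contributions but both $\psi_k$ and $\psi_0$ have $d=0$ contributions), which explains the asymmetric-looking formula for $w_0$ versus $w_d$, $d\geq1$. Collecting the $\varphi_0$ terms gives the stated form $\psi = \sum_{d=1}^k w_d[\varphi_d - \varphi_0]$, using $\sum_{d=0}^k w_d = 0$ to absorb the $\varphi_0$ coefficient; I would verify $\sum_{d=0}^k w_d = 0$ directly as an algebraic identity, or more cheaply by noting that $\psi$ evaluated on a kernel $h\equiv\text{const}$ must vanish (since $\vu$ estimates a variance), forcing the weights to sum to zero once one checks $\varphi_d \equiv \text{const}^2$ in that case.

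For the sign claim $w_d > 0$ for $d \geq 1$: the formula $w_d = \binom{n}{2k}\binom{n}{k}^{-2}\binom{2k}{d}\binom{2k-d}{d}\binom{2k-2d}{k-d}/\binom{n-2k+d}{d}$ is manifestly a ratio of positive binomial coefficients whenever $2k \leq n$ (so that $\binom{n}{2k}$ and $\binom{n-2k+d}{d}$ are well-defined and positive) and $d \leq k$ (so $\binom{2k-2d}{k-d}$ makes sense), hence positive by inspection. That $\vu$ is then a Double U-statistic in the sense of Definition \ref{def:double-Ustat} follows because each $\varphi_d$ is itself a U-statistic (an average of the symmetric-in-$(S_1,S_2)$ function $h(S_1)h(S_2)$ over all pairs inside $\Skk$ with a fixed overlap pattern, which is a legitimate — if asymmetric in the underlying $X_i$'s — kernel of order $2k-d$), so $\psi$ is a weighted average of U-statistics; one should remark that the order is $2k-d$ rather than $2k$ because the $d$ shared coordinates appear only through $h$ restricted to those, but since we view $\psi$ as a kernel of $\Skk$ this is harmless.

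The asymptotics in \eqref{eq:weights for phi's: 2} are the analytic heart and the step I expect to be the main obstacle. The strategy is to write $w_d$ as a product of four binomial ratios and estimate each using Stirling-type bounds / falling-factorial expansions. The leading combinatorial factor $\binom{2k}{d}\binom{2k-d}{d}\binom{2k-2d}{k-d}$ counts ordered-ish pairs with overlap $d$; pulling out the dominant dependence, for fixed $d$ one has $\binom{2k}{d}\binom{2k-d}{d} \sim \frac{(2k)^{2d}}{(d!)^2}$ and $\binom{2k-2d}{k-d}/\binom{2k}{k} \to 2^{-2d}$ up to lower order, while $\binom{n}{2k}\binom{n}{k}^{-2} = \binom{2k}{k}\big/\binom{n}{k} \cdot (\text{correction})$ contributes the $\binom{n}{k}^{-1}$-type decay, and $\binom{n-2k+d}{d}^{-1} \sim n^{-d}d!$ (for fixed $d$, since $n-2k \sim n$ under $k=o(\sqrt n)$ or even $k = o(n)$). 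Multiplying, the $\binom{2k}{k}$ factors and the powers of $2$ cancel, leaving $w_d = (1+o(1))\frac{k^{2d}}{d!\,n^d}$ for fixed $d$. For the uniform-in-$d$ bound $w_d = \mathcal{O}(k^{2d}/(d!\,n^d))$ over all $d = 1,\dots,k$, I would replace the asymptotic equivalences by two-sided inequalities: bound $\binom{2k}{d}\binom{2k-d}{d}\binom{2k-2d}{k-d} \leq \binom{2k}{k}\cdot \frac{(2k)^{2d}}{(d!)^2}$ type estimates (or bound the whole thing by a hypergeometric-PMF-times-count argument), and bound $\binom{n-2k+d}{d} \geq (n-2k)^d/d! \gtrsim n^d/d!$ uniformly, taking care that $n - 2k \geq n/2$ or at least stays a constant fraction of $n$ in the regime where the claim is used. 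The delicacy is controlling the cross terms (the $2^{-2d}$ factor and the $\binom{2k}{k}$ cancellation) simultaneously for all $d$ up to $k$ without the bound degrading; I would handle this by grouping the $d$-dependent pieces into a single ratio $\prod_{j}(\cdots)$ and showing it is bounded by an absolute constant, using that $\binom{2k-2d}{k-d}\binom{2k}{k}^{-1}2^{2d} \leq C$ uniformly (a standard central-binomial estimate). The rest — verifying $w_0$ matches and $\sum w_d = 0$ — is then routine once the telescoping in the first paragraph is in hand.
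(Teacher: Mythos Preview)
Your overall plan matches the paper's proof almost exactly: reorganize $\psi_k - \psi_0$ by the overlap $d$, identify the weights $w_d$ and the inner U-statistics $\varphi_d$, prove $\sum_{d=0}^k w_d = 0$ by evaluating on a constant kernel (the paper literally takes $h(S_i)\equiv 1$), and then absorb the $\varphi_0$ coefficient. The sign claim $w_d>0$ for $d\ge 1$ is also handled identically.

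The one place where your route diverges from the paper, and where you should be careful, is the asymptotic analysis of $w_d$. The paper does \emph{not} go through Stirling or central-binomial estimates; instead it performs a direct factorial cancellation yielding
\[
w_d \;=\; \underbrace{\frac{(n-k)!\,(n-k)!}{n!\,(n-2k+d)!}}_{\text{Part I}\;=\;[1+o(1)]\,n^{-d}}\;\cdot\;\underbrace{\frac{k!\,k!}{(k-d)!\,(k-d)!}}_{\text{Part II}\;\le\;k^{2d}}\;\cdot\;\underbrace{\frac{1}{d!}}_{\text{Part III}},
\]
from which both the fixed-$d$ asymptotic and the uniform $\mathcal{O}(k^{2d}/(d!\,n^d))$ bound are immediate. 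Your proposed intermediate estimate $\binom{2k-2d}{k-d}\binom{2k}{k}^{-1}2^{2d}\le C$ uniformly in $d$ is actually \emph{false}: by the standard bounds $\binom{2m}{m}\asymp 4^m/\sqrt{m}$ this ratio behaves like $\sqrt{k/(k-d)}$ and blows up as $d\to k$. So the ``grouping into a single ratio'' you mention in passing is not a stylistic choice but the actual fix --- and once you carry it out, you land precisely on the paper's three-part decomposition above, with no need for Stirling at all.
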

The proof is collected in Appendix \ref{sec:proof for double U}. 
We observe that given $k = o(\sqrt{n})$, $w_d$ decays with $d$ at a speed even faster than the geometric series. In our later analysis, we can show that the first term, $w_1 [\varphi_1 (\Skk ) - \varphi_0 \Skk)]$, can be a dominating term in $\psi(\Skk)$.
Moreover, with kernel $\varphi_d(\Skk)$, we introduce the following decomposition of $\sigma_{c, 2k}^2$.

\begin{proposition}[Decomposition of $\sigma_{c, 2k}^2$]
\label{prop:sigma_c^2 written as eta_c^2}
For any size-$2k$ subsample sets $\Skk_1, \Skk_2$, s.t. $|\Skk_1 \cap  \Skk_2 | = c$ and $1 \leq c \leq 2k, 1 \leq d_1, d_2 \leq k$, we define
\begin{align}
\label{eq:def:eta}
    \etacd := \Cov \big[ \varphi_{d_1}\left(\Skk_1 \right) - \varphi_0\left(\Skk_1 \right), \\
    \varphi_{d_2}\left(\Skk_2 \right) - \varphi_0\left(\Skk_2 \right) \big]. \nonumber
\end{align}
Then, we can represent $\sigma_{c, 2k}^2$ as a weighted sum of $\etacd$'s.
\begin{align}
\label{eq:sigma_c^2 decomposition}
    \sigma_{c, 2k}^{2} 
    = \sum_{d_1 = 1}^{k} \sum_{d_2 = 1}^{k} w_{d_1} w_{d_2} \etacd.
\end{align}
\end{proposition}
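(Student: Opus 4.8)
The plan is to read off the identity \eqref{eq:sigma_c^2 decomposition} directly from the representation of the size-$2k$ kernel $\psi$ established in Proposition \ref{prop: double U-statistic: alternative form of phi}, combined with the bilinearity of the covariance operator. Fix any two size-$2k$ subsample sets $\Skk_1, \Skk_2$ with $|\Skk_1 \cap \Skk_2| = c$. The first point to settle is well-definedness: since $\mathcal{X}_n$ consists of i.i.d.\ observations and each $\varphi_d$ in \eqref{eq:def:phi} is invariant under permutations of the $2k$ elements of its argument (the inner sum ranges over \emph{all} pairs $S_1, S_2 \subset \Skk$ with $|S_1 \cap S_2| = d$, and $h$ is itself a symmetric kernel), the joint law of $\big(\varphi_{d_1}(\Skk_1) - \varphi_0(\Skk_1),\, \varphi_{d_2}(\Skk_2) - \varphi_0(\Skk_2)\big)$ depends on $\Skk_1,\Skk_2$ only through the overlap size $c$ together with $d_1, d_2$. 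This is the same exchangeability argument that makes $\xi_{d,k}^2$ in \eqref{eq:xi_d,k, Var E(h|X)} and $\sigma_{c,2k}^2$ in \eqref{eq:def:sigma_c^2} well defined; it shows $\etacd$ is well defined and that $\sigma_{c,2k}^2 = \Cov[\psi(\Skk_1),\psi(\Skk_2)]$ does not depend on the particular choice of sets.

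Next I would substitute the weighted-average form \eqref{eq:alternative form of phi}, namely $\psi(\Skk_i) = \sum_{d_i=1}^{k} w_{d_i}\big[\varphi_{d_i}(\Skk_i) - \varphi_0(\Skk_i)\big]$ for $i=1,2$, into $\sigma_{c,2k}^2 = \Cov[\psi(\Skk_1),\psi(\Skk_2)]$. Because the $w_d$ are deterministic constants depending only on $n,k,d$, pulling the finite double sum outside the covariance and applying bilinearity yields
\begin{align*}
    \sigma_{c, 2k}^2 = \Cov\big[\psi(\Skk_1), \psi(\Skk_2)\big]
    &= \sum_{d_1 = 1}^{k} \sum_{d_2 = 1}^{k} w_{d_1} w_{d_2}\,
      \Cov\big[\varphi_{d_1}(\Skk_1) - \varphi_0(\Skk_1),\, \varphi_{d_2}(\Skk_2) - \varphi_0(\Skk_2)\big] \\
    &= \sum_{d_1 = 1}^{k} \sum_{d_2 = 1}^{k} w_{d_1} w_{d_2}\, \etacd,
\end{align*}
where the last equality is just the definition \eqref{eq:def:eta} of $\etacd$. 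This is exactly \eqref{eq:sigma_c^2 decomposition}; note the indices start at $1$ because the representation \eqref{eq:alternative form of phi} writes $\psi$ as a combination of the differences $\varphi_d - \varphi_0$ rather than of the $\varphi_d$ individually.

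Given Proposition \ref{prop: double U-statistic: alternative form of phi}, there is no substantive obstacle here: the argument is the routine algebra of expanding a covariance of finite linear combinations, and the only step demanding any care is the exchangeability/symmetry verification in the first paragraph, which is inherited from the standard theory of U-statistics. The genuine work lies upstream, in establishing the representation \eqref{eq:alternative form of phi} and the closed forms for the weights $w_d$ in Proposition \ref{prop: double U-statistic: alternative form of phi}, which this proof takes as given.
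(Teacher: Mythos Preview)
Your proposal is correct and matches the paper's own justification: the paper states that the proposition ``can be directly concluded by combining the alternative form of $U_n$'s kernel $\psi$ in Equation \eqref{eq:alternative form of phi} and the definition of $\etacd$,'' which is precisely the substitution-plus-bilinearity argument you give. Your additional remark on well-definedness via exchangeability is a welcome clarification but not something the paper spells out.
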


This proposition can be directly concluded by combining the alternative form of $U_n$'s kernel $\psi$ in Equation \eqref{eq:alternative form of phi} and the definition of $\etacd$. 
With the help of the \emph{Double U-statistic} structure, upper bounding $\sigma_{c, 2k}^2$ can be boiled down to analyzing $\etacd$. Detailed analysis of this connection is provided in Section \ref{sec:append:roadmap} and Appendix \ref{sec:technical lemmas}. Note that we can further decompose $\etacd$ (see Appendix \ref{sec:proof:lemma:eta_c}), so $\sigma_{c, 2k}^2$ can be viewed as a weighted sum of $\Cov[h(S_2) h(S_2), h(S_3) h(S_4)]$'s. 

\subsection{Assumptions}
\label{sec:assumptions and notations}

Assumption \ref{assumption: k = n^(1/2 - epsilon)} limits the kernel size $k$ as a lower-order of $\sqrt{n}$, while Assumption \ref{assumption:2nd moment} controls the growth rate of $\xi_{d, k}^2$ with $d$. 
Assumption \ref{assumption:low dependency}, \ref{assumption:non-negative and ordinal}, and \ref{assumption:4th moment} are related to $\Cov [h(S_1) h(S_2), \allowbreak h(S_3) h(S_4)]$. 
As previously mentioned, $\Cov [h(S_1) h(S_2), \allowbreak h(S_3) h(S_4)]$ can be viewed as an extension of $\xi_{d, k}^2 = \Cov[h(S_1), h(S_2)]$, the classical covariance of two kernels. While $\xi_{d, k}^2$ only depends on one parameter, i.e., $d = |S_1 \cap S_2|$, 11 parameters are needed to fully determine $\Cov [h(S_1) h(S_2), \allowbreak h(S_3) h(S_4)]$, since it involves a 4-way overlapping structure. This can be visualized in Figure \ref{fig:set_relationship} in Appendix. We denote the number of parameters as ``Degree of Freedom (\emph{DoF})'' of the covariance. Essentially, Assumptions \ref{assumption:low dependency}, \ref{assumption:non-negative and ordinal}, and \ref{assumption:4th moment} are about reducing this \emph{DoF} and controlling the growth of $\Cov [h(S_1) h(S_2), \allowbreak h(S_3) h(S_4)]$ with overlapping samples. 

In Appendix \ref{sec:append:assump}, we provide further discussion and examples of our assumptions. In Appendix \ref{sec:proof under relaxed dependency}, we propose a relaxation of Assumption \ref{assumption:low dependency} and present the proof of the main results under the new assumptions. 

\begin{assump}
\label{assumption: k = n^(1/2 - epsilon)}
     There exist a constant  $\epsilon \in (0, 1/2)$, so that the growth rate of kernel size $k$ regarding sample size $n$ is bounded as $k = \calO(n ^{1/2 - \epsilon})$. 
\end{assump}

\begin{assump}
\label{assumption:2nd moment}
$\forall k \in \N^+, \xi_{1, k}^2 > 0$ and $\xi_{k, k}^2 < \infty$.
There exist a universal constant $a_1 \geq 1$ independent of $k$, satisfying that
\begin{align*}
   \sup_{d = 2, 3, ..., k} \frac{\xi_{d, k}^2}{d^{a_1} \xi_{1, k}^2} = \calO(1).
\end{align*}
\end{assump}

Note that a smaller $a_1$ in Assumption \ref{assumption:2nd moment} implies a stronger assumption. It is well known that $k \xi_{d, k}^2 \leq d \xi_{k, k}^2$ \citep{ustatBook}, the smallest possible value of $a_1$ is 1, which is used in the existing literature \citep{lucas:old, Romano2020, lucas:new, peng2019asymptotic}. Hence, if we force $a_1 = 1$ and only focus on the upper bound of $\Var(U_n)$, the growth rate of $k$ in Assumption \ref{assumption: k = n^(1/2 - epsilon)} can be relaxed to $o(n)$. 
However, this trade-off between Assumptions \ref{assumption: k = n^(1/2 - epsilon)} and \ref{assumption:2nd moment} cannot be applied to ratio consistency directly. 

To motivate our other assumptions, we provide a brief discussion on the 4-way overlap of $\Cov [h(S_1) h(S_2), \allowbreak h(S_3) h(S_4)]$. 
As we mentioned before, the goal is to avoid direct assumptions of $\psi(\Skk)$ and its covariance $\sigma_{c, 2k}^2$ and study the fourth-moment term $\Cov [h(S_1) h(S_2), \allowbreak h(S_3) h(S_4)]$.
To simplify the notation, we let 
\begin{align}
\label{eq:def:rho = Cov(h1h2, h3h4)}
\covh:=
    \Cov \left[h(S_1) h(S_2), h(S_3) h(S_4)\right].
\end{align}

Then $\covh$ involves 11 different overlap schemes, 
\begin{align*}
    \text{2-set: }& |S_1 \cap S_2|,\, |S_1 \cap S_3|,\, |S_1 \cap S_4|,\\
    &\, |S_2 \cap S_3|,\, |S_2 \cap S_4|,\, |S_3 \cap S_4|; \\
    \text{3-set: }& |S_1 \cap S_2 \cap S_3|,\, |S_1 \cap S_2 \cap S_4|,\\ 
    &|S_1 \cap S_3 \cap S_4|,\, |S_2 \cap S_3 \cap S_4|;\\
    \text{4-set: }& |S_1 \cap S_2 \cap S_3 \cap S_4|. 
\end{align*}

Hence, 11 parameters are needed to describe $\covh$. We denote the number of these parameters as the ``Degrees of Freedom'' (\emph{DoF}) of $\covh$. Furthermore, there are two types of these parameters: $d_1 = |S_1 \cap S_2|$ and $d_2 = |S_3 \cap S_4|$ describes the overlapping within $\Skk_1$ and $\Skk_2$ respectively; while other 9 overlapping sets are subsets of $\Skk_1 \cap \Skk_2$, so they describe the overlapping between $\Skk_1$ and $\Skk_2$. 
We can describe these 9 overlapping sets by a 9-dimensional vector $\unr$, whose definition is collected in Appendix \ref{sec:append:r}. Hence, the 11 \emph{DoF} can be denoted by tuple $(\unr, d_1, d_2)$.

However, it may not be necessary to know all $\unr, d_1, d_2$ values to calculate this covariance $\covh$. For example, in the linear average kernel (Example \ref{example:linear kernel for low depent. assump} in Appendix \ref{sec:assump:relax low dependency assumption}), $\covh$ only depends on $\unr$. This may be expected for an estimator that is approximately linear. Hence, we propose the following assumption.

\begin{assump}
\label{assumption:low dependency}
$\covh$ only depends on the 9 \emph{DoF} vector $\unr$. Hence, without the risk of ambiguity, we define a function $\rhor(\cdot)$ with
\begin{align}
\label{eq:def F(r)}
    \rhor = \covh.
\end{align}
\end{assump} 

The assumption implies that the within $\Skk_1$ or $\Skk_2$ overlapping counts have no impact on $\covh$. This simplifies a cancellation pattern when analyzing $\eta_{c, 2k}(d_1, d_2)$ \eqref{eq:def:eta}. 
A comprehensive discussion of this assumption can be found in Appendix  \ref{sec:append:assump}. We first demonstrate that this assumption is valid for the linear average kernel, as previously mentioned. Next, we provide an example to illustrate the challenges of reducing \emph{DoF} below 9 by only considering two-way overlaps, indicating that further simplification of this assumption may require specific assumptions about the kernel functions. In addition, in Appendix \ref{sec:proof under relaxed dependency}, we suggest a relaxation of this assumption and provide an alternative proof of the main results based on this relaxed assumption. 

\begin{assump}[Ordinal Covariance]
\label{assumption:non-negative and ordinal}

For all size-$k$ subsets $S_1, S_2, S_3, S_4$ and $S_1', S_2', S_3', S_4'$, let $\covh$ and $\covh'$ denote the corresponding covariance as defined in Equation \ref{eq:def:rho = Cov(h1h2, h3h4)} with DOFs $\unr$ and $\unr'$ (defined in Appendix \ref{sec:append:r}), respectively. Then, we have:
\begin{align}
&\covh \geq \covh',
\quad \text{if $r_{ij} \geq r_{ij}', \forall i, j = 0, 1, 2$.}
\nonumber
\end{align}

Moreover, given size-$k$ sets $S'$, $S''$, and $\unr$ such that $|S' \cap S''| = c$ and $|\unr| = c$, we have:

\begin{align}
\covh \leq \Cov[h(S')^2, h(S'')^2] =: F_c^{(k)}.
\label{eq:F_c as upper bound}
\end{align}

\end{assump}

Assumption \ref{assumption:non-negative and ordinal} implies that more overlapping leads to larger $\covh$. This is a reasonable result to expect. 
For every $c = |\Skk_1 \cap \Skk_2|$, it also provides an upper bound of $\covh$, where $F_c^{(k)}$ refers to $\rho$ with ``maximum possible overlaps'' given $c$ such that $S_1 = S_2, S_3 = S_4$. 
The overlapping associated with $\fk$ is visualized in Figure \ref{fig:example of ordinal cov} in Appendix \ref{sec:append:assump}.
It's easy to see that $\covh \geq 0$, an analog to $\xi_{d, k}^2 \geq 0$ in a regular U-statistics setting \citep{ustatBook}.

\begin{assump}
\label{assumption:4th moment}
For $\fk_c$ defined in Assumption \ref{assumption:non-negative and ordinal}, when $c = |S_1 \cap S_2| = 1$, we have
\begin{align}
\label{eq: F_1 upper bound}
     \frac{F_1^{(k)}}{\xi_{1,k}^4} = \frac{\Cov[h(S_1)^2, h(S_2)^2]}{\left[\Cov[h(S_1), h(S_2)] \right]^2} = \calO(1)
\end{align}

In addition, there exist a universal constant $a_2 \geq 1$ independent of $k$, satisfying
\begin{align}
\label{eq:fourth order growth}
    \sup_{c = 2, 3, ..., 2k}
    \frac{F_c^{(k)}}{c^{a_2} \,F_1^{(k)}} = \calO(1).
\end{align}
\end{assump}
Equation \eqref{eq: F_1 upper bound} states that a fourth-moment term cannot exceed a second-moment term $\xi_{1, k}^2$. This can be verified for the linear average kernel with basic moment conditions. Similarly to the polynomial growth rate of $\xi_{d, k}^2$ specified in Assumption \ref{assumption:2nd moment}, Equation \eqref{eq:fourth order growth} controls a polynomial growth rate of $\covh$ with respect to $c$, as $\fk$ is an upper bound of $\covh$. It is worth noting that Assumption \ref{assumption:4th moment} can be implied by Assumption \ref{assumption:2nd moment} for certain specific kernels (see Example \ref{example:fourth moment under no cumulant} in Appendix \ref{sec:append:assump}).

\subsection{Main Results}
\label{sec:main result}

We now present our main results. As a direct consequence of the following theorem, the ratio consistency property is provided in Corollary \ref{thm:ratio consistency}.

\begin{theorem}[Asymptotic variance of $U_n$ and $\vu$]
\label{thm:var of u_n and v_n}
Under \allassump, we can bound  $\Var(U_n)$  \eqref{eq:Decomposition of Var(U_n) Hoeffding} and $\Var(\vu)$  \eqref{eq:Var(Vu) expression} as
\begin{align}
\label{eq:Var(Un) Theta bound}
    \Var(U_n) &= \left(1 + o(1)\right) \frac{k^2 \xi_{1, k}^2}{n}, \\
\label{eq:Var(Vu) upper  bound}
    \Var(\vu) &= \calO \left(\frac{k^2 \check \sigma_{1, 2k}^2}{n}\right),
\end{align}
where $\check \sigma_{1, 2k}^2 \asymp \frac{k^2 \xi_{1,k}^4}{n^2} $ is the upper bound of $\sigma_{1, 2k}^2$ given by Proposition \ref{lemma:bound each sigma_c^2 for finite c} in Appendix \ref{sec:technical lemmas}. Here, ``$f \asymp g$'' implies $f = \calO(g)$ and $g = \calO(f)$.
\end{theorem}

The proof of the results is provided in Appendix \ref{sec:proof of theorem Var Un Var Vu}. The calculation of $\Var(U_n)$ in \eqref{eq:Var(Un) Theta bound} and $\Var(\vu)$ in \eqref{eq:Var(Vu) upper bound} requires controlling the growth of $\xi_{d, k}^2$ and $\sigma_{c, 2k}^2$.
In particular, \eqref{eq:Var(Un) Theta bound} can be derived from a general proposition (Proposition \ref{thm:leading variance of U-stat}) provided below. However, the proof of \eqref{eq:Var(Vu) upper bound} is more complex, as it relies on the \emph{double U-statistic} structure of $\vu$.
A proof roadmap is presented in Section \ref{sec:append:roadmap}, and technical lemmas to upper bound $\etacd$ \eqref{eq:def:eta} and $\sigma_{c, 2k}^2$ are provided in Appendix \ref{sec:technical lemmas}.

\begin{proposition}
[Leading covariance domination]
\label{thm:leading variance of U-stat}
For a complete U-statistic $U_n$ with size-$k$ kernel and $k = o(\sqrt{n})$, assume that $\xi_{1, k}^2 > 0$ and there exists a non-negative constant $C$ such that 
$$\limsup_{k\to \infty, 2 \leq d\leq k}\xi_{d, k}^2 / \left(d! {\xi_{1, k}^2} \right) = C$$.
Then,
$$\lim_{n \to \infty} \Var (U_n) / \left(k^2 \xi_{1, k}^2 / n \right) = 1.$$
\end{proposition}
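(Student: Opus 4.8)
The plan is to peel the $d=1$ term off the Hoeffding decomposition \eqref{eq:Decomposition of Var(U_n) Hoeffding} and show that the remaining tail is of strictly smaller order. Write $\Var(U_n) = \sum_{d=1}^{k} \gamma_{d,k,n}\, \xi_{d,k}^2$ with $\gamma_{d,k,n} = \binom{n}{k}^{-1}\binom{k}{d}\binom{n-k}{k-d}$. Since $\xi_{d,k}^2 = \Var[\E(h(S) | X_1,\ldots,X_d)] \ge 0$ by \eqref{eq:xi_d,k, Var E(h|X)}, every summand is non-negative, so $\Var(U_n) \ge \gamma_{1,k,n}\,\xi_{1,k}^2$. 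It therefore suffices to establish (i) $\gamma_{1,k,n} = (1+o(1))\,k^2/n$ and (ii) $\sum_{d=2}^{k} \gamma_{d,k,n}\,\xi_{d,k}^2 = o(k^2/n)\,\xi_{1,k}^2$; combining these with the trivial lower bound squeezes $\Var(U_n) / (k^2\xi_{1,k}^2/n)$ between $1+o(1)$ and $1+o(1)$, which is the claim.

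For (i) I would write $\gamma_{1,k,n} = k^2\,(n-k)!^{2} / [\,n!\,(n-2k+1)!\,]$ and recognize it as $\frac{k^2}{n-k+1}\prod_{j=1}^{k-1}\frac{n-k-j+1}{n-j+1}$. Taking logarithms, the product contributes $\sum_{j=1}^{k-1}\log(1 - k/(n-j+1)) = -O(k^2/n) = o(1)$ since $k = o(\sqrt{n})$, giving $\gamma_{1,k,n} = (1+o(1))\,k^2/n$ (the fact already used in the paper when discussing the hypergeometric coefficients). For (ii) the crucial ingredient is an elementary uniform bound on the weights: from $\binom{k}{d} \le k^d/d!$ and $\binom{n-k}{k-d}/\binom{n}{k} = \frac{k!}{(k-d)!}\cdot\frac{(n-k)!^{2}}{(n-2k+d)!\,n!} \le k^d (n-k)^{-d}$ one gets $\gamma_{d,k,n} \le \frac{1}{d!}\,(k^2/(n-k))^{d}$ for every $d \ge 1$. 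On the other hand, the $\limsup$ hypothesis supplies, for each $\varepsilon > 0$, a threshold $K$ with $\xi_{d,k}^2 \le (C+\varepsilon)\,d!\,\xi_{1,k}^2$ simultaneously for all $k \ge K$ and all $2 \le d \le k$. Multiplying the two bounds, for $k \ge K$,
\[
\sum_{d=2}^{k} \gamma_{d,k,n}\,\xi_{d,k}^2 \;\le\; (C+\varepsilon)\,\xi_{1,k}^2 \sum_{d=2}^{k} \left(\frac{k^2}{n-k}\right)^{d} \;=\; (C+\varepsilon)\,\xi_{1,k}^2\,(1+o(1))\left(\frac{k^2}{n-k}\right)^{2},
\]
where $k = o(\sqrt{n})$ makes $k^2/(n-k) = o(1)$, so the geometric tail converges and is dominated by its first term; since $(k^2/n)^2 = o(k^2/n)$, this is precisely (ii).

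The main obstacle here is more bookkeeping than conceptual: it is the decay rate of $\gamma_{d,k,n}$. The hypothesis deliberately permits $\xi_{d,k}^2$ to grow like $d!$, so the argument closes only because $\gamma_{d,k,n}$ carries a compensating $1/d!$; the product $\gamma_{d,k,n}\,\xi_{d,k}^2$ is then of size $(k^2/n)^{d}$, whose sum over $d \ge 2$ is of smaller order than the $d=1$ term. Pinning down the clean uniform-in-$d$ bound $\gamma_{d,k,n} \le \frac{1}{d!}(k^2/(n-k))^{d}$ (together with the verification that $k=o(\sqrt n)$ is exactly what is needed for $(k^2/n)^2=o(k^2/n)$) is the one place that needs care; the rest is routine.
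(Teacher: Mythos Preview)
Your proposal is correct and matches the paper's own proof essentially line by line: the paper also isolates the $d=1$ term, shows $\gamma_{1,k,n}=(1+o(1))k^2/n$ via the factorial ratio, invokes the same uniform bound $\gamma_{d,k,n}\le \frac{1}{d!}\bigl(k^2/(n-k+1)\bigr)^d$ (its Proposition~\ref{prop:geometric ratio}), and then multiplies by $\xi_{d,k}^2\le C\,d!\,\xi_{1,k}^2$ to obtain the same geometric tail. The only cosmetic differences are your use of logarithms for (i) and your more explicit $\varepsilon$--$K$ handling of the $\limsup$; the paper's denominator is $n-k+1$ rather than your $n-k$, but this is immaterial.
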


The proof of this proposition can be found in Appendix \ref{sec:append:proof:thm:leading variance of U-stat}. This proposition relaxes the conditions from Theorem 3.1 in \citet{Romano2020} and provides a foundation for our approach to bounding $\Var(\vu)$.
Specifically, our condition allows for the ratio $\xi_{d, k}^2 / \xi_{1, k}^2$ to grow at a factorial rate of $d$, whereas the conditions in \citep{lucas:old, lucas:new, Romano2020} only allow for linear growth. A comparison between our assumption on $\xi_{d, k}^2 / \xi_{1, k}^2$ and existing literature is provided in Section \ref{sec:assumptions and notations}.

\begin{corollary}[Ratio consistency of $\vu$]
\label{thm:ratio consistency}
Under \allassump,
\begin{align*}
\frac{\Var (\vu )}{ \left (\E (\vu ) \right)^2}
= \calO \left(\frac{1}{n} \right),
\end{align*}
which implies that
$
     {\vu} / { \E (\vu )} \xrightarrow{P} 1.
$
\end{corollary}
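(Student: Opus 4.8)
The plan is to obtain Corollary \ref{thm:ratio consistency} as an immediate consequence of Theorem \ref{thm:var of u_n and v_n}, the unbiasedness of $\vu$, and Chebyshev's inequality; essentially all of the genuine difficulty is already absorbed into the theorem, so the corollary amounts to bookkeeping with the established rates.

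First I would record that $\vu = \hatvh - \hatvs$ is unbiased for $\varU$ (Equation \eqref{eq:our est, complete}), so that $\E(\vu) = \Var(U_n)$. Feeding this into the first conclusion \eqref{eq:Var(Un) Theta bound} of Theorem \ref{thm:var of u_n and v_n} gives $\E(\vu) = \left(1 + o(1)\right)\tfrac{k^2}{n}\xi_{1,k}^2$; since the assumptions guarantee $\xi_{1,k}^2 > 0$, squaring yields the two-sided rate $\left(\E(\vu)\right)^2 = \left(1 + o(1)\right)\tfrac{k^4}{n^2}\xi_{1,k}^4$, in particular a quantity that is bounded below by a constant multiple of its nominal order $k^4\xi_{1,k}^4/n^2$.

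Next I would substitute the second conclusion of Theorem \ref{thm:var of u_n and v_n}, namely $\Var(\vu) = \calO\!\left(\tfrac{k^2}{n}\check\sigma_{1,2k}^2\right)$ with $\check\sigma_{1,2k}^2 = \Theta\!\left(\tfrac{k^2}{n^2}\xi_{1,k}^4\right)$, so that $\Var(\vu) = \calO\!\left(\tfrac{k^4}{n^3}\xi_{1,k}^4\right)$. Dividing this bound by the lower bound on $\left(\E(\vu)\right)^2$ from the previous step, the common factor $k^4\xi_{1,k}^4$ cancels and one is left with $\Var(\vu)/\left(\E(\vu)\right)^2 = \calO(1/n)$, which is exactly the displayed bound in the corollary.

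Finally, for the convergence statement I would apply Chebyshev's inequality to the normalized estimator $\vu/\E(\vu)$, which has mean $1$ and variance $\Var(\vu)/\left(\E(\vu)\right)^2 = \calO(1/n) \to 0$: for every $\epsilon > 0$, $\Pr\!\left(\left|\vu/\E(\vu) - 1\right| > \epsilon\right) \le \Var(\vu)/\left(\epsilon^2 \left(\E(\vu)\right)^2\right) \to 0$ as $n \to \infty$, hence $\vu/\E(\vu) \xrightarrow{P} 1$. There is no real obstacle at this stage; the one point worth care is that the denominator must be controlled not by a mere upper bound but by the \emph{sharp} two-sided rate for $\Var(U_n)$ together with $\xi_{1,k}^2 > 0$, and the substantive work is entirely the delicate upper bound on $\Var(\vu)$ established via the Double U-statistic decomposition and the truncation argument underlying Theorem \ref{thm:var of u_n and v_n}.
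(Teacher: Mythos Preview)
Your proposal is correct and follows essentially the same route as the paper's proof: both combine the two rate statements in Theorem \ref{thm:var of u_n and v_n} with $\E(\vu)=\Var(U_n)$ to cancel the common factor $k^4\xi_{1,k}^4$ and obtain the $\calO(1/n)$ ratio, then conclude via $L_2$ convergence (the paper phrases it as $L_2$; you unpack it as Chebyshev). The only cosmetic difference is that the paper's proof expresses $\check\sigma_{1,2k}^2$ in terms of $F_1^{(k)}$ and then invokes Assumption~\ref{assumption:4th moment} to pass to $\xi_{1,k}^4$, whereas you use the form $\check\sigma_{1,2k}^2=\Theta(k^2\xi_{1,k}^4/n^2)$ already recorded in the theorem statement; your remark about needing the sharp two-sided rate for the denominator is apt and matches the paper's use of \eqref{eq:Var(Un) Theta bound}.
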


This result is a corollary of Theorem \ref{thm:var of u_n and v_n} and demonstrates the consistency of the variance estimator $\vu$ in terms of ratios. The proof can be found in Appendix \ref{sec:proof of main theorem, ratio consistency}.
To the best of our knowledge, this is the first proof of the ratio consistency of an unbiased variance estimator for growing order U-statistics.

\subsection{Proof Roadmap}
\label{sec:append:roadmap}

The roadmap to upper bound $\Var(\vu)$ \eqref{eq:Var(Vu) upper bound} in Theorem \ref{thm:var of u_n and v_n} is provided in Equation \eqref{eq:road map}. The relevant technical lemmas are summarized in Appendix \ref{sec:technical lemmas}.
\begin{align}
\label{eq:road map}
\Var(\vu) 
     &=  \sum_{c = 1}^{2k} v_c \sigma_{c, 2k}^2 
      \leq  \sum_{c = 1}^{2k} v_c \checksigc \\
    &\overset{(*)}{\asymp} \sum_{c = 1}^{T_1} v_c 
     \checksigc
    \overset{(\dagger)}{\asymp}  v_1 \checksigone 
    \overset{(\ddagger)}{\asymp}  \frac{k^4}{n^3} \fk_1. \nonumber
\end{align}

The quantity $v_c:= \binom{n}{2k}^{-1} \binom{2k}{c} \binom{n-2k}{2k-c}$ represents the coefficients in the Hoeffding decomposition of $\Var(\vu)$ \eqref{eq:Var(Vu) expression}; $\checksigc$ is the upper bound of $\sigma_{c, 2k}^2$ given by Propositions \ref{lemma:bound each sigma_c^2 for finite c} and \ref{lemma:bound sigma_c^2 for any c}; and "f $\asymp$ g" means that $f = \calO(g)$ and $g = \calO(f)$. The inequalities in \eqref{eq:road map} should be interpreted as follows.
\begin{itemize}
\item The first inequality $\leq$ is a result of replacing $\sigma_{c, 2k}^2$ with either its tighter bound for $c = 1, 2, ..., T_1$ (Proposition \ref{lemma:bound each sigma_c^2 for finite c}) or its looser bound for $c = T_1 + 1, 2, ..., 2k$ (Proposition \ref{lemma:bound sigma_c^2 for any c}).
Each $\sigma_{c, 2k}^2$ can be decomposed into $\etacd$'s \eqref{eq:sigma_c^2 decomposition}. Propositions \ref{lemma:bound each sigma_c^2 for finite c} and \ref{lemma:bound sigma_c^2 for any c} are based on the tighter and looser upper bounds of $\etacd$ (Lemma \ref{lemma:eta_c (d1, d2)} and \ref{lemma:bound every eta_c^2}).

\item The first asymptotic notation $\asymp$ (denoted with $*$) is concluded from Lemma \ref{lemma:Finite truncated variance I}. The value of $T_1 = \floor{\frac{1}{\epsilon}}+1$ only depends on the growth rate of $k$, not $n$, as we assume $k = o(n^{1/2 -\epsilon})$ in Assumption \ref{assumption: k = n^(1/2 - epsilon)}.

\item The second asymptotic notation $\asymp$ (denoted with $\dagger$) is a result of comparing the finite $\checksigc$ terms for $c = 1, 2, ..., T_1$.

\item The last asymptotic notation $\asymp$ (denoted with $\ddagger$) is concluded from Lemma \ref{lemma:bound each sigma_c^2 for finite c}.
\end{itemize}

\section{Application to Random Forests}
\label{sec:apply to rf}

Random forests can be viewed as an incomplete infinite-order U-statistic with a random kernel \citep{lucas:old}. The purpose of this section is to present a comprehensive algorithm, as well as two extensions: one for the case when $k > n/2$ and another one that uses local smoothing to address the issue of negative estimation values. 

Notation-wise, we present the algorithm in the context of regression, where we observe a vector of covariates $\bm x_i \in \R^p$ and $y_i \in \R$ for observations $i$. Hence, define $X_i = (\bm x_i, y_i)$, and the kernel function $h(S_i)$ can be viewed as the tree prediction on a given target point $x^\ast$ with subsample $S_i$. The implementation of the variance estimator is straightforward using this setting and is summarized in Algorithm \ref{algo:var est}. We want to make a few comments. First, the original random forest \citep{breiman2001random} uses bootstrap samples, i.e., sampling with replacement, to build each tree. However, sampling without replacement \citet{geurts2006extremely} is also prevalent and achieves similar performances. Secondly, most random forest models utilize a random kernel instead of fixed ones. This is mainly due to the random feature selection \citep{breiman2001random} and random splitting point \citep{geurts2006extremely} when fitting each tree. \citet{lucas:old} show that U-statistics with random kernel converge in probability to its fixed kernel counterpart by viewing the fixed kernel version as the expectation of the random version. Under suitable conditions, given $B$ large enough, the theoretical analysis of random U-statistic can be reasonably reduced to analyzing the non-random counterpart, allowing our method to be applied. It is possible that both our estimators of $\vh$ and $\vs$ are inflated by the influence of the randomness due to their U statistic representation. However, such inflations are likely canceled out by the difference, and our simulation results in Section \ref{sec:simulation} confirm this speculation by showing that the estimator is mostly unbiased. 

\begin{algorithm}[htbp] 
\label{algo:var est}
    \caption{{\propEstFull} ($k \leq n/2$)}
    \KwIn{n, k, M,  B, training set $\calX_n$, and testing sample $x^*$}
    \KwOut{$\widehat{\Var}(\UincMatch)$}
    {\bf Construct matched samples:}\\
    \For{$b = 1, 2, \ldots, B$}{
        Sequentially sample $\{S_1^{(b)},  S_2^{(b)}, ..., S_M^{(b)}\}$ from $\calX_n$ such that $S_i^{(b)}$s are mutually exclusive, i.e., $S_i^{(b)} \cap S_{i'}^{(b)} = \emptyset$ for $i \neq i'$. \\
        }
    {\bf Fit trees and obtain predictions:}\\
    \quad Fit random trees for each subsample $S_i^{(b)}$ and obtain prediction $h(S_i^{(b)})$ on the target point $x^*$. \\
    {\bf Calculate the variance estimator components:}\\
    \quad Forest average: $\UincMatch = \frac{1}{MB} \sum_{i=1}^M \sum_{b=1}^B \hib$ \\
    \quad Within-group average: $\bar h^{(b)} = \frac{1}{M} \sum_{i=1}^M \hib$ \\
    \quad Tree variance \eqref{eq:varh est inc}: $\hatvhB
    = \frac{1}{B} \sum_{b=1}^{B} \frac{1}{M-1}\sum_{i = 1}^M (\hib - \bar h^{(b)} )^2$ \\
    \quad Tree sample variance \eqref{eq:hatvsB}: $\hatvsB = \frac{1}{MB-1} \sum_{i=1}^M \sum_{b=1}^B 
    ( \hib  - \UincMatch )^2$ \\
    {\bf The final variance estimator \eqref{eq:est var Uinc match}}\\
    \quad $\widehat{\Var}(\UincMatch) = \hatvhB - (1 - \frac{1}{MB}) \hatvsB$ 
\end{algorithm}

\subsection{Extension to \texorpdfstring{$k > n/2$}{k>n/2}}
\label{sec:subsec:extension k>n/2}

The previous estimator $\widehat{\Var}(\UincMatch)$ \eqref{eq:est var Uinc match} is restricted to $k \leq n/2$ due to the sampling scheme. However, this does not prevent the application of formulation \eqref{eq:varU = varh - sum}, $\Var(U_n) = \vh - \vs$. To the best of our knowledge, the existing literature does not provide further discussion under $k > n/2$ for general kernels, while some theoretical strategies such as \citet{wang2017pseudo} simplify the kernel into a low-order approximation. Alternatively, the infinitesimal jackknife \citet{wager2014confidence} has been shown to be almost equivalent to the leading term estimator in V statistics \citet{lucas:new}. Here, we discuss a generalization of our formulation for $k > n/2$. Re-applying Propositions \eqref{eq:varU = varh - sum} and \eqref{prop:var incU, match} with $M = 1$, we can obtain the variance of an incomplete $U$ statistic sampled randomly with replacement:
\begin{align*}
    \Var(U_{n, B, M=1}) =  \vh - \frac{B-1}{B} \vs.
\end{align*}
By Proposition \ref{prop:bias of var subsamp}, $\hat{V}_{B, M = 1}^{(S)}$ is still an unbiased estimator of $\vs$. However, $\vh$ has to be estimated with a different approach, since any pair of subsamples would share at least some overlapping samples. A simple strategy is to use bootstrapping. Hence, we generate another set of size-k samples, sampled with replacement, and evaluate the kernel, using their sample variance as an estimator of $\vh$. We remark that the bootstrap procedure introduced will introduce an additional computational burden.

\subsection{Locally Smoothed Variance Estimator for Random Forest}
\label{sec:subsec:smooth}

Even though the proposed estimator is unbiased, large variance of this estimator may still result in possible under-coverage of the corresponding confidence interval (CI).
Note that due to its variation, our variance estimator might be negative, though this rarely happens in our simulations. A similar phenomenon is also noticed by \citet{schucany1989small}, and \citet{wang2014variance}. To alleviate this issue, we propose a local smoothing estimator, namely {\propSmoothFull} ({\propSmooth}). The improvement is especially effective when the number of trees is small. This will be demonstrated in the simulation study, see, e.g., Table \ref{tab:cover over k and B} and Figure \ref{fig:3 panel results}. 

Denote a variance estimator on a future test sample $\bm x^*$ as $\rfVarEst{}$. We randomly generate $N$ neighbor points $\bm x^*_{1}, \ldots, \bm x^*_{N}$ and obtain their variance estimators $\rfVarEst{1}, \ldots, \rfVarEst{N}$. Then, the locally smoothed estimator is defined as the average:
\begin{align}
\label{eq:local avg}
\rfVarEstS = 
    \frac{1}{N+1} \Big[ \rfVarEst{} + \sum_{i=1}^N \rfVarEst{i} \Big].
\end{align}
The algorithm is presented as follows. 
\begin{algorithm}[htbp] 
\label{algo:var est smooth}
    \caption{{\propSmoothFull} ($k \leq n/2$)}
    \KwIn{n, k, M,  B, training set $\calX_{train}$,  testing sample $\bm x^*$ and number of neighbors $N$}
    \KwOut{Smooth Variance estimator $\rfVarEstS$}
    Find the closed distance $D_{min} = \min_{\bm x \in \calX_{train}} d(\bm x^*,  \bm x)$ \;
    Randomly generate $N$ neighbors $\bm x_{1}^*, ..., \bm x_{N}^*$ that satisfy $\bm x : d(\bm x, \bm x^*) \leq D_{min}$ or $\bm x : d(\bm x, \bm x^*) = D_{min}$ \; 
    Obtain variance estimators  $\rfVarEst{},  \rfVarEst{1}, ..., \rfVarEst{N}$ by Algorithm \ref{algo:var est} \;
    $\rfVarEstS = 
    \frac{1}{N+1} [ \rfVarEst{} + \sum_{i=1}^N \rfVarEst{i} ] \eqref{eq:local avg}$.
\end{algorithm}

In Algorithm \ref{algo:var est smooth}, ${d(\cdot,\cdot)}$ can be Euclidean distance for continuous covariates and other metrics for categorical covariates. 
In practice, we can pre-process data before fitting random forest models, such as performing standardization and feature selection.
Due to the averaging with local target samples, there is naturally a bias-variance trade-off in choosing $D_{min}$ and neighbors. This is a rather classical topic, and there can be various ways to improve such an estimator based on the literature. Our goal here is to provide a simple illustration. In the simulation section, we consider generating 10 neighbors on an $\ell_2$ ball centered at $\bm x^*$. The radius of the ball is set to be the Euclidean distance from $\x^*$ to the closest training sample. We found that the performance is not very sensitive to the choice of neighbor distance. Also, the computational cost of this smoothing estimator only involves new predictions, which is also minor compared to fitting random forests.



\subsection{A Discussion on Existing Normality Theories of Random Forests}
\label{sec:subsec:no normality}

Before demonstrating the simulation results, we would like to discuss the normality theories of random forests briefly. The main concern is that there is no universal guarantee of normality for random forests, and a variance estimator may not ensure the desired coverage rate. Hence, the use of any variance estimators should be done with a reasonable understanding of the random forest itself, especially by considering the impact of its tuning parameters. 

Many existing works in the literature have studied the asymptotic normality of $U_n$ given $k = o(\sqrt{n})$ to $o(n)$ under various regularity conditions \citep{lucas:old,2018wager:InfJack,Romano2020, lucas:new, peng2019asymptotic, athey2019generalized}. Existing empirical study also shows that the normality usually holds when $k$ is small while begins to break down for certain cases \citep[Table 2]{lucas:new}. As we will see in the following, there are both examples and counter-examples for the asymptotic normality of $U_n$ with a large $k$, depending on the specific form of the kernel.

Essentially, when a kernel $h(\cdot)$ is very adaptive to local observations without much randomness, e.g., 1-nearest neighbors and the kernel size is at the same order of $n$, there is too much dependency across different $h(S_i)$'s. This prevents the normality of $U_n$. On the other hand, when the kernel size is relatively small, there is enough variation across different kernel functions to establish normality. This is the main strategy used in the literature for establishing normality. The following example demonstrates these ideas. 

\begin{example}
\label{example:ratio consistency for 1nn}
    Given covariate-response pairs: $Z_1 = (x_1, Y_1), ...., Z_n = (x_n, Y_n)$ as training samples, where $x_i$'s are unique and deterministic numbers and $Y_i$'s i.i.d. $F$ such that $\E(Y_i) = \mu > 0, \Var(Y_i) = \sigma^2$, for $i = 1, 2, ..., n$.
    We want to predict the response for a given testing sample $x^*$.
    
    Suppose we have two size-k ($k = \beta n$) kernels: 1) a simple (linear) average kernel: $h(S) = \frac{1}{k} \sum_{Z_j \in S} Y_j$; 2) a 1-nearest neighbor (1-NN) kernel, which predicts using the closest training sample of $x^*$ based on the distance of $x$. Without loss of generality, we assume that $x_i$'s are ordered such that $x_i$ is the $i$-th nearest sample to $x^*$. We denote corresponding sub-bagging estimator as $U_\text{mean}$ and $U_\text{1-NN}$ respectively. 
    It is trivial to show that 
    \begin{align*}
        U_\text{mean} = \frac{1}{n} \sum_{i=1}^n Y_n, \quad
        U_\text{1-NN} = \sum_{i=1}^{n-k+1} a_i Y_i, 
    \end{align*}
    where $a_i = \binom{n-i}{k-1} / \binom{n}{k}$ and $\sum_{i=1}^{n-k+1} a_i = 1$.
    Accordingly, we have $\Var(U_\text{mean}) = \frac{1}{n} \sigma^2$ and $\Var(U_{1-NN}) \geq a_1^2 \Var(Y_1) = \frac{k^2}{n^2} \sigma^2 = \beta^2 \sigma^2$. Since $U_\text{mean}$ is a sample average, we still obtain asymptotic normality after scaling by $\sqrt{n}$. However, $\beta = k/n > 0$,  $a_1$ makes a significant proportion in the sum of all $a_i$'s and $\Var(U_{1-NN})$ does not decay to 0 as $n$ grows. Hence, asymptotic normality is not satisfied for $U_\text{1-NN}$. 
\end{example}

In practice, it is difficult to know apriori what type of data dependence structure these $h(S_i)$'s may satisfy. Thus, the normality of a random forest with a large subsampling size is still an open question and requires further understanding of its kernel. In our simulation study, we observe that the confidence intervals constructed with normal quantiles work well, given that data are generated with Gaussian noise (see Section \ref{sec:subsec:simulation setting}).

\section{Simulation Study}
\label{sec:simulation}

We present simulation studies to compare our variance estimator with existing methods \citep{lucas:new, 2018wager:InfJack} on random forests. We consider both the smoothed and non-smoothed versions, denoted as ``{\propSmooth}'' and ``{\propEst}'', respectively. The balance estimator and its bias-corrected version in \citet{lucas:new} are denoted as ``BM'' and ``BM-cor''. The infinitesimal jackknife in \citet{2018wager:InfJack} is denoted as ``IJ''. Our simulation does not include the Internal Estimator and the External Estimator in \citet{lucas:old}, since the BM method has been shown to be superior to these estimators \citep{lucas:new}. Note that the BM estimator works for both U-statistics and V-statistics \citep[Section4, paragraph 1]{lucas:new}. However, the V-statistics version is almost equivalent to IJ \citep[Theroem 3.3 and 3.4]{lucas:new}. Hence, in our simulation, we only include the U-statistics version.  

\subsection{Simulation Settings}
\label{sec:subsec:simulation setting}

We consider two regression settings:
\begin{enumerate}
    \item MARS: $f(\bm x) = 10  \sin(\pi x_1 x_2) + 20  (x_3-0.05)^2 + 10  x_4 + 5  x_5$; \quad $\calX = [0, 1]^{6}$.
    \item MLR: $f(\bm x) = 2  x_1 + 3  x_2 - 5  x_3 -x_4 + 1$; \quad
    $\calX = [0, 1]^6$.
\end{enumerate}
The MARS model is proposed by \citet{friedman1991multivariate} for the multivariate adaptive regression splines. It has been used previously by \citet{biau2012analysis, lucas:old}. The second model is a simple multivariate linear regression. In both settings, features $\bm x  = (x_1, \ldots, x_6)$ are generated uniformly from the feature space, and responses are generated by $f(\bm x) + \epsilon$, where $\epsilon \overset{iid}{\sim} \calN(0, 1)$.

We use $n = 200$ as the total training sample size and pick different subsample sizes: $k = 100, 50, 25$ when $k\leq n/2$ and $k = 160$ when $k>n/2$. The numbers of trees are \texttt{nTrees} $= B \cdot M = 2000, 10000, 20000$. For tuning parameters, we set {\mtry} as 3, which is half of the dimension, and set \texttt{nodesize} parameter to $2 \lfloor \log(n) \rfloor = 8$. We repeat the simulation $N_{mc} = 1000$ times to evaluate the performance of different estimators. Our proposed methods (\propEst, \propSmooth), BM and BM-cor estimators are implemented using \if0\blind {the \texttt{RLT} package available on GitHub}\fi \if1\blind{our package}\fi . The IJ estimators are implemented using \texttt{grf} and \texttt{ranger}. Each estimation method and its corresponding ground truth (see details in the following) is generated by the same package. Note that we do not use the honest tree setting by \citet{2018wager:InfJack}, since it is not essential for estimating the variance. However, it may affect the coverage rate due to the normality behavior. 

The performance of the variance estimator is evaluated in terms of its bias and the coverage rate of its corresponding confidence interval. We denote the random forest estimator as $\widehat f(\bm x^)$ and evaluate the coverage based on the mean of the random forest estimator, $\E (\widehat f(\bm x^))$, instead of the true model value, $f(\bm x^)$, as our focus is the variance estimation of $\widehat f(\bm x^)$ and the random forest itself may be a biased model. To obtain the ground truth of the variance, we generate the training dataset 10000 times and fit a random forest to each, using the mean and variance of the 10000 forest predictions as approximations of $\E (\widehat f(\bm x^))$ and $\Var ( \widehat f(\bm x^))$. The relative bias and the confidence interval (CI) convergence are the evaluation criteria, with the relative bias defined as the ratio of the bias to the ground truth of the variance estimation. The $1 - \alpha$ CI is constructed using $\hat f \pm Z_{\alpha/2}\sqrt{ \vu }$, where $Z_{\alpha/2}$ is the standard normal quantile.

We evaluate the variance estimation on two types of testing samples for both MARS and MLR data. The first is a central sample with $x^* = (0.5, \ldots, 0.5)$ and the second includes 50 random samples whose coordinates are independently sampled from a uniform distribution between $[0, 1]$. These testing samples are fixed for all experiments. The central sample is used to show the distribution of variance estimators over 1000 simulations, while the 50 random samples are used to evaluate the average bias and CI coverage rate. The results of the evaluation are presented in Figure \ref{fig:3 panel results} and Tables \ref{tab:cover over k and B} and \ref{tab:bias and sd}. A small difference in the ground truth generated by different packages is noted in Appendix \ref{sec:append:simu} due to subtle differences in the packages' implementations.

The computational cost of the different methods is similar, as the main cost lies in fitting trees in the random forest, rather than calculating the estimator from tree predictions. While the $\propSmooth$ method may incur additional costs for obtaining tree predictions for the target testing sample's neighboring samples, this added cost is low as it only involves making predictions using existing trees, rather than fitting new trees.

\subsection{Results for \texorpdfstring{$k \leq n/2$}{k<=n}}

\begin{figure*}[htbp]
    \centering
      \begin{minipage}[b]{\textwidth}
        \includegraphics[width=\textwidth]{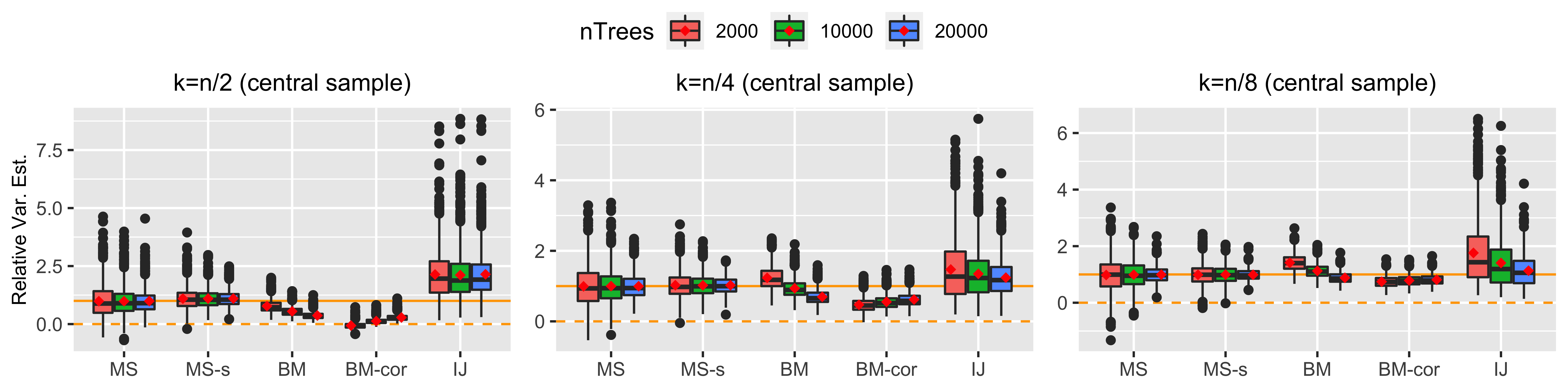}
      \end{minipage}
      \hfill
      \begin{minipage}[b]{\textwidth}
        \includegraphics[width=\textwidth]{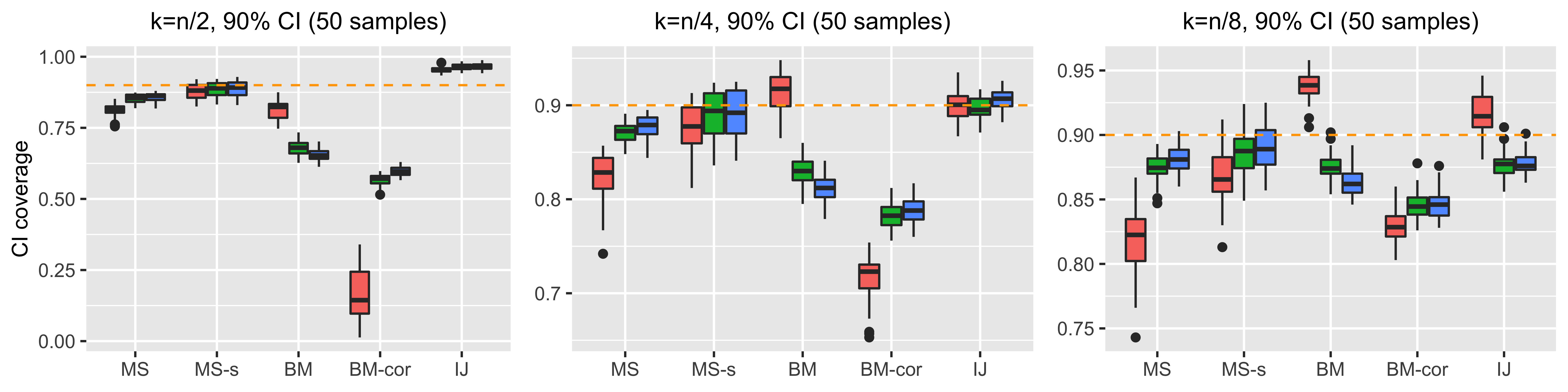}
      \end{minipage}
      \hfill
        \begin{minipage}[b]{\textwidth}
        \includegraphics[width=\textwidth]{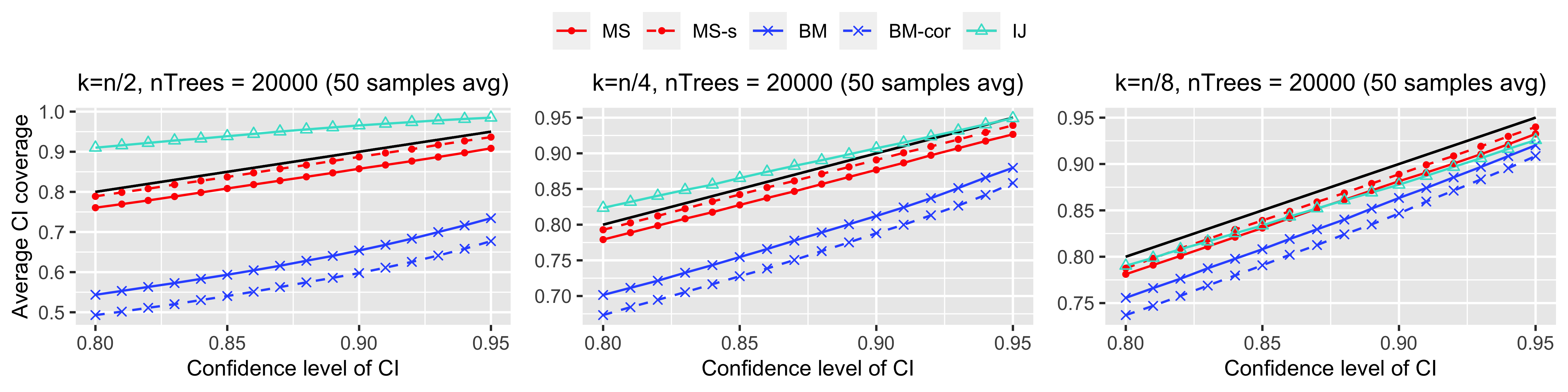}
      \end{minipage}
    \caption{A comparison of different methods on the MARS data is presented.
Each column in the figure represents a different tree size: $k = n/2, n/4, n/8$ respectively.
The first row displays boxplots of the relative variance estimators on a central test sample, evaluated over 1000 simulations. The mean is represented by the red diamond symbol in each boxplot.
The second row displays boxplots of the 90
The third row displays the average coverage rate over 50 testing samples, with {\nTrees} $= 20000$. The black reference line, $y=x$, represents the desired coverage rate.}
    \label{fig:3 panel results}
\end{figure*}

\begin{table*}[htbp]
\centering
\caption{$90\%$ CI Coverage Rate averaged on 50 testing samples. The number in the bracket is the standard deviation of coverage over 50 testing samples.}
\label{tab:cover over k and B}
\resizebox{\textwidth}{!}{
\begin{tabular}{lll|ll|ll}
\hline
       & \multicolumn{2}{c|}{$k=n/2$}    & \multicolumn{2}{c|}{$k=n/4$}    & \multicolumn{2}{c}{$k=n/8$}     \\ \hline
{\nTrees} & 2000           & 20000          & 2000           & 20000          & 2000           & 20000          \\ \hline
\textbf{MARS}   &                &                &                &                &                &                \\
MS     & 81.2\% (2.0\%) & 85.8\% (1.6\%) & 82.3\% (2.6\%) & 87.7\% (1.2\%) & 81.8\% (2.6\%) & 88.1\% (1.1\%) \\
MS-s & 87.7\% (2.7\%) & 88.7\% (2.7\%) & 87.7\% (2.6\%) & 89.1\% (2.5\%) & 86.9\% (2.0\%) & 88.9\% (1.7\%) \\
BM     & 81.3\% (3.2\%) & 65.4\% (2.0\%) & 91.4\% (1.9\%) & 81.2\% (1.5\%) & 93.8\% (1.1\%) & 86.3\% (1.1\%) \\
BM-cor & 16.7\% (9.0\%) & 59.8\% (1.6\%) & 71.7\% (2.3\%) & 78.8\% (1.4\%) & 83.0\% (1.1\%) & 84.7\% (1.1\%) \\
IJ     & 95.4\% (1.0\%) & 96.6\% (1.0\%) & 89.9\% (1.5\%) & 90.7\% (1.0\%) & 91.7\% (1.6\%) & 87.8\% (0.9\%) \\ \hline
\textbf{MLR}    &                &                &                &                &                &                \\
MS     & 83.3\% (1.4\%) & 86.4\% (1.2\%) & 84.5\% (1.5\%) & 88.2\% (1.0\%) & 84.1\% (1.6\%) & 88.9\% (1.0\%) \\
MS-s & 88.8\% (1.6\%) & 89.6\% (1.5\%) & 89.1\% (1.6\%) & 90.3\% (1.5\%) & 88.6\% (1.6\%) & 90.3\% (1.2\%) \\
BM     & 79.4\% (2.0\%) & 64.7\% (1.4\%) & 90.7\% (1.3\%) & 80.9\% (1.3\%) & 93.8\% (0.9\%) & 86.6\% (1.2\%) \\
BM-cor & 23.1\% (5.6\%) & 59.9\% (1.6\%) & 73.0\% (1.9\%) & 78.7\% (1.4\%) & 83.6\% (1.4\%) & 85.2\% (1.2\%) \\
IJ     & 95.6\% (0.8\%) & 96.5\% (0.6\%) & 89.5\% (1.1\%) & 91.1\% (1.1\%) & 91.4\% (1.1\%) & 88.1\% (1.2\%) \\ \hline
\end{tabular}
}
\end{table*}

\begin{table*}[htbp]
\centering
\caption{Relative bias (standard deviation) over 50 testing samples. For each method and testing sample, the relative bias is evaluated over 1000 simulations. }
\label{tab:bias and sd}
\resizebox{\textwidth}{!}{%
\begin{tabular}{lll|ll|ll}
\hline
       & \multicolumn{2}{c|}{$k=n/2$}      & \multicolumn{2}{c|}{$k=n/4$}    & \multicolumn{2}{c}{$k=n/8$}      \\ \hline
{\nTrees} & 2000            & 20000           & 2000           & 20000          & 2000            & 20000          \\ \hline
\textbf{MARS}   &                 &                 &                &                &                 &                \\
MS     & -0.3\% (1.7\%)  & -0.2\% (1.4\%)  & -0.2\% (2.0\%) & 0.1\% (1.3\%)  & 0.3\% (1.8\%)   & 0.5\% (1.3\%)  \\
MS-s & 2.0\% (13.0\%)  & 2.3\% (13.5\%)  & 1.8\% (12.2\%) & 1.9\% (12.5\%) & 0.8\% (8.5\%)   & 1.2\% (8.7\%)  \\
BM     & -28.8\% (8.6\%)  & -64.1\% (1.1\%)  & 20.6\% (12.2\%) & -30.9\% (1.6\%) & 40.5\% (9.1\%)  & -12.0\% (1.5\%) \\
BM-cor & -101.1\% (8.1\%) & -71.4\% (1.0\%)  & -52.4\% (3.9\%) & -38.3\% (0.9\%) & -24.4\% (1.7\%) & -18.6\% (1.1\%) \\
IJ     & 102.3\% (21.5\%) & 103.5\% (21.8\%) & 36.6\% (10.1\%) & 20.8\% (9.2\%)  & 67.4\% (15.4\%) & 11.5\% (6.7\%)  \\ \hline
\textbf{MLR}    &                 &                 &                &                &                 &                \\
MS     & 0.3\% (2.7\%)   & 0.1\% (2.1\%)   & -0.1\% (2.0\%) & 0.0\% (1.8\%)  & 0.0\% (2.1\%)   & -0.2\% (1.6\%) \\
MS-s & 6.0\% (7.4\%)   & 6.2\% (7.4\%)   & 5.8\% (7.1\%)  & 6.1\% (7.0\%)  & 4.8\% (4.9\%)   & 4.6\% (5.0\%)  \\
BM     & -36.2\% (3.8\%)  & -65.4\% (0.9\%)  & 11.4\% (5.9\%)  & -32.4\% (1.4\%) & 32.1\% (5.8\%)  & -13.7\% (1.5\%) \\
BM-cor & -95.0\% (3.2\%)  & -71.3\% (0.7\%)  & -50.1\% (1.8\%) & -38.6\% (1.1\%) & -24.7\% (1.2\%) & -19.6\% (1.1\%) \\
IJ     & 87.8\% (15.0\%) & 88.6\% (14.7\%) & 27.1\% (5.7\%) & 17.1\% (5.8\%) & 53.1\% (11.4\%) & 6.6\% (5.1\%)  \\ \hline
\end{tabular}
}
\end{table*}

Figure \ref{fig:3 panel results} presents the evaluation results for the MARS data. The subfigures show the distribution of variance estimators on the central test sample and the corresponding 90
The results for the MLR data are provided in Appendix \ref{sec:append:simu} and show similar patterns.
Tables \ref{tab:cover over k and B} and \ref{tab:bias and sd} present the 90\% CI coverage rate and relative bias of the variance estimation, respectively. The coverage for each method is calculated as the average over 50 testing samples, and the standard deviation, indicated in the bracket, reflects the variation among these samples. Our simulation results show that the random forest estimators are approximately normally distributed, as the CIs constructed using the true variance achieve the desired confidence level (see Appendix \ref{sec:append:simu}).
In summary, {\propEst} and {\propSmooth} demonstrate consistently better performance compared to other methods, especially when the tree size $k$ is large, i.e., $k = n/2$. The improved performance can be seen in terms of accurate CI coverage and reduced bias.

First, the third row of Figure \ref{fig:3 panel results} shows that the {\propSmooth} method achieves the best CI coverage under every $k$, i.e., the corresponding line is nearest to the reference line: $y = x$. The {\propEst} method performs the second best when $k = n/2$ and $n/8$.
Furthermore, the CI coverages of the proposed methods are stable over different testing samples with a small standard deviation (less than 3\%), as seen in Table \ref{tab:cover over k and B}.
Secondly, with regards to the bias of the variance estimation, our methods show a much smaller bias than all other approaches (Figure \ref{fig:3 panel results}, first row).
More details of the relative bias are summarized in Table \ref{tab:bias and sd}. The average bias of {\propEst} is smaller than $0.5\%$ with a small standard deviation, mainly due to the Monte Carlo error. The {\propSmooth} method has a slightly positive average bias ($0\%$ to $6.2\%$), but it is still much smaller than the competing methods. The standard deviation of bias for {\propSmooth} is around $4.3\%$ to $13.6\%$, which is comparable to IJ.

On the other hand, the performance of the competing methods varies. When the tree size is $k = n/2$, the BM, BM-cor, and IJ methods show a large bias, but their performance improves for smaller tree sizes. It is worth noting that these methods are theoretically designed for small $k$. BM and BM-cor tend to underestimate the variance in most settings, while IJ tends to overestimate. In Table \ref{tab:bias and sd}, on the MARS data with 20000 {\nTrees}, the bias of both BM and BM-cor is more than $-50\%$, resulting in severe under-coverage (65.4\%, 59.8\%), while IJ leads to over-coverage. Even when the tree size is as small as $k = n/8$, these methods still display a noticeable bias. However, the proposed methods still outperform them when more trees ({\nTrees} = 20000) are used, as shown in the last column of Table \ref{tab:bias and sd}.

The results indicate that the choice of the number of trees has a significant effect on the performance of the estimators. This is to be expected due to the influence of the random kernels, the variation involved in incomplete U-statistics, and other theoretical aspects. As the number of trees increases, the variation of all estimators decreases, as can be seen in the first row of Figure \ref{fig:3 panel results}. Our estimators, being mostly unbiased, benefit from larger \texttt{nTrees} values. For instance, the $90\%$ CI coverages of the {\propEst} method on the MARS data increase from $81.2\%$ ($k=n/2$) and $81.8\%$ ($k=n/8$) with {\nTrees} = 2000 to $85.8\%$ and $88.1\%$ respectively with {\nTrees} = 20000. On the other hand, the performance of competing methods does not necessarily improve with an increase in {\nTrees}. For example, BM shows over-coverage with {\nTrees} = 2000 but under-coverage with {\nTrees} = 20000 when $k=n/4$ or $n/8$. This phenomenon, known as estimation inflation, has been discussed in \citet{lucas:new} and is addressed by the BM-cor method, which reduces the bias. When $k=n/8$, the gap between BM and BM-cor decreases as {\nTrees} increases. However, this trend is no longer evident when $k$ is large, as the dominating term used in their theory is no longer applicable.

Finally, we would like to emphasize the relationship between the bias of the estimator and the coverage rate of the confidence interval. Even though a random forest predictor is normally distributed and the variance estimator is unbiased, large fluctuations of the variance estimator can still lead to under-coverage. The same also applies to the IJ estimator. For example, on MARS data with $k = n/8$ and {\nTrees} = 20000, IJ has a positive bias ($11.5\%$), but its confidence interval is still under-coverage and even more severe than the proposed methods. Increasing the number of trees can improve this performance to some extend. An alternative strategy is to perform local averaging as implemented in the {\propSmooth} method, especially when {\nTrees} is relatively small. The heights of the boxplots in the figure clearly demonstrate the variance reduction effect. As a result, the {\propSmooth} method with 2000 trees shows better coverage than the {\propEst} method with 20000 trees when $k = n/2$ (see Table \ref{tab:cover over k and B}). However, this maybe at the cost of larger bias. Hence, we still recommend using a larger number of trees whenever it is computationally feasible.

\subsection{Results for \texorpdfstring{$k > n/2$}{k>n/2}}

As discussed in Section \ref{sec:subsec:extension k>n/2}, when $n/2 < k < n$, we cannot jointly estimate $\vh$ and $\vs$. Additional computational cost is introduced using the bootstrap approach for estimating $\vh$. In this simulation study, we attempt to fit additional {\nTrees} with bootstrapping (sampling with replacement) subsamples to estimate $\vh$ so we denote our proposed estimator and smoothing estimator as ``MS(bs)'' and ``MS-s(bs)''. We note that the \texttt{grf} package does not provide IJ estimator when $k > n/2$ so we generate the IJ estimator and corresponding ground truth by the \texttt{ranger} package.

\begin{table*}[ht]
\centering
\caption{90 \% CI coverage, relative bias, and standard deviation averaged on 50 testing samples. Tree size $k = 0.8 n$. The calculation follows previous tables.}
\label{tab:bias and sd large k}
\small
\begin{tabular}{llll|ll}
\hline
     &               & \multicolumn{2}{c|}{90\% CI Coverage} & \multicolumn{2}{c}{Relative Bias}    \\ \hline
Model & nTrees        & 2000              & 20000             & 2000              & 20000            \\ \hline
\textbf{MARS} & MS(bs)     & 94.2\% (2.8\%)    & 95.4\% (2.4\%)    & 128.4\% (64.8\%)  & 136.6\% (67.2\%) \\
     & MS-s(bs) & 97.7\% (1.5\%)    & 98.1\% (1.3\%)    & 132.2\% (66.7\%)  & 140.6\% (69.1\%) \\
     & BM            & 51.4\% (3.8\%)    & 33.9\% (1.7\%)    & -80.4\% (3.1\%)   & -92.1\% (0.5\%)  \\
     & BM-cor        & 0.0\% (0.0\%)     & 13.5\% (4.5\%)    & -143.0\% (12.1\%) & -98.3\% (1.3\%)  \\
     & IJ            & 88.0\% (4.6\%)    & 87.1\% (3.7\%)    & -0.8\% (25.2\%)   & -5.6\% (16.3\%)  \\ \hline
\textbf{MLR}  & MS(bs)            & 94.3\% (1.9\%)    & 95.2\% (1.7\%)    & 98.4\% (24.7\%)   & 103.9\% (25.4\%) \\
     & MS-s(bs)       & 96.6\% (1.3\%)    & 97.0\% (1.2\%)    & 104.8\% (24.9\%)  & 110.3\% (25.6\%) \\
     & BM            & 47.9\% (2.3\%)    & 32.4\% (1.5\%)    & -83.4\% (1.2\%)   & -92.6\% (0.3\%)  \\
     & BM-cor        & 0.0\% (0.0\%)     & 15.9\% (2.4\%)    & -132.7\% (4.3\%)  & -97.5\% (0.5\%)  \\
     & IJ            & 99.4\% (0.3\%)    & 99.2\% (0.3\%)    & 182.8\% (21.7\%)  & 175.8\% (16.7\%) \\ \hline
\end{tabular}
\end{table*}

As seen from Table \ref{tab:bias and sd large k}, all methods suffer from severe bias, but our methods and IJ are comparable and better than BM and BM-cor. More specifically, our proposed method generally over-covers due to overestimating the variance. The IJ method shows good accuracy on MARS data but has more severe over-coverage than our methods on MLR. Overall, to obtain a reliable conclusion of statistical inference, we recommend avoiding using $k > n/2$. This can be a reasonable setting when $n$ is relatively large, and $k = n/2$ can already provide an accurate model. 

\section{Real Data Illustration}
\label{sec:real data}

We use the Seattle Airbnb Listings dataset, which was obtained from Kaggle\footnote{https://www.kaggle.com/shanelev/seattle-airbnb-listings}. The purpose of this analysis is to predict the price of Airbnb units in Seattle. The dataset consists of 7515 samples and nine covariates, including latitude, longitude, room type, number of bedrooms, number of bathrooms, number of accommodates, number of reviews, presence of a rating, and the rating score. Further information about the dataset, including the missing value processing, can be found in Appendix \ref{sec:append:real}.

\begin{figure}[htbp]
    \centering
    \includegraphics[width=0.5\textwidth]{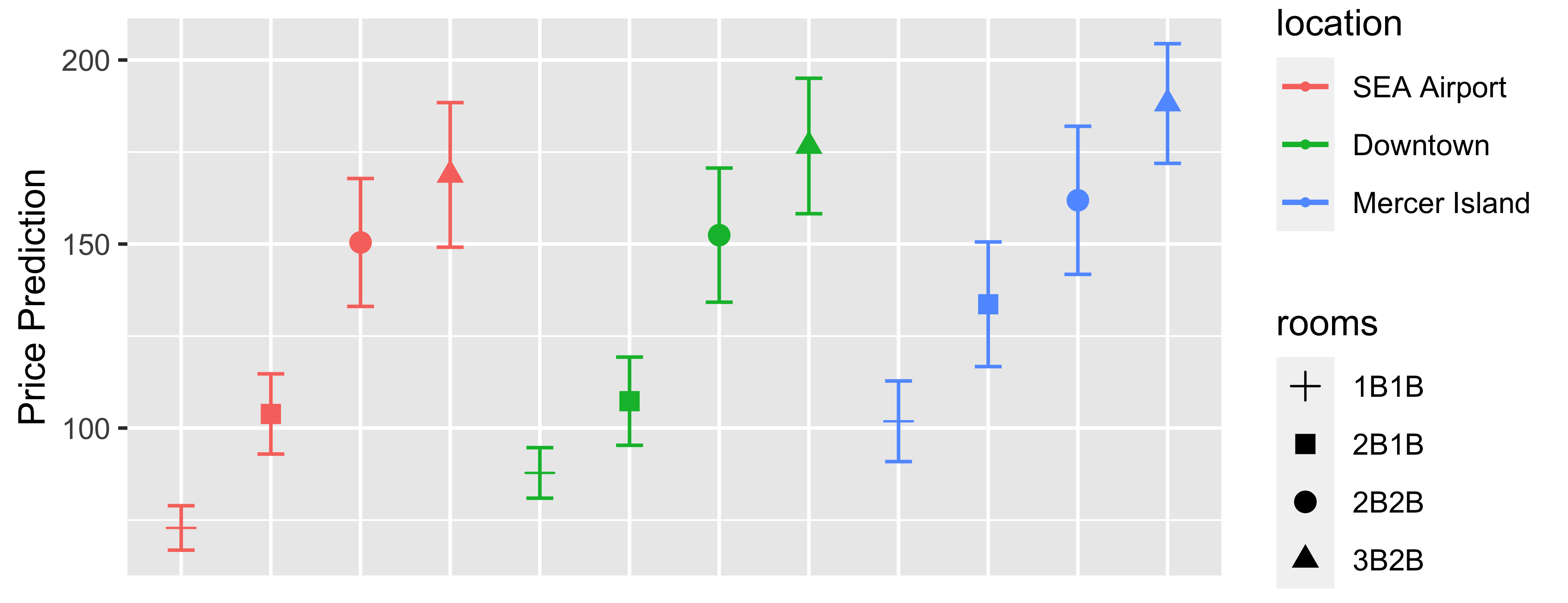}
    \caption{Random Forest prediction on Airbnb testing data. The $95\%$ confidence error bar is generated with our variance estimator, {\propEstFull}.  ``2B1B'' denotes the house/apartment has two bedrooms and one bathroom.}
    \label{fig:realData}
\end{figure}

Given the large sample size, we fit 40000 trees to obtain a variance estimator. The tree size is fixed as half of the sample size: $k = 3757$. We construct 12 testing samples at 3 locations: Seattle-Tacoma International Airport (SEA Airport), Seattle downtown, and Mercer Island. We further consider four bedroom/bathroom settings as $1B1B$, $2B1B$, $2B2B$, and $3B2B$. Details of the latitude and longitude of these locations and other covariates are described in Appendix \ref{sec:append:real}. 
The price predictions, along with 95\% confidence intervals, are presented in Figure \ref{fig:realData}. Overall, the predictions match our intuitions. 
In particular, we can observe that the confidence interval of 1B1B units at SEA Airport does not overlap with those corresponding to the same unit type at the other two locations. This is possible because the accommodations around an airport usually have lower prices due to stronger competition.
We also observe that 2-bathroom units at SEA Airport and downtown have higher prices than 1-bathroom units. However, the difference between 2B2B and 3B2B units at SEA Airport is insignificant. 

\section{Discussion}\label{sec:disc}

From the perspective of $U$-statistics, we have proposed a new framework of variance estimator for infinite-order U statistics. Instead of utilizing the leading term dominance property, we instead establish the peak region dominance notion, which addresses the bias issue under large subsampling size $k$ or small sample size $n$. Additionally, new tools and strategies have been developed to study the ratio consistency behavior which is crucial for obtaining a proper coverage rate. Here, we discuss several open issues and possible extensions for future research.

First, our current methods are computationally valid for $k \leq n/2$. The difficulty of extending to the $k > n/2$ region is to estimate the tree variance, i.e., $\vh$. We proposed to use bootstrapped trees to extend the method to $k > n/2$. However, this could introduce additional bias and also leads to large variation, as we can see in the simulation study. We suspect Bootstrapping may be sensitive to the randomness involved in fitting trees. Since we estimate $\vh$ and $\vs$ separately, the randomness of the tree kernel could introduce different added variances, which leads to non-negligible bias. When $k > n/2$,  \citet{wang2017pseudo} propose an asymptotic unbiased variance estimator for the U-statistic estimator of a Kullback-Leibler risk in the $k$-fold cross-validation. However, this depends on a specific approximation of the kernel of Kullback-Leibler risk. The problem remains open for a general kernel.

Secondly, we developed a new double-$U$ statistics tool to prove ratio consistency. This is the first work that analyzes the ratio consistency of a minimum-variance unbiased estimator (UMVUE) of a U-statistic's variance. The tool can be potentially applied to theoretical analyses of a general family of U-statistic problems. However, our ratio consistency result is still limited to $k = o(n^{1/2 - \epsilon})$, introducing a gap between theoretical and practical versions. 
The limitation comes from the procedure we used to drive the Hoeffding decomposition of the variance estimator's variance. In particular, we want the leading term to dominate the variance while allowing a super-linear growth rate of each $\sigma_{c, 2k}^2$ in terms of $c$. 
Hence, the extension to the $k = \beta n$ setting is still open and may require further assumptions on the overlapping structures of double-$U$ statistics.

Thirdly, in our smoothed estimator, the choice of testing sample neighbors can be data-dependent and relies on the forest-defined distance. It is worth considering more robust smoothing methods for future work.

Lastly, this paper focuses on the regression problem using random forest. This variance estimator can also be applied to the general family of subbagging estimators. Besides, we may further investigate the uncertainty quantification for variable importance, the confidence interval for classification probability, the confidence band of survival analysis, etc.


\bibliographystyle{imsart-nameyear}
\bibliography{citation}



\newpage
\onecolumn

\section*{Supplementary Material (Appendices)}
\begin{appendices}

\addcontentsline{toc}{section}{Appendices} 

\part{}

{
\textbf{Table of Contents of Appendices}
  \hypersetup{linkcolor=blue}
  \parttoc 
  \clearpage
}

\section{Notations}
\label{sec:append:notation}

\begin{table}[H]
\small
    \centering
    \begin{tabular}{  m{3.2cm}   m{12cm}  } 
    \hline 
    Notations
        & Description
    \\ \hline \tabSpace
    
    $\calO$
        & $a = \calO(b)$: exists $C > 0$, s.t. $a \leq C b$. .
        \\ \tabSpace
    $\Omega$, $\asymp$
        & $a = \Omega(b) \Longleftrightarrow b = \calO(a)$. $a \asymp b \Longleftrightarrow a = \calO(b)$ and $a = \Omega(b)$ .
        \\ \tabSpace
    $U_n$, $h$
        & $U_n$ is the U-statistic with size-$k$ kernel $h$. 
        \\ \tabSpace
    $\vu$, $\psi$
        & $\vu$ denotes the estimator \eqref{eq:def:vu U-stat size-2k} of $\Var(U_n)$, which is a U-statistic  with size-$2k$ kernel $\psi$.\\ \tabSpace
    $S$
        & $S$ denotes the size-$k$ subsample set associated with kernel $h$. \\ \tabSpace
    $\Skk$    
        &  $\Skk$ denotes the size-$2k$ subsample set associated with kernel $\psi$. \\ \tabSpace
    $c, d_1, d_2$
        & Given $S_1, S_2 \subset \Skk_1, S_3, S_4 \subset \Skk_2$, $c = |\Skk_1 \cap \Skk_2|$, $d_1 = |S_1 \cap S_2|$, and $d_2 = |S_3 \cap S_4|$. \\ \tabSpace
    $\varphi_d, w_d$
        &  See  $\psi(\Skk) = \sum_{d = 0}^{k} w_d \varphi_d \left(\Skk \right)$ \eqref{eq:def:phi}. $\varphi_d (\Skk)$ is still a U-statistic. \\ \tabSpace
    $\check{w}_d$
        &$\check{w}_d = \calO({k^{2d}}/{(d! n^d)})$ is the upper bound of $w_d$ given by Equation \eqref{eq:weights for phi's: 2}. \\ \tabSpace
    $\xi_{d, k}^2$
    & $\xi_{d, k}^2 = \Cov[h(S_1), h(S_2)]$ is first used in \eqref{eq:Decomposition of Var(U_n) Hoeffding}.
    \\ \tabSpace
    $\sigma_{c, 2k}^2$
        & $\sigma_{c, 2k}^2 = \Cov[\psi (\Skk_1), \psi (\Skk_2)]$ is first used in \eqref{eq:Var(Vu) expression}. \\ \tabSpace
    $\etacd$
        & $\etacd$ is introduced by further decomposing $\sigma_{c, 2k}^2$ in \eqref{eq:def:eta}. \\ \tabSpace
    $\widecheck{\sigma}_{c, 2k}^{2}$
        & $\widecheck{\sigma}_{c, 2k}^{2}$ is an upper bound of $\sigma_{c, 2k}^{2}$ given by Propositions \ref{lemma:bound each sigma_c^2 for finite c} and \ref{lemma:bound sigma_c^2 for any c}.
        \\ \tabSpace
    $\covh$
        &  $\covh := \Cov[h(S_1) h(S_2), h(S_3) h(S_4)]$  \eqref{eq:def:rho = Cov(h1h2, h3h4)}. 
        \\ \tabSpace
    \emph{DoF}
        & The number of free parameters to determine $\Cov[h(S_1) h(S_2), h(S_3) h(S_4)]$. \\ \tabSpace
    $\unr$, $|\unr|$
        & $\unr$ is a 9-dimensional vector defined in \eqref{eq:def:r vec}, describing the 4-way overlapping among $S_1, S_2, S_3, S_4$.
        $|\unr|$ is the $\ell_1$ vector norm of $\unr$.
        \\ \tabSpace
    $\rx{i}$, $\ry{j}$, $\unrMarg$
        &
         $\rx{i} = \sum_{j=0}^2 r_{ij}$, $\ry{j} = \sum_{i=0}^2 r_{ij}$, and 
        $\unrMarg = (\rx{0}, \rx{1}, \rx{2}, \ry{0}, \ry{1}, \ry{2})$ . \\ \tabSpace
    $\rhor$
        & $\rhor$  is the 9 \emph{DoF} representation of $\covh$  (see Assumption \ref{assumption:low dependency}). \\ \tabSpace
    $\fk_c$
        & $\fk_c$ \eqref{eq:F_c as upper bound} is the upper bound of $\covh$, given that $|\Skk_1 \cap \Skk_2| = c$. \\ \tabSpace
    $\covhh$
        & This is a notation emphasizing 11 \emph{DoF} of $\covh$ used in Appendix \ref{sec:proof under relaxed dependency}. \\ \tabSpace
    $\rhort$
    & $\rhort$  is the 9 \emph{DoF} benchmark of used in Assumption \ref{assumption:low dependency (relaxed)}. \\ \tabSpace
    \emph{Influential Overlaps}
        & The samples in $\Skk_1 \cap \Skk_2$. \\ \tabSpace
        \hline
    \end{tabular}
    \caption{Summary of Notations}
    \label{tab:summary of notations}
\end{table}

\section{Discussion of Assumptions}
\label{sec:append:assump}


In this section, we present discussion and validation examples for Assumption \ref{assumption:low dependency}-\ref{assumption:4th moment}, which are related to the covariance term $\rho = \Cov[h(S_1) h(S_2), h(S_3) h(S_4)]$ \eqref{eq:def:rho = Cov(h1h2, h3h4)}. 
Before the discussion, we first illustrate the definition of the 9-dimensional vector $\unr$, which plays an important role in quantifying $\rho$.

\subsection{Definition of \texorpdfstring{$\unr$}{r}}
\label{sec:append:r}

We present the definition of the 9-dimension vector $\unr$, which characterizes the overlaps between $\Skk_1, \Skk_2$ for $\covh$. 
This $\unr$ is used in Assumption \ref{assumption:low dependency}.

\begin{definition}[$\unr$]
First, we denote the samples in $\Skk_1 \cap \Skk_2$ as \emph{Influential Overlaps} of $\covh$ \eqref{eq:def:rho = Cov(h1h2, h3h4)}.

Secondly, given size-$2k$ subsample sets $\Skk_1, \Skk_2$, and size-$k$ subsample sets $S_1, S_2 \subset \Skk_1, S_3, S_4 \subset \Skk_2$, such that  $c = |\Skk_1 \cap  \Skk_2 |, d_1 = | S_1 \cap S_2 |, d_2 = | S_3 \cap S_4 |$. 
Denote $T_0 = S_1 \cap S_2$, $T_1 = S_1 \backslash S_2,  T_2 = S_2 \backslash S_1$, $T_0' = S_3 \cap S_4, T_1' = S_3 \backslash S_4$, and $T_2' = S_3 \backslash S_4$ (see Figure \ref{fig:set_relationship}).

Based on the above, we denote $R_{ij} := T_i \cap T_j'$, and $r_{ij} := |R_{ij}|$, for $i,j = 0, 1, 2$. Then, a 9-dimensional vector $\underline{r}$ is defined as follows: 
\begin{align}
\label{eq:def:r vec}
    \underline r := (r_{00}, r_{01}, r_{02}, r_{10}, r_{11},
    r_{12}, r_{20}, r_{21}, r_{22})^T.
\end{align}

Thirdly, we define the norm of $\unr$ as $|\underline{r}| = \sum_{i = 0}^{2} \sum_{j = 0}^{2} r_{ij}$. Note that each sample in $(S_1 \cup S_2 \cup S_3 \cup S_4) \cap (\Skk_1 \cap \Skk_2)$ is counted exactly once in $\unr$ so $|\unr| \leq c$.
\begin{figure}[H]
    \centering
    \includegraphics[scale = 0.5]{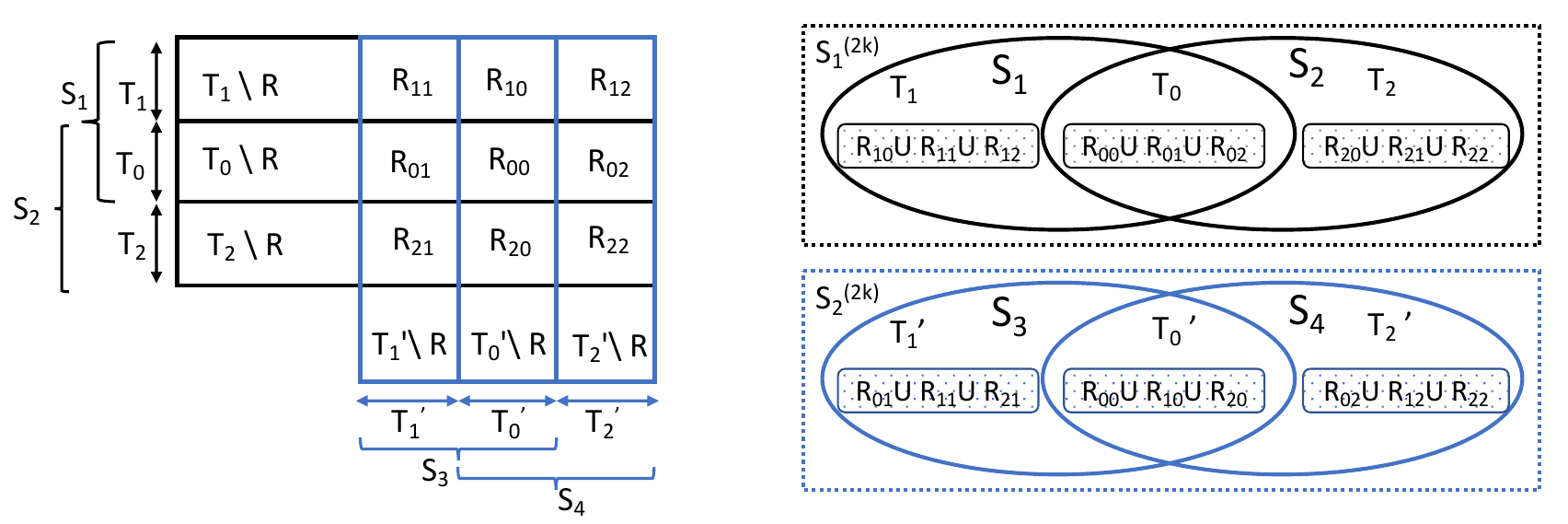}
    \caption{Illustration of the relationship of among $S_1, S_2, S_3, S_4, \Skk_1, \Skk_2$. Here, $R:= \Skk_1 \cap \Skk_2$.}
    \label{fig:set_relationship}
\end{figure}
\end{definition}

We remark that the value of $r_{ij}$ is naturally bounded by sample size in the corresponding overlapping set. For example, $\rx{0} = |(S_1 \cap S_2) \cap (S_3 \cup S_4)| \leq  |(S_1 \cap S_2)| = d_1$. 
Note that the 11 \emph{DoF} of $\covh$ can also be illustrated by the left panel of Figure \ref{fig:set_relationship}. There are 15 blocks but we have constraints $|S_1| = |S_2| = |S_3| = |S_4| = k$, so we get $11 = 15-4$ \emph{DoF}.

\subsection{Discussion of Assumption \ref{assumption:low dependency}}
\label{sec:assump:relax low dependency assumption}

Assumption \ref{assumption:low dependency} reduces \emph{DoF} of $\covh$ from 11 to 9, i.e., $\rho = \rhor$. It drops 2 ``within $\Skk_1, \Skk_2$ overlapping'' \emph{DoF} $d_1$ and $d_2$. 
The motivation is presented as follows. Given $c = |\Skk_1 \cap \Skk_2|$, $d_1$ and $d_2$ only describes the overlapping within $\Skk_1$ and $\Skk_2$. Hence, these two \emph{DoF} are expected to have smaller impact on $\covh = \Cov[h(S_1) h(S_2), h(S_3) h(S-4)]$ than the other 9 \emph{DoF}.
This assumption can be validated in the following examples of linear average kernel.

\begin{example}
\label{example:linear kernel for low depent. assump}
Suppose $h$ is a linear average kernel, $h(X_1, ..., X_k) = \frac{1}{k} \sum_{i=1}^{k} X_i$, where $X_1, ..., X_n$ $i.i.d.~ X$, where $\E X =0, \E X^2 = \mu_2,  \E X^3 = \mu_3, \E X^4 = \mu_4 $. 
Notice that 
\begin{align*}
    \E(X_1 X_2 X_3 X_4) = 
    \begin{cases} 
      \mu_2^2,    & \text{if}\, X_i = X_j,  X_k = X_l, X_i \neq X_k, \, \text{where} \,\{i, j, k, l\} = \{1, 2, 3, 4\}; \\
      \mu_4,     & \text{if}\, X_1 = X_2 = X_3 = X_4; \\
      0, & otherwise.
    \end{cases}
\end{align*}
Hence, we have
\begin{align*}
    \Cov(X_1 X_2, X_3 X_4) =
    \begin{cases} 
      cov_1 := \mu_2^2, & \text{if}\, X_1 = X_3 \neq X_2 = X_4;~ or~ X_1 = X_4 \neq   X_2 = X_3;\\
      cov_2 := \mu_4 - \mu_2^2, & \text{if}\, X_1 = X_2 = X_3 = X_4;\\
      cov_3 := 0, & otherwise.
    \end{cases}
\end{align*}
Given the above 3 cases of $\Cov(X_1 X_2, X_3 X_4) $, it is easy to verify that $\covh$ can be represented as a weighted average of $a_{1, n} cov_1 + a_{2, n} cov_2 + a_{3, n} cov_3$, where $cov_3$ is 0. 
Moreover, by the definition of $cov_1$ and $cov_2$, it is easy to verify that $a_{1, n}$ only depends on $r_{ij}$ for $(i, j) \neq (0, 0)$ and $a_{2, n}$ only depends on $r_{0, 0}$. 
This verifies Assumption \ref{assumption:low dependency} on this linear average kernel. 
Besides, we can also show that $\fk$ in \eqref{eq:F_c as upper bound} (see Assumption \ref{assumption:non-negative and ordinal}) is a quadratic function of $c$ for this kernel.
\end{example}

In addition to the above example, the following discussion shows that we may not be able to further reduce the \emph{DoF} from 9 to 4 by stronger assumptions. 
When $\E(h(S)) = 0$, it is natural to consider the following fourth cumulant of $\covh$:
\begin{align}
\label{eq:cumulants}
cum_4[h(S_1), h(S_2), h(S_3), h(S_4)] = &  \covh - 
        \Cov[h(S_1), h(S_3)] \Cov [h(S_2), h(S_4)] \nonumber \\
        &  - \Cov[h(S_1), h(S_4)] \Cov [h(S_2), h(S_3)].
\end{align}
If $cum_4[h(S_1), h(S_2), h(S_3), h(S_4))]$ in \eqref{eq:cumulants} is a lower order term of $\covh$, the \emph{DoF} can be reduced to 4, i.e., $|S_1 \cap S_3|$, $|S_1 \cap S_4|$, $|S_2 \cap S_3|$, and $|S_2 \cap S_4|$. 
However, Example \ref{example:linear average kernel: cumulant} shows that this does not hold even for a linear average kernel.

\begin{example}
\label{example:linear average kernel: cumulant}
Given size-$k$ sets $S_1, S_2, S_3, S_4$ s.t. $S_l = (X_1, Y_1^{(l)}, ..., Y_{k-1}^{(l)})$, $X_1, Y_j^{(l)}$ are i.i.d. $E(X_1) = 0, \Var(X_1) >0$, for  $j = 1, 2, .., k-1$ and $l = 1, 2, 3, 4$.
By \eqref{eq:cumulants} and some direct calculations, we have
$\covh 
    = \frac{\Var (X_1^2) }{k^4}$,
$\Cov[h(S_1), h(S_3)] 
    = \frac{\Var (X_1)}{k^2}.
$
Plugging in the above equations, we have 
\begin{align}
\label{eq:cumulant ratio}
    \frac{cum_4[h(X_1), h(X_2), h(X_3), h(X_4)]}{2 \Cov[h(S_1), h(S_3)]^2}
    = \frac{\Var (X_1^2) - 2\Var^2 (X_1)}{2\Var^2 (X_1)}.
\end{align}
As long as $\Var (X_1^2) - 2\Var^2 (X_1) > 0$, which is common for non-Gaussian $X_1$,  Equation \eqref{eq:cumulant ratio} is larger than $o(1)$. This implies that the fourth cumulant is not always a lower order of $\covh$.
\end{example} 
We can further verify that given the kernel function is simple quadratic average kernel $h(S_1) = h(X_1, .., X_k) = \frac{1}{k^2} [\sum_{i = 1}^k X_i]^2$, even if $X_i$'s are i.i.d. standard Gaussian, the fourth cumulant is still not a lower order term of $\covh$,.

\subsection{Discussion of Assumption \ref{assumption:non-negative and ordinal}}

In Equation \eqref{eq:F_c as upper bound},  $\fk_c$ is defined as an upper bound for $\covh$ for a given $c$. As illustrated by Figure \ref{fig:example of ordinal cov}: the more samples shared by $S_1$ and $S_2$, $h(S_1) h(S_2)$ becomes closer to $h(S')^2$. Therefore, given that $|\Skk_1 \cap \Skk_2| = c$, $F_c$ has the most overlapping among all $\covh$.
\begin{figure}[H]
    \centering
    \includegraphics[scale = 0.5]{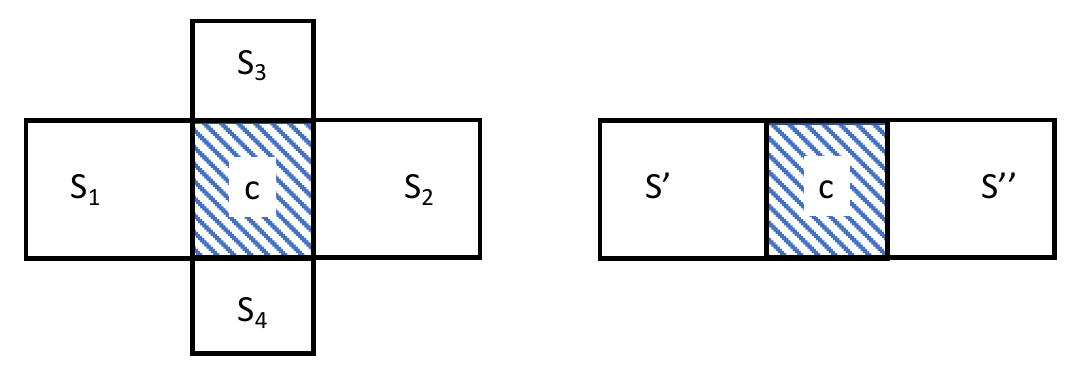}
    \caption{An example of ordinal covariance assumption}
    \label{fig:example of ordinal cov}
\end{figure}

This assumption is trivial when 
considering the linear average kernel again:  $h(S_l) = \frac{1}{k} \sum_{i = 1}^k X_i^{(l)}$, for $l =1, 2, 3, 4$. 
In particular, considering $(S_1, S_2, S_3, S_4)$ s.t. 
$$
|S_1 \cap S_2 \cap S_3 \cap S_4| = |(X_1^{(1)}, ..., X_c^{(1)})| =c
$$ 
and $(S', S'')$ s.t. $|S' \cap S''| = c$,  the equality in Equation \eqref{eq:F_c as upper bound} attains, i.e.,
\begin{align}
\label{eq:Fc eg}
    \covh 
        = k^{-4} \Var \left [(X_1^{(1)}+ ... + X_c^{(1)})^2 \right]
        =  \Cov \left[ h(S')^2, h(S'')^2 \right].
\end{align}
It is also straightforward to verify the assumption  under simple quadratic average kernel function  $h(S_1)= \frac{1}{k^2} [\sum_{i = 1}^k X_i]^2$

\subsection{Discussion of Assumption \ref{assumption:4th moment}}
\label{sec:append:assump:discuss Fc, counter example for linear rate}
Assumption \ref{assumption:2nd moment} shows a polynomial growth rate of the second moment term $\xi_{d, k}^2$ while Assumption \ref{assumption:4th moment}  shows a polynomial growth rate of the fourth moment term $\fk$.  Assumption  \ref{assumption:4th moment} also assumes $F_1^{(k)}$ is not a higher order term of $\xi_{1, k}^4$.
To better illustrate the idea of Assumption \ref{assumption:4th moment}, 
we consider the following example with an oversimplified setting: the $cum_4$ term is 0 in Equation \eqref{eq:cumulants}. Note that linear average kernel with i.i.d. standard Gaussian $X_i$'s satisfies this setting.
\begin{example}
\label{example:fourth moment under no cumulant}
Suppose there is no fourth order cumulant term in Equation \eqref{eq:cumulants}, by Equation \eqref{eq:Fc eg}, $F_c^{(k)}$ can be simplified as $\xi^2_{r_{22}, k} \xi^2_{r_{22}, k}+  \xi^2_{r_{22}, k} \xi^2_{r_{22}, k} = 2 \xi_{r_{22}, k}^4 = 2\xi_{c, k}^4$. 
This also implies \eqref{eq: F_1 upper bound}: $F_1^{(k)} / \xi_{1, k}^4 = \calO(1)$. 
We further remark that in this example, Assumption  \ref{assumption:4th moment} can be implied by Assumption $\ref{assumption:2nd moment}$. To be more specific, as demonstrated in the following equation, 
$a_2$ in \eqref{eq:fourth order growth} is $2a_1$, where $a_1$ is provided in Assumption \ref{assumption:2nd moment}. 
\begin{align}
\label{eq:fc, no cumulants}
    \frac{F_c^{(k)}}{F_1^{(k)}} = \frac{2\xi_{c, k}^4}{\xi_{1, k}^4} =  2\left( \frac{\xi_{c, k}^2}{\xi_{1, k}^2}\right)^2 = \calO(c^{2a_1}), \, \text{for} \, c = 1, 2, ..., k.
\end{align}
\end{example}

By Lemma \ref{lemma:bound sigma_c^2 for any c} (see Appendix \ref{sec:technical lemmas}), a natural upper bound for $\sigma_{c, 2k}^2$ is $F_c^{(k)}$ \eqref{eq:F_c as upper bound}. 
Hence, even if $\xi_{d, k}^2$ has a linear growth rate regarding $d$, $\fk$ still can growth at a quadratic rate of $c$ (see \eqref{eq:fc, no cumulants}).
Therefore, we cannot assume $\sigma_{2k, 2k}^2 / (2k \sigma_{1, 2k}^2) = \calO(1)$., which is the common assumption on the counterpart $\xi_{1, 2k}^2$ \citep{lucas:old, lucas:new, Romano2020}. That is one reason that Assumptions \ref{assumption:low dependency}-\ref{assumption:4th moment} are imposed on the primitive term $\covh$ instead of on $\sigma_{c, 2k}^2$.



\section{Proof of Main Results}

\subsection{Proof of Corollary \ref{thm:ratio consistency}}
\label{sec:proof of main theorem, ratio consistency}

\begin{proof}[Proof of Corollary \ref{thm:ratio consistency}]
To show  $\frac{\vu}{E(\vu)} \xrightarrow{P} 1$ as $n \to \infty$, it suffice to show the $L_2$ convergence of $\vu / E(\vu)$, i.e., $\Var (\vu) /  \left( \E (\vu ) \right)^2\to 0$ as $n \to \infty$. 

By plugging Equation \eqref{eq:Var(Un) Theta bound} and \eqref{eq:Var(Vu) upper  bound} from Theorem \ref{thm:var of u_n and v_n}, we have
\begin{align}
\label{eq:main thm, step 1}
\frac{\Var (\vu)}{ \left( \E (\vu ) \right)^2} 
    =  \frac{\calO \left( \frac{k^2}{n} \check \sigma_{1, 2k}^2\right)}
    { \left[\left(1 + o(1)\right) \frac{k^2}{n} \xi_{1, k}^2\right]^2}
    = \calO\left( \frac{n}{k^2} \frac{\check \sigma_{1, 2k}^2}{\xi_{1, k}^4}\right),
\end{align}
where $\check \sigma_{1, 2k}^2 \asymp \frac{k^2}{n^2} \fk_1$ is the upper bound of $\sigma_{1, 2k}^2$ given by Proposition \ref{lemma:bound each sigma_c^2 for finite c}. By Assumption \ref{assumption:4th moment}, $\fk_1 = \calO ( \xi_{1,k}^4)$.
Plugging $\check \sigma_{1, 2k}^2$ into  Equation \eqref{eq:main thm, step 1}, we conclude that
\begin{align*}
\frac{\Var (\vu)}{ \left( \E (\vu ) \right)^2} = \calO\left(\frac{1}{n} \right).
\end{align*}
\end{proof}

\subsection{Proof of Theorem \ref{thm:var of u_n and v_n}}
\label{sec:proof of theorem Var Un Var Vu}

We first present a technical proposition to be used soon.  
\begin{proposition}
\label{prop:geometric ratio}
For any integer $c$, s.t. $1 \leq c \leq k$ and $k = o(\sqrt{n})$, 
\begin{align*}
    \binom{n}{k}^{-1} \binom{k}{c} \binom{n-k}{k-c} \leq \frac{1}{c!} \left( \frac{k^2}{n-k-1}\right)^c.
\end{align*}
\end{proposition}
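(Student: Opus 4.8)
The plan is to view $\gamma_{c,k,n}:=\binom{n}{k}^{-1}\binom{k}{c}\binom{n-k}{k-c}$ (the hypergeometric weight appearing in \eqref{eq:Decomposition of Var(U_n) Hoeffding}) as a quantity that factors into exactly $c$ ratios, each of order $k^2/n$, and to let the $1/c!$ be supplied by the binomial coefficient $\binom{k}{c}$. Concretely, one reads $\gamma_{c,k,n}$ as the probability that two independent uniform size-$k$ subsets of $\{1,\dots,n\}$ overlap in $c$ points, which is at most the union bound $\binom{k}{c}\,\mathbb{P}(A\subseteq S_2)$ over the $\binom{k}{c}$ choices of a fixed $c$-subset $A$ of a fixed $k$-set; but the same thing can be done purely combinatorially, which is what I would write.

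First I would reduce $\gamma_{c,k,n}$ to a product of $c$ factors. Since $c\le k$ we have $n-k\le n-c$, so monotonicity of $m\mapsto\binom{m}{k-c}$ (valid whenever $n-k\ge k-c$; if instead $n-k<k-c$ the left-hand side is $0$ and the statement holds trivially) gives $\binom{n-k}{k-c}\le\binom{n-c}{k-c}$, and hence
\[
\gamma_{c,k,n}\le\binom{k}{c}\,\frac{\binom{n-c}{k-c}}{\binom{n}{k}}=\binom{k}{c}\prod_{i=0}^{c-1}\frac{k-i}{n-i},
\]
where the last equality follows by cancelling factorials, using $\binom{n-c}{k-c}\big/\binom{n}{k}=\bigl(k!/(k-c)!\bigr)\big/\bigl(n!/(n-c)!\bigr)$.

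Next I would bound the two pieces separately. Because $0\le i\le c-1\le k-1$, every numerator satisfies $k-i\le k$ and every denominator satisfies $n-i\ge n-c+1\ge n-k+1>n-k-1$, so $\prod_{i=0}^{c-1}\frac{k-i}{n-i}\le\bigl(\frac{k}{n-k-1}\bigr)^{c}$; likewise $\binom{k}{c}=\frac{1}{c!}\prod_{i=0}^{c-1}(k-i)\le\frac{k^{c}}{c!}$. Multiplying the two estimates yields
\[
\gamma_{c,k,n}\le\frac{k^{c}}{c!}\left(\frac{k}{n-k-1}\right)^{c}=\frac{1}{c!}\left(\frac{k^{2}}{n-k-1}\right)^{c},
\]
which is the claim.

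This argument is essentially mechanical, so there is no substantial obstacle; the only points deserving a line of comment are the degenerate case $n-k<k-c$ (disposed of above) and the positivity of $n-k-1$, which is where the hypothesis $k=o(\sqrt n)$ enters — it guarantees $n\ge k+2$ for all large $n$, so the bound holds eventually in $n$, which is exactly the regime in which it is invoked in the proof of Theorem~\ref{thm:var of u_n and v_n}.
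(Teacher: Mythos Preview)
Your proof is correct and follows essentially the same approach as the paper: both rewrite $\gamma_{c,k,n}$ as $\frac{1}{c!}$ times a product of $c$ factors each bounded by $k^2/(n-k\pm 1)$. The only cosmetic difference is that the paper expands the exact expression $\frac{1}{c!}\cdot\frac{k!k!}{(k-c)!(k-c)!}\cdot\frac{(n-k)!(n-k)!}{n!(n-2k+c)!}$ directly and bounds the two bracketed pieces by $k^{2c}$ and $(n-k+1)^{-c}$, whereas you first apply the monotonicity step $\binom{n-k}{k-c}\le\binom{n-c}{k-c}$ to obtain a cleaner product before bounding.
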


\begin{proof}[Proof of Proposition \ref{prop:geometric ratio}]
This proof is provided by \citet{Romano2020}.
We first write the combinatorial numbers as factorial numbers
\begin{align}
    \binom{n}{k}^{-1} \binom{k}{c} \binom{n-k}{k-c} 
    &= \frac{(n-k)! k!}{n!} \frac{k!}{(k-c)! c!} \frac{(n-k)!}{(k-c)!(n-2k+c)!} \nonumber\\ 
    \label{eq:square bracket 0}
    &=\frac{1}{c!} \left[ \frac{k! k!}{(k-c)! (k-c)!} \right] \left[\frac{(n-k)!(n-k!)}{n! (n-2k+c)!}\right].
\end{align}
It suffices to upper bound things inside two square brackets separately. We have
\begin{align}
\label{eq:square bracket 1}
    \left[ \frac{k! k!}{(k-c)! (k-c)!} \right] \leq k^{2c},
\end{align}

\begin{align}
\label{eq:square bracket 2}
    \frac{(n-k)!(n-k!)}{n! (n-2k+c)!} 
    = \frac{(n-k) (n-k-1) ... (n-2k+c+1))}{n (n-1) ... (n-k+1)} \leq \left[\frac{1}{n-k+1}\right] ^c.
\end{align}

Combining \eqref{eq:square bracket 1} and \eqref{eq:square bracket 2} into \eqref{eq:square bracket 0}, we  completes the proof.
\end{proof}

\begin{proof}[Proof of Theorem \ref{thm:var of u_n and v_n}]

First, we show Equation \eqref{eq:Var(Un) Theta bound}. By Proposition \ref{thm:leading variance of U-stat} and Assumption \ref{assumption:2nd moment}, we can conclude that
\begin{align*}
    \lim_{n \to \infty} \frac{\Var (U_n)}{\frac{k^2}{n} \xi_{1, k}^2} = 1. 
\end{align*}

Secondly, we show Equation \eqref{eq:Var(Vu) upper  bound}.
\begin{align}
\label{eq:varV 1}
    \Var(\vu) \asymp \Var^{(T_1)}(\vu) 
    = &  \sum_{c=1}^{T_1} \binom{n}{2k}^{-1} \binom{2k}{c} \binom{n-2k}{2k-c} \sigma_{c, 2k}^2 \\
\label{eq:varV 2}
    = & \sum_{c=1}^{T_1} \binom{n}{2k}^{-1} \binom{2k}{c} \binom{n-2k}{2k-c} \calO(\frac{k^{2}}{n^2} F_c^{(k)}) \\
\label{eq:varV 3}
    = & \sum_{c=1}^{T_1}  \calO(\frac{k^{2c +2}}{n^{c+2}} F_c^{(k)}) \\
\label{eq:varV 4}
    = & \calO(\frac{k^4}{n^3} F_1^{(k)}).
\end{align}
Here, \eqref{eq:varV 1} is concluded by Lemma \ref{lemma:Finite truncated variance I}. 
\eqref{eq:varV 2} is concluded by $\sigma_{c, 2k}^2 = \calO(\frac{k^2}{n^2} F_c^{(k)})$ (Proposition \ref{lemma:bound each sigma_c^2 for finite c}).
\eqref{eq:varV 3} is concluded by  $\binom{n}{2k}^{-1} \binom{2k}{c} \binom{n-2k}{2k-c} = [1 + o(1)] \frac{k^{2c}}{n^c}$ for $c = 1, 2, ..., T_1$ (Proposition \ref{prop:geometric ratio}).
\eqref{eq:varV 4} is concluded by the bounded growth rate of $\fk$ in Assumption \ref{assumption:4th moment} and finite $T_1$.
If we denote $\frac{k^2}{n^2} F_1^{(k)}$ as $\widecheck{\sigma}_{1, 2k}^2$, we conclude that  $\Var(\vu) = \calO(\frac{k^2}{n} \widecheck{\sigma}_{1, 2k}^2)$.
\end{proof}

\subsection{Proof of Proposition \ref{thm:leading variance of U-stat}}
\label{sec:append:proof:thm:leading variance of U-stat}

\begin{proof}[Proof of Proposition \ref{thm:leading variance of U-stat}]

For $k = o(\sqrt{n})$, we want to show $\lim_{n \to \infty} \frac{\Var (U_n)}{\frac{k^2}{n} \xi_{1, k}^2} = 1$. 
First notice that for the coefficient leading term $\binom{n}{k}^{-1} \binom{k}{1} \binom{n-k}{k-1} \xi_{1, k}^2$, we have
\begin{align}
    \label{eq:leading term, asymp form}
    \frac
    {\binom{n}{k}^{-1} \binom{k}{1} \binom{n-k}{k-1}  }
    {\frac{k^2}{n} }
   =\frac{(n-k) !(n-k) !}{(n-1) !(n-2 k+1) !} \rightarrow 1,\, \text{as $n \rightarrow \infty$.}
\end{align} 
Therefore, it suffices to show that the rest part of $\Var(U_n)$ is dominated by the leading term:
\begin{align*}
    \lim_{n \to \infty} 
    \frac
        {\binom{n}{k}^{-1} \sum_{d=2}^{k} \binom{k}{d} \binom{n-k}{k-d} \xi^2_{d, k}}
        {\frac{k^2}{n} \xi_{1, k}^2} = 0.
\end{align*}
By Proposition \ref{prop:geometric ratio}, the numerator of the above can be bounded as 
\begin{align}
\label{eq:geometric sum: pre-form}
    \binom{n}{k}^{-1} \sum_{d=2}^{k} \binom{k}{d} \binom{n-k}{k-d} \xi^2_{d, k}
    \leq \sum_{d=2}^{k} \frac{k^{2 d}}{d !(n-k+1)^d} \xi_{d, k}^2: \sum_{d=2}^k (d!)^{-1} b_n^d \xi_{d, k}^2, 
\end{align}
where $b_n = \frac{k^2}{n-k-1}$.
Notice that $n < 2(n -k + 1)$, so we have
\begin{align}
    \label{eq:geometric sum}
    \frac
    {\binom{n}{k}^{-1} \sum_{d=2}^{k} \binom{k}{d} \binom{n-k}{k-d} \xi^2_{d, k}}
    {\frac{k^2}{n} \xi_{1, k}^2}
    \leq \frac{n}{n-k+1} \sum_{d=2}^{k} \frac{1}{d!} b_n^{d-1} \frac{\xi_{d,k}^2}{\xi_{1,k}^2}
    \leq \sum_{d=2}^{k} \frac{2}{d!} b_n^{d-1} \frac{\xi_{d,k}^2}{\xi_{1,k}^2}.
\end{align}
By Assumption \ref{assumption:2nd moment}, the growth rate of $\xi_{d, k}^2$ is bounded, there exists a uniform constant $C$ s.t. $\frac{\xi_{d, k}^2}{\xi_{1,k}^2} \leq C d!$ for $d = 2, 3, ..., k$. Therefore, the RHS of Equation \eqref{eq:geometric sum} is bounded as
\begin{align}
    \label{eq:geometric sum 2}
    \sum_{d=2}^{k} \frac{1}{d!} b_n^{d-1} \frac{\xi_{d,k}^2}{\xi_{1,k}^2}
    \leq \sum_{d=2}^{k} \frac{1}{d!} b_n^{d-1} \frac{C d! \xi_{1,k}^2}{\xi_{1,k}^2} 
    \leq C \sum_{d=2}^{k} b_n^{d-1}
    = C b_n \frac{1 - b_n^{k-1}}{1-b_n}
    \leq C  \frac{b_n}{1-b_n}.
\end{align}
The RHS of \eqref{eq:geometric sum 2} goes to 0 when $n \to \infty$, since $b_n = \frac{k^2}{n-k-1} \to 0$. 
This completes the proof. 

\end{proof}

\section{Proof of \emph{Double U-Statistics}}

\subsection{Proof of Proposition \ref{prop: double U-statistic: alternative form of phi}}
\label{sec:proof for double U}

\begin{proof}[Proof of Proposition \ref{prop: double U-statistic: alternative form of phi}]
\label{proof:prop:double U-statistic}

\noindent \textbf{Proof of Equation \eqref{eq:alternative form of phi}.}

We first show the following equation.
\begin{align}
    \psi\left(\Skk \right) 
\label{eq:alternative (primary) form of phi, 0 to k}
     = \sum_{d = 0}^{k} w_d \varphi_d \left(\Skk \right)
\end{align}
\citet{wang2014variance} have demonstrated that $\hat V_u$ is an U-statistic with size-2k kernel (Equation \eqref{eq:def:vu U-stat size-2k}):
\begin{align*}
    \hat{V}_{u}=Q(k)-Q(0)
    = \binom{n}{2k}^{-1} \sum_{\Skk \subseteq \mathcal{X}_{n}} \left[ \psi_{k}\left(\Skk \right)  -  \psi_{0}\left(\Skk \right) \right],
\end{align*}
\begin{align*}
&\psi_{k}\left( \Skk \right)
    = \underbrace{\binom{n}{2k} \binom{n}{k}^{-1} \binom{n}{k}^{-1}}_
    {A_1} 
    \sum_{d=0}^{k}  \frac{1}{N_d}
        \sum_{\substack{S_1, S_2 \subset \Skk \\ | S_1 \cap S_2 | = d}}  
            h\left(S_1\right) h\left(S_2\right),
\\
& \psi_{0}\left( \Skk \right)   
    = \underbrace{\binom{n}{2k} \binom{n}{k}^{-1} \binom{n-k}{k}^{-1}}_
    {A_{1, 0}}
    \frac{1}{N_0}
    \sum_{\substack{S_1, S_2 \subset \Skk \\  | S_1 \cap S_2 | = 0}}     
     h\left(S_1\right) h\left(S_2\right),
\end{align*}
where $N_d = \binom{n-2k+d}{d}$.

Denote 
\begin{align}
\label{eq:def A1 A0}
    A_1 := \binom{n}{2k} \binom{n}{k}^{-2}, \quad A_{1, 0} := \binom{n}{2k} \binom{n}{k}^{-1} \binom{n-k}{k}^{-1}.
\end{align}

Rewrite $\psi_{k}\left( \Skk \right) - \psi_{0}\left( \Skk \right)$ by the order of $d$. Notice that
there is a $\sum_{d=0}^{k}$ in $\psi_{k}\left( \Skk \right)$ but $d$ can only be $0$ in $\psi_{0}\left(\Skk\right)$. Hence, there is a cancellation for $h(S_1) h(S_2)$ s.t. $d = |S_1 \cap S_2| = 0$, thus we have
\begin{small}
\begin{align*}
\psi_{k}\left(\Skk\right) -  \psi_{0}\left(\Skk\right)
    = &A_1 \sum_{d=1}^{k} 
        \frac{1}{N_d}
        \sum_{\substack{S_1, S_2 \subset \Skk \\ | S_1 \cap S_2 | = d}}  
        h\left(S_1\right) h\left(S_2\right) 
    + (A_1 - A_{1, 0})
        \frac{1}{N_0}
        \sum_{\substack{S_1, S_2 \subset \Skk \\ | S_1 \cap S_2 | = 0}} 
        h\left(S_1\right) h\left(S_2\right).
\end{align*}
\end{small}
For the RHS of above equation, multiply and divide  $M_{d, k}$  \eqref{eq:def:phi} inside $\sum_{d=1}^k$:
\begin{align*}
\begin{split}
\psi_{k}\left(\Skk\right) -  \psi_{0}\left(\Skk\right)
    = &\sum_{d=1}^{k}
    \underbrace{
    \left[
    A_1 \frac{1}{N_d} M_{d, k} \right]}_
    {w_d}
    \underbrace{
    \frac{1}{M_{d, k}}
    \sum_{S_1, S_2 \subset \Skk, \, | S_1 \cap S_2 | = d} 
    h\left(S_1\right) h\left(S_2\right)}_
    {\varphi_d(\Skk)}
    \\
    &+ \underbrace{
    \left[
    (A_1 - A_{1, 0})\frac{1}{N_0}  M_{0, k}\right]}_
    {w_0}
    \underbrace{
    \frac{1}{M_{0, k}}
    \sum_{S_1, S_2 \subset \Skk, \, | S_1 \cap S_2 | = 0} 
     h\left(S_1\right) h\left(S_2\right)}_
     {\varphi_0 (\Skk)}.
\end{split}
\end{align*}
We denote 
\begin{align*}
    & w_d :=  \frac{A_1}{N_d} M_{d, k}, 
        \quad \text{for } d = 1, 2, ...,k; \\    
    &w_0 := \frac{(A_1 - A_{1, 0}) M_{0, k}}{N_0};   \\
    &\varphi_d(\Skk) :=\frac{1}{M_{d, k}}
    \sum_{\substack{S_1, S_2 \subset \Skk \\ | S_1 \cap S_2 | = d}} 
    h\left(S_1\right) h\left(S_2\right), 
        \quad \text{for } d = 1, 2, ..., k;\\
    &\varphi_0 (\Skk) := \frac{1}{M_{0, k}}\sum_{\substack{S_1, S_2 \subset \Skk \\ | S_1 \cap S_2 | = 0}} 
     h\left(S_1\right) h\left(S_2\right).
\end{align*} 
Thus we have
\begin{align*}
\psi_{k}\left(\Skk\right) -  \psi_{0}\left(\Skk\right)      
=  \left[ \sum_{d=1}^{k} w_d  \varphi_d (\Skk) \right] 
    + w_0 \varphi_0 (\Skk) 
  =   \sum_{d=0}^{k} w_d  \varphi_d (\Skk).
\end{align*}

Given that $\sum_{d = 0}^{k} w_d = 0$ is true (to be proved soon), then $w_0 = -\sum_{d = 1}^k w_d$. Therefore,
\begin{align*}
\begin{split}
    \psi\left(\Skk \right) 
    & = \sum_{d = 0}^{k} w_d \varphi_d \left(\Skk \right) 
     = \sum_{d = 1}^{k} w_d \varphi_d \left(\Skk \right) 
        - \left( \sum_{d=1}^k w_d \right) \varphi_0 \left(\Skk \right) \\
    & = \sum_{d = 1}^{k} w_d \left[\varphi_d \left(\Skk \right) - \varphi_0 \left(\Skk \right)\right].
\end{split}
\end{align*}    

\noindent \textbf{Proof of Equation  Equation \eqref{eq:weights for phi's: 2} }

First, we show that $\sum_{d = 0}^{k} w_d = 0$.
As discussed above, $w_d$ is a product of three normalization constants:
$A_1 = \binom{n}{2k} \binom{n}{k}^{-2}$ and $A_{1, 0} = \binom{n}{2k} \binom{n}{k}^{-1}\binom{n-k}{k}^{-1}$ are the normalization constant to rewrite $Q(k)$ and $Q(0)$ as a U-statistic;
$M_{d, k} := \binom{2k}{d} \binom{2k-d}{d} \binom{2k-2d}{k-d}$  is the number of pairs $S_1, S_2 \subset \Skk$ s.t. $|S_1 \cap S_2| = d$;
$N_d = \binom{n-2k+d}{d}$ is defined in Equation  \eqref{eq:psi: a size-2k kernel for Q(k')}.

\begin{align*}
w_0 = \frac{(A_1-A_{1, 0}) M_{0, k}}{N_0}; \,\, w_d = \frac{A_1 M_{d, k}}{N_d}, \quad \text{for } d= 1, 2, ..., k.
\end{align*}
Since $A_{1, 0} > A_1 > 0$, $M_{d, k} > 0$, $N_d > 0$, we have $
    w_d > 0, \forall d \geq 1$ and
    $w_0 < 0$. 
Then we show $\sum_{d = 0}^k w_d = 0$. Though this can be justified by direct calculation, we present a more intuitive proof. 
Recall $\hat V_u = Q(k) - Q(0)$. By the definition of $Q(k)$, $Q(k)$ can be represented as a weighted sum of $h(S_1) h(S_2)$, i.e., $\sum_{1 \leq i,j \leq n} a_{ij} h(S_i) h(S_j)$, where $\sum_{1 \leq i <j \leq n} a_{ij} = 1$.
Thus, $Q(k) - Q(0)$  can be represented in a similar way: 
\begin{align*}
    Q(k) - Q(0) = \sum_{1 \leq i < j \leq n} a_{ij}' h(S_i) h(S_j),
\end{align*}
where $\sum_{1 \leq i,j \leq n} a_{ij}' = 0$.
Therefore  $\psi_{k}\left(\Skk\right) -  \psi_{0}\left(\Skk\right)$, as the kernel of U-statistic $Q(k) - Q(0)$, can also be represented in the form of a weighted sum:
\begin{align}
\label{eq:psi as a weighted sum}
    \psi_{k}\left(\Skk\right) -  \psi_{0}\left(\Skk\right)
    = \sum_{1 \leq i <j \leq n} b_{ij} h(S_i) h(S_j),
\end{align}
where $\sum_{1 \leq i <j \leq n} b_{ij} = 0$ since $\psi_{k}\left(\Skk\right) -  \psi_{0}\left(\Skk\right)$ is an unbiased estimator of $Q(k) - Q(0)$.
On the other hand, for $d = 0, 1, 2, ..., k$, $\varphi_d (\Skk)$  is still a U-statistic, which can be represented in the form of a weighted sum:
\begin{align}
\label{eq:phi_d as a weighted sum}
    \varphi_d \left(\Skk\right) 
    = \sum_{1 \leq i <j \leq n} c_{ij}^{(d)} h(S_i) h(S_j),
\end{align}
where $\sum_{1 \leq i <j \leq n} c_{ij}^{(d)} = 1$.
Since $\psi_{k}\left(\Skk\right) -  \psi_{0}\left(\Skk\right) = \sum_{d = 0}^k w_d  \varphi_d \left(\Skk\right)$, by comparing Equation \eqref{eq:psi as a weighted sum} and \eqref{eq:phi_d as a weighted sum}, we have 
$\sum_{d = 1}^k w_d \sum_{1 \leq i <j \leq n} c_{ij}^{(d)} = \sum_{1 \leq i <j \leq n} b_{ij}$. Since $\sum_{1 \leq i <j \leq n} c_{ij}^{(d)} = 1$ and $\sum_{1 \leq i <j \leq n} b_{ij} = 0$, we can take $h(S_i) = 1$ for $i = 1, 2, ..., n$ and conclude that 
\begin{align*}
    \sum_{d = 0}^{k} w_d = 0.
\end{align*}

Secondly, we present the details to bound $w_d = A_1 M_{d, k} / N_d$, for $d = 1, 2, ..., k$. Plug in the expression of $A_1, M_{d, k}, N_d$, we have
\begin{small}
\begin{align*}
w_d =& \left[ \binom{n}{2k} \binom{n}{k}^{-2} \right]
        \left[\binom{2k}{d} \binom{2k-d}{d} \binom{2k-2d}{k-d} \right]
        / \binom{n-2k+d}{d} 
    \\
    =& \left[ 
            \frac{n!}{(n-2k)! (2k)!} 
            \frac{(n-k)!(n-k)!k!k!}{n! n!}\right]
        \left[
            \frac{(2k)!(2k-d)!(2k-2d)!}{(2k-d)! d! (2k-2d)! d! (k-d)! (k-d)!}
        \right] 
       \left[ \frac{d!(n-2k)!}{(n-2k+d)!}
        \right].
\end{align*}
\end{small}
After direct cancellation of the same factorials, we have
\begin{align}
\label{eq:w_d 3 parts}
w_d 
    = \underbrace{\frac
    {(n-k)!(n-k)! }
    { n! (n-2k+d)! }}_
    {\text{Part I}}
    \underbrace{\frac
    {k! k!}
    {(k-d)! (k-d)!}}_
    {\text{part II}}
    \underbrace{
    \frac{1}{d!}}_
    {\text{part III}}.
\end{align}
For Part I in \eqref{eq:w_d 3 parts},
\begin{align*}
\frac{(n-k)!(n-k)! }{ n! (n-2k+d)! }
     = \frac{ \prod_{i = 0}^{k-d -1} (n-k-i)}
        {\prod_{i = 0}^{k-1} (n-i)}
     = [1 + o(1)] \frac{1}{n^d}.
\end{align*}
The last equality is because for any $k = o(\sqrt{n}), d \leq k$, we have
\begin{align*}
\frac{ \prod_{i = 0}^{k-d -1} (n-k-i)}
    {\prod_{i = 0}^{k-1} (n-i)} 
    \leq  \frac{ \prod_{i = 0}^{k-d -1} (n-i)}
    {\prod_{i = 0}^{k-1} (n-i)}
    =  \frac{ 1 }
    {\prod_{i = 0}^{d-1} (n-k+d -i)}
    \leq \frac{1}{(n-k)^d}
    = [1 + o(1)] \frac{1}{n^d}.
\end{align*}
On the other hand,
\begin{align*}
\frac{ \prod_{i = 0}^{k-d -1} (n-k-i)}
    {\prod_{i = 0}^{k-1} (n-i)} 
    \geq 
     \frac{1}{n^d}.
\end{align*}
Combining $\leq$ and $\geq$, we have $= [1 + o(1)] \frac{1}{n^d}$.
For Part II \eqref{eq:w_d 3 parts},
\begin{align*}
\frac{k! k!}{(k-d)! (k-d)!}
    = [k(k-1) ... (k-d+1)]^2] \leq k^{2d}.
\end{align*}
Particularly, when $d$ is fixed, we have $\frac{k! k!}{(k-d)! (k-d)!}
    = k(k-1) ... (k-d+1)]^2 = [1 + o(1)] k^{2d}$.
Combining Part I, II, III in \eqref{eq:w_d 3 parts}, we have 
\begin{align*}
w_d  = 
   \begin{cases}
     [1 + o(1)] \left [\frac{1}{d!}(\frac{k^2}{n})^d \right] 
        & \forall \text{ finite $d$;}
        \\
     \calO \left [\frac{1}{d!}(\frac{k^2}{n})^d \right]
    & \forall d = 1, 2, ..., k.
   \end{cases}
\end{align*}

\end{proof}



\section{Proof of Results in Methodology Section}
\label{sec:append:methoddology}

\subsection{Variance of Incomplete U-statistics \texorpdfstring{$\UincMatch$}{Umatch}}
\label{sec:append:varUmatch}

\begin{proof}[Proof of Proposition \ref{prop:var incU, match}]
\label{proof:proof of inc var}
This is an extension of the results by \citet[Section 4.1.1]{wang2012investigation} and \citet{wang2014variance}.

Comparing $\Var(\UincMatch) = ( 1-\frac{1}{B}) Var(U_n) +  \frac{1}{MB} \vh$ \eqref{eq:var Uinc, partition} with $ \Var(\Uinc) 
    = \varU + \E[ \Var(\Uinc | \calX_n) ]$  \eqref{eq:var Uinc general} , it suffices to show that 
\begin{align*}
    \E\left[ \Var(\Uinc | \calX_n)  \right] = \frac{1}{MB} \vh -  \frac{1}{B} \varU.
\end{align*}

Here we adopt an alternative view of a complete U-statistic $U_n$ with $k \leq n/2$ by \citet{wang2014variance}. 
Follow our notation of ``matched group'',  we can always take $M = \floor{n/k}$ mutually disjoint subsamples $S_1, ..., S_M$ from $(X_1, ..., X_n)$, such that  $|S_i \cap S_j| = 0$ for $1 \leq i < j \leq M$. 
\citet{wang2014variance} take integer $M = n/k$ while we allow $2 \leq M \leq \floor{n/k}$.
Recall such $(S_1^{(b)}, ..., S_M^{(b)})$ as a ``matched group'', where $b$ is the index of group. 
Let $\Gn$ be the collection of all such matched groups constructed from $n$ samples, i.e., 
\begin{align}
\label{eq:def:Gn, group}
    \Gn = \big\{ (S_1^{(b)}, \ldots, S_M^{(b)}) : \cup_j S_j^{(b)} \subset {\cal X}_n, \,\, \text{and} \,\, S_i^{(b)} \cap S_j^{(b)}  = \emptyset, \forall 1 \leq i, j \leq M \big\}.
\end{align}
Then, an alternative representation of $U_n$ is
\begin{align}
\label{eq:U_n alternative}
    U_n = \frac{1}{M |\Gn|} \sum_{b = 1}^{|\Gn|} \sum_{i=1}^M S_i^{(b)}.
\end{align}
This form seems redundant because there are some replicate subsample among all $S_i^{(b)}$'s. However, for incomplete U-statistic $\UincMatch$, each $(S_1^{(b)}, ..., S_M^{(b)})$ can be viewed as a sample from $\Gn$. Hence,  \citet{wang2012investigation} show that
$B \cdot \Var(\Uinc | \calX_n) = \frac{1}{|\Gn|} \sum_{b=1}^{|\Gn|} (\bar{h}^{(b)} - U_n)^2 
$, where $\bar{h}^{(b)} = \frac{1}{M} \sum_{i=1}^M h(S_i^{(b)})$, $S_i^{(b)}$'s are all subsamples associated with the complete U-statistic $U_n$ on $\calX$. However, \citet{wang2012investigation} and \citet{wang2014variance} do not provide a simple expression in the form of $\vh$ and $\varU$. 
We further simplify $B \Var(\Uinc | \calX_n)$ as follows,
\begin{align*}
B \cdot \E\left[ \Var(\Uinc | \calX_n)  \right] 
    =& \E \left (\frac{1}{|\Gn|} \sum_{b=1}^{|\Gn|} (\bar{h}^{(b)} - U_n)^2 \right) \\
    =& \E \left [\frac{1}{|\Gn|} \sum_{b=1}^{|\Gn|} \left( (\bar{h}^{(b)} -\E(U_n)) -  (U_n - \E(U_n)) \right)^2 \right] \\
    =& \frac{1}{|\Gn|} \left[ \sum_{b=1}^{|\Gn|} \E \left (  \left(\bar{h}^{(b)} -\E(U_n) \right)^2 \right) \right]
    + \frac{1}{|\Gn|} \left[\sum_{b=1}^{|\Gn|} \E  \left( \left(U_n - \E(U_n) \right)^2 \right)\right] \\
    &
     - 2 \E \left( \frac{1}{|\Gn|} \sum_{b=1}^{|\Gn|} (\bar{h}^{(b)} - \E(U_n))(U_n - \E(U_n) \right)
    \\
    =& \Var\left(\bar{h}^{(1)}\right) - \Var(U_n) \\
    =& \frac{1}{M} \vh - \varU.
\end{align*}
In the above equations, the first equality is the conclusion by \citet{wang2012investigation}; the next-to-last equality holds since  $\frac{1}{|\Gn|} \sum_{b=1}^{|\Gn|} \bar{h}^{(b)} = U_n$; the last equality holds since $h(S_1^{(1)}), ..., h(S_M^{(1)})$ are independent.

\end{proof}

\subsection{Unbiasedness of Variance Estimators}

\begin{proof}[Proof of Proposition \ref{prop:var_subTree}]
First, we restrict the discussion given $k \leq n/2$. 
We first show that $\hatvs = \sum_{d= 0}^k \gammad \hatxidtilde$. 
By the discussion in Section \ref{sec::est joint}, we have $N_{d,k,n} = \binom{n}{k}^2 \gammad$. 
For a complete U-statistic with $k \leq n/2$, $\Nd = \binom{n}{k} \binom{n-k}{k-d} \binom{k}{d}$ and we denote $\NN = \binom{n}{k} > 0$. Then,
\begin{align*}
\sum_{d= 0}^k \gammad \hatxidtilde 
    =& \sum_{d=0}^k \NN^{-2} \Nd \hatxidtilde \\
    =&  \NN^{-2}  \sum_{1 \leq i \leq \NN} \sum_{1 \leq j \leq \NN} \sum_{d = 0}^k 
    \mathbbm{1}\{ {|S_i \cap S_j| = d} \} [h(S_i) - h(S_j)]^2 /2 \\
    =&  \NN^{-2} \left[  2 \sum_{1 \leq i < j \leq \NN} \sum_{d = 0}^{k-1}
    \mathbbm{1}\{ {|S_i \cap S_j| = d} \} [h(S_i) - h(S_j)]^2 /2 \right]\\
    =&  \frac{\NN (\NN-1)}{\NN^2} \left[\binom{\NN}{2}^{-1} \sum_{1 \leq i < j \leq \NN}  [h(S_i) - h(S_j)]^2 /2 \right]\\
    =&  \frac{\NN-1}{\NN} \left[ \frac{1}{\NN-1}  \sum_{i=1}^\NN [h(S_i) - U_n]^2 \right] \\
    =& \frac{1}{\NN}  \sum_{i=1}^\NN [h(S_i) - U_n]^2 = \hatvs
\end{align*}
Here, the second equality holds by plugging in the definition of $\Nd$  and interchanging the finite summation $\sum_{d \in \calD}$ with $\sum_{1 \leq i < j \leq B}$. The third equality omits the cases with $i =j$, where $h(S_i) - h(S_j) = 0$.
The second to last equality holds because the sample variance is essentially an order-2 U-statistic, with kernel $(h(S_i) - h(S_j))^2 / 2$.

Then, as we demonstrated in Section \ref{sec::est joint},  $\E(\hatxidtilde) = \xidtilde$ for $d= 0, 1, ..., k$. Hence we conclude that 
\begin{align*}
    \E\left( \hatvs \right) =  \sum_{d= 0}^k \gammad \E \left(  \hatxidtilde  \right) 
    = \sum_{d= 0}^k \gammad \xidtilde = \vs.
\end{align*}

Secondly, we extend the previous argument to the setting $n/2 < k < n$.
We denote  $\calD = \{d \in N^* | 0 \leq d \leq k, \gammad > 0\}$.
We can define $\hatvs = \sum_{d \in \calD}\gammad \hatxidtilde$. We want to show that
\begin{align}
\label{eq:sum gamma xitilde -> sample variance 2}
      \hatvs = \frac{1}{\NN}  \sum_{i=1}^\NN [h(S_i) - U_n]^2, \\
      \E (\hatvs) = \vs.
\end{align}

Similar to previous proof
\begin{align*}
\hat V^{(S)'} 
    =& \sum_{d \in \calD} \NN^{-2} \Nd \hatxidtilde \\
    =&  \NN^{-2} \left[  2 \sum_{1 \leq i < j \leq \NN} \sum_{d \in \calD \backslash \{k\}}
    \mathbbm{1}\{ {|S_i \cap S_j| = d} \} [h(S_i) - h(S_j)]^2 /2 \right]\\
    =&  \frac{\NN (\NN-1))}{\NN^2} \left[\binom{\NN}{2}^{-1} \sum_{1 \leq i < j \leq \NN}  [h(S_i) - h(S_j)]^2 /2 \right]\\
    =&  \frac{\NN-1}{\NN} \left[ \frac{1}{\NN-1}  \sum_{i=1}^\NN [h(S_i) - U_n]^2 \right] \\
    =& \frac{1}{\NN}  \sum_{i=1}^\NN [h(S_i) - U_n]^2
\end{align*}

Since each $\hatxidtilde$ is still an unbiased estimator of $\xidtilde$, similarly, we have $
\E (\hat V^{(S)'} ) = \vs$. 
Remark that the summation in Equation \eqref{eq:sum gamma xitilde -> sample variance 2} is over $d \in \calD$ instead of $d = 0, 1, 2, ..., n$.
This is because $\gammad$ is 0 for small $d$, given $k > n/2$.
In other words, when $k > n/2$, several terms of $\gammad \xid$ in the Hoeffding decomposition \eqref{eq:Decomposition of Var(U_n) Hoeffding} is already 0.

\end{proof}

\begin{proof}[Proof of Proposition \ref{prop:bias of var subsamp}]

Since a sample variance is an order-2 U-statistics, 
\begin{align*}
\hatvsB 
=&  [(BM-1) BM]^{-1}  \sum_{i=1}^M \sum_{j=1}^{B} \sum_{(i', j') \neq (i, j)} \left[\hij - \hji \right]^2/2 \\
=& [(BM-1) BM]^{-1} \sum_{i=1}^M \sum_{j=1}^{B}
    \left( \sum_{(i', j') \in \calA(i,j)} +   \sum_{(i', j') \in \calB(i,j)} \right)
    \left[\hij - \hji \right]^2/2 \\
:=&  [(BM-1) BM]^{-1} \sum_{i=1}^M \sum_{j=1}^{B}
    \left( \sum_{(i', j') \in \calA(i,j)} +   \sum_{(i', j') \in \calB(i,j)} \right) \vijij
    ,
\end{align*}
where $\calA(i,j) = \{ (i', j')| i' \neq i, j' = j, 1 \leq  i' \leq M, 1\leq  j' \leq B \}$; 
$\calB(i,j) = \{ (i', j')| j' \neq j, 1\leq i' \leq M, 1\leq  j' \leq B \}$. 
We note that $|\calA(i,j)| = M-1$ and  $|\calB(i, j)| = [(B-1)(M-1)]$ for any $(i, j)$.
To further simply our notation, we also denote $\vijij = [\hij - \hji ]^2/2$.

Fixing $(i, j)$, for any $(i', j') \in \calA(i, j)$, $S_{i'}^{(j')}$ and $S_{i}^{(j)}$ are the same  $j$ but not identical. Hence, $\vijij$  is an unbiased estimator of $\hatvh$. 
Furthermore,  the sample variance within group $j$ is also a U-statistic, which can be alternatively represented as an order-2 U-statistic:
$
 [M(M-1)]^{-1} \sum_{i=1}^M 
 \sum_{i'\neq i} \vijij.
$
Thus, by summation over all $j$ and the symmetry, we have
\begin{align}
\label{eq:vijij1}
& [(BM-1) BM]^{-1} \sum_{i=1}^M \sum_{j=1}^{B}
 \sum_{(i',j') \in \calA (i, j)} \E(\vijij)  \nonumber \\
=& [(BM-1) BM]^{-1} \sum_{i=1}^M \sum_{j=1}^{B} |\calA(i,j)| \vh
 := \delta_{M, B} \vh,
\end{align}
where $\delta_{M, B} = \frac{M-1}{MB-1}$.

Fixing $(i, j)$, for any $(i', j') \in \calB(i, j)$,  $S_{i}^{(j)}$ and $S_{i'}^{(j')}$ are in different matched group. Since each matched group are sampled independently,  $S_{i}^{(j)}$ and $S_{i'}^{(j')}$ are independently sampled from $\calX_n$. 
By the theory of finite population sampling \citep{cochran1977sampling}, for $(i', j') \in \calB(i, j)$,
\begin{align*}
    \E(\vijij) =  \E[ \E(\vijij | \calX_n)] = \E\left[ \binom{n}{k}^{-1} \sum_{S_i \in \calX_n} (h(S_i) - U_n)^2 \right] = \vs
\end{align*}
Thus, the normalized summation over all such $\vijij$ satisfies that
\begin{align}
\label{eq:vijij2}
 & [[(BM-1) BM]^{-1} \sum_{i=1}^M \sum_{j=1}^{B}
 \sum_{(i',j') \in \calB (i, j)} E(\vijij) \nonumber \\
=& [(BM-1) BM]^{-1} \sum_{i=1}^M \sum_{j=1}^{B} |\calB(i,j)| \vs
 := (1 - \delta_{M, B}) \vs.
\end{align}

Combining Equations \eqref{eq:vijij1} and \eqref{eq:vijij2}, we conclude that
\begin{align*}
        \E \left( \hatvsB \right) = (1 - \delta_{M, B}) \vs + \delta_{M, B} \vh.
\end{align*}

\end{proof}

\begin{proof}[Proof of Proposition \ref{prop:inc var est unbiase}]

On one hand, by Proposition \ref{prop:bias of var subsamp}
\begin{align*}
    \E\left(\widehat{\Var}(\UincMatch) \right) 
    =& \E(\hatvhB) - \frac{MB-1}{MB} \E (\hatvsB)   \\
    =& \vh -   \frac{MB-1}{MB}\left[(1 - \frac{M-1}{MB-1}) \vs + \frac{M-1}{MB-1} \vh \right]    \\
    =&  \frac{MB - M + 1}{MB} \vh - \frac{B-1}{B} \vs.
\end{align*}

On the other hand, by $\varU = \vh - \vs$ \eqref{eq:varU = varh - sum} and Proposition and \ref{prop:var incU, match},
\begin{align*}
    \Var(\UincMatch) =  \frac{B-1}{B}  Var(U_n) +  \frac{1}{MB} \vh =   \frac{MB - M + 1}{MB} \vh - \frac{B-1}{B} \vs.
\end{align*}

Hence, we conclude the unbiasedness of our incomplete variance estimator:
\begin{align*}
   \E\left(\widehat{\Var}(\UincMatch) \right)  = \Var(\UincMatch) .
\end{align*}

\end{proof}


\subsection{Equivalence of complete Variance Estimators}
\label{sec:append:equivalence,complete}

We denote the complete variance estimator by us, \citet{schucany1989small}, and \citet{wang2014variance} as $\vu$ \eqref{eq:our est, complete}, $\vu^{(S \&B)}$, and $\vu^{(W\&L)}$ respectively.

First, our complete U-statistic variance estimator is identical to the estimator in page 79 of \citet{folsom1984probability}'s work.

Secondly, we restrict $k = 2$ and show that $\vu = \vu^{(S \&B)}$. \citet{schucany1989small} estimate two terms, $\xi_{1,2}^2$ and $\xi_{2,2}^2$ in the Hoeffding decomposition as $\tilde \zeta_1^2$ and $\tilde \zeta_{2,2}^2$ respectively as follows. We adapt their notation to simplify $h(X_i, X_j)$ as $h_{ij}$.
\begin{align*}
  &\tilde \zeta_{1, 2}^2 = \binom{n}{3}^{-1} \sum_{i < j < l} h_0^*(X_i, X_j, X_l) - \binom{n}{4}^{-1} \sum_{i < j < l < m} h_1^*(X_i, X_j, X_l, X_m), \\
  &\tilde \zeta_{2, 2}^2 = \binom{n}{4}^{-1} \sum_{i < j < l < m} g^*(X_i, X_j, X_l, X_m), 
\end{align*}
where 
\begin{align*}
    & h_0^*(X_i, X_j, X_l) = \frac{1}{3} \left[h_{ij} h_{il} + h_{ij} h_{jl} + h_{il} h_{jl} \right]\\
    & h_1^* (X_i, X_j, X_l, X_m) = \frac{1}{3} \left[h_{ij} h_{lm} + h_{il} h_{jm} + h_{im} h_{jl} \right],\\
    & g^*(X_i, X_j, X_l, X_m) = \frac{1}{6} \left[ (h_{ij} - h_{lm})^2 + (h_{il} -h_{jm})^2 + (h_{im} - h_{jl})^2\right].
\end{align*}
Then, by estimating corresponding terms in the Hoeffding decomposition \eqref{eq:Decomposition of Var(U_n) Hoeffding}, 
\begin{align*}
    \vu^{(S \&B)} = \binom{n}{2}^{-1} \binom{2}{1} \binom{n-1}{1} \tilde \zeta_{1, 2}^2 + \binom{n}{2}^{-1} \tilde \zeta_{2, 2}^2.
\end{align*}

By our proposed decomposition, $\Var(U_n) = \vh - \vs$ and Proposition \ref{prop:var_subTree}, our estimation approach is equivalent to estimate $\xi_{1, 2}^2$ and $\xi_{2, 2}^2$ by $\hatvh - {\hat{\tilde\xi}_{1, 2}^2}$ and $\hatvh$ respectively. When $k = 2$,  $\hatvh = (\binom{n}{2} \binom{n-2}{2})^{-1} \sum_{ |S_i \cap S_j| = 0} \left(h(S_i) - h(S_j)\right)^2 / 2$ and ${\hat{\tilde\xi}_{1, 2}^2} = (2\binom{n}{2}  \binom{n-2}{1})^{-1}\sum_{ |S_i \cap S_j| = 1  } \left(h(S_i) - h(S_j)\right)^2 / 2$.
Hence, to show that $\vu = \vu^{(S \&B)}$, it suffices to show that $\hatvh -  {\hat{\tilde\xi}_{1, 2}^2} = \tilde{\zeta_{1}^2}$ and $\hatvh = \tilde{\zeta_{2}^2}$ respectively. 
For the first equality, we can simplify these terms as follows.
\begin{align*}
\small
    &\tilde \zeta_{1, 2}^2 = 
        \left(3 \binom{n}{3} \right)^{-1} \sum_{i=1}^n \sum_{j: j>i.} \sum_{l: l\neq i, j.} h_{ij} h_{jl} 
        - \left(\binom{n}{2} \binom{n-2}{2} \right)^{-1} \sum_{i=1}^n \sum_{j: j >i.} \sum_{l: l\neq i, j.} \sum_{m: m \neq i, j; m>l.} h_{ij} h_{lm} 
    , \\
    &\hatvh = \binom{n}{2} ^{-1} \sum_{i=1}^n \sum_{j: j>i.} h_{ij}^2 - \left(\binom{n}{2} \binom{n-2}{2} \right)^{-1} \sum_{i=1}^n \sum_{j: j >i} \sum_{l: l\neq i, j.} \sum_{m: m \neq i, j; m>l.} h_{ij} h_{lm} 
    , \\
    &\hat{\tilde\xi}_{1, 2}^2 = 
        \binom{n}{2} ^{-1} \sum_{i=1}^n \sum_{j: j>i.} h_{ij}^2 
        - \left(3 \binom{n}{3} \right)^{-1} \sum_{i=1}^n \sum_{j: j>i.} \sum_{l: l\neq i, j.} h_{ij} h_{jl}.
\end{align*}
Therefore, $\tilde \zeta_{1, 2}^2 = \hatvh -  \hat{\tilde\xi}_{1, 2}^2$.
For the latter equality, $\tilde \zeta_{2}^2$'s kernel $g^*$ is composed by sample variance between two kernels with disjoint subsamples, such as $h_{ij}$ and $h_{lm}$, $\tilde \zeta_{2, 2}^2$ is a redundant version of our $\hatvh$, which implies that $\hatvh = \tilde \zeta_{2, 2}^2$. This, concludes the equivalence. 

We remark that \citet{schucany1989small} consider an alternative estimator of $\xi_{1, 2}^2$, denoted as $\hat \zeta_1^2$ \citep{sen1960leading}. However, that one shows connection to the work of \citet{lucas:old} and \citet{lucas:new}  (see Section \ref{sec:connection}) but is not the focus of this appendix.

Thirdly, note that \citet{wang2014variance}'s estimator involves the definition of their partitioning scheme and we use the notation of our matching group and assume $M = n / k$ to present their estimator (see $\Gn$ \eqref{eq:def:Gn, group} and in Appendix \ref{sec:append:varUmatch}).
Here we present \citet{wang2014variance}'s estimator in their ANOVA form, which is the alternative to their second-moment view. This alternative form uses the within and between-variances of the groups \citep[see][page 1122]{wang2014variance}. However, the form is still different from ours. 
To simplify the notation,  we denote $\BB = |\Gn|$ and $\phibbar = \frac{1}{M} \sum_{i = 1}^M \phiib$, 
for  $b = 1, 2, ..., \BB$. 
Under this notation, the alternative form (with $\Gn$)  of $U_n$ \eqref{eq:U_n alternative} is $\frac{1}{M\BB}\sum_{i=1}^{M} \sum_{b=1}^\BB \hib$.
Then our estimator, $\vu$, and \citet{wang2014variance}'s estimator, $\vu^{(W\&L)}$, can be represented as follows.
\begin{align*}
    & \vu = \hatvh - \hatvs, \\
    & \vu^{(W\&L)} = \sigma_{WP}^2 / M - \sigma_{BP}^2,
\end{align*}
where $\hatvs = \binom{n}{k}^{-1} \sum_{i=1}^{\binom{n}{k}} (h(S_i) - U_n)^2$ \eqref{eq:hat vs}; $\hatvh = \binom{n}{k} \binom{n-k}{d} \sum_{|S_i \cap S_j| = 0} [h(S_i) - h(S_j)]^2 / 2$ \eqref{eq:hat vh}; $\sigma_{WP}^2 = \frac{1}{\BB} \sum_{b = 1}^\BB \frac{1}{M-1} \sum_{i = 1}^M (\hib - \phibbar)^2$; $\sigma_{BP}^2 = \frac{1}{\BB} (\phibbar - U_n)^2$.

\begin{proposition}
Our complete variance estimator $\vu$ is equivalent to the estimator $\vu^{(W\&L)}$ proposed by \citet{wang2014variance}.
\end{proposition}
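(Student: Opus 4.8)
The plan is to route the argument through the matched-group ANOVA decomposition that \citet{wang2014variance} employ, and to recognise that, after averaging over all matched groups in $\Gn$, the two pieces of that decomposition are exactly our $\hatvh$ and $\hatvs$. The only structural input is that the family $\Gn$ is invariant under relabelling the $n$ observations, which yields two uniform-coverage facts: (i) every size-$k$ subsample $S$ occurs as a member of the same number of matched groups; and (ii) every ordered pair $(S,S')$ of size-$k$ subsamples with $S\cap S'=\emptyset$ occurs as an intra-group pair in the same number of matched groups. Hence, for any statistic symmetric in its arguments, averaging it over the members (resp.\ over the intra-group pairs) of all matched groups in $\Gn$ reproduces the corresponding complete-$U$-statistic average over all subsamples (resp.\ over all disjoint pairs).

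First I would record the within-group ANOVA identity, which holds termwise in $b$ since $\sum_{i=1}^{M}(\hib-\phibbar)=0$:
\[
\sum_{i=1}^{M}\big(\hib-U_n\big)^2 = \sum_{i=1}^{M}\big(\hib-\phibbar\big)^2 + M\big(\phibbar-U_n\big)^2 .
\]
Summing over $b=1,\dots,\BB$ and dividing by $M\BB$: by fact (i) the left-hand average equals $\hatvs=\binom{n}{k}^{-1}\sum_i(h(S_i)-U_n)^2$; the first right-hand term equals $\frac{M-1}{M}\sigma_{WP}^2$; and the second equals $\sigma_{BP}^2$. Therefore
\[
\hatvs = \frac{M-1}{M}\sigma_{WP}^2 + \sigma_{BP}^2 .
\]

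Second I would identify $\sigma_{WP}^2$ with $\hatvh$. The within-group sample variance is an order-$2$ U-statistic, $\frac{1}{M-1}\sum_{i=1}^{M}(\hib-\phibbar)^2=\binom{M}{2}^{-1}\sum_{1\le i<i'\le M}\frac{1}{2}\big(\hib-h(S_{i'}^{(b)})\big)^2$, and within a matched group the member subsamples are mutually disjoint. Averaging over $\Gn$ and applying fact (ii), this becomes the average of $\frac{1}{2}(h(S_i)-h(S_j))^2$ over all ordered disjoint pairs, i.e.\ exactly $\hatvh=\binom{n}{k}^{-1}\binom{n-k}{k}^{-1}\sum_{|S_i\cap S_j|=0}\frac{1}{2}(h(S_i)-h(S_j))^2$. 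Substituting $\sigma_{BP}^2=\hatvs-\frac{M-1}{M}\sigma_{WP}^2$ into the definition of $\vu''$ then gives
\[
\vu'' = \frac{\sigma_{WP}^2}{M}-\sigma_{BP}^2 = \frac{\sigma_{WP}^2}{M}-\hatvs+\frac{M-1}{M}\sigma_{WP}^2 = \sigma_{WP}^2-\hatvs = \hatvh-\hatvs = \vu' .
\]

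The main obstacle is making the uniform-coverage claims (i)--(ii) airtight and handling the ordered-versus-unordered pair bookkeeping when passing between the group-level averages and the complete-$U$-statistic averages. Both reduce to permutation invariance of $\Gn$ together with an elementary count: the number of matched groups containing a fixed disjoint pair $(S,S')$ equals the number of ways to complete it to a disjoint $M$-tuple within the remaining $n-2k$ observations, which is independent of the pair. Everything else is the ANOVA identity and a one-line substitution, so the proof is short once these symmetry facts are in place.
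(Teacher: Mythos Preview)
Your proof is correct and follows essentially the same route as the paper: both rest on the two uniform-coverage facts you call (i) and (ii) and on the key identification $\sigma_{WP}^2=\hatvh$. The only cosmetic difference is that the paper introduces auxiliary quantities $A_1,A_2,A_3$ (mean of $h^2$, $U_n^2$, mean of group-mean squares) and verifies that $\vu'$ and $\vu''$ have identical coefficients in that basis, whereas your within-group ANOVA identity yields the relation $\hatvs=\tfrac{M-1}{M}\sigma_{WP}^2+\sigma_{BP}^2$ directly and shortcuts that bookkeeping.
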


\begin{proof}

To simplify the notation, we denote
\begin{align}
\label{eq:A1, A2, A3}
    A_1 := \phisqbar, \,\, A_2 := U_n^2, \,\, A_3 := \phibsqbar.
\end{align}
To show the equivalence between $\vu$ and $\vu^{(W\&L)}$, we will show that they are the same linear combination of $A_1, A_2, A_3$. 
First, it is trivial to verify that $\sigma_{WP}^2$ and $\sigma_{BP}^2$ are linear combinations of $A_1, A_2, A_3$:
\begin{align}
\label{eq:sigma WP, BP, A123}
    \sigma_{WP}^2 = \frac{M}{M-1} ( A_1 -A_3 ), \,\,
    \text{and}\,\,
    \sigma_{BP}^2 = A_3 - A_2.
\end{align}

Secondly, we show that $\hatvh = \sigma_{WP}^2$ by showing that $\hatvh$ also equals to $\frac{M}{M-1} ( A_1 -A_3 )$. 
Considering the summation $\frac{1}{M-1} \sum_{i = 1}^M (\hib - \phibbar)^2$ in $\sigma_{WP}^2$, it can be represented as
\begin{align*}
    \binom{M}{2}^{-1} \sum_{i = 1}^M \sum_{j \neq i} (\hib - h(S_{j}^{(b)}))^2 / 2.
\end{align*}
Since $\Gn$ is a set of all permutation of disjoint $(S_1^{(b)}, ..., S_M^{(b)})$, we have 
\begin{align*}
    \sigma_{WP}^2 
    & = \frac{1}{\BB} \sum_{b = 1}^\BB  \binom{M}{2}^{-1} \sum_{i = 1}^M \sum_{j \neq i} (\hib - h(S_{j}^{(b)}))^2 / 2 \\
    & = \binom{n}{k}^{-1} \binom{n-k}{k}^{-1} \sum_{|S_i \cap S_j| = 0} [h(S_i) - h(S_j)]^2 / 2
    = \hatvh.
\end{align*}

Thirdly, we show that $\hatvs$ is also a linear combination of $A_1, A_2, A_3$.
We start with
$\hatvs =  \binom{n}{k}^{-1} \sum_{i = 1}^{\binom{n}{k}} h(S_i)^2 - U_n^2$.
Due to the definition of $\Gn$, the collection $\{S_i^{(b)} \}_{i, b}$ are basically replications of $\{S_i\}_{i=1}^{\binom{n}{k}}$. Hence,
$\binom{n}{k}^{-1} \sum_{i = 1}^{\binom{n}{k}} h(S_i)^2 = A_1$, which implies that 
\begin{align}
\label{eq:hatvs, A123}
\hatvs = A_1 - A_2.
\end{align}

Therefore, we can represent both $\vu$ and $\vu^{(W\&L)}$ with $A_1, A_2, A_3$ by \eqref{eq:sigma WP, BP, A123} and \eqref{eq:hatvs, A123}  as follows:
\begin{align*}
    \vu = \hatvh - \hatvs =   \frac{M}{M-1} ( A_1 -A_3 ) -  ( A_1 - A_2) = \frac{1}{M-1} A_1 + A_2 - \frac{M}{M-1} A_3, \\
    \vu^{(W\&L)} = \sigma_{WP}^2 / M - \sigma_{BP}^2 = \frac{1}{M-1} ( A_1 -A_3 ) - (A_3 - A_2) = \frac{1}{M-1}  A_1 + A_2 - \frac{M}{M-1} A_3.
\end{align*}
This conclude that $\vu  = \vu^{(W\&L)}$. Note that $\hatvh - \hatvs = \sigma_{WP}^2 / M - \sigma_{BP}^2$, however, $\hatvh \neq \sigma_{WP}^2 /M$ and $\hatvs \neq \sigma_{BP}^2$.
Our and \citet{wang2014variance}'s estimators are proposed under different perspectives.
\end{proof}

\subsection{Equivalence of Incomplete Variance Estimators}
\label{sec:append:equivalence,incomplete}

Only \citet{wang2014variance} and our paper propose variance estimator for in complete U-statistics. Similar to the analysis in Appendix \ref{sec:append:equivalence,complete}, we will show that our incomplete Variance estimator \eqref{eq:est var Uinc match} is equivalent to the counterpart in \citet[page 1124]{wang2014variance}. 
Given $B$ matching groups and $M$ subsamples in each group,  we denote the above estimators as $\vu^{(inc)}$ and $\vu^{(inc, W\&L)}$ respectively: 
\begin{align}
\label{eq:vu inc our and lindsay}
    \vu^{(inc)} := \hatvhB -  \frac{MB-1}{MB} \hatvsB, \,\,
    \vu^{(inc, W\&L)} := \tilde \sigma_{WP}^2 / M - \tilde \sigma_{BP}^2,
\end{align}
where $\tilde \sigma_{WP}^2 := \frac{1}{(M-1)B} \sum_{b=1}^B \sum_{i = 1}^M (\phiib - \phibbarinc)^2$, $\tilde \sigma_{BP}^2 := \frac{1}{B} \sum_{b=1}^B (\phibbarinc - \phibar)^2$, and $\phibbarinc = \frac{1}{M} \sum_{i = 1}^M \phiib$.

As analogues to $A_1, A_2, A_3$ \eqref{eq:A1, A2, A3} in Appendix \ref{sec:append:equivalence,complete}, we denote
\begin{align*}
    \tilde A_1 := \frac{1}{MB} \sum_{b = 1}^{B} \sum_{i = 1}^M (h(S_{i}^{(b)}))^2, \,\, 
    \tilde A_2 := \UincMatch^2, \,\, 
    \tilde A_3 := \frac{1}{B} \sum_{b = 1}^{B}  (\bar h_{(b)})^2.
\end{align*}
Similarly, it is trivial to verify that $\hatvhB$ \eqref{eq:varh est inc}, $\hatvsB$ \eqref{eq:hatvsB},  $\tilde \sigma_{WP}^2$ and $\tilde \sigma_{BP}^2$ can be represented as linear combinations of $\tilde A_1, \tilde A_2$ and $\tilde A_3$ as follows:
\begin{align*}
    \hatvhB = \tilde \sigma_{WP}^2 = \frac{M}{M-1} (\tilde A_1 - \tilde A_3),\,\,
    \hatvsB = \frac{MB}{MB-1} (\tilde A_1 - \tilde A_2), \,\,
    \sigma_{BP}^2 = \tilde A_3 - \tilde A_2.
\end{align*}
By plugging the above into equation \eqref{eq:vu inc our and lindsay}, we have
\begin{align*}
    &\vu^{(inc)} = \hatvhB -  \frac{MB-1}{MB} \hatvsB
        = \frac{1}{M-1} \tilde A_1 + \tilde A_2 - \frac{M}{M-1} \tilde A_3  \\
    &\vu^{(inc, W\&L)} = \tilde \sigma_{WP}^2 / M - \tilde \sigma_{BP}^2
    = \frac{1}{M-1} \tilde A_1 + \tilde A_2 - \frac{M}{M-1} \tilde A_3.
\end{align*}
Hence, we conclude that $\vu^{(inc)} = \vu^{(inc, W\&L)}$.

\section{Technical Propositions and Lemmas}
\label{sec:technical lemmas}   

In this section, we present the technical propositions and lemmas. The proofs of these results are collected in Appendix \ref{sec:append:proof of tech lemma}.

\begin{proposition}
\label{prop:zero-mean}
The value of $\psi\left(\Skk \right)$ does not depend on $\E \left[ h(X_1,.., X_k) \right]$. Therefore, WLOG, we can assume the kernel is zero-mean, i.e., $\E \left[ h(X_1,.., X_k) \right] = 0$
\end{proposition}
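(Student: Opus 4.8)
The plan is to show that $\psi(\mathcal{S}_{2k})$ is invariant under a shift $h \mapsto h + c$ of the kernel by an arbitrary constant $c$, and in particular under $c = -\E[h(X_1,\dots,X_k)]$, which produces a zero-mean kernel without changing the estimator. Recall from Proposition \ref{prop: double U-statistic: alternative form of phi} that
\[
\psi(\mathcal{S}_{2k}) = \sum_{d=1}^{k} w_d\left[\varphi_d(\mathcal{S}_{2k}) - \varphi_0(\mathcal{S}_{2k})\right],
\]
where $\varphi_d(\mathcal{S}_{2k}) = M_{d,k}^{-1}\sum_{S_1,S_2\subset\mathcal{S}_{2k},\,|S_1\cap S_2|=d} h(S_1)h(S_2)$. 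So it suffices to track how each difference $\varphi_d(\mathcal{S}_{2k}) - \varphi_0(\mathcal{S}_{2k})$ transforms when $h$ is replaced by $\tilde h := h + c$.

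First I would expand $\varphi_d$ for the shifted kernel. Writing $\tilde h(S_1)\tilde h(S_2) = h(S_1)h(S_2) + c\,h(S_1) + c\,h(S_2) + c^2$ and averaging over the $M_{d,k}$ ordered pairs with $|S_1\cap S_2| = d$, one gets
\[
\tilde\varphi_d(\mathcal{S}_{2k}) = \varphi_d(\mathcal{S}_{2k}) + 2c\,\bar h_d(\mathcal{S}_{2k}) + c^2,
\]
where $\bar h_d(\mathcal{S}_{2k})$ is the average of $h(S_1)$ over the same pair-set; by symmetry of the summation in $S_1,S_2$ this is just the unweighted average of $h(S)$ over all size-$k$ subsets $S\subset\mathcal{S}_{2k}$ that can appear as the first coordinate of such a pair — and since for every $d\le k$ every size-$k$ subset of $\mathcal{S}_{2k}$ admits a partner at overlap exactly $d$ (because $2k - k = k \ge d$ leaves enough room), this average is the \emph{same} quantity $\bar h(\mathcal{S}_{2k}) := \binom{2k}{k}^{-1}\sum_{S\subset\mathcal{S}_{2k}} h(S)$, independent of $d$. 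Hence $\tilde\varphi_d - \tilde\varphi_0 = (\varphi_d + 2c\,\bar h + c^2) - (\varphi_0 + 2c\,\bar h + c^2) = \varphi_d - \varphi_0$, so each summand in $\psi$ is unchanged, and therefore $\psi(\mathcal{S}_{2k})$ is unchanged. Applying this with $c = -\E[h]$ gives the claim; equivalently, one may simply note $\sum_{d=0}^k w_d = 0$ from Proposition \ref{prop: double U-statistic: alternative form of phi} and argue directly on $\psi = \sum_{d=0}^k w_d\varphi_d$, since the constant and linear-in-$h$ pieces of $\tilde\varphi_d$ are independent of $d$ and thus annihilated by $\sum_d w_d = 0$.

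The only point requiring a little care — and the step I would flag as the main obstacle — is the claim that the per-$d$ ``first-coordinate average'' $\bar h_d(\mathcal{S}_{2k})$ does not depend on $d$: this uses that the pair-configurations at overlap $d$ cover every size-$k$ subset of $\mathcal{S}_{2k}$ the same number of times, which is a counting identity (each $S$ occurs as a first coordinate in exactly $\binom{k}{d}\binom{k}{k-d} = \binom{k}{d}^2$ of the $M_{d,k}$ ordered pairs, a constant multiple of $\binom{2k}{k}^{-1}\times(\text{total})$). Once this symmetry is recorded, the rest is the routine bilinear expansion above. An alternative, even shorter route that avoids the counting entirely is the one sketched in the proof of Proposition \ref{prop: double U-statistic: alternative form of phi}: since $\psi_k(\mathcal{S}_{2k}) - \psi_0(\mathcal{S}_{2k}) = \sum_{1\le i<j\le n} b_{ij}\,h(S_i)h(S_j)$ with $\sum b_{ij} = 0$, replacing $h$ by $h+c$ changes the value by $\sum_{i<j} b_{ij}(c\,h(S_i) + c\,h(S_j) + c^2)$; the $c^2$ term vanishes since $\sum b_{ij}=0$, and the linear terms vanish because $\psi_k - \psi_0$ is an unbiased estimator of $Q(k)-Q(0) = \Var(U_n)$ which is itself shift-invariant, forcing the relevant partial sums of $b_{ij}$ to vanish as well. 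I would present the bilinear-expansion argument as the main proof and mention the unbiasedness argument as a remark.
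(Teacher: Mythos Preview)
Your proposal is correct and follows essentially the same approach as the paper's proof: both expand $\varphi_d$ bilinearly under a mean shift and observe that the linear-in-$h$ and constant pieces are independent of $d$, so they cancel in $\psi$. The paper works from the representation $\psi=\sum_{d=0}^k w_d\varphi_d$ and invokes $\sum_d w_d=0$ together with ``symmetry of U-statistic'' in one line, whereas you work from the difference form $\psi=\sum_{d=1}^k w_d(\varphi_d-\varphi_0)$ and supply the explicit counting (each $S$ appears $\binom{k}{d}^2$ times as a first coordinate) that makes $\bar h_d$ independent of $d$; this is exactly the symmetry the paper alludes to, just spelled out.
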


The proof of this proposition is collected in Appendix  \ref{sec:proof:prop:zero-mean}. 

\subsection{Results of \texorpdfstring{$\sigma_{c,2k}^2$}{sigma}}
\label{sec:append:lemma:sigma}

First, we present Propositions \ref{lemma:bound each sigma_c^2 for finite c} and \ref{lemma:bound sigma_c^2 for any c}. The former provides a 
 precise bound of $\sigma_{c, 2k}^2$ for some fixed $c$ while 
the latter provides rough bound for  $1 \leq c \leq 2k$.

\begin{proposition}[Bound $\sigma_{c,2k}^2$ for finite $c$]
\label{lemma:bound each sigma_c^2 for finite c}
Fix $T_1 = \floor{\frac{1}{\epsilon}}+1$. Under \allassump, for any c that $1 \leq c \leq T_1$, 
\begin{align*}
    \sigma_{c,2k}^2 = \calO\left(\frac{k^2}{n^2} F_c^{(k)}\right). 
\end{align*}
Based on the upper bound, we define $\check \sigma_{c, 2k}^{2} :=  \frac{C k^2}{n^2} F_c^{(k)}$, where $C$ is a generic positive constant. Equivalently, we write it as $\check \sigma_{c, 2k}^{2} \asymp  \frac{k^2 F_c^{(k)}}{n^2}$.
\end{proposition}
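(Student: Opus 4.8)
The plan is to avoid working with $\sigma_{c,2k}^2$ directly and instead exploit the \emph{Double U-statistic} structure of $\vu$. By Proposition \ref{prop:zero-mean} I may assume $\E[h(X_1,\dots,X_k)]=0$. The starting point is Proposition \ref{prop:sigma_c^2 written as eta_c^2}, which gives
\begin{align*}
    \sigma_{c,2k}^2 = \sum_{d_1=1}^{k}\sum_{d_2=1}^{k} w_{d_1} w_{d_2}\, \etacd ,
\end{align*}
where $\etacd = \Cov[\varphi_{d_1}(\Skk_1)-\varphi_0(\Skk_1),\, \varphi_{d_2}(\Skk_2)-\varphi_0(\Skk_2)]$ and, by Proposition \ref{prop: double U-statistic: alternative form of phi} together with \eqref{eq:weights for phi's: 2}, $0 < w_d = \calO\!\big(k^{2d}/(d!\,n^d)\big)$ for every $d\ge 1$. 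So it suffices to produce a sharp upper bound on $|\etacd|$ and then sum the resulting double series, which I expect to be dominated by its $(d_1,d_2)=(1,1)$ term.

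The core step is the bound on $|\etacd|$, obtained by the cancellation argument outlined in Section \ref{sec:outline of theory section} (this is the content of Lemma \ref{lemma:eta_c (d1, d2)}). Expanding the four covariances that make up $\etacd$ produces a signed sum of primitive terms $\covh = \Cov[h(S_1)h(S_2),\,h(S_3)h(S_4)]$ with $S_1,S_2\subset\Skk_1$, $S_3,S_4\subset\Skk_2$, $|S_1\cap S_2|\in\{d_1,0\}$, $|S_3\cap S_4|\in\{d_2,0\}$. Subtracting $\varphi_0$ pairs off, with opposite sign, every configuration whose internal overlaps $S_1\cap S_2$ and $S_3\cap S_4$ are disjoint from $R:=\Skk_1\cap\Skk_2$ against a configuration arising in the $\varphi_0$-terms with the \emph{same} between-set overlap vector $\unr$; since $\covh=\rhor$ depends only on $\unr$ (Assumption \ref{assumption:low dependency}), these cancel once the combinatorial weights $M_{d_i,k}$ and $M_{0,k}$ are reconciled. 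The surviving configurations must have their $d_1$, resp.\ $d_2$, internal-overlap samples fall inside $R$; counting these against the normalizations contributes a factor $\lambda_{d_1}\lambda_{d_2}$ with $\lambda_d=\calO\!\big((c/k)^d\big)$, and each surviving $\covh=\rhor$ satisfies $\rhor \le F_{|\unr|}^{(k)} \le F_c^{(k)}$ by Assumption \ref{assumption:non-negative and ordinal} (the ordinal property makes $m\mapsto F_m^{(k)}$ non-decreasing, and $|\unr|\le c$). This yields $|\etacd| = \calO\!\big((c/k)^{d_1+d_2}\,F_c^{(k)}\big)$, and in particular $|\eta_{c,2k}(1,1)| = \calO\!\big(c^2 k^{-2} F_c^{(k)}\big)$.

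To assemble the estimate, combine the $w_d$-bound with the $\eta$-bound, so that $w_{d_1}w_{d_2}|\etacd| = \calO\!\big(\tfrac{(ck/n)^{d_1}}{d_1!}\tfrac{(ck/n)^{d_2}}{d_2!}F_c^{(k)}\big)$, whence
\begin{align*}
    \sigma_{c,2k}^2
    &\;\le\; \sum_{d_1,d_2\ge 1} w_{d_1} w_{d_2}\,|\etacd|
    \;=\; \calO\!\left( F_c^{(k)} \sum_{d_1,d_2\ge 1} \frac{(ck/n)^{d_1}}{d_1!}\,\frac{(ck/n)^{d_2}}{d_2!} \right) \\
    &\;=\; \calO\!\left( F_c^{(k)} \big(e^{ck/n}-1\big)^2 \right)
    \;=\; \calO\!\left( \frac{k^2}{n^2}\, F_c^{(k)} \right) ,
\end{align*}
where in the last step I use that $k/n\to 0$ by Assumption \ref{assumption: k = n^(1/2 - epsilon)} and that $c\le T_1=\floor{1/\epsilon}+1$ is a fixed constant, so $ck/n\to 0$ and $(e^{ck/n}-1)^2=\calO\!\big((ck/n)^2\big)=\calO(k^2/n^2)$. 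This is the asserted bound, and one then defines $\check\sigma_{c,2k}^2 := C\,k^2 n^{-2}F_c^{(k)}$ with $C$ the implied constant; note that the $(d_1,d_2)=(1,1)$ term alone already contributes of order $k^4 n^{-2}\cdot c^2 k^{-2}F_c^{(k)}=c^2 k^2 n^{-2}F_c^{(k)}$, so no improvement on the exponent is available from this route.

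The main obstacle is the cancellation analysis behind the $\etacd$-bound (Lemma \ref{lemma:eta_c (d1, d2)}): one must make the pairing between the $h(S_1)h(S_2)$-terms of $\varphi_{d_i}$ and those of $\varphi_0$ fully explicit, verify that it preserves the between-$\Skk$ overlap vector $\unr$ so that $\rhor$ genuinely cancels, account for the mismatch between the counts $M_{d_i,k}$ and $M_{0,k}$, and then carry out the combinatorics that extracts the $(c/k)$-type saving. This is exactly where the standard \emph{Infinite-Order} U-statistic variance bounds of \citet{lucas:old, Romano2020} fail, since $\psi$ carries a genuine 4-way overlap structure among $S_1,S_2,S_3,S_4$ rather than the single parameter $c$, and it is the reason the \emph{Double U-statistic} machinery is needed in the first place.
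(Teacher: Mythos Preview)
Your high-level strategy---decompose $\sigma_{c,2k}^2$ via Proposition~\ref{prop:sigma_c^2 written as eta_c^2}, bound $\etacd$, then sum---matches the paper's. The gap is in the $\eta$-bound: the estimate $|\etacd|=\calO\!\big((c/k)^{d_1+d_2}F_c^{(k)}\big)$ is false, and the cancellation you sketch does not establish it. Pairing configurations with $(S_1\cap S_2)\cap R=\emptyset$ against $\varphi_0$-configurations carrying the same $\unr$ is \emph{not} an exact cancellation: after normalization by $M_{d_1,k}$ versus $M_{0,k}$ the weights differ by a factor $1+\calO(d_1/k)$, not $1$, because the four ``box'' sizes in $\Skk_1$ are $(d_1,k-d_1,k-d_1,d_1)$ versus $(0,k,k,0)$. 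This is exactly Lemma~\ref{prop:cancellation of the prob}, which yields only an $\calO(1/k)$ saving per side (with implied constant linear in $d_1$), not $(c/k)^{d_1}$. Your description of the surviving set is also internally inconsistent: the pairing you describe would leave configurations with $(S_1\cap S_2)\cap R\neq\emptyset$ (at least one internal-overlap sample in $R$), contributing $\calO(cd_1/k)$; the factor $(c/k)^{d_1}$ would require \emph{all} $d_1$ internal-overlap samples to lie in $R$, which nothing in the pairing forces. A direct check at $c=1$, $d_1=2$ already gives probability differences of order $1/k$, not $1/k^2$, so $|\eta_{1,2k}(2,2)|$ is $\Theta(k^{-2})F_1^{(k)}$ rather than $\calO(k^{-4})F_1^{(k)}$.

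Because of this you cannot sum the full double series as written. The paper instead truncates: Lemma~\ref{lemma:Finite truncated variance II} shows that $\sigma_{c,2k}^2$ is asymptotically captured by $\sigma_{c,2k,(T_2)}^2=\sum_{d_1,d_2\le T_2}w_{d_1}w_{d_2}\etacd$, the tail being controlled by the crude bound $|\etacd|=\calO(F_c^{(k)})$ of Lemma~\ref{lemma:bound every eta_c^2} together with the super-geometric decay of $w_d$. For the finitely many remaining terms, Lemma~\ref{lemma:eta_c (d1, d2)} supplies the correct bound $|\etacd|=\calO(F_c^{(k)}/k^2)$ (constant depending on the bounded $c,d_1,d_2$), and since $\sum_{d\le T_2}w_d=[1+o(1)]w_1=\calO(k^2/n)$ one obtains $\sigma_{c,2k}^2=\calO\!\big(w_1^2\cdot F_c^{(k)}/k^2\big)=\calO\!\big(k^2 n^{-2}F_c^{(k)}\big)$. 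The truncation is not cosmetic: it is precisely what lets the $\eta$-bound be stated only for bounded $d_1,d_2$, where a single $\calO(1/k)$ factor per side suffices.
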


\begin{proposition}
[Bound $\sigma_{c,2k}^2$ for any $c$]
\label{lemma:bound sigma_c^2 for any c}
Under \allassump,
for any $1 \leq c\leq 2k$, we have
\begin{align*}
\sigma_{c, 2k}^2 
    = \calO(F_c^{(k)}) .
\end{align*}
\end{proposition}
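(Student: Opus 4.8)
The plan is to run the ``no‑cancellation'' version of the argument behind Proposition~\ref{lemma:bound each sigma_c^2 for finite c}: the \emph{Double U-statistic} decomposition already reduces $\sigma_{c,2k}^2$ to a weighted sum of $\etacd$'s, and for the present loose bound it suffices to estimate each ingredient by its worst case using the ordinal covariance assumption. Concretely, by Proposition~\ref{prop:sigma_c^2 written as eta_c^2} we have $\sigma_{c,2k}^2=\sum_{d_1=1}^k\sum_{d_2=1}^k w_{d_1}w_{d_2}\,\etacd$ (equation~\eqref{eq:sigma_c^2 decomposition}), with $w_d>0$ for $d\ge1$ (Proposition~\ref{prop: double U-statistic: alternative form of phi}); so it is enough to bound $|\etacd|$ by $\calO(F_c^{(k)})$ uniformly in $d_1,d_2$ and then to control $\sum_{d=1}^k w_d$.

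\textbf{Bounding $\etacd$.} Expanding \eqref{eq:def:eta} by bilinearity of covariance, $\etacd$ is an alternating sum of the four quantities $\Cov[\varphi_a(\Skk_1),\varphi_b(\Skk_2)]$ with $a\in\{d_1,0\}$ and $b\in\{d_2,0\}$. Using the definition \eqref{eq:def:phi} of $\varphi_\cdot$ and bilinearity once more, each such quantity equals $M_{a,k}^{-1}M_{b,k}^{-1}$ times a sum of $\Cov[h(S_1)h(S_2),h(S_3)h(S_4)]$ over all $S_1,S_2\subset\Skk_1$ with $|S_1\cap S_2|=a$ and all $S_3,S_4\subset\Skk_2$ with $|S_3\cap S_4|=b$; since $M_{a,k}$ and $M_{b,k}$ are exactly the numbers of such admissible pairs, this is a genuine average of covariance terms. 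For any admissible quadruple the ``between $\Skk_1,\Skk_2$'' overlap lies inside $\Skk_1\cap\Skk_2$, so its overlap vector $\unr$ satisfies $|\unr|\le c$. Hence, by the corollary $\covh\ge0$ of Assumption~\ref{assumption:non-negative and ordinal}, the upper bound \eqref{eq:F_c as upper bound} (applied with $|\unr|$ playing the role of $c$), and the monotonicity $F_{c'}^{(k)}\le F_c^{(k)}$ for $c'\le c$ — which is itself a consequence of the ordinal part of Assumption~\ref{assumption:non-negative and ordinal} — every summand lies in $[0,F_c^{(k)}]$. Therefore each $\Cov[\varphi_a(\Skk_1),\varphi_b(\Skk_2)]\in[0,F_c^{(k)}]$, and consequently $|\etacd|\le 2F_c^{(k)}$ for every $d_1,d_2,c$.

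\textbf{Summing the weights.} Combining the two displays, $|\sigma_{c,2k}^2|\le 2F_c^{(k)}\,(\sum_{d=1}^k w_d)^2$. From the weight bound \eqref{eq:weights for phi's: 2}, $\sum_{d=1}^k w_d=\calO(\sum_{d\ge1}(k^2/n)^d/d!)=\calO(e^{k^2/n}-1)=\calO(k^2/n)=\calO(1)$, since $k^2/n\to0$ under Assumption~\ref{assumption: k = n^(1/2 - epsilon)} (indeed under $k=o(\sqrt n)$). This gives $\sigma_{c,2k}^2=\calO(F_c^{(k)})$, as claimed. (Only Assumptions~\ref{assumption: k = n^(1/2 - epsilon)} and~\ref{assumption:non-negative and ordinal} are actually used; no cancellation or moment growth assumption is needed, which is why this bound is the loose one.)

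\textbf{Main obstacle.} The only part needing genuine care is the combinatorial bookkeeping in the second step: verifying that $\Cov[\varphi_a(\Skk_1),\varphi_b(\Skk_2)]$ is normalized to be an \emph{honest average} of the $\Cov[h(S_1)h(S_2),h(S_3)h(S_4)]$ terms (so its value stays in $[0,F_c^{(k)}]$, not merely bounded by the number of summands times $F_c^{(k)}$), that every quadruple contributing to it has its between‑block overlap confined to $\Skk_1\cap\Skk_2$ so that $|\unr|\le c$, and that $c\mapsto F_c^{(k)}$ is non‑decreasing so the per‑term estimate can be taken uniform in $|\unr|$. Once these are checked, the proposition follows in two lines.
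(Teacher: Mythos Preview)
Your proof is correct and follows essentially the same route as the paper: decompose $\sigma_{c,2k}^2$ via Proposition~\ref{prop:sigma_c^2 written as eta_c^2}, bound each $\etacd$ by $\calO(F_c^{(k)})$ using the ordinal covariance assumption, and control $\sum_{d=1}^k w_d=\calO(1)$. Your inline argument for $|\etacd|\le 2F_c^{(k)}$ is exactly the content of the paper's Lemma~\ref{lemma:bound every eta_c^2}, and for the weight sum the paper invokes the exact inequality $\sum_{d\ge1}w_d=-w_0<1$ from Proposition~\ref{prop: double U-statistic: alternative form of phi} rather than your asymptotic estimate, but both suffice.
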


The proof of the above propositions is collected in Appendix \ref{sec:proof:lemma:bound each sigma_c^2 for finite c} and Appendix \ref{sec:proof:lemma:bound sigma_c^2 for any c} respectively. 
Note that the upper bound in Proposition \ref{lemma:bound each sigma_c^2 for finite c} actually works for any fixed and finite $c$ but it suffices to restrict $c \leq T_1$ to show our main results.
These two propositions depend on the further decomposition of $\sigma_{c, 2k}^2$ into weighted sum of $\etacd$'s, which is later discussed in Appendix \ref{sec:append:lemma:eta}. In particularly, we can show that $\eta_{c, 2k}^2(1, 1)$ dominates $\sigma_{c, 2k}^2$ for $c = 1, 2, ..., T_1$. 
As a corollary of the above results, we can show the following lemma. 

\begin{lemma}[Truncated Variance Lemma I]
\label{lemma:Finite truncated variance I}
Under \allassump,  there exists a constant $T_1 = \floor{\frac{1}{\epsilon}}+1$, such that
\begin{align}
\label{eq:finite truncation lemma I}
    \lim_{n \to \infty} \frac{\Var \left(\hat V_u \right) - \Var^{(T_1)} \left(\hat V_u \right)}{\widecheck{\Var}^{(T_1)} \left(\hat V_u \right)} = 0,
\end{align}
where
\begin{align}
\label{eq:def:Var(vu) T_1}
\Var^{(T_1)}\left(\hat V_u\right)
    &:= \binom{n}{2k}^{-1} \sum_{c=1}^{T_1} \binom{2k}{c} \binom{n-2k}{2k-c} \sigma_{c, 2k}^2; \\
\label{eq:def:check Var(vu) T_1}
\widecheck{\Var}^{(T_1)}\left(\hat V_u\right)
    &:= \binom{n}{2k}^{-1} \sum_{c=1}^{T_1} \binom{2k}{c} \binom{n-2k}{2k-c} \widecheck{\sigma}_{c, 2k}^2.
\end{align}
Here, we denote $\widecheck{\sigma}_{c, 2k}^{2}$ as the upper bound of $\sigma_{c, 2k}^{2}$ given by Proposition \ref{lemma:bound each sigma_c^2 for finite c}. 
\end{lemma}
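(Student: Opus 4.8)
The plan is to control the ``tail'' of the Hoeffding decomposition of $\Var(\hat V_u)$ — that is, the contribution of the terms $c = T_1+1, \ldots, 2k$ — and show it is negligible relative to the retained ``head'' $\widecheck{\Var}^{(T_1)}(\hat V_u)$, which in turn will be shown (via Theorem \ref{thm:var of u_n and v_n}) to be of order $\frac{k^2}{n}\check\sigma_{1,2k}^2 = \Theta(k^4 F_1^{(k)}/n^3)$. Concretely, writing $v_c = \binom{n}{2k}^{-1}\binom{2k}{c}\binom{n-2k}{2k-c}$, I need
\begin{align*}
    \frac{\sum_{c=T_1+1}^{2k} v_c\, \sigma_{c,2k}^2}{\widecheck{\Var}^{(T_1)}(\hat V_u)} \longrightarrow 0.
\end{align*}
For the numerator I would split the range $c = T_1+1,\ldots,2k$ further: for $c$ still ``small'' (say up to some slowly growing cutoff, or simply all $c \le k$), apply the sharp bound $\sigma_{c,2k}^2 = \calO(\frac{k^2}{n^2}F_c^{(k)})$ from Proposition \ref{lemma:bound each sigma_c^2 for finite c} — noting as remarked there that this bound actually holds for any fixed/finite $c$ and, with the polynomial growth of $F_c^{(k)}$ from Assumption \ref{assumption:4th moment}, extends usably — together with the combinatorial estimate $v_c \le \frac{1}{c!}(k^2/n)^{c}\cdot[1+o(1)]$ from Proposition \ref{prop:geometric ratio} applied to the order-$2k$ U-statistic; for the remaining ``large'' $c$ (up to $2k$), apply the crude bound $\sigma_{c,2k}^2 = \calO(F_c^{(k)})$ from Proposition \ref{lemma:bound sigma_c^2 for any c}.

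The key mechanism is that, under Assumption \ref{assumption: k = n^{1/2-\epsilon}}, i.e. $k = \calO(n^{1/2-\epsilon})$, the factor $(k^2/n)^c$ decays geometrically in $c$ with ratio $k^2/n = \calO(n^{-2\epsilon})$, and this geometric decay beats the polynomial-in-$c$ growth of $F_c^{(k)}/F_1^{(k)} = \calO(c^{a_2})$ (Assumption \ref{assumption:4th moment}) and of the combinatorial prefactors. The choice $T_1 = \floor{1/\epsilon}+1$ is exactly what is needed so that the first omitted term, $c = T_1+1$, already carries an extra factor $(k^2/n)^{T_1} = \calO(n^{-2\epsilon T_1}) = o(n^{-2})$ relative to the $c=1$ term — which is more than enough to kill it against the denominator, which is $\Theta(v_1\check\sigma_{1,2k}^2)$. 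For the genuinely large $c$ (where $v_c$ is no longer well-approximated by $(k^2/n)^c$ and one must instead bound $v_c$ by, e.g., $\binom{2k}{c}$ times a decaying factorial ratio), I would use the ``truncation trick'' alluded to in the roadmap: bound $\binom{n}{2k}^{-1}\binom{n-2k}{2k-c}$ crudely and absorb $F_c^{(k)} = \calO(c^{a_2}F_1^{(k)})$, checking that the resulting sum over $c$ is dominated by a convergent geometric-type series whose leading term is still $o$ of the denominator; here Assumption \ref{assumption: k = n^{1/2-\epsilon}} is again essential because it forces $k$ small enough that $2k \ll n$ with room to spare.

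The main obstacle I anticipate is the bookkeeping in the ``large $c$'' regime: unlike the clean Proposition \ref{prop:geometric ratio} estimate valid for finite $c$, when $c$ is comparable to $k$ (or $2k$) the ratio $v_c/v_{c-1}$ no longer behaves like $k^2/n$, so one cannot simply sum a geometric series term-by-term. I would handle this by a two-part argument — a geometric bound for $T_1 < c \le c_0$ with $c_0$ a fixed finite constant large enough that the total tail contribution from $c \le c_0$ is controlled, and then for $c_0 < c \le 2k$ using the uniform combinatorial bound $v_c \binom{2k}{c}^{-1} = \binom{n}{2k}^{-1}\binom{n-2k}{2k-c} \le \binom{n}{2k}^{-1}\binom{n-2k}{2k}$ and $\binom{2k}{c} \le 2^{2k}$, combined with $\binom{n}{2k}^{-1}$ being extremely small (super-polynomially so, since $k = \calO(n^{1/2-\epsilon})$) to swamp $2^{2k}$ and the polynomial factor $c^{a_2}$ in $F_c^{(k)}$. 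Verifying that this crude bound is indeed $o$ of $\widecheck{\Var}^{(T_1)}(\hat V_u) = \Theta(k^4 F_1^{(k)}/n^3)$ is the delicate quantitative step, but it reduces to comparing $\binom{n}{2k}^{-1} \cdot 2^{2k}$ against $k^4/n^3$, which holds comfortably under Assumption \ref{assumption: k = n^{1/2-\epsilon}}. Finally, $\widecheck{\Var}^{(T_1)}(\hat V_u) = \Theta(\Var^{(T_1)}(\hat V_u))$ follows because within the finite range $c \le T_1$ each $\sigma_{c,2k}^2$ is sandwiched by $\check\sigma_{c,2k}^2$ up to a constant and the $c=1$ term dominates (Theorem \ref{thm:var of u_n and v_n}), so dividing by either of the two in the displayed limit \eqref{eq:finite truncation lemma I} is equivalent.
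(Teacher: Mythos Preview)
Your overall strategy---control the tail $\sum_{c>T_1} v_c\sigma_{c,2k}^2$ against the leading term $v_1\check\sigma_{1,2k}^2$ using geometric decay of $v_c$ versus polynomial growth of $F_c^{(k)}$---is the right one and matches the paper. However, you overcomplicate the execution in two ways. First, the paper never uses the sharp bound $\sigma_{c,2k}^2=\calO(\frac{k^2}{n^2}F_c^{(k)})$ in the tail at all; for every $c>T_1$ it applies only the crude bound $\sigma_{c,2k}^2=\calO(F_c^{(k)})$ from Proposition~\ref{lemma:bound sigma_c^2 for any c}, which is already enough. Second, and more importantly, your anticipated ``main obstacle'' is not real: the combinatorial estimate of Proposition~\ref{prop:geometric ratio} (applied to the order-$2k$ statistic) gives $v_c \le \frac{1}{c!}\bigl(\tfrac{4k^2}{n-2k+1}\bigr)^c$ \emph{uniformly for all} $1\le c\le 2k$, not just for finite $c$. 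There is therefore no need to split the tail into a ``small $c$'' regime and a ``large $c$'' regime with a separate $2^{2k}\binom{n}{2k}^{-1}$ argument. The paper simply (i) shows the $c=T_1{+}1$ term dominates the entire tail $\sum_{c\ge T_1+1}$ because $\sum_{c\ge T_1+2}\frac{c^{a_2}}{c!}b_n^{c-(T_1+1)}\le \calO(1)\sum_{j\ge 1}b_n^j\to 0$ (using $c^{a_2}/c!=\calO(1)$), and then (ii) compares the single $c=T_1{+}1$ term to the denominator, where the choice $T_1=\lfloor 1/\epsilon\rfloor+1$ gives exactly the $b_n^{T_1}=o(n^{-2})$ gain you identified. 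Your route would work, but the two-case split and the $2^{2k}$ bound are unnecessary detours once you recognize the uniform scope of Proposition~\ref{prop:geometric ratio}.
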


The proof of Lemma \ref{lemma:Finite truncated variance I} is collected in Appendix \ref{sec:proof:lemma:Finite truncated variance I}. This implies that to bound $\Var(\hat V_u)$, it suffices to bound the weighted average of first $T_1$ terms of $\sigma_{c, 2k}^2$, instead of all $2k$ terms.
Note that we use $\widecheck{\Var}$ instead of $\Var$ in the denominator of \eqref{eq:finite truncation lemma I}.
Here $T_1 = \floor{\frac{1}{\epsilon}}+1$ does not grow with $n$. It only relies on $\epsilon$, which quantifies the growth rate of $k$ with respect to $n$ (see Assumption \ref{assumption: k = n^(1/2 - epsilon)},). For example, if $k = n^{1/3}$, i.e., $\epsilon = 1/6$, then we can choose $T_1 = 7$.
Hence, to show $\sigma_{1, 2k}^2$  dominates in $\Var (\hat V_u )$ when $n \to \infty$, it suffices to show that $\sigma_{1, 2k}^2$  dominates in $T_1$-truncated $\Var^{(T_1)} (\hat V_u )$ when $n \to \infty$.

\subsection{Results of \texorpdfstring{$\etacd$}{eta}}
\label{sec:append:lemma:eta}

Given the decomposition $\sigma_{c, 2k}^{2} = \sum_{d_1 = 1}^{k} \sum_{d_2 = 1}^{k} w_{d_1} w_{d_2} \etacd$ (see \eqref{eq:sigma_c^2 decomposition} in  Proposition \ref{prop:sigma_c^2 written as eta_c^2}). To bound $\sigma_{c, 2k}^2$, we should study $\etacd$ \eqref{eq:sigma_c^2 decomposition}. The results of $\etacd$ are presented in this section.

\begin{lemma}
\label{lemma:eta_c (d1, d2)}
Under \allassump, for $1 \leq c\leq T_1, 1 \leq d_1, d_2 \leq T_2$,
\begin{align*}
    \etacd 
    = \calO \left( \frac{1}{k^2} F_c^{(k)} \right).
\end{align*}
\end{lemma}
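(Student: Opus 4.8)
The plan is to rewrite $\etacd$ as a weighted sum of the primitive covariances $\covh$ indexed by the ``between-$(\Skk_1,\Skk_2)$'' overlap vector $\unr$, and to extract the factor $k^{-2}$ from the weights through a cancellation obtained by pairing $\varphi_{d_1}$ and $\varphi_{d_2}$ with $\varphi_0$. \emph{Step 1 (reduction to a sum over $\unr$).} Fix $\Skk_1,\Skk_2$ with $R:=\Skk_1\cap\Skk_2$, $|R|=c$, and set $\Delta_d(\Skk):=\varphi_d(\Skk)-\varphi_0(\Skk)$, so $\etacd=\Cov[\Delta_{d_1}(\Skk_1),\Delta_{d_2}(\Skk_2)]$. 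Expanding bilinearly into the four covariances $\Cov[\varphi_a(\Skk_1),\varphi_b(\Skk_2)]$ with $a\in\{d_1,0\}$, $b\in\{d_2,0\}$, and using \eqref{eq:def:phi}, each equals $M_{a,k}^{-1}M_{b,k}^{-1}\sum\covh$ over quadruples $S_1,S_2\subset\Skk_1$, $S_3,S_4\subset\Skk_2$ with $|S_1\cap S_2|=a$, $|S_3\cap S_4|=b$. By Assumption~\ref{assumption:low dependency}, $\covh=\rhor$ depends only on $\unr$; regrouping each of the four sums by $\unr$ and combining yields
\begin{align*}
\etacd=\sum_{\unr}\rhor\,W(\unr),\quad W(\unr):=&\ \frac{N_{d_1 d_2}(\unr)}{M_{d_1,k}M_{d_2,k}}-\frac{N_{d_1 0}(\unr)}{M_{d_1,k}M_{0,k}}\\
&-\frac{N_{0 d_2}(\unr)}{M_{0,k}M_{d_2,k}}+\frac{N_{0 0}(\unr)}{M_{0,k}M_{0,k}},
\end{align*}
where $N_{ab}(\unr)$ counts the above quadruples whose between-structure is $\unr$. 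Assumption~\ref{assumption:low dependency} is essential here: it is what makes the common covariance value $\rhor$ the same in all four terms, so that subtracting $\varphi_0$ actually cancels.

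\emph{Steps 2--3 (the cancellation, and the main obstacle).} The key observation is that $\unr$ is exactly the ``meet'' of the footprint $f_1$ of $(S_1,S_2)$ on $R$ — recording, for each element of $R$, whether it lies in $S_1\cap S_2$, $S_1\setminus S_2$, $S_2\setminus S_1$, or neither — and the footprint $f_2$ of $(S_3,S_4)$ on $R$. Since the pairs inside $\Skk_1$ and inside $\Skk_2$ are drawn independently, $N_{ab}(\unr)/(M_{a,k}M_{b,k})=\sum_{(f_1,f_2)\mapsto\unr}P_a(f_1)\,P_b(f_2)$, where $P_a(f)$ is the probability that a uniformly random ordered pair of $k$-subsets of a $2k$-set with intersection size $a$ induces the footprint $f$ on a fixed $c$-subset. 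Hence $W(\unr)=\sum_{(f_1,f_2)\mapsto\unr}\delta_{d_1}(f_1)\,\delta_{d_2}(f_2)$ with $\delta_a(f):=P_a(f)-P_0(f)$, and the whole problem reduces to the purely combinatorial estimate $|\delta_a(f)|=\calO(a/k)$, uniformly over the (boundedly many, since $c\le T_1$) footprints $f$. If $f$ leaves some element of $R$ uncovered by $S_1\cup S_2$, or doubly covered (inside $S_1\cap S_2$), then $P_0(f)=0$ and each such element forces an event of probability $\le a/(2k)$, so $P_a(f)=\calO(a/k)$; otherwise every element of $R$ is covered exactly once, $P_a(f)$ is a ratio of products of binomial coefficients in which only $a$ and $k$ vary, and a routine expansion gives $P_a(f)=P_0(f)\,(1+\calO(a/k))$. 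Setting up this footprint/meet correspondence cleanly and handling the ``fully covered'' expansion is the step I expect to be the main obstacle (and the main source of possible slips), since the overlap bookkeeping is intricate; this is precisely where the $9$-degrees-of-freedom reduction in Assumption~\ref{assumption:low dependency} does its job.

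\emph{Step 4 (conclusion).} Because $c\le T_1$ bounds the number of footprints, $|W(\unr)|\le\#\{(f_1,f_2)\mapsto\unr\}\cdot\calO(d_1 d_2/k^2)=\calO(d_1 d_2/k^2)$. The terms with $|\unr|=0$ drop out since $\rhor=0$ there (the two products are then independent; cf.\ the corollary of Assumption~\ref{assumption:non-negative and ordinal}), and for $1\le|\unr|\le c$ Assumption~\ref{assumption:non-negative and ordinal} gives $0\le\rhor\le F^{(k)}_{|\unr|}\le F_c^{(k)}$ using monotonicity of $F^{(k)}$ in its index. Summing over the boundedly many values of $\unr$ and using $d_1,d_2\le T_2$,
\begin{align*}
|\etacd|\le\sum_{\unr}|\rhor|\,|W(\unr)|=\calO\!\left(\frac{d_1 d_2}{k^2}\,F_c^{(k)}\right)=\calO\!\left(\frac{1}{k^2}\,F_c^{(k)}\right),
\end{align*}
which is the asserted bound.
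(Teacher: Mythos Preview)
Your proposal is correct and follows essentially the same route as the paper's proof. Both arguments hinge on the same two ingredients: (i) the independence of the $(S_1,S_2)$ and $(S_3,S_4)$ draws, which lets the weight on each overlap pattern factor as a product and hence makes $W(\unr)$ a sum of products $\delta_{d_1}(f_1)\,\delta_{d_2}(f_2)$; and (ii) the $\calO(1/k)$ cancellation $P_a(f)-P_0(f)=\calO(a/k)$, handled via the same two cases (some element in region $0$ or $3$ versus all elements in regions $1,2$). The only organizational difference is that the paper first aggregates footprints by their count vectors $\unrMarg=(\rx{0},\rx{1},\rx{2},\ry{0},\ry{1},\ry{2})$ (its Lemma~\ref{prop:cov phi phi, rMarg eq}) before applying the difference bound (its Lemma~\ref{prop:cancellation of the prob}), whereas you work directly at the footprint level; since $c\le T_1$ bounds the number of footprints anyway, this is immaterial.
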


\begin{lemma}
\label{lemma:bound every eta_c^2}
Under \allassump, for $c = 1, 2, ..., 2k$, $d_1, d_2 = 1, 2, ..., k$,
\begin{align*}
    \etacd = \calO(\fk_c).
\end{align*}
\end{lemma}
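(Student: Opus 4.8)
The plan is to reduce $\etacd$ to a convex combination of the primitive covariances $\covh = \Cov[h(S_1)h(S_2),h(S_3)h(S_4)]$ and then bound each such term by $\fk_c$ via the ordinal covariance assumption. Recall from Equation~\eqref{eq:def:phi} that $\varphi_d(\Skk)$ is the \emph{average}, with all weights equal to $M_{d,k}^{-1}$, of the products $h(S_1)h(S_2)$ over the $M_{d,k}$ pairs $S_1,S_2\subset\Skk$ with $|S_1\cap S_2|=d$. Hence $\varphi_{d_1}(\Skk_1)-\varphi_0(\Skk_1)$ and $\varphi_{d_2}(\Skk_2)-\varphi_0(\Skk_2)$ are each a difference of two such averages, and by bilinearity of covariance $\etacd$ splits into the four terms $\Cov[\varphi_a(\Skk_1),\varphi_b(\Skk_2)]$ with $a\in\{d_1,0\}$, $b\in\{d_2,0\}$ (with signs $+,-,-,+$). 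The first step is to show that each of these four covariances lies in the interval $[0,\fk_c]$.

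Expanding the two $\varphi$'s, $\Cov[\varphi_a(\Skk_1),\varphi_b(\Skk_2)]$ equals $(M_{a,k}M_{b,k})^{-1}$ times a sum of $\covh$ over quadruples $(S_1,S_2,S_3,S_4)$ with $S_1,S_2\subset\Skk_1$, $S_3,S_4\subset\Skk_2$, i.e.\ a mean of such primitive covariances. By the corollary of Assumption~\ref{assumption:non-negative and ordinal} noted just after that assumption ($\covh=0$ when all $r_{ij}=0$ and $\covh$ is nondecreasing in each $r_{ij}$), every summand is nonnegative, so the covariance is $\ge 0$. For the upper bound I would observe that $S_1\cup S_2\subseteq\Skk_1$ and $S_3\cup S_4\subseteq\Skk_2$, so the between-set overlap of $S_1\cup S_2$ and $S_3\cup S_4$ is contained in $\Skk_1\cap\Skk_2$; consequently the vector $\unr$ from Appendix~\ref{sec:append:r} attached to the quadruple satisfies $|\unr|\le|\Skk_1\cap\Skk_2|=c$. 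Equation~\eqref{eq:F_c as upper bound} then gives $\covh\le\fk_{|\unr|}$, and $c'\mapsto\fk_{c'}$ is nondecreasing: $\fk_{c'}=\Cov[h(S')^2,h(S'')^2]$ with $|S'\cap S''|=c'$ corresponds to the overlap vector with $r_{00}=c'$ and all other entries $0$, so raising the index raises only $r_{00}$ and the ordinal inequality of Assumption~\ref{assumption:non-negative and ordinal} applies. Hence $0\le\covh\le\fk_{|\unr|}\le\fk_c$ for each quadruple, and averaging keeps $\Cov[\varphi_a(\Skk_1),\varphi_b(\Skk_2)]$ in $[0,\fk_c]$.

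The conclusion then follows by the triangle inequality: $|\etacd|\le 4\,\fk_c=\calO(\fk_c)$, with the boundary cases $d_1=0$ or $d_2=0$ trivial since then $\etacd=0$. I expect the only points requiring care to be the second step --- verifying that \emph{every} quadruple appearing in the expansion satisfies $|\unr|\le c$, and deriving the monotonicity of $\fk$ from Assumption~\ref{assumption:non-negative and ordinal} rather than taking it for granted. Note that this argument throws away the cancellation between $\varphi_{d_i}$ and $\varphi_0$, which is exactly why the bound is loose; recovering the sharper $\calO(\tfrac{1}{k^2}\fk_c)$ of Lemma~\ref{lemma:eta_c (d1, d2)} instead requires pairing the two and tracking the resulting near-cancellation, and is not needed here.
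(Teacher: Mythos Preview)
Your proposal is correct and follows essentially the same approach as the paper: expand $\etacd$ into the four covariances $\Cov[\varphi_a(\Skk_1),\varphi_b(\Skk_2)]$, use the nonnegativity of $\covh$ from Assumption~\ref{assumption:non-negative and ordinal} to control each, and bound every primitive $\covh$ by $\fk_c$ via the ordinal inequality~\eqref{eq:F_c as upper bound}. The paper's version is terser --- it drops the two negative cross-terms to get an upper bound of $2\fk_c$ rather than bounding $|\etacd|\le 4\fk_c$ via the triangle inequality --- but your write-up is in fact more careful in two places the paper glosses over: you explicitly verify $|\unr|\le c$ for every quadruple in the expansion, and you derive the monotonicity $\fk_{|\unr|}\le \fk_c$ from the first clause of Assumption~\ref{assumption:non-negative and ordinal} rather than invoking it implicitly.
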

Similar to the ``two-type'' upper bounds of $\sigma_{c, 2k}^2$,  
Lemma \ref{lemma:eta_c (d1, d2)} provides a precise bound of $\etacd$ for bounded $c$ and $d_1, d_2$  while  Lemma \ref{lemma:bound every eta_c^2} provides a rough bound of $\etacd$ for all $c, d_1, d_2$. 
Again, the result of Lemma \ref{lemma:eta_c (d1, d2)} actually holds for any fixed and finite $c, d_1, d_2$.
The proof of the above lemmas are collected in Appendix \ref{sec:proof:lemma:eta_c} and Appendix  \ref{sec:proof:lemma:bound every eta_c^2} respectively. The proof demonstrates the cancellation pattern by matching $\covh$ \eqref{eq:def:rho = Cov(h1h2, h3h4)}.

With the above bounds on $\etacd$, we introduce the following truncated $\sigma_{c, 2k}^2$ and show Lemma \ref{lemma:Finite truncated variance II}, which implies that to bound $\sigma_{c, 2k}^2$, it suffices to bound the first finite $\etacd$ terms in its decomposition \eqref{eq:sigma_c^2 decomposition}.

\begin{definition}[Truncated $\sigma_{c, 2k}^2$]
\label{def:truncated covariance}

Let $T_2 = \floor{\frac{1}{\epsilon}}+1$. 
We define $\psi^{(T_2)}$, a $T_2$-truncated $\psi$ as 
\begin{align}
\label{eq:def:psi, truncated}
\psi^{(T_2)}(\Skk_1)
    := \sum_{d = 1}^{T_2} w_c \left(\varphi_d \left(\Skk \right) - \varphi_0 \left(\Skk \right)\right).
\end{align}
Hence, given two size-$2k$ subsamples $\Skk_1$ and $\Skk_2$ that $|\Skk_1 \cap  \Skk_2 | = c$, a $T_2$-truncated of $\sigma_{c, 2k}^2$ are defined as:
\begin{align}
    \label{eq:def:truncated sigma_c^2}
    \sigma_{c, 2k, (T_2)}^{2} 
    := \Cov (\psi^{(T_2)} (\Skk_1), \psi^{(T_2)} (\Skk_2))
     = \sum_{d_1 = 1}^{T_2} \sum_{d_2 = 1}^{T_2} w_{d_1} w_{d_2} \etacd.
\end{align}
\end{definition}

\begin{lemma}[Truncated Variance Lemma II]
\label{lemma:Finite truncated variance II}
Under \allassump, there exists a constant $T_2 = \floor{\frac{1}{\epsilon}}+1$, such that for any $c \leq T_1$, 
\begin{align*}
    \lim_{k \to \infty} \frac{\sigma_{c, 2k}^2 - \sigma_{c, 2k, (T_2)}^{2}}{\widecheck{\sigma}_{c, 2k}^2} = 0,
\end{align*}
where $\widecheck{\sigma}_{c, 2k, (T_2)}^{2}$ is the upper bound of $\sigma_{c, 2k, (T_2)}^{2}$ given in Proposition \ref{lemma:bound each sigma_c^2 for finite c}.
\end{lemma}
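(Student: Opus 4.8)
The plan is to pit the super-geometric decay of the weights $w_d$ (Proposition~\ref{prop: double U-statistic: alternative form of phi}) against the crude uniform bound on $\etacd$ (Lemma~\ref{lemma:bound every eta_c^2}). By the decompositions \eqref{eq:sigma_c^2 decomposition} and \eqref{eq:def:truncated sigma_c^2}, the numerator of interest is exactly the tail of the double sum,
\[
\sigma_{c, 2k}^2 - \sigma_{c, 2k, (T_2)}^{2}
= \sum_{\substack{1 \le d_1, d_2 \le k \\ \max(d_1, d_2) > T_2}} w_{d_1}\, w_{d_2}\, \etacd .
\]
First I would bound every summand here by $|\etacd| \le C\, F_c^{(k)}$ via Lemma~\ref{lemma:bound every eta_c^2} (the sharper Lemma~\ref{lemma:eta_c (d1, d2)} is unavailable in the tail, since at least one of $d_1, d_2$ exceeds $T_2$), which reduces the task to estimating $\sum_{\max(d_1,d_2)>T_2} w_{d_1} w_{d_2}$.

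For that weight sum, set $b_n := k^2/n$, so that $b_n = \calO(n^{-2\epsilon}) \to 0$ by Assumption~\ref{assumption: k = n^(1/2 - epsilon)}. From \eqref{eq:weights for phi's: 2}, $w_d = \calO(b_n^d/d!)$ uniformly in $d$, whence $\sum_{d\ge 1} w_d = \calO(e^{b_n}-1) = \calO(b_n)$ and $\sum_{d>T_2} w_d = \calO\!\big(b_n^{\,T_2+1}\big)$. Bounding the index set $\{\max(d_1,d_2)>T_2\}$ by $\{d_1>T_2\}\cup\{d_2>T_2\}$ and using symmetry gives
\[
\sum_{\max(d_1,d_2)>T_2} w_{d_1} w_{d_2} \;\le\; 2\Big(\sum_{d>T_2} w_d\Big)\Big(\sum_{d\ge1} w_d\Big) = \calO\!\big(b_n^{\,T_2+2}\big),
\]
so that $\big|\sigma_{c, 2k}^2 - \sigma_{c, 2k, (T_2)}^{2}\big| = \calO\!\big(b_n^{\,T_2+2}\, F_c^{(k)}\big)$.

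To conclude I would divide by $\widecheck{\sigma}_{c, 2k}^2 = C\tfrac{k^2}{n^2} F_c^{(k)}$ from Proposition~\ref{lemma:bound each sigma_c^2 for finite c}; since $k^2/n^2 = b_n/n$ this yields
\[
\frac{\big|\sigma_{c, 2k}^2 - \sigma_{c, 2k, (T_2)}^{2}\big|}{\widecheck{\sigma}_{c, 2k}^2}
= \calO\!\big(n\, b_n^{\,T_2+1}\big) = \calO\!\big(n^{\,1-2\epsilon(T_2+1)}\big)\ \longrightarrow\ 0,
\]
because with $T_2 = \floor{1/\epsilon}+1$ one has $T_2+1 > 1/(2\epsilon)$, making the exponent $1-2\epsilon(T_2+1)$ strictly negative, and $k\to\infty$ forces $n\to\infty$ under Assumption~\ref{assumption: k = n^(1/2 - epsilon)}. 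I do not anticipate a genuine obstacle here: the only delicate point is the exponent bookkeeping — choosing $T_2$ large enough relative to $\epsilon$ so the residual factor $n^{1-2\epsilon(T_2+1)}$ decays — which is precisely why Assumption~\ref{assumption: k = n^(1/2 - epsilon)} fixes a positive $\epsilon$ rather than merely assuming $k=o(\sqrt n)$; everything else is a routine manipulation of factorial series. The same computation applies verbatim for any fixed finite $c$, not just $c\le T_1$.
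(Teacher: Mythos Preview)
Your proposal is correct and follows essentially the same approach as the paper: both decompose $\sigma_{c,2k}^2 - \sigma_{c,2k,(T_2)}^2$ as the tail $\{\max(d_1,d_2)>T_2\}$ of the double sum \eqref{eq:sigma_c^2 decomposition}, bound each tail summand via the crude estimate $\etacd=\calO(F_c^{(k)})$ of Lemma~\ref{lemma:bound every eta_c^2}, control the weight tail using the super-geometric decay $w_d=\calO((k^2/n)^d/d!)$ from Proposition~\ref{prop: double U-statistic: alternative form of phi}, and finally divide by $\widecheck{\sigma}_{c,2k}^2=\Theta(\tfrac{k^2}{n^2}F_c^{(k)})$. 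Your exponent bookkeeping $n\,b_n^{T_2+1}=\calO(n^{1-2\epsilon(T_2+1)})$ is algebraically identical to the paper's $k^2 b_n^{T_2}=\calO(k^2 n^{-2\epsilon T_2})$, just parametrized differently.
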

The proof is collected in Appendix \ref{sec:proof:lemma:Finite truncated variance II}. 
Similar to the idea of Lemma \ref{lemma:Finite truncated variance I}, by 
Lemma \ref{lemma:Finite truncated variance II}
the upper bound of $\sigma_{c, 2k}^2$ \eqref{eq:sigma_c^2 decomposition} only involves the sum of $T_2^2$ terms, i.e.,  $\sigma_{c, 2k, (T_2)}^{2}$ rather than $k^2$ terms.
Here $T_2$ is again finite and does not grow with $k$.
Though $T_1$ and $T_2$ take the same value, we note that $T_1$ is the truncation constant for $\Var(\vu)$ in  \eqref{eq:def:Var(vu) T_1} while $T_2$ is the truncation constant for $\sigma_{c, 2k}^2$ in \eqref{eq:def:truncated sigma_c^2}.


\section{Proof of Technical Propositions and Lemmas}
\label{sec:append:proof of tech lemma}
\subsection{Proof of Proposition \ref{prop:zero-mean}}
\label{sec:proof:prop:zero-mean}
\begin{proof}[Proof of Proposition  \ref{prop:zero-mean}]
\label{proof:prop:zero-mean}

Suppose $\E(h(S_1)) = \mu$ and we rewrite $h(S_1) = \hh(S_1) + \mu$, where $\E(\hh(S_1)) = 0$. Then $\varphi_d(\Skk)$ defined in \eqref{eq:def:phi} can be written as
\begin{align}
\label{eq:phi with non-zero mean h}
\varphi_d \left(\Skk \right) 
    = \frac{1}{M_{d, k}} \sum_{S_1, S_2 \subset \Skk, | S_1 \cap S_2 | = d}  \hh(S_1) \hh(S_2) + [\hh(S_1) + \hh(S_2)] \mu + \mu^2.
\end{align}

Plug  Equation \eqref{eq:phi with non-zero mean h} into Equations \eqref{eq:def:vu U-stat size-2k} and \eqref{eq:alternative form of phi}.
By the fact that $\sum_{d = 0}^k w_d = 0$ and the symmetry of U-statistic, the terms of $\mu$ and $\mu^2$ are cancelled. 
Consequently, $\vu$ does not depend on $\mu$, so W.L.O G., we  assume that $\mu =0$ 
\end{proof}

\subsection{Proof of Proposition \ref{lemma:bound each sigma_c^2 for finite c}}
\label{sec:proof:lemma:bound each sigma_c^2 for finite c}

\begin{proof}[Proof of Proposition \ref{lemma:bound each sigma_c^2 for finite c}]
\label{proof:lemma:bound each sigma_c^2 for finite c}
This proof relies on the technical lemmas in Appendix \ref{sec:append:lemma:eta}.
First, by Lemma \ref{lemma:Finite truncated variance II}, to upper bound $\sigma_{c, 2k}^2$, it suffices to upper bound the following 
$\sigma_{c, 2k, (T_2)}^{2}$ \eqref{eq:def:truncated sigma_c^2}
\begin{align*}
    \sigma_{c, 2k, (T_2)}^{2} 
    = \sum_{d_1 = 1}^{T_2} \sum_{d_2 = 1}^{T_2} w_{d_1} w_{d_2} \etacd,
\end{align*}
where $w_d = [1 + o(1)] \left [\frac{1}{d!}(\frac{k^2}{n})^d \right], \forall d \leq T_2$.

By Lemma \ref{lemma:eta_c (d1, d2)}: fixing any $c$ s.t. $1 \leq c \leq T_2$,  $\etacd = \calO(\frac{F_c^{(k)}}{k^2}), \forall d_1, d_2 \leq T_2$. Besides, since $d_1$ and $d_2$ are bounded by a constant $T_2$. $w_1^2$ dominates the summation $\sum_{d_1 = 1}^{T_2} \sum_{d_2 = 1}^{T_2} w_{d_1} w_{d_2}$. We have
\begin{align*}
\sigma_{c, 2k, (T_2)}^{2} 
    & = \sum_{d_1 = 1}^{T_2} \sum_{d_2 = 1}^{T_2} w_{d_1} w_{d_2} \etacd 
     = \calO(\frac{F_c^{(k)}}{k^2})  \sum_{d_1 = 1}^{T_2} \sum_{d_2 = 1}^{T_2} w_{d_1} w_{d_2} \\
    & =\calO(\frac{F_c^{(k)}}{k^2})  [1 + o(1)] w_1^2  
     = \calO(\frac{k^2}{n^2}) F_c^{(k)}.
\end{align*}
Here, the last equality is derived by plugging in $w_1 = [1 + o(1)] \frac{k^2}{n}$.



\end{proof}

\subsection{Proof of Proposition \ref{lemma:bound sigma_c^2 for any c}}
\label{sec:proof:lemma:bound sigma_c^2 for any c}

\begin{proof}[Proof of Proposition \ref{lemma:bound sigma_c^2 for any c}]
\label{proof:lemma:bound sigma_c^2 for any c}

This lemma again relies on the upper bound of $\etacd$ in Appendix \ref{sec:append:lemma:eta}. 
By Proposition \ref{prop:sigma_c^2 written as eta_c^2}, we can decompose $\sigma_{c, 2k}^2$ as
\begin{align*}
    \begin{split}
    \sigma_{c, 2k}^{2} 
    = \sum_{d_1 = 1}^{k} \sum_{d_2 = 1}^{k} w_{d_1} w_{d_2} \etacd,
    \end{split}
\end{align*}
First, we investigate the coefficient of $\etacd$. By Proposition \ref{prop: double U-statistic: alternative form of phi}, we have $\sum_{d_1 = 1}^{k}w_{d_1} < 1$. Thus, we attain
\begin{align*}
   \sum_{d_1 = 1}^{k} \sum_{d_2 = 1}^{k} w_{d_1} w_{d_2} 
   = (\sum_{d_1 = 1}^{k}  w_{d_1}) (\sum_{d_2 = 1}^{k} w_{d_2}) < 1.
\end{align*}
By Lemma \ref{lemma:bound every eta_c^2}, we have $\etacd = \calO(F_c)$ for $c = 1, 2, 3, ..., 2k$.
Hence, combining the bounds on $w_d$ and $\etacd$, we conclude that
\begin{align*}
   \sigma_{c, 2k}^{2} = \calO(F_c^{(k)}) 
\end{align*}

We remark that the summation $\sum_{d_1 = 1}^{k}w_{d1}$ can attain a lower order of 1, which may imply a tighter bound of $\sigma_{c, 2k}^2$.

\end{proof}

\subsection{Proof of Lemma \ref{lemma:Finite truncated variance I}}
\label{sec:proof:lemma:Finite truncated variance I}

\begin{proof}

Let $T_1 = \floor{\frac{1}{\epsilon}}+1$. Recall Equation  \eqref{eq:Var(Vu) expression} $\Var\left(\hat V_u\right)
    = \binom{n}{2k}^{-1} \sum_{c=1}^{2k} \binom{2k}{c} \binom{n-2k}{2k-c} \sigma_{c, 2k}^2$.
We first present the intuition of this lemma. $\Var(\vu)$ is a weighted sum of $\sigma_{c, 2k}^2$, where the coefficient of $\sigma_{c, 2k}^2$ decays with $c$ at a rate even faster than a geometric rate. If the growth rate of $\sigma_{c, 2k}^2$ is not too fast, then the tail terms can be negligible.
This involves both the precise upper bound of $\sigma_{1, 2k^2}$ (Proposition \ref{lemma:bound each sigma_c^2 for finite c}) and the rough upper bound of  $\sigma_{c,2k}^2$ for $c \geq T_1 + 1$ (Proposition \ref{lemma:bound sigma_c^2 for any c}).

First,  $\binom{n}{2k}^{-1} \binom{2k}{1} \binom{n-2k}{2k-1} \widecheck{\sigma}_{1, 2k}^2 \leq \widecheck{\Var}^{(T_1)} \left(\hat V_u \right)$ since the former is the first tern in the latter and all the other terms are positive. Therefore, it suffices to show 
\begin{align}
\label{eq:reminder in Var(Vu) is negligible}
    \frac
    {\Var \left(\hat V_u \right) - \Var^{(T_1)} \left(\hat V_u \right)}
    {\binom{n}{2k}^{-1} \binom{2k}{1} \binom{n-2k}{2k-1} \widecheck{\sigma}_{1, 2k}^2}
    = \frac
    {\sum_{c=T_1 + 1}^{2k} \binom{n}{2k}^{-1}  \binom{2k}{c} \binom{n-2k}{2k-c} \sigma_{c, 2k}^2}
    {\binom{n}{2k}^{-1} \binom{2k}{1} \binom{n-2k}{2k-1} \widecheck{\sigma}_{1, 2k}^2}
    \to 0.
\end{align}
We bound the numerator and denominator in Equation \eqref{eq:reminder in Var(Vu) is negligible} separately. For the denominator, 
by the analysis of Equation \eqref{eq:leading term, asymp form}, we have
\begin{align}
\label{eq:denominator asympt}
    \binom{n}{2k}^{-1} \binom{2k}{1} \binom{n-2k}{2k-1} \widecheck{\sigma}_{1, 2k}^2= \frac{[1 + o(1)] 4k^2}{n} \widecheck{\sigma}_{1, 2k}^2.   
\end{align}
For the numerator, by Proposition \ref{lemma:bound sigma_c^2 for any c} and assumption \ref{assumption:4th moment}, $\sigma_{c, 2k}^2 = \calO(F_c^{(k)}) = o(c^{a_2} F_1^{(k)})$. Therefore, it suffices to show that 
\begin{align}
\label{eq:reminder in Var(Vu) is negligible 2}
\begin{split}
\frac
    {\sum_{c=T_1 + 1}^{2k} \binom{n}{2k}^{-1}  \binom{2k}{c} \binom{n-2k}{2k-c} o(c^{a_2} F_1^{(k)})} 
    { \frac{4k^2}{n} \widecheck{\sigma}_{1, 2k}^2} 
    = 
\frac
    { \calO \left(\{  \binom{n}{2k}^{-1}  \binom{2k}{c} \binom{n-2k}{2k-c} o(c^{a_2} F_1^{(k)}) \}_{c = T_1 + 1} \right)
    } 
    {\frac{4k^2}{n} \widecheck{\sigma}_{1, 2k}^2} 
    \xrightarrow{n\to \infty} 0.
\end{split}
\end{align}
The equality in \eqref{eq:reminder in Var(Vu) is negligible 2} is given by Proposition \ref{prop:T1 +1 term dominates the reminder}. The followed  $ \xrightarrow{n\to \infty} 0$ in \eqref{eq:reminder in Var(Vu) is negligible 2} is given by Proposition \ref{prop:T1 +1 term vs 1st term}. This completes the proof.

\end{proof}

\begin{proposition}
\label{prop:T1 +1 term dominates the reminder}
Under \allassump,
\begin{align}
\label{eq:prop:T1 +1 term dominates the reminder}
    \frac
    {\sum_{c=T_1 + 2}^{2k}\binom{n}{2k}^{-1}  \binom{2k}{c} \binom{n-2k}{2k-c} c^{a_2} F_1^{(k)}} 
    {\binom{n}{2k}^{-1} \binom{2k}{T_1 + 1} \binom{n-2k}{2k-T_1 -1} {(T_1+1)}^{a_2}
    F_1^{(k)}}
    \to 0, \, \text{as $n \to \infty$} 
\end{align}.
\end{proposition}

\begin{proof}
The proof  of Proposition \ref{prop:T1 +1 term dominates the reminder} is similar to the proof of Proposition \ref{thm:leading variance of U-stat}. The idea is that the sum of tail coefficients is a geometric sum and thus dominates the growth rate of moments.
 
First we consider the coefficient $\frac
{\sum_{c=T_1 + 2}^{2k} \binom{n}{2k}^{-1}  \binom{2k}{c} \binom{n-2k}{2k-c}} 
{\binom{n}{2k}^{-1} \binom{2k}{T_1 + 1} \binom{n-2k}{2k-T_1 -1}}$.  
By Proposition \ref{prop:geometric ratio} and our analysis in Equation \eqref{eq:geometric sum: pre-form} and \eqref{eq:geometric sum}, let $b_n = \frac{4k^2}{n-2k+1}$ which is the common ratio in the geometric sequence.
 \begin{align}
 \label{eq:tail coef}
     \frac
    {\sum_{c=T_1 + 2}^{2k} \binom{n}{2k}^{-1}  \binom{2k}{c} \binom{n-2k}{2k-c}} 
    {\binom{n}{2k}^{-1} \binom{2k}{T_1 + 1} \binom{n-2k}{2k-T_1 -1}}
    \leq \sum_{c = T_1 + 2}^{2k}[1+o(1)] \frac{(T_1 + 1)!}{c!} b_n^{c - (T_1+1)}.
 \end{align}
 
Second, combining $c^{a_1}$ with \eqref{eq:tail coef}, it's again the problem  of geometric series with common ratio $b_n = o(1)$. We have
\begin{align}
\begin{split}
\text{LHS of \eqref{eq:prop:T1 +1 term dominates the reminder}}
    \leq \sum_{c=T_1+2}^{2k} \calO \left(\frac{c^{a_2}}{c!}\right) b_n^{c-(T_1+1)} 
    \leq \sum_{c=T_1+2}^{2k} \calO(1) b_n^{c-(T_1+1)} 
    \leq \sum_{c=1}^{\infty} \calO(1)  b_n^{c} 
    = \calO(1) b_n \to 0,  
\end{split}
\end{align}
where the last equality is concluded by the sum of geometric series.
\end{proof}

\begin{proposition}
\label{prop:T1 +1 term vs 1st term}
Under \allassump,
\begin{align}
\label{eq:T1 +1 term vs 1st term}
\frac
    {\binom{n}{2k}^{-1} \binom{2k}{T_1 + 1} \binom{n-2k}{2k-T_1 -1} C' {(T_1+1)}^{a_2}
    F_1^{(k)}}
    {\frac{k^2}{n} \checksigone} = o(\frac{1}{k^2}) \to 0.
\end{align}
In Equation \eqref{eq:T1 +1 term vs 1st term}, the upper bound of $c = T_1 + 1$ term of the numerator is a lower order term compared to the denominator.
\end{proposition}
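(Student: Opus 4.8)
The plan is to compute the numerator and denominator of \eqref{eq:T1 +1 term vs 1st term} asymptotically and reduce the claim to a single power-counting inequality that is guaranteed by the definition $T_1 = \floor{\frac{1}{\epsilon}}+1$. First I would substitute the explicit upper bound $\checksigone = C\frac{k^2}{n^2}F_1^{(k)}$ from Proposition \ref{lemma:bound each sigma_c^2 for finite c}, so that the denominator $\frac{k^2}{n}\checksigone$ equals a constant times $\frac{k^4}{n^3}F_1^{(k)}$.

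Next I would control the combinatorial coefficient in the numerator. Applying Proposition \ref{prop:geometric ratio} with $2k$ in place of $k$ (legitimate since Assumption \ref{assumption: k = n^(1/2 - epsilon)} also gives $2k = o(\sqrt{n})$), at the \emph{fixed} index $c = T_1+1$ one gets $\binom{n}{2k}^{-1}\binom{2k}{T_1+1}\binom{n-2k}{2k-T_1-1}\le \frac{1}{(T_1+1)!}\big(\frac{4k^2}{n-2k-1}\big)^{T_1+1} = [1+o(1)]\,\frac{4^{T_1+1}}{(T_1+1)!}\,\frac{k^{2(T_1+1)}}{n^{T_1+1}}$. The factor $F_1^{(k)}$ then cancels between numerator and denominator, and since $T_1$ and $a_2$ are fixed constants, the left-hand side of \eqref{eq:T1 +1 term vs 1st term} is bounded by a constant (depending only on $T_1$, $a_2$, and the generic constants) times
\[
\frac{k^{2(T_1+1)}/n^{T_1+1}}{k^4/n^3} \;=\; \frac{k^{2T_1-2}}{n^{T_1-2}}.
\]

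Finally, to verify that this is $o(1/k^2)$, I would multiply by $k^2$ to obtain a constant times $k^{2T_1}/n^{T_1-2}$, and invoke Assumption \ref{assumption: k = n^(1/2 - epsilon)}: $k^{2T_1} = \calO\!\big(n^{(1-2\epsilon)T_1}\big)$, so the quantity is $\calO\!\big(n^{(1-2\epsilon)T_1 - (T_1-2)}\big) = \calO\!\big(n^{2 - 2\epsilon T_1}\big)$. Because $T_1 = \floor{\frac{1}{\epsilon}}+1 > \frac{1}{\epsilon}$, we have $\epsilon T_1 > 1$, hence the exponent $2 - 2\epsilon T_1 < 0$ and the bound tends to $0$; this shows the left-hand side of \eqref{eq:T1 +1 term vs 1st term} is indeed $o(1/k^2)\to 0$. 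There is no genuine obstacle in this step — the whole point of choosing $T_1$ as $\floor{1/\epsilon}+1$ rather than merely $\lceil 1/\epsilon\rceil$ is precisely to make $\epsilon T_1 > 1$ a \emph{strict} inequality, which is what forces the tail term to be negligible; the only care needed is bookkeeping of constants and confirming that this strict inequality holds for every admissible $\epsilon$.
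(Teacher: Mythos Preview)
Your proposal is correct and follows essentially the same approach as the paper: both arguments substitute $\checksigone = \Theta(\frac{k^2}{n^2}F_1^{(k)})$, bound the combinatorial coefficient via Proposition~\ref{prop:geometric ratio}, cancel $F_1^{(k)}$, and reduce to the power-counting inequality $2-2\epsilon T_1<0$ guaranteed by $T_1=\floor{1/\epsilon}+1$. The only cosmetic difference is that the paper splits the ratio into a ``coefficient part'' shown to be $o(1/n^2)$ and a ``moment part'' shown to be $\calO(n^2/k^2)$ before multiplying, whereas you combine everything into a single exponent computation.
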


\begin{proof}

It suffices bound two separate parts in Equation \eqref{eq:T1 +1 term vs 1st term},
\begin{align}
\label{eq:part 1:coef}
    \frac
    {\binom{n}{2k}^{-1} \binom{2k}{T_1 + 1} \binom{n-2k}{2k-T_1 -1} }
    {\frac{k^2}{n}} = o(\frac{1}{n^2}), \\
\label{eq:part 2:moment}
    \frac{C' {(T_1+1)}^{a_2}
    F_1^{(k)}}
    {\checksigone} = \calO (\frac{n^2}{k^2}).
\end{align}
Then, combining Equation \eqref{eq:part 1:coef} and \eqref{eq:part 2:moment}, we have 
\begin{align*}
\text{LHS of \eqref{eq:T1 +1 term vs 1st term}}
     \leq o\left(\frac{1}{n^2}\right) \calO(\frac{n^2}{k^2}) 
     = o\left(\frac{1}{k^2}\right) \to 0.
\end{align*}

We first show Equation \eqref{eq:part 1:coef}, i.e., bound the ratio of coefficient. Similar to the analysis for Equation \eqref{eq:geometric sum}, by Proposition \ref{prop:geometric ratio}, we have
\begin{align*}
    \binom{n}{2k}^{-1} \binom{2k}{T_1 + 1} \binom{n-2k}{2k-T_1 -1} 
    \leq \frac{(2k)^{2 (T_1+1)}}{(T_1+1) !} \frac{1}{(n-2k+1)^{T_1 + 1}} 
    \leq \frac{1}{(T_1 + 1)!} b_n^{T_1 + 1},
\end{align*}
where $b_n = \frac{4k^2}{n-2k+1}$. 
Therefore the ratio of coefficient,
\begin{align*}
    \frac{\binom{n}{2k}^{-1} \binom{2k}{T_1 + 1} \binom{n-2k}{2k-T_1 -1}}{\frac{4k^2}{n}}
    \leq [1 + o(1)] \frac{1}{(T_1 + 1)!} b_n^{T_1}.
\end{align*}
It remains to show $b_n^{T_1} = o(\frac{1}{n^2})$.
By $T_1 = \floor{\frac{1}{\epsilon}}+ 1$ and $k = \calO(n^{1/2 - \epsilon})$, we have
\begin{align}
\label{eq:bound ratio of coefficient}
    b_n^{T_1} \leq (\frac{4k^2}{n})^{\floor{1/\epsilon} + 1} = \calO((n^{-2\epsilon})^{\floor{1/\epsilon} + 1} ) = o(\frac{1}{n^2}).
\end{align}

Second, we show \eqref{eq:part 2:moment}, i.e., bound the ratio of moments. By Lemma \ref{lemma:Finite truncated variance II} and \ref{lemma:bound each sigma_c^2 for finite c}, $\sigma_{1, 2k}^2 = \calO( k^2 / n^2 F_1^{(k)})$ and $\checksigone \asymp k^2 / n^2 F_1^{(k)}$. We have
\begin{align}
\label{eq:bound ratio of moment}
    \frac{C' {(T_1+1)}^{a_2}F_1^{(k)}}{\checksigone} = \calO \left( n^2 / k^2 (T_1+1)^{a_2} \right) = \calO(\frac{n^2}{k^2}).
\end{align}

\end{proof}


\subsection{Proof of Lemma \ref{lemma:eta_c (d1, d2)}}
\label{sec:proof:lemma:eta_c}



\begin{proof}[Proof of Lemma \ref{lemma:eta_c (d1, d2)}]
\label{proof: lemma:eta_c (d1, d2) with 9 dfs}

First, we present a sketch of this proof. Given $\Skk_1$ and $\Skk_2$, our strategy tracks the distribution of \emph{Influential Overlaps}, i.e., the samples in $\Skk_1 \cap \Skk_2$. 
We will decompose $\etacd$ as a finite weighted sum:
\begin{align}
\label{eq:eta_c = sum ai bi}
    \etacd = \sum_{i} a_i b_i,
\end{align}
where $i$ is the summation index to be specified later. Based on this form, we will show $a_i = \calO(\frac{1}{k^2})$ and $b_i = \calO(F_c^{(k)})$ for each $i$.
Since \eqref{eq:eta_c = sum ai bi} is a finite sum, we can conclude $\etacd = \calO\left(\frac{1}{k^2} F_c^{(k)}\right)$. 
Details of \eqref{eq:eta_c = sum ai bi} will be presented later. 
We remark that it is straightforward to upper bound $\eta_{1, 2k}^2 (d_1, d_2)$ by enumerating all the possible 4-way overlapping cases of $S_1, S_2, S_3, S_4$ given $c, d_1, d_2$ for small $c$. However, the growth of $c$ from $1$ to $2, 3, 4, ..., T_2$ makes ``enumerating''  impossible.

We start the proof by reviewing the definition of $\etacd$ \eqref{eq:def:eta}:
\begin{align*}
    \etacd 
    = \Cov \left[ \varphi_{d_1}\left(\Skk_1 \right) - \varphi_0\left(\Skk_1 \right), \varphi_{d_2}\left(\Skk_2 \right) - \varphi_0\left(\Skk_2 \right) \right],
\end{align*}
where $\varphi_d \left(\Skk \right) 
    = [\binom{2k}{d} \binom{2k-d}{d} \binom{2k-2d}{k-d}]^{-1} \sum_{S_1, S_2 \subset \Skk, | S_1 \cap S_2 | = d} h(S_1) h(S_2)$ \eqref{eq:def:phi}. 
The following proof is organized in two parts. First, we propose an alternative representation of the covariance $\Cov[\varphi_{d_1}(\Skk_1), \varphi_{d_2}(\Skk_2)]$, which helps discover the cancellation pattern of $\etacd$ \eqref{eq:def:eta}. Secondly, we derive Equation \eqref{eq:eta_c = sum ai bi} and specify $a_i$'s and $b_i$'s.

First, we notice that $\varphi_d \left(\Skk \right)$ \eqref{eq:def:phi} is a weighted average of the product of two kernels $h(S_1) h(S_2)$:
\begin{align*}
    \varphi_d \left(\Skk \right) 
    = \left[\binom{2k}{d} \binom{2k-d}{d} \binom{2k-2d}{k-d} \right]^{-1} \sum_{S_1, S_2 \subset \Skk, | S_1 \cap S_2 | = d} h(S_1) h(S_2).
\end{align*}
Denote $\sum_{P_{12}}$ as summation over all pairs of $S_1, S_2$, s.t. $S_1, S_2 \subset \Skk, | S_1 \cap S_2 | = d_2$. Similarly, we can also denote $\sum_{P_{34}}$.
Then, we can represent the covariance $\Cov [ \varphi_{d_1} (\Skk_1 ) , \varphi_{d_2} (\Skk_2 )]$ as 
\begin{align}
\label{eq:cov(phi_d1, phi_d2) - original}
\Cov \left[ \varphi_{d_1}(\Skk_1 ) , \varphi_{d_2}(\Skk_2 ) \right]
& = \Cov \left[ M_{d_1}^{-1} \sum_{P_{12}} h(S_1) h(S_2),
    M_{d_2}^{-1} \sum_{P_{34}} h(S_3) h(S_4)\right] \\
\label{eq:cov(phi_d1, phi_d2) - original 2}
& = M_{d_1}^{-1} M_{d_2}^{-1} 
    \sum_{P_{12}} \sum_{P_{34}} \covh \\
\label{eq:cov(phi_d1, phi_d2) - integrated form}
& = \sum_{\text{feasible }  \unr} \pr \covh.
\end{align}
In the above equations, 
\begin{align*}
    \pr :=  M_{d_1}^{-1} M_{d_2}^{-1} \sum_{(S_1, S_2, S_3, S_4)} \mathbbm{1}\{\text{overlapping structure of $S_1, S_2, S_3, S_4$ satisfies $(\unr, d_1, d_2)$}\};
\end{align*}
$M_d = \binom{2k}{d} \binom{2k-d}{d} \binom{2k-2d}{k-d}$ is the number of pairs of sets in the summation.
The equality in \eqref{eq:cov(phi_d1, phi_d2) - integrated form} holds by combining the $\covh$ terms with the same $\unr$ (see definition of $\unr$ in Appendix \ref{sec:append:r}). 
Since it is difficult to figure out the exact value of $\pr$, we further propose the following proposition to show an alternative representation of $\Cov [ \varphi_{d_1}(\Skk_1 ) , \varphi_{d_2}(\Skk_2 ) ]$.

\begin{lemma}
\label{prop:cov phi phi, rMarg eq}
Denote $\rx{i} = \sum_{j = 0}^2 r_{ij}$ and $\ry{j} = \sum_{i = 0}^2 r_{ij}$, for $i, j = 0, 1, 2$ and vector $\unrMarg := ( \rx{0}, \rx{1}, \rx{2}, \ry{0}, \ry{1}, \ry{2})$, we have 
\begin{small}
\begin{align}
\label{eq:cov(phi_d1, phi_d2) - marginal r form II}
& \Cov \left[ \varphi_{d_1}\left(\Skk_1 \right) , \varphi_{d_2}\left(\Skk_2 \right)  \right] 
  = & \sum_{\text{feasible}\,\, \unrMarg} 
    \px
    \py 
     g(\unrMarg, d_1, d_2).
\end{align}
\end{small}
Here $\px$ and $\py$ are non-negative and satisfy the following. For non-negative integers $x_0, x_1, x_2$ that $x_0 + x_1 + x_2 \leq c$,
\begin{small}
\begin{align}
\label{eq:PMF for important samples}
&\pxx[d]
    := {\binom{d}{x_0} \binom{k-d}{x_1} \binom{k-d}{x_2} \binom{d}{c - x_0 - x_1 - x_2}}
        {\binom{2k}{c}}^{-1} = \calO(k^{x_1 + x_2 - c}).
\end{align}
\end{small}
$g(\unrMarg, d_1, d_2)$ is the following weighted average of $\covh$, where the weight is some constant $\prrMarg$ satisfying that $\sum_{\unrMarg} \prrMarg = 1$.
\begin{small}
\begin{align}
    \label{eq:def:g}
& g(\unrMarg, d_1, d_2)
    := \sum_{\unr} \prrMarg \covh.
\end{align}
\end{small}
\end{lemma}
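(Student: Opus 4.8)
The plan is to take as starting point the expansion \eqref{eq:cov(phi_d1, phi_d2) - integrated form}, $\Cov[\varphi_{d_1}(\Skk_1),\varphi_{d_2}(\Skk_2)]=\sum_{\unr}\pr\,\covh$, and re-sum it by grouping the index $\unr$ according to its marginal $\unrMarg=(\rx{0},\rx{1},\rx{2},\ry{0},\ry{1},\ry{2})$. By Assumption~\ref{assumption:low dependency} the covariance depends only on $\unr$, i.e.\ $\covh=\rhor$, so the entire task reduces to understanding the combinatorial coefficient $\pr$. The target is the factorization $\pr=\px\,\py\,\prrMarg$, where $\px$ is the hypergeometric mass in \eqref{eq:PMF for important samples} evaluated at $(\rx{0},\rx{1},\rx{2})$ with parameter $d_1$, $\py$ is the same mass evaluated at $(\ry{0},\ry{1},\ry{2})$ with parameter $d_2$, and $\prrMarg$ is a conditional probability on the set of $\unr$ with the prescribed marginal, so that $\sum_{\unr:\,\mathrm{marg}(\unr)=\unrMarg}\prrMarg=1$. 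Granting this, summing $\covh=\rhor$ against the coefficients inside each marginal class gives $\sum_{\unr:\,\mathrm{marg}(\unr)=\unrMarg}\pr\,\covh=\px\,\py\,g(\unrMarg,d_1,d_2)$ with $g$ as in \eqref{eq:def:g}, which is exactly \eqref{eq:cov(phi_d1, phi_d2) - marginal r form II}. The degenerate marginal $\unrMarg=\mathbf{0}$ is dispatched first: it forces $\unr=\mathbf{0}$, i.e.\ $(S_1\cup S_2)\cap(S_3\cup S_4)=\emptyset$, so the two kernel products are independent, $\rhor=0$, and $g(\mathbf{0},d_1,d_2)=0$; hence only marginals with $|\unrMarg|\ge 1$ need be considered.

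For the factorization I would run a counting argument. Fix $\Skk_1$ and $\Skk_2$, hence $R=\Skk_1\cap\Skk_2$ with $|R|=c$; by exchangeability the covariance depends on $c$ alone. Drawing $(S_1,S_2)\subset\Skk_1$ uniformly among the admissible pairs is equivalent to drawing a uniform ordered partition of $\Skk_1$ into the blocks $T_0=S_1\cap S_2$, $T_1=S_1\setminus S_2$, $T_2=S_2\setminus S_1$ and the leftover $\Skk_1\setminus(S_1\cup S_2)$, of sizes $(d_1,k-d_1,k-d_1,d_1)$, and similarly for $(S_3,S_4)$ in $\Skk_2$; the two partitions are independent because $(S_1,S_2)$ and $(S_3,S_4)$ are drawn independently. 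The number of pairs of partitions producing a prescribed influential-overlap table $\unr$ then splits into three factors: the number of ways the overlap samples can fall into the active $\Skk_1$-blocks with the prescribed row counts, which after normalization is the multivariate-hypergeometric mass $\px$; the analogous $\Skk_2$-side factor $\py$; and the number of ways of aligning the two partitions of $R$ so as to realize the full $3\times 3$ table with those margins, which after normalization is $\prrMarg$ and sums to one over all tables with those margins. The non-influential overlap samples, those sitting in a leftover block $\Skk_1\setminus(S_1\cup S_2)$ or $\Skk_2\setminus(S_3\cup S_4)$, are eliminated by a Vandermonde-type summation, which is precisely what collapses the extra binomials and produces the product of binomial coefficients appearing in \eqref{eq:PMF for important samples}.

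The final estimate $\pxx[d]=\calO(k^{x_1+x_2-c})$ in \eqref{eq:PMF for important samples} is an elementary binomial calculation. From $\binom{m}{\ell}\le m^{\ell}/\ell!$ one gets $\binom{k-d}{x_1}\binom{k-d}{x_2}=\calO(k^{x_1+x_2})$, while $\binom{d}{x_0}$ and $\binom{d}{\,c-x_0-x_1-x_2}$ are $\calO(1)$ and $\binom{2k}{c}=\Theta(k^{c})$ in the bounded-$c$, bounded-$d$ regime in which this lemma is applied (inside Lemma~\ref{lemma:eta_c (d1, d2)}, where $c\le T_1$ and $d_1,d_2\le T_2$); dividing these estimates yields the bound.

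The step I expect to be the main obstacle is the coefficient factorization of the second paragraph — in particular, confirming the normalization $\sum_{\unr:\,\mathrm{marg}(\unr)=\unrMarg}\prrMarg=1$ while bookkeeping the non-influential overlap samples carefully enough that only the reduced marginals $\unrMarg$, not the full block counts $|R\cap T_0|,|R\cap T_1|,|R\cap T_2|$, enter $\px$ and $\py$. Once that identity and its normalization are pinned down, the regrouping and the final binomial estimate are routine.
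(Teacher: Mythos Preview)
Your proposal is correct and mirrors the paper's argument: interpret $\pr$ probabilistically over uniform draws of $(S_1,S_2)$ and $(S_3,S_4)$, factor via the independence of the two draws into $\px\cdot\py\cdot\prrMarg$, recognize $\px$ and $\py$ as multivariate-hypergeometric masses describing how the $c$ overlap samples fall into the four blocks on each side, and bound the binomials term by term. One aside: you need not invoke Assumption~\ref{assumption:low dependency} here (the paper explicitly remarks the lemma is purely combinatorial and holds without it, since $d_1,d_2$ are fixed in the sum), and your closing worry about full block counts versus reduced marginals is well placed --- the paper's independence step in effect treats the marginal variable as the full counts $(|R\cap T_i|,\,|R\cap T_j'|)$, which are genuinely independent across the two sides.
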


The proof of Lemma \ref{prop:cov phi phi, rMarg eq} is deferred to the end of Appendix \ref{sec:proof:lemma:eta_c}. 
Under Assumption \ref{assumption:low dependency}, $\covh$ does not depend on $d_1, d_2$. Hence, $g(\unrMarg, d_1, d_2)$ also does not depend on $d_1, d_2$. We further denote $G(\unrMarg) = g(\unrMarg, d_1, d_2)$. Then, 
\begin{align}
    \label{eq:cov(phi_d1, phi_d2) - marginal r form III}
\Cov \left[ \varphi_{d_1}\left(\Skk_1 \right) , \varphi_{d_2}\left(\Skk_2 \right)  \right]
    = &
    \sum_{\text{feasible}\, \unrMarg}
    \px
    \py 
     G(\unrMarg).
\end{align}
We remark that $\px$ and $p_{(\ry{0}, \ry{1}, \ry{2}, d_2, c)}$ can be viewed as some probability mass function with parameters $c, d_1, d_2$ (see the proof of Lemma \ref{prop:cov phi phi, rMarg eq}). 
As a corollary of \eqref{eq:PMF for important samples}, when $c - x_1 - x_2 > d$, $\pxx[d] = 0$. In particular, when $d = 0$ and $x_1 + x_2 \leq c-1$, $\pxx[d]$ is always 0.

Notice that since $\Skk_1$ is independent of $\Skk_2$, $\sum_{\text{feasible}\, \unrMarg}$ can be written as two sequential sums: $\sum_{(\rx{0}, \rx{1}, \rx{2})} \sum_{(\ry{0}, \ry{1}, \ry{2})}$. Therefore, by plugging the expression of $\Cov [ \varphi_{d_1}(\Skk_1) , \varphi_{d_2}(\Skk_2)]$ \eqref{eq:cov(phi_d1, phi_d2) - marginal r form III} into $\etacd$  \eqref{eq:def:eta}, we have
\begin{small}
\begin{align}
& \etacd = \Cov \left[ \varphi_{d_1}\left(\Skk_1 \right) - \varphi_0\left(\Skk_1 \right), \varphi_{d_2}\left(\Skk_2 \right) - \varphi_0\left(\Skk_2 \right) \right]  
    \nonumber\\
\label{eq:cov(phi_d1, phi_d2) final}
    = &
     \sum_{\text{feasible}\, \unrMarg}
    \underbrace{
    \left[\px - \px[0] \right]
    \left[\py - \py[0] \right]
    }_{a_i}
    \underbrace{
        G(\unrMarg)
    }_{b_i}
\end{align}
\end{small}

We have two observations on the above Equation \eqref{eq:cov(phi_d1, phi_d2) final}. First, this $\etacd = \sum_{i} a_i b_i$ is a finite summation because $c \leq T_1$. Hence,  to show $\eta_{c, 2k}^2 = \calO(F_c^{(k)} / {k^2})$, it suffices to bound every term $a_i \cdot b_i$. 
Secondly, by Lemma \ref{prop:cov phi phi, rMarg eq}, $b_i = G(\unrMarg)$ is a weighted average of $\covh$ where the non-negative weights $\sum_{\unrMarg} \prrMarg = 1$. Hence, each $b_i$ is naturally bounded by the upper bound of $\covh$. We conclude that $b_i = \calO(F_c^{(k)})$.
Therefore, it remains to show that $a_i = \calO(k^{-2})$ for every $i$. This is provided by the following lemma. 
\begin{lemma} 
\label{prop:cancellation of the prob}
Fixing integer $d, c \geq 0$, for any tuple of non-negative integers $(x_0, x_1, x_2)$ s.t. $\sum_{i = 0}^2 x_i \leq c$, 
\begin{align}
\label{eq:difference of probability}
   p_{(x_0, x_1, x_2, d, c)} - p_{(x_0, x_1, x_2, 0, c)}  = \calO(\frac{1}{k}).
\end{align}
\end{lemma}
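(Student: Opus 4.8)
The plan is to analyze the ratio $p_{(x_0, x_1, x_2, d, c)} / p_{(x_0, x_1, x_2, 0, c)}$ directly from the explicit combinatorial formula \eqref{eq:PMF for important samples}. Recall that
$\pxx[d] = \binom{d}{x_0} \binom{k-d}{x_1} \binom{k-d}{x_2} \binom{d}{c - x_0 - x_1 - x_2} / \binom{2k}{c}$.
When we set $d = 0$, only the $x_0 = 0$ and $c - x_0 - x_1 - x_2 = 0$ terms survive, so $p_{(x_0,x_1,x_2,0,c)}$ is nonzero only when $x_0 = 0$ and $x_1 + x_2 = c$, in which case it equals $\binom{k}{x_1}\binom{k}{x_2}/\binom{2k}{c}$. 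The first step is therefore to split into cases according to whether $x_0 = 0$ and whether $x_1 + x_2 = c$. In the ``generic'' case where either $x_0 \geq 1$ or $x_1 + x_2 \leq c - 1$, the subtracted term $p_{(x_0,x_1,x_2,0,c)}$ vanishes, so it suffices to show $\pxx[d] = \calO(1/k)$ on its own; this follows because each factor $\binom{d}{x_0}$ or $\binom{d}{c-x_0-x_1-x_2}$ with a nonzero lower index contributes at most $d^{x_0} = \calO(k^{x_0})$ (resp. $\calO(k^{c-x_0-x_1-x_2})$) to the numerator, while $\binom{2k}{c}$ in the denominator contributes order $k^c$, and the hypothesis $d = \calO(n^{1/2-\epsilon})$ together with $k = \calO(n^{1/2-\epsilon})$ — wait, $d \le k$ — means the total power of $k$ in the numerator is strictly less than $c$ whenever $x_0 + (c - x_0 - x_1 - x_2) = c - x_1 - x_2 \geq 1$, i.e. exactly when we are not in the case $x_1 + x_2 = c, x_0 = 0$.

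The second and main step is the remaining case $x_0 = 0$, $x_1 + x_2 = c$, where both probabilities are nonzero. Here
$\pxx[d] = \binom{k-d}{x_1}\binom{k-d}{x_2}/\binom{2k}{c}$ and $p_{(0,x_1,x_2,0,c)} = \binom{k}{x_1}\binom{k}{x_2}/\binom{2k}{c}$, so the difference equals $\big[\binom{k-d}{x_1}\binom{k-d}{x_2} - \binom{k}{x_1}\binom{k}{x_2}\big]/\binom{2k}{c}$. I would bound this by writing $\binom{k-d}{x_1}/\binom{k}{x_1} = \prod_{i=0}^{x_1-1}\frac{k-d-i}{k-i} = \prod_{i=0}^{x_1-1}\big(1 - \frac{d}{k-i}\big) = 1 - \calO(d/k)$ since $x_1 \leq c$ is bounded and $d/(k-i) = \calO(d/k) = \calO(k^{-1}\cdot k) $ — more carefully, $d/k = \calO(1)$ in general, so I need $d/k \to 0$: here Assumption \ref{assumption: k = n^(1/2-\epsilon)} is not what bounds $d$ relative to $k$; rather $d \le k$ only. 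I would instead use that in the regime of interest $d$ ranges over $1,\dots,T_2$ (a fixed constant, from Lemma \ref{lemma:eta_c (d1, d2)}'s hypothesis $d_1, d_2 \le T_2$), so $d = \calO(1)$ and $d/(k-i) = \calO(1/k)$. Thus $\binom{k-d}{x_1}\binom{k-d}{x_2} = \binom{k}{x_1}\binom{k}{x_2}(1 + \calO(1/k))$, and dividing the difference $\calO(1/k)\binom{k}{x_1}\binom{k}{x_2}$ by $\binom{2k}{c} = \Theta(k^{c}) = \Theta(k^{x_1+x_2})$, while $\binom{k}{x_1}\binom{k}{x_2} = \Theta(k^{x_1+x_2})$, gives the ratio is $\calO(1/k)$, hence the difference is $\calO(1/k)$.

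The main obstacle I anticipate is bookkeeping the constraint that $d$ (equivalently $d_1, d_2$) is treated as a fixed bounded constant $\le T_2$ in the context where this lemma is invoked — the statement as written does not make this explicit, so I must either restrict to that regime or, if $d$ is allowed to grow, argue more carefully that the generic-case bound $\calO(k^{c - x_1 - x_2})/\binom{2k}{c}$ with $d^{c-x_1-x_2}$ replaced by the true $\binom{d}{\cdot}$ factors still only reaches $\calO(1/k)$ rather than $\calO(1)$; this requires $d^{c-x_1-x_2}/k^{c-x_1-x_2} = \calO(1/k)$, i.e. $(d/k)^{c-x_1-x_2} = \calO(1/k)$, which in turn needs $d = \calO(k^{1-1/(c-x_1-x_2)})$, automatically true for $c \le T_1$ bounded and $d \le k$ only if $c - x_1 - x_2 \ge 1$ gives $d/k \to$ not necessarily $0$. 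So pinning down exactly which boundedness of $d$ and $c$ is in force, and invoking the already-fixed constants $T_1, T_2 = \floor{1/\epsilon}+1$, is the crux; once that is settled the estimates are routine products of the form $\prod(1 - \calO(1/k))$.
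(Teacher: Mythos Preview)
Your proposal is correct and follows essentially the same route as the paper: the identical two-case split ($x_1+x_2\le c-1$ versus $x_1+x_2=c$, $x_0=0$), with Case I handled by observing $p_{(x_0,x_1,x_2,0,c)}=0$ and $p_{(x_0,x_1,x_2,d,c)}=\calO(k^{x_1+x_2-c})=\calO(1/k)$, and Case II handled by showing the ratio of the two probabilities is $1+\calO(1/k)$. The only cosmetic difference is that the paper writes this ratio as a product over $d$ factors, $\prod_{i=0}^{d-1}\frac{k-x_1-i}{k-i}\prod_{i=0}^{d-1}\frac{k-x_2-i}{k-i}$, whereas you write the same quantity as a product over $x_1$ (resp.\ $x_2$) factors; both expand to $1+\calO(1/k)$ once $d,c,x_1,x_2$ are fixed constants.

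Your worry about whether $d$ is bounded is unfounded: the hypothesis ``Fixing integer $d,c\ge 0$'' in the lemma statement means exactly that $d$ and $c$ are held as constants independent of $k$, and the paper uses this explicitly (``Recall that $d,x_1$ are finite compared to $k$''). In the application (Lemma \ref{lemma:eta_c (d1, d2)}) one has $d\le T_2$ and $c\le T_1$, so this is consistent. With $d$ fixed, $\binom{d}{x_0}$ and $\binom{d}{c-x_0-x_1-x_2}$ are $\calO(1)$ in Case I, and your product $\prod_{i=0}^{x_1-1}(1-d/(k-i))=1+\calO(1/k)$ in Case II is immediate.
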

The proof is collected later in Appendix \ref{sec:proof:lemma:eta_c}. This completes the proof of Lemma \ref{lemma:eta_c (d1, d2)}. We remark that though there exists $p_{(x_0, x_1, x_2, d, c)} \asymp 1$ for some $(x_0, x_1, x_2)$, $p_{(x_0, x_1, x_2, d, c)} - p_{(x_0, x_1, x_2, 0, c)}$  is always at the order of $\calO(\frac{1}{k})$. 
\end{proof}

\begin{remark}
This proof has proceeded under Assumption \ref{assumption:low dependency}.
It  can be adapted to a weaker assumption: Assumption \ref{assumption:low dependency (relaxed)}.The according proof using this new assumption will be present in Appendix \ref{sec:proof under relaxed dependency}, where we cannot exactly cancel two $\covh$ with the same $\unr$ but different $d$.
\end{remark}


We present the proof of two important technical facts in the above proof: Lemma \ref{prop:cov phi phi, rMarg eq} and Lemma \ref{prop:cancellation of the prob}.

\begin{proof}[Proof of Lemma \ref{prop:cov phi phi, rMarg eq}]

First, we derive Equation \eqref{eq:cov(phi_d1, phi_d2) - marginal r form II} from Equation \eqref{eq:cov(phi_d1, phi_d2) - integrated form}:
\begin{align*}
\Cov \left[ \varphi_{d_1}(\Skk_1 ) , \varphi_{d_2}(\Skk_2 ) \right]
= \sum_{\text{feasible }  \unr} \pr \covh.
\end{align*}
Given that $c = |\Skk_1, \Skk_2|$,  $d_1 =|S_1 \cap S_2|$ and $d_2 = |S_3 \cap S_4|$, suppose we randomly sample a feasible $S_1, S_2, S_3, S_4$ from all possible cases, we can use a 9-dimension random variable $\Rrand$ to denote the 4-way overlapping structure of $S_1, S_2, S_3, S_4$. Hence, the the coefficient $\pr$ in \eqref{eq:cov(phi_d1, phi_d2) - integrated form} is $P(\Rrand = \unr|d_1, d_2, c)$.
Then, denote a 6-dimension random variable $\RrandM = (\Rx{0}, \Rx{1}, \Rx{2}, \Ry{0}, \Ry{1}, \Ry{2})$, taking all possible values of $\unrMarg$ given $d_1, d_2, c$. By Bayesian rule, 
\begin{align}
\label{eq:P(r) bayes}
    P(\Rrand = \unr | d_1, d_2, c) = P(\Rrand = \unr | \RrandM = \unrMarg, d_1, d_2, c) P (\RrandM = \unrMarg | d_1, d_2, c).
\end{align} 
Since $S_1, S_2 \subset \Skk_1$, $S_3, S_4 \subset \Skk_2$ and $\Skk_1$ is independent from $\Skk_2$,  $(\Rx{0}, \Rx{1}, \Rx{2})$ are independent from $(\Ry{0}, \Ry{1}, \Ry{2}))$. Hence, we can further decompose $P (\RrandM = \unrMarg | d_1, d_2, c)$ as $P(\Rx{0} = \rx{0}, \Rx{1} = \rx{1}, \Rx{2} = \rx{2})|d_1, c) \cdot
        P(\Ry{0} = \ry{0}, \Ry{1} = \ry{1}, \Ry{2} = \ry{2})|d_2, c)$.
To simplify the notations, we denote 
\begin{align*}
&\pr :=P (\RrandM = \unrMarg | d_1, d_2, c), \\
&\prrMarg := P(\Rrand = \unr | \RrandM = \unrMarg, d_1, d_2, c), \\
&\px := P(\Rx{0} = \rx{0}, \Rx{1} = \rx{1}, \Rx{2} = \rx{2})|d_1, c), \\ 
&\py := P(\Ry{0} = \ry{0}, \Ry{1} = \ry{1}, \Ry{2} = \ry{2})|d_2, c).
\end{align*}
Given $\RrandM$, the distribution of $\Rrand$ does not depend on $d_1, d_2, c$ so we omit the subscript $d_1, d_2, c$ in $\prrMarg$. We also remark that $\sum_{\unrMarg} \prrMarg = 1$ since $\Rrand|\RrandM$ can be viewed as a random variable.
Based on these notations and \eqref{eq:P(r) bayes}, we can rewrite  Equation \eqref{eq:cov(phi_d1, phi_d2) - integrated form} as
\begin{align*}
&\sum_{\text{feasible}\,  \unrMarg} \sum_{\text{feasible}\,\unr} \prrMarg \px \py \covh 
\\
=& \sum_{\text{feasible}\,  \unrMarg} \px \py \underbrace{
    \sum_{\text{feasible}\,\unr} \prrMarg  \covh ]
}_{\text{denote as $g(\unrMarg, d_1, d_2)$}}. 
\end{align*}
This justifies both Equations \eqref{eq:cov(phi_d1, phi_d2) - marginal r form II} and \eqref{eq:def:g}.

Secondly, we show Equation \eqref{eq:PMF for important samples}. Since $\px$ and $\py$ can be analyzed in the same way, our discussion focuses on $\px$, which is boiled down to the distribution of $(\Rx{0}, \Rx{1}, \Rx{2})$.  
Given $c$ and $d_1$, $c$ \emph{Influential Overlaps} can fall into 4 different ``boxes'' in $\Skk_1$: $S_1 \cap S_2$, $S_1 \backslash S_2$, $S_2 \backslash S_2$, and $\Skk_1 \backslash (S_1 \cup S_2)$, with ``box size'' as $d_1$, $k-d_1$, $k-d_1$, $d_1$ respectively. 
The number of samples in each ``box'' follows a hypergeometric distribution. This is illustrated by the following table.
\begin{table}[H]
\centering
\small
\begin{tabular}{lllll}
\hline
Index & 0 & 1 & 2 & 3  \\ \hline
``box'' & $S_1 \cap S_2$ & $S_1  \backslash  S_2$ & $S_2  \backslash S_1$  & $\Skk_1 \backslash (S_1 \cup S_2)$ 
    \\
``box size'' & $d_1$  & $k-d_1$ & $k-d_1$ & $d_1$ 
    \\
\# of \emph{Influential Overlaps} & $\rx{0}$ & $\rx{1}$ & $\rx{2}$  & $c - |\underline{r}|$ 
    \\ \hline
\end{tabular}
\caption{The distribution of \emph{Influential Overlaps} in $\Skk_1$.}
\label{tab:4 different locations in Skk_1}
\end{table}
Hence, the probability mass function of $(\Rx{0}, \Rx{1}, \Rx{2})$: $P \left( \Rx{0} = \rx{0}, \Rx{1} = \rx{1}, \Rx{2} = \rx{2} | d_1, c \right)$  is
\begin{small}
\begin{align}
    \px = &
        {\binom{d_1}{\rx{0}} \binom{k-d_1}{\rx{1}} \binom{k-d_1}{\rx{2}} \binom{d_1}{c - \rx{0} - \rx{1} - \rx{2}}}
        {\binom{2k}{c}}^{-1}.
\end{align}
\end{small}
It remains to show that $\px = \calO(k^{\rx{1} + \rx{2} - c})$ for any fixed $c, d_1$. In the following, to simplify the notation, we denote $x_i = \rx{i}$ for $i = 0, 1, 2$ and $x_3 = c - \rx{0} - \rx{1} - \rx1$. Then Equation \eqref{eq:PMF for important samples} can be written as
\begin{align}
\label{eq:PMF rewritten}
 &\frac{c!}{x_0! x_1! x_2! x_3!}
    {\frac{d_1!}{(d_1-x_0)!}
            \frac{(k-d_1)!}{(k-d_1-x_1)!}
            \frac{(k-d_1)!}{(k-d_1-x_2)!}
            \frac{d_1!}{(d_1-x_3)!}
        }/
        {\frac{(2k)!}{(2k-c)!}
        }.
\end{align}
Before the formal justification,  we remark that \eqref{eq:PMF rewritten} looks similar to the probability mass function of a multinomial distribution: $\frac{c!}{x_0! x_1! x_2! x_3!}
    \left(\frac{d_1}{2k}\right)^{x_0}
    \left(\frac{k-d_1}{2k}\right)^{x_1}
    \left(\frac{k-d_1}{2k}\right)^{x_2}
    \left(\frac{d_1}{2k}\right)^{x_3}$, which is obviously $\calO(k^{x_1 + x_2 - c})$. 

We decompose Equation \eqref{eq:PMF rewritten} as a production of three parts,  denoting $\text{Part I}:= \frac{c!}{x_0! x_1! x_2! x_3!}$, $\text{Part II}:=\frac{d_1!}{(d_1-x_0)!}
            \frac{(k-d_1)!}{(k-d_1-x_1)!}
            \frac{(k-d_1)!}{(k-d_1-x_2)!}
            \frac{d_1!}{(d_1-x_3)!}$, $\text{Part III} := \frac{(2k)!}{(2k-c)!}$. 
Since $x_0, x_1, x_2, x_3, c$ are finite, Part I can be viewed as a constant in the asymptotic analysis. 
For Part II,
again, $\frac{d_1!}{(d_1-x_0)!} \frac{d_1!}{(d_1-x_3)!}$ does not depend on $k$ and thus can be treated as a constant. For the rest part:
\begin{align*}
\frac{(k-d_1)!}{(k-d_1-x_1)!}
\frac{(k-d_1)!}{(k-d_1-x_2)!}
   = \left[ \prod_{i = 0}^{x_1-1} (k-d_1 - i) \right]
        \left[ \prod_{i = 0}^{x_2-1} (k-d_1 - i) \right]
   \leq (k - d_1)^{x_1 + x_2}.
\end{align*}
For Part III,
\begin{align*}
\frac{(2k)!}{(2k-c)!} 
    = \prod_{i = 0}^{c-1} (2k - i)
    \geq k^c.
\end{align*}

Combining Part I, II, III, we have 
\begin{align*}
p_{(x_0, x_1, x_2, d_1, c)} =  \calO\left[ (k - d_1)^{x_1 + x_2} / k^c \right]
    = \calO( k^{-c + x_1 + x_2} ).
\end{align*}
This completes the proof.
\end{proof}

\begin{proof}[Proof of Lemma \ref{prop:cancellation of the prob}]
\label{proof:prop:cancellation of the prob} 

To show $p_{(x_0, x_1, x_2, d, c)} - p_{(x_0, x_1, x_2, 0, c)}  = \calO(\frac{1}{k})$, we study two cases separately, where case I is $x_1 + x_2 \leq c-1$ and case II is $x_1 + x_2 = c$. This is motivated by the conclusion of Proposition \ref{prop:cov phi phi, rMarg eq}, 
$\pxx = \calO(k^{x_1 + x_2 - c})$.

    

We first study case I. 
For any finite $c, d$, since $x_1 + x_2 \leq c - 1$, by \eqref{eq:PMF for important samples}, $\pxx = \calO(1/k)$. In particular, $\pxx[0] = 0$.
Therefore, 
\begin{align*}
    \pxx - \pxx[0] = \calO(\frac{1}{k}) - 0 = \calO(\frac{1}{k}).
\end{align*}

Secondly, we study case II. For any finite $c, d$, since $x_1 + x_2 = c$, $\pxx \asymp 1$ and $\pxx[0] \asymp 1$. Hence, we can not conclude the order of $\pxx - \pxx[0]$ directly from the order of each term. We need to study $\pxx = {\binom{d}{x_0} \binom{k-d}{x_1} \binom{k-d}{x_2} \binom{d_1}{c - x_0 - x_1 - x_2}} {\binom{2k}{c}}^{-1}$ a bit more carefully.
It is equivalent to showing that 
\begin{align*}
    \left[
    \pxx - \pxx[0]
    \right]
    / \pxx 
  =    \pxx / \pxx - 1
    = \calO(\frac{1}{k}).
\end{align*}
To prove the above, we denote $q(d) := \pxx / \pxx[0]$. It suffices to show that
\begin{align*}
    q(d) =\pxx / \pxx[0] = 1 + \calO(\frac{1}{k}).
\end{align*}
Since $x_0 + x_1 + x_2 + x_3 = c$ and $x_1 + x_2 = c$, we have $x_0 = x_3 = 0$ in $\pxx$. Therefore,
\begin{align}
q(d)
    =&         \frac
        {\binom{d}{0} \binom{k-d}{x_1} \binom{k-d}{x_2} \binom{d}{0}}
        {\binom{2k}{c}}
        /
        \frac
        {\binom{0}{0} \binom{k}{x_1} \binom{k}{x_2} \binom{0}{0}}
        {\binom{2k}{c}} \nonumber
    \\
    =&  \left[\frac{(k-d)!}{(k-d-x_1)! x_1!} \frac{(k-d)!}{(k-d-x_2)! x_2!} 
    \right]
    / 
    \left[\frac{k!}{(k-x_1)! x_1!} / \frac{k!}{(k-x_2)! x_2!}\right].
\end{align}
By direct cancellations of factorials, the above equation can be simplified as 
\begin{align}
\label{eq:q(d)}
    q(d) = \frac{\prod_{i = 0}^{d-1} (k-x_1 -i) \prod_{i = 0}^{d-1} (k-x_2-i)}
    {\prod_{i = 0}^{d-1} (k-i)\prod_{i = 0}^{d-1} (k-i)}
    = \left[
    \prod_{i = 0}^{d-1} \frac{k-x_1 -i}{k-i}
    \right]
    \left[
    \prod_{i = 0}^{d-1} \frac{k-x_2 -i}{k-i} 
    \right].
\end{align}
To upper bound these two products in Equation \eqref{eq:q(d)}, we consider a general argument.
For any integer $b \geq a \geq x \geq 0$, we have 
\begin{align*}
    \frac{a-x}{b-x} \leq ... \leq \frac{a-1}{b-1}  
    \leq \frac{a}{b}.
\end{align*}
Therefore,
\begin{align*}
\left(\frac{a-x}{b-x} \right)^x 
    \leq \frac{a (a-1) ... (a-x+1)}{b (b-1) ... (b-x+1)}
    \leq \left(\frac{a}{b}\right)^x    
\end{align*}
Hence, let $a = k-x_1, b = k, x = d-1$, we can bound $\prod_{i = 0}^{d-1} \frac{k-x_1 -i}{k-i}$ in Equation \eqref{eq:q(d)} as
\begin{align*}
    (\frac{k-d+1-x_1}{k-d+1})^d \leq \prod_{i = 0}^{d-1} \frac{k-x_1 -i}{k-i} \leq (\frac{k-x_1}{k})^d.
\end{align*}
Similarly, we can bound $\prod_{i = 0}^{d-1} \frac{k-x_2 -i}{k-i}$ in Equation \eqref{eq:q(d)} as $(\frac{k-d+1-x_2}{k-d+1})^d \leq \prod_{i = 0}^{d-1} \frac{k-x_2 -i}{k-i} \leq (\frac{k-x_2}{k})^d$. Therefore, the Equation \eqref{eq:q(d)} can be upper and lower bounded as 
\begin{align}
\label{eq:lb and ub of q}
    \left (\frac{k-d+1-x_1}{k-d+1} \right)^d 
    \left (\frac{k-d+1-x_2}{k-d+1} \right)^d 
    \leq 
    q(d)
    \leq 
    \left( \frac{k-x_1}{k} \right)^d 
    \left( \frac{k-x_2}{k} \right)^d. 
\end{align}

We will show both LHS and RHS of Equation \eqref{eq:lb and ub of q} is $1 + \calO (\frac{1}{k})$. First, consider the terms in the RHS of Equation \eqref{eq:lb and ub of q}. Recall that $d, x_1$ are finite compared to $k$, by binomial theorem
\begin{align*}
\left( \frac{k-x_1}{k} \right)^d 
    = \left( 1 - \frac{x_1}{k} \right)^d
    = \sum_{i = 0}^{d} \binom{d}{i} \left(-\frac{x_1}{k}\right)^i
    = 1 + \calO(\frac{1}{k}).
\end{align*}
Similarly, for the other term in the RHS  of Equation \eqref{eq:lb and ub of q}, we achieve
\begin{align*}
    \left( \frac{k-x_2}{k} \right)^d = 1 + \calO(\frac{1}{k}).
\end{align*}
Similarly, for the two terms in the LHS  of Equation \eqref{eq:lb and ub of q}, we have
\begin{align*}
\left( \frac{k-x_1 - d + 1}{k-d+1} \right)^d 
    = 1 + \calO(\frac{1}{k-d+1}) = 1 + \calO(\frac{1}{k}),
    \\
\left( \frac{k-x_2 - d + 1}{k-d+1} \right)^d 
    = 1 + \calO(\frac{1}{k-d+1}) = 1 + \calO(\frac{1}{k}).
\end{align*}
Putting the above analysis together for Equation \eqref{eq:lb and ub of q}, we get
\begin{align*}
    \left[ 1 + \calO(\frac{1}{k})\right]
    \left[ 1 + \calO(\frac{1}{k})\right]
    \leq & q(d)
    \leq 
    \left[ 1 + \calO(\frac{1}{k})\right]
    \left[ 1 + \calO(\frac{1}{k})\right]
\\
\implies
    \left[ 1 + \calO(\frac{1}{k})\right]
    \leq & q(d)
    \leq 
    \left[ 1 + \calO(\frac{1}{k})\right].
\end{align*}

This completes the proof.
\end{proof}


\subsection{Proof of Lemma \ref{lemma:bound every eta_c^2}}
\label{sec:proof:lemma:bound every eta_c^2}

\begin{proof}[Proof of Lemma \ref{lemma:bound every eta_c^2}]
\label{proof:lemma:bound every eta_c^2}
By \ref{eq:def:eta}, given $\Skk_1, \Skk_2 s.t. |\Skk_1 \cap  \Skk_2 | = c$,
\begin{align*}
    \etacd 
    &= \Cov \left[ \varphi_{d_1}\left(\Skk_1 \right) - \varphi_0\left(\Skk_1 \right), \varphi_{d_2}\left(\Skk_2 \right) - \varphi_0\left(\Skk_2 \right) \right] \\
    &\leq  \Cov \left[ \varphi_{d_1}\left(\Skk_1 \right) , \varphi_{d_2}\left(\Skk_2 \right)  \right]
    + \Cov \left[ \varphi_{0}\left(\Skk_1 \right) , \varphi_{0}\left(\Skk_2 \right)  \right],
\end{align*}
where the last inequality is by the non-negativity of $\covh$.
By the definition of $\varphi_d$ in Equation \eqref{eq:def:phi}, the RHS of above equation is upper bounded by
\begin{align*}
    & 2 \max_{S_1, S_2 \subset \Skk_1, s.t. | S_1 \cap S_2 | \leq d_1, S_3, S_4 \subset \Skk_2, s.t. | S_3 \cap S_4 | \leq d_2} \covh = \calO(\fk).
\end{align*}

\end{proof}

\subsection{Proof of Lemma \ref{lemma:Finite truncated variance II}}
\label{sec:proof:lemma:Finite truncated variance II}

\begin{proof}[Proof of Lemma \ref{lemma:Finite truncated variance II}]
\label{proof:lemma:Finite truncated variance II}
We apply the strategies we used in the proof of  Lemma \ref{lemma:Finite truncated variance I}. The truncation parameter is $T_2 = \floor{\frac{1}{\epsilon}} + 1$. 
Recall in  Lemma \ref{lemma:Finite truncated variance I}, $\Var(\vu) = \sum_{c = 1}^{2k} \binom{n}{2k}^{-1} \binom{2k}{c} \binom{n-2k}{2k-c} \sigma_{c, 2k}^2$, where
\begin{align*}
    \sigma_{c, 2k}^2 = \sum_{d_1=1}^{k} \sum_{d_1=1}^{k} w_{d_1} w_{d_2} \etacd, \,\, \text{for } c = 1, 2, ..., T_1.
\end{align*}

We decompose  $\sigma_{c, 2k}^2$ into three parts:
\begin{align}
\label{eq:decompose sigma_c^2 as A+2B+C}
\sigma_{c, 2k}^2 = 
    & \underbrace{ \sum_{d_1=1}^{T_2} \sum_{d_2=1}^{T_2} w_{d_1} w_{d_2} \etacd}_
    {A}  
     + 2
    \underbrace{
      \sum_{d_1= 1}^{T_2} \sum_{d_2=T_2 + 1}^{k} w_{d_1} w_{d_2} \etacd}_{B} \nonumber
    \\
     &+ \underbrace{
     \sum_{d_1=T_2 + 1}^{k} \sum_{d_2=T_2 + 1}^{k} w_{d_1} w_{d_2} \etacd}_
    {C}.    
\end{align}
Similarly, denote 
\begin{align}
\label{eq:check A in sigma_c^2}
\Check{A} :=\widecheck{\sigma}_{c, 2k, (T_2)}^2 = \sum_{d_1=1}^{T_2} \sum_{d_2=1}^{T_2} w_{d_1} w_{d_2} \widecheck{\eta}_{c, 2k}^2 (d_1, d_2),    
\end{align}
where $\widecheck{\eta}_{c, 2k}(d_1, d_2)$ is the upper bound given in Lemma \ref{lemma:eta_c (d1, d2)}.
To prove this lemma, it suffices to show
\begin{align}
\label{eq:lim 2B+C / A}
    \lim_{k \to \infty} \frac{2B+C}{\check{A}} = 0.
\end{align}
It remains to bound $\check{A}, B, C$ as 
\begin{align}
\label{eq:bound A, B, C}
    \check{A} = \asymp \frac{w_1^2 F_c}{k^2}, 
    B = \calO\left( w_1 \check{w}_{T_2+1} F_c \right), 
    C = \calO\left( \check{w}_{T_2+1}^2 F_c \right),
\end{align}
where $\check{w}_d = \calO(\frac{k^{2d}}{d! n^d})$ is the rough upper bound of $w_d$ in \eqref{eq:weights for phi's: 2}.
We need to quantify two parts, the coefficients $w_{d_1} w_{d_2}$ and the covariance $\etacd$.
Let us fix one $c \leq T_1$ and  first quantify $\etacd$. By Lemma \ref{lemma:eta_c (d1, d2)} and \ref{lemma:bound every eta_c^2}, we have 
\begin{align}
    \etacd &= \calO(\frac{1}{k^2} F_c), \,\,\text{for } c \leq T_1, d_1, d_2 \leq T_2, \\
\label{eq:rough upper bound for etac}
    \etacd &= \calO( F_c), \,\,\text{for } c \leq 2k, d_1, d_2 \leq k. 
\end{align}
By Proposition \ref{lemma:bound each sigma_c^2 for finite c}, we have $A = \sigma_{c, 2k, (T_2)}^2 = \calO(\frac{k^2}{n^2}F_c)$. Since $\check{A}$ is the upper bound of $A$ given in Proposition \ref{lemma:bound each sigma_c^2 for finite c},  $\check{A} = \calO(\frac{k^2}{n^2}F_c)$.
For $B, C$, we upper bound $\etacd$ by $\calO( F_c)$ in Equation \eqref{eq:rough upper bound for etac}. Hence, we can reduce the analysis for both coefficients and covariance to the analysis on only coefficients $w_d$, for
\begin{align*}
    B &= \calO(F_c) \left[ \sum_{d_1= 1}^{T_2} w_{d_1} \right] \left[\sum_{d_2=T_2 + 1}^{k}  w_{d_2} \right]\\
    C &= \calO(F_c) \left[ \sum_{d_1= T_2 + 1}^{k} w_{d_1} \right] \left[\sum_{d_2=T_2 + 1}^{k}  w_{d_2} \right].
\end{align*}
To be more specific, it remains to show that
\begin{align*}
     \sum_{d= 1}^{T_2}  w_{d}  = \calO(w_1), 
     \quad \sum_{d=T_2 + 1}^{k}  w_d = \calO(\check{w}_{T_2+1}).
\end{align*}
where $\check{w}_d = \calO(\frac{k^{2d}}{d! n^d})$ is the rough upper bound of $w_d$ in \eqref{eq:weights for phi's: 2}.

For $\sum_{d = 1}^{T_2} w_d$, by Equation \eqref{eq:weights for phi's: 2}, each $w_d = [1 + o(1)] \frac{k^{2d}}{d!n^d} \leq [1 + o(1)] \frac{k^{2d}}{n^d}$. The common ratio of geometric decay is $k^2/n = o(1)$. Therefore, the first term $w_1$ dominates $\sum_{d = 1}^{T_2} w_d$.
For $\sum_{d=T_2 + 1}^{k}  w_d$, by Equation \eqref{eq:weights for phi's: 2}, we have each $w_d = \calO (\frac{k^{2d}}{d!n^d}) = \calO (\frac{k^{2d}}{n^d})$. Similarly, by geometric decay with common ratio $k^2/n$,
\begin{align*}
    \sum_{d=T_2 + 1}^{k}  w_d \leq 
    \sum_{d=T_2 + 1}^{k}  \calO (\frac{k^{2d}}{n^d}) = \calO \left( (\frac{k^2}{n})^{T_2+1} \right). 
\end{align*}
Hence we define $\check{w}_{T_2+1} \asymp \frac{k^2}{n})^{T_2+1}$. Then $\sum_{d=T_2 + 1}^{k}  w_d = \calO(\check{w}_{T_2 + 1}$).

We have proved the bounds in Equation \eqref{eq:bound A, B, C}. Then plug Equation \eqref{eq:bound A, B, C} into the LHS of Equation \eqref{eq:lim 2B+C / A}, we can conclude that
\begin{align}
\label{eq:2B+C / A}
\frac{2B+C}{\check{A}}
    =  \calO \left(\frac{k^2 (w_1 \check{w}_{T_2+1} + \check{w}_{T_2+1}^2)}{w_1^2}\right).
\end{align}
For \eqref{eq:2B+C / A}, plugging in $T_2 = \floor{1/\epsilon} + 1$ and the the upper bound of $w_d = [1 + o(1)] \frac{k^{2d}}{d!n^d}$ and $\check{w_d} = \calO(\frac{k^{2d}}{d!n^d})$ from Equation \eqref{eq:weights for phi's: 2} and \eqref{eq:weights for phi's: 2}, we conclude 
\begin{align*}
    \frac{2B+C}{\check{A}}
    =  \calO \left(k^2 n^{-2\epsilon (\floor{1/\epsilon}+1)}\right) = \calO \left(k^2 n^{-2\epsilon (\floor{1/\epsilon} + 1)}\right)
    = o(k^2 n^{-2}) = o(1).
\end{align*}

This completes the proof.
\end{proof}


\section{Relaxation of Assumption \ref{assumption:low dependency} and According Proof}
\label{sec:proof under relaxed dependency}

In this section, we first present Assumption \ref{assumption:low dependency (relaxed)} as a relaxation of Assumption \ref{assumption:low dependency}, Then, we show that the technical lemmas can be proved under our relaxed assumption. Since $\covh$ depends on all 11 instead of only 9 \emph{DoF}, we will use $\covhh$ to denote $\covh$ throughout this section.

\subsection{Assumption \ref{assumption:low dependency (relaxed)}}
Recall that Assumption \ref{assumption:low dependency} implies that $\covhh$ only depends on $\unr$, and thus has 9 \emph{DoF}. 
Though Assumption \ref{assumption:low dependency} is valid in Example \ref{example:linear kernel for low depent. assump}, it is still too restrictive in practice. 
Our new assumption will allow $\covhh$ have 11 \emph{DoF} with some restriction on the impact from \emph{Non-influential Overlaps}, i.e.,samples not in $\Skk_1 \cap \Skk_2$.

In the following Equation \eqref{eq:def:rhort}, We introduce a 9-\emph{DoF} benchmark: $\rhort$ , as the analogue to the $\rhor$ \eqref{eq:def F(r)}. This is a bridge between 9 \emph{DoF} and 11 \emph{DoF}.

\begin{definition}[$\rhort$]
$\forall$ feasible $\underline{r}$, we take $d_1^* =  \rx{0} = \sum_{j = 0}^{2} r_{0j}$ and $d_2^* = \ry{0} = \sum_{i = 0}^{2} r_{i0}$. Then, we define
    \label{def:F(r) in relaxed low dependency}
    \begin{align}
    \label{eq:def:rhort}
        \rhort :=
         \rho(\unr, d_1^*, d_2^*).
    \end{align}
\end{definition}

Since $\rx{0} = |(S_1 \cap S_2) \cap (S_3 \cup S_4)|$ (by \eqref{eq:def:r vec}) and $d_1 = |S_1 \cap S_2|$,  we always have $d_1 \geq \rx{0}$.
Therefore, fixing $\unr$, the constraint $d_1 = \rx{0}$ in \eqref{eq:def:rhort} means that the benchmark $\rhort$'s $d_1$ already takes a smallest feasible value. Similarly, $d_2$ also takes the smallest feasible value.
Since $\rhort$ is a ``benchmark'' of $\covhh$ (given $\unr$),  we impose the further relaxed assumption, where $\covhh$ deviates from benchmark $\rhort$ based on its $d_1$ and $d_2$.

\begin{assump}[Relaxation of Assumption \ref{assumption:low dependency}]
    \label{assumption:low dependency (relaxed)}

For any finite $d_1, d_2 \geq 0$, $\underline{r}$, $s.t. d_1 \geq \rx{0} , d_2 \geq \ry{0} $, there exist  constant $B$ and $B(\unr)$ s.t. $B(\underline{r}) \leq B < \infty$, 
\begin{align}
\label{eq:low dependency (relaxed) I: finite simpled}
     \covhh
      = \left[1 +  B(\underline{r}) \frac{d_1 - \rx{0} + d_2 - \ry{0}}{k}  + \calO(\frac{1}{k^2})  \right] 
      \rhort.
\end{align}

Besides, for any $d_1 \geq \rx{0}, d_2 \geq \ry{0}$,
\begin{align}
\label{eq:low dependency (relaxed) II: bound}
\covhh 
      \leq  B \rhort.  
\end{align}

\end{assump}

In Equation \eqref{eq:low dependency (relaxed) I: finite simpled}, we refer the benchmark $\rhort$ as the \emph{main effect}, capturing the overlapping between $\Skk_1$ and $\Skk_2$ while the rest $\calO(\frac{1}{k}) \rhort$ part as \emph{additional effect}, capturing the overlapping within $\Skk_1$ and $\Skk_2$.
We interpret this assumption as follows, where we mainly focus on $d_1 = |S_1 \cap S_2|$. A similar analysis can be performed on $d_1$.
First, $S_1 \cup S_2$ can decomposed into two sets: $\calA = (S_1 \cup S_2) \cap (\Skk_1 \cap \Skk_2)$, i.e., the set of \emph{Influential Overlaps}  and  $\calB = (S_1 \cup S_2) \backslash (\Skk_1 \cap \Skk_2)$. Moreover, $\calB$ can be decomposed into two subsets $\calB_1 = (S_1 \cap S_2) \backslash (\Skk_1 \cap \Skk_2)$ and $\calB_2 = ((S_1 \cup S_2)  \backslash (S_1 \cap S_2) )\backslash (\Skk_1 \cap \Skk_2)$.
We note that $|\calB_1| = 0$ for the benchmark $\rhort$. 
When we fix $\unr$ while increase $d_1$, $\calA$ does not change while $|\calB_1|$ becomes larger and $|\calB_2|$ becomes smaller. This causes $\covhh$ deviates from $\rhort$. 
Secondly, since $\unr$ is fixed and $\calB$ does not involve any \emph{influential overlaps}, we expect the change of $\calB_1$ and $\calB_2$ by increasing $d_1$ has a lower order, i.e., $\calO(\frac{1}{k})$, impact on $\covhh$.
Hence, fixing $\unr$, for finite $d_1$, i.e $d_1/k \to 0$, the proportion of $\covhh$'s ``deviation'' from $\rhort$ is at a scale of 
\begin{align}
\label{eq:non-influential overlaps influence}
[1 + o(1)] \frac{|\calB_1|}{|\calB|} = 
    [1 + o(1)] \frac{|(S_1\cap S_2) \backslash (\Skk_1 \cap \Skk_2)|}
        {|(S_1\cup S_2)|}
    = [1 + o(1)] \frac{d_1'}{k}.
\end{align}
Similar analysis can also be performed on $d_2$ and $S_3, S_4$. 
In addition, Equation \eqref{eq:low dependency (relaxed) II: bound} bounds the \emph{additional effect} up to the same order term as the \emph{main effect}. Again, this is because $\covhh$ should be dominated by the \emph{influential overlaps}, i.e., the samples in $\Skk_1 \cap \Skk_2$.
Note that in previous Example \ref{example:linear kernel for low depent. assump}, there is only the \emph{main effect}, i.e., $B(\unr) \equiv 0$ and thus
Assumption \ref{assumption:low dependency (relaxed)} degenerates to Assumption \ref{assumption:low dependency}.

\subsection{Proof under Assumption \ref{assumption:low dependency (relaxed)}}

In our previous proof, only two fundamental lemmas directly rely on Assumption \ref{assumption:low dependency}: 
Lemma \ref{lemma:eta_c (d1, d2)} (the precise bound for $\etacd$) and Lemma \ref{lemma:bound every eta_c^2} (the rough bound for $\etacd$). Based on these two lemmas, we can derive the upper bounds of $\sigma_{c, 2k}^2$ and hence upper bound $\Var(\vu)$ (see the proof roadmap in Appendix \ref{sec:append:roadmap}).
Therefore, when Assumption \ref{assumption:low dependency} is relaxed to Assumption \ref{assumption:low dependency (relaxed)}, it suffices to show the results in Lemma \ref{lemma:eta_c (d1, d2)} and \ref{lemma:bound every eta_c^2}.

First, it is trivial to validate a relaxed Lemma \ref{lemma:bound every eta_c^2} under Assumption \ref{assumption:low dependency (relaxed)}.  Assumption \ref{assumption:low dependency (relaxed)} does not change the upper bound of $\covhh$, which is $\fk_c = \Cov[h(S')^2, h(S'')^2]$ s.t. $|S' \cap S''| = c$ \eqref{eq:F_c as upper bound}. 
The proof of Lemma \ref{lemma:bound every eta_c^2} in Appendix \ref{sec:proof:lemma:bound every eta_c^2} only requires the upper bound $\fk_c$, thus it still works.

Second, we need to adapt Proof of Lemma \ref{lemma:eta_c (d1, d2)} in Appendix \ref{sec:proof:lemma:eta_c}. $\covhh$ can no longer be represented as $\rhor$. Thus, $\covhh - \rho(\unr, d_1', d_2')$ is not necessary 0 for $(d_1, d_2) \neq (d_1', d_2')$.

\begin{proof}[Proof of Lemma \ref{lemma:eta_c (d1, d2)} under \alterassump]

We adopt the beginning part of the proof in Appendix \ref{sec:proof:lemma:eta_c} until Lemma \ref{prop:cov phi phi, rMarg eq}.
We note that  Lemma \ref{prop:cov phi phi, rMarg eq} does not rely on Assumption \ref{assumption:low dependency}. Hence, we still have Equation \eqref{eq:cov(phi_d1, phi_d2) - marginal r form II}:
\begin{small}
\begin{align*}
 \Cov \left[ \varphi_{d_1}\left(\Skk_1 \right) , \varphi_{d_2}\left(\Skk_2 \right)  \right] 
  =  \sum_{\text{feasible}\,\, \unrMarg} 
    \px
    \py 
     g(\unrMarg, d_1, d_2),
\end{align*}
\end{small}
where $g(\unrMarg, d_1, d_2)$ is given in Equation \eqref{eq:def:g}.{eq:def:g}
Since $\covhh$ in $ g(\unrMarg, d_1, d_2)$  \eqref{eq:def:g} depends $d_1, d_2$, we cannot further simplify $g(\unrMarg, d_1, d_2)$ as $G(\unrMarg)$. 
Further, we denote 
\begin{align*}
\pg{d_1}{d_2}
    &:=  \px[d_1]
    \py[d_2]   g(\unrMarg, d_1, d_2, c); \\
\dpg
    &:= 
    \pg{d_1}{d_2} - \pg{0}{d_2} -\pg{d_1}{0} + \pg{0}{0}.
\end{align*}
Then $\etacd$ can be represented as
\begin{align}
\label{eq:etacd, sumx sumy}
\etacd = 
    \sum_{(\rx{0}, \rx{1}, \rx{2})} \sum_{(\ry{0}, \ry{1}, \ry{2})}\dpg.
\end{align}

We apply the strategy used in the proof of Lemma \ref{prop:cancellation of the prob}, partitioning the summation as follows:
\begin{align}
\label{eq:etacd, 4 partitions}
    \etacd = 
    \left(\sum_{\rx{1} + \rx{2} = c} + \sum_{\rx{1} + \rx{2}  \leq c-1}\right)
    \left(\sum_{\ry{1} + \ry{2}  = c} +  \sum_{\ry{1} + \ry{2}  \leq c - 1} \right)
    \dpg.
\end{align}
There are 4 cases. Case A: $\rx{1} + \rx{2} = c$ and $\ry{1} + \ry{2} = c$; case B: $\rx{1} + \rx{2} = c$ and $\ry{1} + \ry{2} \leq c-1$; case C: $\rx{1} + \rx{2} \leq c-1$ and $\ry{1} + \ry{2} = c$; case D: $\rx{1} + \rx{2} \leq c-1$ and $\ry{1} + \ry{2} \leq c-1$. 
Since $c$ is finite, $\rx{i}$'s and $\ry{j}$'s are also finite. Thus,  \eqref{eq:etacd, 4 partitions} is a finite summation. 
To show $\etacd = \calO(\frac{1}{k^2} \fk_c)$, it suffices to show that  the summand $\dpg = \calO(\frac{1}{k^2} \fk_c)$ in all 4 cases. 

First, we study case A. For the $g(\unrMarg, d_1, d_2)$ defined in Equation \eqref{eq:def:g}, by approximation of $\rho$ in Assumption \ref{assumption:low dependency (relaxed)}: $     \covhh
      = \left[1 +  B(\unr) \frac{d_1' + d_2'}{k}  + \calO(\frac{1}{k^2})  \right] 
      \rhort ,$  we have  
\begin{align*}
    g(\unrMarg, d_1, d_2) = [1 + \frac{d_1 + d_2}{k} B(\unr)  + \calO(\frac{1}{k^2})] g(\unrMarg, 0, 0),
\end{align*}
Therefore, $\dpg$ can be simplified as 
\begin{align*}
\begin{split}
\dpg = 
    &g(\unrMarg, 0, 0) \big\{( \px[d_1] - \px[0] ) ( \py[d_2] - \py[0] ) \\
    &  + \frac{d_1 B(\unr) }{k} \px[d_1] (\py[d_2] - \py[0]) \\ 
    & + \frac{d_2 B(\unr) }{k} \py[d_2] (\px[d_1] - \px[d_2])  
     + \calO(\frac{1}{k^2}) \big\} ,
\end{split}
\end{align*}
By Lemma \ref{prop:cancellation of the prob}, $\px[d_1] - \px[0] = \calO(1/k)$, $\py[d_2] - \py[0] = \calO(1/k)$. Besides, $g(\unrMarg, 0, 0, c) \leq \fk_c$.
Thus, $\dpg = \calO(\frac{1}{k^2} \fk_c)$.

Secondly, we study case $B$. Recall that $\py[0] = 0$ by \eqref{eq:PMF for important samples}. Therefore, $\pg{d_1}{0} = \pg{0}{0} = 0$. Hence, $\dpg$ can be simplified as
\begin{align}
\label{eq:Bi}
    \dpg = \pg{d_1}{d_2}- \pg{0}{d_2}.
\end{align}
By Assumption \ref{assumption:low dependency (relaxed)}, we can approximate $g(\unrMarg, d_1, d_2)$ as 
\begin{align*}
    g(\unrMarg, d_1, d_2) = [1 + \frac{d_1}{k} B(\unr)  + \calO(\frac{1}{k^2})] g(\unrMarg, 0, d_2).
\end{align*}
Then, plug the above approximation into Equation \eqref{eq:Bi}:
\begin{align*}
    \dpg =& g(\unrMarg, 0, d_2) \big [
    \py[d_2] (\px[d_1] - \px[0]) \nonumber \\
    &+ \frac{d_1 B(\unr)}{k} \px[d_1] \py[d_2] + \calO(\frac{1}{k^2}) \big].
\end{align*}
Similarly, we have $\py[d_2] = \calO(1/k)$ and $\px[d_1] - p_0 (\unx) = \calO(1/k)$. Besides, $g(\unrMarg, 0, d_2, c) \leq \fk_c$. Therefore, we conclude that
$\dpg = \calO(\frac{1}{k^2} \fk_c) $.

Thirdly, by a similarly analysis in case B, we can bound $\dpg = \calO(\frac{1}{k^2} \fk_c)$ in case C.

Finally, we study case D. Since $\rx{1} + \rx{2} \leq c-1$ and $\ry{1} + \ry{2}  \leq c - 1$,  $\px[0] = \py[0] = 0$. 
Therefore,
\begin{align*}
\dpg{d_1}{d_2}
    &= \px[d_1]  \py[d_2]   g(\unrMarg, d_1, d_2).
\end{align*}
Since $\px[d_1] = \calO(1/k)$, $ \py[d_2] = \calO(1/k)$, and  $g(\unrMarg, d_1, d_2) \leq \fk_c$. 
Therefore, we can bound $\dpg = \calO(\frac{1}{k^2} \fk_c)$ in case D.

This completes the proof.
\end{proof}

\section{Low Order Kernel \texorpdfstring{$h$}{h} Illustration}
\label{sec:append:low order kernel}

The purpose of this appendix is two-fold.
First,  we present a simple analysis of ratio consistency under a oversimplified kernel $h$, which illustrate the basic idea to unitize the \emph{double U-statistic} structure. 
Secondly, we discuss the gap between a simple kernel and a general kernel $h$ in Appendix \ref{sec:difficulities and tools for general h}, which illustrates the motivation of imposing our assumptions on $\Cov[h(S_1) h(S_2), h(S_3) h(S_4)]$ (see assumptions in Section \ref{sec:assumptions and notations}). 




\subsection{Linear Average Kernel \texorpdfstring{$h$}{h}}
\label{sec:linear average kernel}

\begin{theorem}
\label{thm:linear kernel, ratio consitency}
$X_1, X_2, ..., X_n$ iid distribution F, s.t. $\E(X_1) = 0, \gamma^2 := \Var(X_1) > 0$.
Suppose kernel function $ h(X_1, ..., X_k) = \frac{1}{k} \sum_{i = 1}^k X_i$
Then, we have the ratio consistency of the estimator,
\begin{align*}
    \frac{\Var (\hat V_u)}{\left(\E(\hat V_u) \right)} = \calO(\frac{1}{n}).
\end{align*}
\end{theorem}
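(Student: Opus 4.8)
The plan is to exploit the fact that for the linear average kernel the subbagging construction collapses to the ordinary sample mean, so that $\hat V_u$ is an explicit statistic with classical behaviour. First I would record the elementary facts: $U_n = \binom{n}{k}^{-1}\sum_S h(S) = \frac1n\sum_{i=1}^n X_i = \bar X$ exactly, for every $k$, so $\E(\hat V_u) = \Var(U_n) = \gamma^2/n$; and $\xi_{d,k}^2 = \Var[\E(h(S)\mid X_1,\dots,X_d)] = \Var(\frac1k\sum_{i=1}^d X_i) = d\gamma^2/k^2$, so $\xi_{d,k}^2 = d\,\xi_{1,k}^2$ with $\xi_{1,k}^2 = \gamma^2/k^2$ (the extreme, linear, case of Assumption~\ref{assumption:2nd moment}).

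The key step is the identity $\hat V_u = \frac{1}{n(n-1)}\sum_{i=1}^n(X_i-\bar X)^2$, i.e. $\hat V_u = S^2/n$ with $S^2$ the sample variance. Writing $Y_i = X_i - \bar X$, one has $\sum_i Y_i = 0$ and $h(S)-\bar X = \frac1k\sum_{j\in S}Y_j$, so $\hat V_s = \binom{n}{k}^{-1}\sum_S\frac1{k^2}(\sum_{j\in S}Y_j)^2$. Evaluating the two subsample sums via $\binom{n}{k}^{-1}\sum_S\sum_{j\in S}(\cdot) = \frac kn\sum_j(\cdot)$ and $\binom{n}{k}^{-1}\sum_S\sum_{i\ne j\in S}(\cdot) = \frac{k(k-1)}{n(n-1)}\sum_{i\ne j}(\cdot)$, together with $\sum_{i\ne j}Y_iY_j = -\sum_j Y_j^2$, gives $\hat V_s = \frac{n-k}{kn(n-1)}\sum_j Y_j^2$; a parallel computation (or, faster: $\hat V_h$ is a permutation-invariant homogeneous quadratic in the $Y_i$, hence a multiple of $\sum_j Y_j^2$, and $\E\hat V_h = \Var(h(S)) = \gamma^2/k$) gives $\hat V_h = \frac1{k(n-1)}\sum_j Y_j^2$. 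Subtracting, $\hat V_u = \hat V_h - \hat V_s = \frac1{n(n-1)}\sum_j Y_j^2 = S^2/n$.

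With this reduction the conclusion is a one-liner: $\Var(\hat V_u) = n^{-2}\Var(S^2) = n^{-2}\cdot\frac1n\bigl(\mu_4 - \frac{n-3}{n-1}\gamma^4\bigr) = \calO(n^{-3})$, where $\mu_4 = \E(X_1^4)$ is finite (fourth moments are already needed for $\xi_{k,k}^2$ and $F_1^{(k)}$ in Assumptions~\ref{assumption:2nd moment} and \ref{assumption:4th moment} to be finite). Together with $\E(\hat V_u) = \gamma^2/n$ this gives both $\Var(\hat V_u)/\E(\hat V_u) = \calO(n^{-2})$ and, more to the point, the ratio-consistency bound $\Var(\hat V_u)/(\E\hat V_u)^2 = \calO(n^{-1})$. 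As an alternative route that exhibits the connection with the general theory, one can instead check that the linear kernel satisfies Assumptions~\ref{assumption: k = n^(1/2 - epsilon)}--\ref{assumption:4th moment} (for $k = \calO(n^{1/2-\epsilon})$; the needed facts --- $\xi_{d,k}^2/\xi_{1,k}^2 = d$, $F_c^{(k)}$ quadratic in $c$, $F_1^{(k)}/\xi_{1,k}^4 = \calO(1)$ --- are exactly Examples~\ref{example:linear kernel for low depent. assump} and \ref{example:fourth moment under no cumulant}) and invoke Theorem~\ref{thm:var of u_n and v_n}, which yields $\Var(\hat V_u) = \calO(\frac{k^2}{n}\check\sigma_{1,2k}^2) = \calO(\frac{k^4}{n^3}\xi_{1,k}^4) = \calO(\gamma^4 n^{-3})$.

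The step requiring the most care is the identity $\hat V_u = S^2/n$: the tempting shortcut ``symmetric $+$ unbiased $\Rightarrow$ UMVUE $\Rightarrow$ the usual sample-variance estimator'' rests on completeness of the order statistics, so for a self-contained argument I would carry out the two combinatorial sums above, which is routine but demands the cancellations be done correctly. After that there is no real obstacle --- the point of this kernel is precisely that it is a transparent check on, and illustration of, the general double-U-statistic machinery.
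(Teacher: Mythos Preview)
Your proof is correct, and it takes a genuinely different route from the paper's. The paper proves this theorem (its Lemma~\ref{lemma:linear kernel, E(Vu) and Var(Vu)}) by running the full double-$U$-statistic machinery on the linear kernel as a worked illustration of the general technique: it writes $\psi(\Skk)=\sum_{d=1}^k w_d[\varphi_d(\Skk)-\varphi_0(\Skk)]$, shows that for the linear kernel $\varphi_d(\Skk)-\varphi_0(\Skk)=\frac{d}{k^2}\bigl(\overline{X^2}^{(2k)}-\overline{XY}^{(2k)}\bigr)$, sums in $d$ to get $\psi(\Skk)=[1+o(1)]\frac1n\bigl(\overline{X^2}^{(2k)}-\overline{XY}^{(2k)}\bigr)$, and then bounds $\sigma_{1,2k}^2$ and $\sigma_{2k,2k}^2$ directly to conclude $\Var(\hat V_u)=\calO(n^{-3})$. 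In a remark it also mentions, without details, the alternative reduction $\hat V_u=\frac{[1+o(1)]}{n}\bigl(\overline{X^2}^{(n)}-\overline{XY}^{(n)}\bigr)$.

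Your argument bypasses the size-$2k$ kernel representation entirely: you work from the paper's own $\hat V_h-\hat V_s$ definition (equations \eqref{eq:hat vs}--\eqref{eq:our est, complete}) and obtain the \emph{exact} identity $\hat V_u=S^2/n$, after which the variance of the sample variance gives the result in one line. This is shorter and in fact sharper than the paper's remark, which carries an unnecessary $[1+o(1)]$ factor --- since $\overline{X^2}^{(n)}-\overline{XY}^{(n)}=S^2$ exactly, your computation shows that factor is identically $1$. The trade-off is that the paper's approach is deliberately didactic: its purpose is to exhibit the three-step analysis (double-$U$ structure, cancellation in $\varphi_d-\varphi_0$, leading-term dominance) on a transparent example before attacking the general kernel, whereas your approach, while cleaner here, does not illustrate the general machinery. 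Your note that $\mu_4<\infty$ is required is well taken; the paper uses it implicitly (through $\Var(X_1^2)$ in Proposition~\ref{prop:layer3 approach 2, sigma1 and sigma2k}) without stating it in the theorem.
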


To prove theorem \ref{thm:linear kernel, ratio consitency}, it suffices to prove the following lemma.

\begin{lemma}
\label{lemma:linear kernel, E(Vu) and Var(Vu)}
Under the conditions in Theorem \ref{thm:linear kernel, ratio consitency}, we have
\begin{align*}
    \E(\hat V_u) = \Omega(\frac{1}{n}), \,\, \text{and} \,\,
    \Var (\hat V_u) = \calO (\frac{1}{n^3}),
\end{align*}
where $\Omega$ indicates an asymptotic lower bound (see Table \ref{tab:summary of notations}). 
\end{lemma}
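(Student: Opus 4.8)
The plan is to show that for the linear average kernel the estimator $\hat V_u = \hatvh - \hatvs$ collapses \emph{exactly} to the classical sample-variance estimator of $\Var(\bar X_n)$, for every $k \le n/2$, after which both assertions are elementary.

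First I would record the exact identity $U_n = \bar X_n := n^{-1}\sum_{i=1}^n X_i$: when $h(S)=k^{-1}\sum_{j\in S}X_j$, averaging over all size-$k$ subsets simply returns the grand mean. Hence $\Var(U_n)=\gamma^2/n$ exactly (no asymptotics needed), and since $\hat V_u$ is unbiased for $\Var(U_n)$ by Proposition~\ref{prop:var_subTree} and \eqref{eq:our est, complete}, we get $\E(\hat V_u)=\gamma^2/n=\Omega(1/n)$, which is the first claim.

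Next I would pin down $\hat V_u$ as a statistic. Both $\hatvs=\binom nk^{-1}\sum_S (h(S)-U_n)^2$ and $\hatvh$ depend on the data only through the differences $h(S_i)-U_n$ and $h(S_i)-h(S_j)$; consequently $\hat V_u$ is a homogeneous quadratic form in $(X_1,\dots,X_n)$ that is symmetric under permutations of the sample and invariant under the shifts $X_i\mapsto X_i+t$. Substituting $X_i\mapsto X_i+t$ and matching powers of $t$ shows that any such form is a scalar multiple of $T:=\sum_{i=1}^n(X_i-\bar X_n)^2$. Matching expectations, $\E T=(n-1)\gamma^2$ and $\E(\hat V_u)=\gamma^2/n$ force the scalar to be $1/(n(n-1))$, so
\[
\hat V_u \;=\; \frac{1}{n(n-1)}\sum_{i=1}^n (X_i-\bar X_n)^2 .
\]
(The same identity can be reached by direct combinatorial evaluation, using Vandermonde/Pascal identities for $\sum_S(\sum_{j\in S}(X_j-\bar X_n))^2$ over $k$-subsets and over disjoint pairs, which gives $\hatvs=\tfrac{n-k}{kn(n-1)}T$ and $\hatvh=\tfrac{1}{k(n-1)}T$, hence the same difference.)

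Finally I would read off the variance bound: $\Var(\hat V_u)=\frac{1}{n^2(n-1)^2}\Var(T)$, and $T=(n-1)s_n^2$ with $s_n^2$ the sample variance, so $\Var(T)=(n-1)^2\Var(s_n^2)=\calO(n)$ since $\Var(s_n^2)=\calO(1/n)$, giving $\Var(\hat V_u)=\calO(1/n^3)$. The only delicate point is that controlling $\Var(T)$ genuinely requires a finite fourth moment $\E X_1^4<\infty$, which I would add to the hypotheses (or note is assumed throughout); apart from that there is no analytic obstacle once the collapse of $\hat V_u$ to the classical estimator is in hand --- indeed this example is meant precisely to illustrate, in transparent form, the cancellation and leading-term phenomena that the general argument in Appendix~\ref{sec:proof of theorem Var Un Var Vu} establishes with much more work.
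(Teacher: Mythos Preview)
Your proof is correct and takes a genuinely different route from the paper's main argument. The paper deliberately runs its general three-step machinery on this toy example --- reading off the double U-statistic decomposition of $\psi(\Skk)$, computing $\varphi_d(\Skk)-\varphi_0(\Skk)=\tfrac{d}{k^2}(\overline{X^2}^{(2k)}-\overline{XY}^{(2k)})$ explicitly, and then checking $\sigma_{2k,2k}^2/(2k\,\sigma_{1,2k}^2)=\calO(1)$ to invoke leading-term dominance --- precisely because the point of the section is to illustrate those cancellation mechanisms before tackling a general kernel. Your invariance argument (symmetric, shift-invariant, homogeneous quadratic $\Rightarrow$ scalar multiple of $T$) bypasses all of that and lands directly on the exact identity $\hat V_u=T/(n(n-1))$; in fact your identity sharpens the paper's own remark, which only records $\hat V_u=[1+o(1)]\tfrac{1}{n}(\overline{X^2}^{(n)}-\overline{XY}^{(n)})$. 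So you gain brevity and an exact formula, while the paper's longer route is what actually transfers to non-linear kernels where no such collapse exists. Your fourth-moment caveat is well taken: the paper also uses $\Var(X_1^2)<\infty$ implicitly (it appears in the $\sigma_{1,2k}^2$ calculation) without stating it in the theorem hypotheses.
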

We remark that we are able to generalize Lemma \ref{lemma:linear kernel, E(Vu) and Var(Vu)} from the linear average kernel $h$ to the intrinsic low order kernel $h$ (Definition \ref{def:low order kernel}).
After showing the proof of Lemma \ref{lemma:linear kernel, E(Vu) and Var(Vu)}, we will summarize the benefits of the low order structure of kernel $h$. 
Then, we present the potential difficulties and solutions when low order kernel assumption no longer holds.

\begin{proof}
\textbf{Part I: show $\E(\hat V_u) = \Omega(\frac{1}{n})$.}

For linear average kernel, $U_n = \binom{n}{k}^{-1} \sum_{S_i} h(S_i) = \frac{1}{n} \sum_{i = 1}^n X_i$,  $V_u$ is an unbiased estimator of $\Var (U_u)$.
Therefore,
\begin{align*}
    \E(\hat V_u) = \Var(U_n) 
    = \Var \left( \frac{1}{n} \sum_{i = 1}^n X_i \right) = \frac{1}{n} \Var(X_1) = \frac{\gamma^2}{n} 
    \asymp \frac{1}{n}.
\end{align*}

\textbf{Part II: show $\Var (\hat V_u) = \calO (\frac{1}{n^3})$.}

We perform a 3-step analysis. 
By step 1 and 2, the order of $\vu$ becomes $\calO (\frac{k^2}{n} \frac{1}{k^2}) = \calO (\frac{1}{n})$. 
Further by the coefficient of leading variance in $\Var(\vu)$ in step 3, we can achieve 
$
    \Var (\hat V_u) = [\calO (\frac{k^2}{n} \frac{1}{k^2})]^2 \calO(\frac{1}{n}) = \calO(\frac{1}{n^3}).
$
The rigorous analysis is presented as follows.

\textbf{Step 1: double U-statistic structure}

We reiterate the double U-statistic structure of $\vu$  (Proposition \ref{prop: double U-statistic: alternative form of phi}) as follows
\begin{align}
\label{eq:linear kernel layer 1: varphi}
\hat V_u =  \binom{n}{2k}^{-1} \sum_{\Skk \subseteq \mathcal{X}_{n}} \psi\left(\Skk \right), \,\,
\psi\left(\Skk \right) = \sum_{d = 1}^{k} w_d \left[\varphi_d \left(\Skk \right) - \varphi_0 \left(\Skk \right)\right],
\end{align}
where $w_1 = [1 + o(1)] (\frac{k^2}{n}) $, $w_d = \calO \left [\frac{1}{d!}(\frac{k^2}{n})^d \right], \forall d = 1, 2, ...., k$; $\varphi_d$ is the asymptotic kernel U-stat defined in Equation \eqref{eq:def:phi}.  
We observe that $w_d$ decays with $d$ fast. 
The following analysis shows that the first summand in Equation \eqref{eq:linear kernel layer 1: varphi} dominates the sum.


\textbf{Step 2: analyze $\varphi_d \left(\Skk \right) - \varphi_0 \left(\Skk \right)$}

We further investigate the cancellation pattern $\varphi_d(\Skk ) - \varphi_0 (\Skk )$ in Equation \eqref{eq:linear kernel layer 1: varphi} by the following proposition.
\begin{proposition}
\label{prop:layer2 cancel in varphi}
Under linear average structure of $h$, i.e., $h(X_1, ..., X_k) = \frac{1}{k} \sum_{i = 1}^k X_i$, 
\begin{align}
\label{eq:phi_d - phi_0 for linear}
\varphi_d \left(\Skk \right) - \varphi_0 \left(\Skk \right)
    = \frac{d}{k^2} \left( \overline{X^2}^{(2k)} - \overline{XY}^{(2k)}\right),
\end{align}
where $\overline{X^2}^{(2k)}:= \frac{1}{2k} \sum_{X_i \in \Skk} X_i^2$ and $\overline{XY}^{(2k)} :=\frac{2}{2k(2k-1)} \sum_{X_i,X_j \in \Skk, j>i}  X_i X_j$.
\end{proposition}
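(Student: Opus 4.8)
The plan is to exploit the bilinearity of the linear average kernel together with the fact that the family of pairs $(S_1,S_2)$ entering $\varphi_d(\Skk)$ is invariant under all permutations of the $2k$ points of $\Skk$. First I would expand $h(S_1)h(S_2)=k^{-2}\sum_{X_p\in S_1}\sum_{X_q\in S_2}X_pX_q$ and split the inner double sum into its diagonal part (the terms with $p=q$, i.e.\ $X_p\in S_1\cap S_2$) and its off-diagonal part (the terms with $p\neq q$), which gives
\[
\varphi_d(\Skk)=\frac{1}{k^2 M_{d,k}}\sum_{S_1,S_2\subset\Skk,\,|S_1\cap S_2|=d}\Bigl(\sum_{X_\ell\in S_1\cap S_2}X_\ell^2+\sum_{X_p\in S_1,\,X_q\in S_2,\,p\neq q}X_pX_q\Bigr).
\]

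Next I would determine the multiplicity with which each monomial $X_\ell^2$ and each monomial $X_pX_q$ ($p\neq q$) appears after the outer sum over admissible pairs is carried out. By the permutation symmetry of $\{(S_1,S_2):S_1,S_2\subset\Skk,\ |S_1\cap S_2|=d\}$, this multiplicity depends only on whether the monomial is diagonal or off-diagonal, so it suffices to count total incidences and divide equally. Each admissible pair contributes $d$ diagonal points, giving total diagonal incidence count $d\,M_{d,k}$ spread over the $2k$ points, hence multiplicity $d\,M_{d,k}/(2k)$; and each admissible pair contributes $|S_1|\,|S_2|-|S_1\cap S_2|=k^2-d$ ordered off-diagonal pairs, giving total count $(k^2-d)M_{d,k}$ spread over the $2k(2k-1)$ ordered pairs $(p,q)$ with $p\neq q$, hence multiplicity $(k^2-d)M_{d,k}/(2k(2k-1))$.

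Substituting these multiplicities and using the identities $\sum_{X_\ell\in\Skk}X_\ell^2=2k\,\overline{X^2}^{(2k)}$ and $\sum_{p\neq q}X_pX_q=2k(2k-1)\,\overline{XY}^{(2k)}$, the factors $M_{d,k}$, $2k$, and $2k(2k-1)$ cancel, leaving
\[
\varphi_d(\Skk)=\frac{d}{k^2}\,\overline{X^2}^{(2k)}+\Bigl(1-\frac{d}{k^2}\Bigr)\overline{XY}^{(2k)}.
\]
Taking $d=0$ gives $\varphi_0(\Skk)=\overline{XY}^{(2k)}$, and subtracting yields $\varphi_d(\Skk)-\varphi_0(\Skk)=\tfrac{d}{k^2}\bigl(\overline{X^2}^{(2k)}-\overline{XY}^{(2k)}\bigr)$, which is exactly Equation \eqref{eq:phi_d - phi_0 for linear}.

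The argument is essentially pure bookkeeping; the only step that needs a moment's thought is the symmetry remark that justifies replacing individual monomial multiplicities by their averages over all points (resp.\ ordered pairs), and that is standard for exchangeable index sets, so I do not anticipate a genuine obstacle. I would also double-check the precise combinatorial value of $M_{d,k}$ from Proposition \ref{prop: double U-statistic: alternative form of phi}, but since it enters only as a common normalizing factor that cancels, its exact form is immaterial to the identity.
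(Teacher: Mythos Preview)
Your proposal is correct and follows essentially the same approach as the paper: both expand $h(S_1)h(S_2)$ into diagonal and off-diagonal monomials, use the permutation symmetry of the admissible pairs to argue that the coefficients on $\overline{X^2}^{(2k)}$ and $\overline{XY}^{(2k)}$ are $d/k^2$ and $1-d/k^2$ respectively, and then subtract. Your counting of multiplicities via total incidences is a slightly more explicit version of the paper's ``proportion'' argument, but the two are the same in substance.
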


Note that in Equation \eqref{eq:phi_d - phi_0 for linear}, the coefficient of $\overline{X^2}^{(2k)}$ and $\overline{XY}^{(2k)}$, $\frac{d}{k^2}$, is resulting from the cancellation.
The proof of above proposition is collected in Appendix \ref{sec:proof:linear kernel prop}, by plugging the explicit form of $h$ into $\varphi(\Skk)$ (Equation \eqref{eq:def:phi}). Combining the results in Equation \eqref{eq:linear kernel layer 1: varphi}, \eqref{eq:linear kernel, cancel in varphi} and \eqref{eq:weights for phi's: 2}, we have 
\begin{align}
\label{eq:linear kernel:layer1 and 2 psi: combine}
\psi\left(\Skk \right) 
    = \frac{1}{k^2} 
    \left( \overline{X^2}^{(2k)} - \overline{XY}^{(2k)}\right)
    \sum_{d = 1}^{k} d w_d
    = [1 + o(1)] \frac{1}{n}  
    \left( \overline{X^2}^{(2k)} - \overline{XY}^{(2k)}\right).
\end{align}

\textbf{Step3: show the leading variance term}

Based on Equation \eqref{eq:linear kernel:layer1 and 2 psi: combine}, we use  classical approach in the U-statistic literature \citep{hoeffding1948}, showing $\sigma_{1, 2k}^2$ dominating $\Var (\hat V_u) = \binom{n}{2k}^{-1} \sum_{c=1}^{2k} \binom{2k}{c} \binom{n-2k}{2k-c} \sigma_{c, 2k}^2$, i.e.,
\begin{align*}
    \Var(\vu) = [1 + o(1)] \frac{4k^2}{n} \sigma_{1, 2k}^2.
\end{align*}
To get our desired upper bound $\Var(\vu) = \calO(\frac{1}{n^3})$, it suffices to  bound $\sigma_{1, 2k}^2$ by $\calO(1/(k^2 n^2))$ and further check $\sigma_{2k, 2k}^2 / (2k \sigma_{1, 2k}^2) = \calO(1)$ \citep{Romano2020}. 
By Proposition \ref{prop:layer3 approach 2, sigma1 and sigma2k} in Appendix \ref{sec:proof:linear kernel prop} we have
\begin{align}
\label{eq:layer3 approach 2, sigma1 and sigma2k}
    \Cov \left(\overline{X^2}^{(2k)} - \overline{XY}^{(2k)},
                \overline{X^2}^{(2k)'} - \overline{XY}^{(2k)'}\right) 
    = \begin{cases}
        \asymp \frac{1}{k^2} & \text{if } |\Skk_1 \cap \Skk_2| = 1; \\
        \calO(\frac{1}{k})    & \text{if } |\Skk_1 \cap \Skk_2| = 2k,
    \end{cases}
\end{align}
Thus, combining the above conclusion \eqref{eq:layer3 approach 2, sigma1 and sigma2k}, the simplification form of $\psi(\Skk)$ \eqref{eq:linear kernel:layer1 and 2 psi: combine}, and the definition of $\sigma_{c, 2k}^2$ \eqref{eq:def:sigma_c^2}, we can bound $\sigma_{1, 2k}^2$ and $\sigma_{2k, 2k}^2$ as 
$
    \sigma_{1, 2k}^2  = \Theta(\frac{1}{n^2 k^2}), \,\,
    \sigma_{2k, 2k}^2  = \calO(\frac{1}{n^2 k}).
$
This implies that 
$\sigma_{2k, 2k}^2 / (2k \sigma_{1, 2k}^2) = \calO(1)$
Thus we conclude that
\begin{align*}
    \Var(\vu) = [1 + o(1)] \frac{4k^2}{n^2} \sigma_{1, 2k}^2 
    = \calO(\frac{1}{n^3}).
\end{align*}

This completes the proof of Lemma \ref{lemma:linear kernel, E(Vu) and Var(Vu)}.
\end{proof}
\begin{remark}
An alternative approach to bound $\Var(\vu)$ is by further simplifying $\vu$ as an order-2 U-statistic: $\hat V_u 
    =\frac{[1 + o(1)]}{n}  
    ( \overline{X^2}^{(n)} - \overline{XY}^{(n)})$.
We present the analysis in Appendix \ref{sec:proof:linear kernel prop}. The insight is that the lower-order structure of kernel $k$ implies the low order structure of $\vu$.
\end{remark}

The reremaining part of this section is organized as follows. 
In Section \ref{sec:intrinsic low order kernel}, we discuss the generalization of this 3-step analysis for low order kernel $h$ (see Definition \ref{def:low order kernel}). 
In Section \ref{sec:difficulities and tools for general h} we show the issues of adapting this the 3-step analysis for a general kernel $h$ (without any low order structure); then, we discuss the motivations of new tools to overcome this difficulty.

\subsection{Kernel \texorpdfstring{$h$}{h} with Low Order Structure}
\label{sec:intrinsic low order kernel}

The analysis in step 1 depends on the double U-statistic structure of $\vu$. It definitely works with any kernel $h$. To generalize theorem \ref{thm:linear kernel, ratio consitency}, it remains to discuss the analysis in steps 2 and 3. 

\begin{definition}[intrinsic low-order kernel]
\label{def:low order kernel}
$\exists  L \in \N^+$, which does not depend on $k$
\begin{align*}
    h(X_1, ..., X_k) = \sum_{l=1}^{L}  h^{(l)} (X_1,..., X_k),
\end{align*}
where $h^{(l)}$ has lower order-$l$ structure defined with the fixed order kernel $g^{(l)}$:
\begin{align*}
    h^{(l)}(X_1,...,X_k) =  \binom{k}{l}^{-1} \sum_{i_1 < ... < i_l} g^{(l)} (X_{i_1}, ... , X_{i_l}).
\end{align*}
\end{definition}
For example, if we fix  $L=1$ and $g^{(1)}$ as identical map, then $h$ becomes the linear average kernel we discussed in the last section: $h(X_1,..., X_k) = \frac{1}{k} \sum_{i=1}^{k} X_i$. 
Suppose $h(X_1, ..., X_k)$ has the structure in Definition \ref{def:low order kernel}.
In step 2, similar to Equation \eqref{eq:linear kernel, varphi_d form} and \eqref{eq:linear kernel, cancel in varphi}, we can specify the form of $(\varphi_d - \varphi_0)(\Skk)$ by plugging in $h$'s low order structure.
Therefore, we are still able to have cancellations resulting in the order of $\frac{\poly(d)}{k^2}$.
In step 3, additional $\calO(\frac{1}{n})$ rate comes from the variance of fixed order U-statistic. Similarly, since the summand in $\vu$ is in the form of $h(S_1) h(S_2)$, we can verify that $\vu$ is a linear combination of U-statistic up to kernel order $2L$, where $L$ is the low order parameter in Definition \ref{def:low order kernel}. Therefore, we reduce infinite order U-statistic to finite order U-statistic. 
Note that We only discussed the intrinsic low-order kernel cases without proof because they are trivial extensions of the previous arguments.

\subsection{Discussion about General Kernel \texorpdfstring{$h$}{h}}
\label{sec:difficulities and tools for general h}

For general kernel $h$ without low order structure, we show the difficulty in the 3-step analysis, which motivates our new strategies, including the assumptions (Appendix \ref{sec:append:assump}) and technical lemmas (Appendix \ref{sec:technical lemmas}). 

The first difficulty is that $\varphi_d (\Skk) - \varphi_0(\Skk)$ no longer has a simple expression since  $h(S)$ does not have an explicit low order form. 
However, we still believe that a smaller $d$ can imply a smaller difference between $\varphi_d$ and $\varphi_0$. 
Our remedy is to quantify the implicit cancellation in covariance $\etacd := Cov[\varphi_{d_1}(\Skk_1) - \varphi_{0}(\Skk_1), \varphi_{d_2}(\Skk_2) - \varphi_{0}(\Skk_1)]$ (see Equation \eqref{eq:def:eta}). The further decomposition of $\etacd$ involves the following covariance \eqref{eq:def:rho = Cov(h1h2, h3h4)}
\begin{align*}
    \covh :=Cov[h(S_1) h(S_2), h(S_3) h(S_4)].
\end{align*}
As demonstrated in Section \ref{sec:assumptions and notations}, $\covh$ is determined by 11 free parameters.  Assumption \ref{assumption:low dependency} reduce \emph{DoF} from 11 to 9 of 11, which enables us to further reduce \emph{DoF} to 6 in the  proof of Lemma \ref{lemma:eta_c (d1, d2)} and bound $\etacd = \calO(\fk_c / k^2)$ for finite $c, d_1, d_2$.
Note that a relaxation of Assumption \ref{assumption:low dependency} is presented in Appendix \ref{sec:append:assump}.

The second difficulty is to show that $\sigma_{1, 2k}^2$ (or its upper bound) dominating $\Var(\vu)$ \eqref{eq:Var(Vu) expression}. 
When we have low lower structure of $h$, $\vu$ can be simplified so it is easy to bound $\sigma_{2k, 2k}^2 / \sigma_{1, 2k}^2$. However, a general $h$ does not has a explicit form and we may not bound $\sigma_{2k, 2k}^2 / \sigma_{1, 2k}^2$. Hence, we adopt a fine-grained strategy, bound all $\sigma_{c, 2k}^2$ by a tighter bound on $\sigma_{c, 2k}$ for  $c = 1, 2, .., T_1$ (Proposition \ref{lemma:bound each sigma_c^2 for finite c}) and a looser bound on $\sigma_{c, 2k}$ for $c = T_1 + 1, ..., 2k$ (Proposition \ref{lemma:bound sigma_c^2 for any c}). 
This is more challenging than only focusing on two terms: $\sigma_{1, 2k}^2$ and $\sigma_{2k, 2k}^2$.


\subsection{Proof of the Propositions under Linear Average Kernel}
\label{sec:proof:linear kernel prop}

\begin{proof}[Proof of Prop \ref{prop:layer2 cancel in varphi}. ]
\label{proof:prop:layer2 cancel in varphi}

Consider  $S_1, S_2 \subset \Skk$, s.t. $|S_1 \cap S_2| = d$.
Recall $\varphi_d \left(\Skk \right)  = \frac{1}{M_d} \sum_{S_1, S_2 \subset \Skk, | S_1 \cap S_2 | = d} h(S_1) h(S_2)$, which is a U-stat with asymmetric kernel of $h(S_1) h(S_2)$ s.t. $|S_1, S_2| = d, S_1, S_2 \subset \Skk$.
Let first investigate the form of $h(S_1) h(S_2)$. Then $\varphi_d \left(\Skk \right)$ will be an average.

Let $S_1 = (X_1, ..., X_d, Y_1, ..., Y_{k-d})$, $S_2 = (X_1, ..., X_d, Z_1, ..., Z_{k-d})$, where all $X_i, Y_j, Z_j \in \Skk, i = 1, ..., d, j = 1, ..., k-d$ are independent.
\begin{align}
\label{eq:h(S_1) h(S_2) for linear kernel}
\begin{split}
h(S_1) h(S_2)
    &= \left[ 
        \frac{1}{k}
        \left(\sum_{i = 1}^{d} X_i + \sum_{i = 1}^{k-d} Y_i
        \right)\right]
    \left[ 
        \frac{1}{k}
        \left(\sum_{i = 1}^{d} X_i + \sum_{i = 1}^{k-d} Z_i
        \right)\right]
    \\
    &= \frac{1}{k^2}   
        \left[ 
        \sum_{i = 1}^{d} X_i^2 + 
        \sum_{i = 1}^{d} \sum_{i <j \leq d} 2 X_i X_j +
        \sum_{i = 1}^{d} \sum_{j = 1}^{k-d} X_i (Y_j + Z_j) +
        \sum_{i = 1}^{k-d} \sum_{j = 1}^{k-d} Y_i  Z_j
        \right].
\end{split}
\end{align}

In  Equation \eqref{eq:h(S_1) h(S_2) for linear kernel}, the proportion of squared terms $X_i^2$ is $\frac{d}{k^2}$; the sum of proportions of cross terms  $X_i X_j, X_i Y_j$, $X_i Z_j$ and $Y_i Z_j$ is $1 - \frac{d}{k^2}$.
Since $\varphi(\Skk)$ is in the form of U-statistic, it can be viewed as an average. In this average,  the proportion of squared terms and cross terms remain unchanged as $\frac{d}{k^2}$ and $1 - \frac{d}{k^2}$.
WLOG, we denote $\Skk = (X_1, ..., X_{2k})$. We can derive the expression of $\varphi_d (\Skk)$, for $d = 0, 1, 2, ..., k$:
\begin{align}
\label{eq:linear kernel, varphi_d form}
\varphi_d(\Skk)
    = \frac{d}{k^2} \left(
        \frac{1}{2k} \sum_{i = 1}^{2k} X_i^2
        \right)
    + \frac{k^2 - d}{k^2} \left(
        \frac{2}{2k (2k-1)} \sum_{i = 1}^{2k} \sum_{j > i} X_i X_j
        \right).
\end{align}
Then, by the above equation
\begin{align}
\label{eq:linear kernel, cancel in varphi}
\varphi_d(\Skk) - \varphi_0(\Skk)
    = \frac{d}{k^2} 
    \left[
    \left(
        \frac{1}{2k} \sum_{i = 1}^{2k} X_i^2
        \right)
    -  \left(
        \frac{2}{2k (2k-1)} \sum_{i = 1}^{2k} \sum_{j > i} X_i X_j
        \right)
    \right],
\end{align}
\end{proof}
where $1 - \frac{d}{k^2}$ fraction of cross terms are cancelled.

\begin{proposition}
\label{prop:layer3 var(x bar)}
Given the linear kernel structure: $h(X_1, ..., X_k) = \frac{1}{k} \sum_{i=1}^k X_k$, 
$\Var(\overline{X^2}^{(n)} - \overline{XY}^{(n)}) = \calO(\frac{1}{n})$
\end{proposition}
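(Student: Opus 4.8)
The plan is to recognize that $T_n := \overline{X^2}^{(n)} - \overline{XY}^{(n)}$ is a U-statistic of \emph{fixed} order $2$, so that the Hoeffding decomposition \eqref{eq:Decomposition of Var(U_n) Hoeffding} applies directly without any of the infinite-order machinery. Each $X_i^2$ occurs in exactly $n-1$ of the $\binom{n}{2}$ index pairs, so
\[
\overline{X^2}^{(n)} = \binom{n}{2}^{-1}\sum_{1\le i<j\le n}\frac{X_i^2 + X_j^2}{2},
\qquad
\overline{XY}^{(n)} = \binom{n}{2}^{-1}\sum_{1\le i<j\le n} X_iX_j ,
\]
and hence $T_n = \binom{n}{2}^{-1}\sum_{i<j} g(X_i,X_j)$ with the symmetric kernel $g(x,y) = \frac{1}{2}(x-y)^2$. (Equivalently, $T_n$ is precisely the unbiased sample variance of $X_1,\dots,X_n$; this is a convenient sanity check, and gives $\E T_n = \gamma^2$ since $\E X_1 = 0$.) Everything then reduces to the standard second-order variance computation.

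\textbf{Variance computation.} With $k=2$, \eqref{eq:Decomposition of Var(U_n) Hoeffding} reads
\[
\Var(T_n) = \binom{n}{2}^{-1}\Big[\binom{2}{1}\binom{n-2}{1}\,\xi_{1,2}^2 + \binom{2}{2}\binom{n-2}{0}\,\xi_{2,2}^2\Big] = \frac{2}{n(n-1)}\big[\,2(n-2)\,\xi_{1,2}^2 + \xi_{2,2}^2\,\big],
\]
where, as in \eqref{eq:xi_d,k, Var E(h|X)}, $\xi_{1,2}^2 = \Var\big(\E[g(X_1,X_2)\mid X_1]\big)$ and $\xi_{2,2}^2 = \Var\big(g(X_1,X_2)\big)$. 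Using $\E X_1 = 0$ one computes $\E[g(X_1,X_2)\mid X_1] = \frac{1}{2}(X_1^2 + \gamma^2)$, so $\xi_{1,2}^2 = \frac{1}{4}\Var(X_1^2)$, while $\xi_{2,2}^2 = \frac{1}{4}\Var\big((X_1-X_2)^2\big)$. Both are finite under the (implicit) fourth-moment assumption $\E X_1^4 < \infty$ of this linear-kernel illustration, cf.\ Example \ref{example:linear average kernel: cumulant}.

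\textbf{Conclusion and main obstacle.} Combining the two displays, $\Var(T_n) = \frac{4(n-2)}{n(n-1)}\,\xi_{1,2}^2 + \frac{2}{n(n-1)}\,\xi_{2,2}^2 = \big(1+o(1)\big)\,\frac{4\xi_{1,2}^2}{n} = \calO(1/n)$, which is the claim; one can also check this against the textbook identity $\Var(T_n) = \frac{1}{n}\big(\mu_4 - \frac{n-3}{n-1}\gamma^4\big)$ with $\mu_4 := \E X_1^4$. There is no genuinely hard step here: the key realization is simply that $T_n$ collapses to a finite, order-$2$ U-statistic, after which the classical variance formula finishes the argument. The only items needing a little care are the combinatorial bookkeeping in the first display and noting that a fourth moment is needed for $\xi_{1,2}^2,\xi_{2,2}^2 < \infty$. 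This bound is precisely what supplies the extra $\calO(1/n)$ factor in the alternative route of Step 3, where $\vu$ is first reduced to the order-$2$ U-statistic $\frac{[1+o(1)]}{n}\big(\overline{X^2}^{(n)} - \overline{XY}^{(n)}\big)$, whence $\Var(\vu) = \calO(1/n^3)$.
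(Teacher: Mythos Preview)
Your proof is correct and slightly different in presentation from the paper's. The paper simply expands $\Var(\overline{X^2}^{(n)}-\overline{XY}^{(n)}) = \Var(\overline{X^2}^{(n)}) + \Var(\overline{XY}^{(n)}) - 2\Cov(\overline{X^2}^{(n)},\overline{XY}^{(n)})$ and computes the three pieces directly: $\Var(\overline{X^2}^{(n)}) = \tfrac{1}{n}\Var(X_1^2)$, $\Var(\overline{XY}^{(n)}) = \binom{n}{2}^{-1}\Var^2(X_1)$, and the covariance vanishes because $\E(X_i^2 X_{i'}X_{j'})=0$ whenever $i'\neq j'$. You instead combine the two averages into a single order-$2$ U-statistic with kernel $g(x,y)=\tfrac{1}{2}(x-y)^2$ (which you correctly identify as the unbiased sample variance) and then apply Hoeffding's variance formula \eqref{eq:Decomposition of Var(U_n) Hoeffding}. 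Both routes are elementary; yours is more in the spirit of the surrounding U-statistic framework and yields the exact constant $4\xi_{1,2}^2=\Var(X_1^2)$ in one shot, while the paper's decomposition makes explicit which of the three variance/covariance pieces contribute (and that the cross term is exactly zero). Your remark that a fourth moment is implicitly needed is also a useful observation that the paper leaves tacit.
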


\begin{proof}{Proof of Proposition \ref{prop:layer3 var(x bar)}. }
\label{proof:prop:layer3 var(x bar)}
\begin{align*}
    \Var(\overline{X^2}^{(n)} - \overline{XY}^{(n)}) 
    =  \Var(\overline{X^2}^{(n)}) + \Var(\overline{XY}^{(n)}) 
        - 2\Cov(\overline{X^2}^{(n)}, \overline{XY}^{(n)}).
\end{align*}
Since $\E(X) = 0$ and $\E(X_i X_j, X_i' X_j') \neq 0$ iff $i = i', j = j'$, we it is trivial to verify that 
\begin{align*}
     \Var(\overline{X^2}^{(n)}) = \frac{1}{n} \Var(X_1^2), \,\,
     \Var(\overline{XY}^{(n)}) = \binom{n}{2}^{-1} \Var^2(X_1), \,\,\text{and}\,\,
     \Cov(\overline{X^2}^{(n)}, \overline{XY}^{(n)}) = 0.
\end{align*}
Thus, $\Var(\overline{X^2}^{(n)} - \overline{XY}^{(n)}) = \calO(\frac{1}{n})$.

In particular, we decompose $\Var(\overline{XY}^{(n)})$ into  $\binom{n}{2}^2$ pairs of covariance,  where only $\binom{n}{2}$ pairs have non-zero. Besides,
for $\Cov(\overline{X^2}^{(n)}, \overline{XY}^{(n)})$, every  $\Cov(X_i^2, X_i' X_j') = \E(X_i^2, X_i' X_j') = 0$.

\end{proof}

\begin{proposition}
\label{prop:layer3 approach 2, sigma1 and sigma2k}
\begin{align}
    \Cov \left(\overline{X^2}^{(2k)} - \overline{XY}^{(2k)},
                \overline{X^2}^{(2k)'} - \overline{XY}^{(2k)'}\right) 
    = \begin{cases}
        \asymp \frac{1}{k^2}, & \text{when } |\Skk_1 \cap \Skk_2| = 1; \\
        \calO(\frac{1}{k}),    & \text{when } |\Skk_1 \cap \Skk_2| = 2k,
    \end{cases}
\end{align}
where
$\overline{X^2}^{(2k)} =  \frac{1}{2k} \sum_{X_i \in \Skk_1}^n X_i^2$;
$\overline{XY}^{(2k)} =  \frac{1}{k(2k-1)} \sum_{X_i, X_j \in \Skk_1, i < j} X_i X_j$

$\overline{X^2}^{(2k)'} =  \frac{1}{2k} \sum_{X_i \in \Skk_2}^n X_i^2$;
$\overline{XY}^{(2k)'} =  \frac{1}{k(2k-1)} \sum_{X_i, X_j \in \Skk_2, i < j}  X_i X_j$
\end{proposition}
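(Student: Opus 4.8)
The plan is to expand the covariance by bilinearity and reduce it to a handful of elementary moment computations, using the zero-mean assumption $\E(X_1) = 0$ to annihilate almost every term. Set $A_1 := \overline{X^2}^{(2k)}$, $A_2 := \overline{X^2}^{(2k)'}$, $B_1 := \overline{XY}^{(2k)}$, $B_2 := \overline{XY}^{(2k)'}$, so the target equals $\Cov(A_1, A_2) - \Cov(A_1, B_2) - \Cov(B_1, A_2) + \Cov(B_1, B_2)$. Each term is a normalized double sum of elementary covariances $\Cov(X_i^2, X_j^2)$, $\Cov(X_i^2, X_\ell X_m)$ (with $\ell \neq m$), and $\Cov(X_i X_j, X_\ell X_m)$ (with $i \neq j$, $\ell \neq m$), where the indices of the first factor range over $\Skk_1$ and those of the second over $\Skk_2$.

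First I would record the three elementary identities, all immediate from independence and $\E(X_1)=0$: $\Cov(X_i^2, X_j^2) = \Var(X_1^2)\,\mathbbm{1}\{i=j\}$; $\Cov(X_i^2, X_\ell X_m) = 0$ for every $i$ when $\ell\neq m$ (any surviving pairing would leave $X_\ell$ or $X_m$ with odd multiplicity, hence a mean-zero factor, so $\E X_1^3$ never enters); and $\Cov(X_i X_j, X_\ell X_m) = \gamma^4\,\mathbbm{1}\{\{i,j\}=\{\ell,m\}\}$ when $i\neq j$, $\ell\neq m$. Consequently $\Cov(A_1, B_2) = \Cov(B_1, A_2) = 0$, and the target collapses to $\Cov(A_1, A_2) + \Cov(B_1, B_2)$.

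For $|\Skk_1\cap\Skk_2| = 1$, let $X_1$ denote the shared observation. In $\Cov(A_1, A_2) = \frac{1}{4k^2}\sum_{i\in\Skk_1}\sum_{j\in\Skk_2}\Cov(X_i^2,X_j^2)$ the only surviving term is $i = j = 1$, so $\Cov(A_1,A_2) = \Var(X_1^2)/(4k^2)$; in $\Cov(B_1,B_2)$ a surviving term needs an unordered pair $\{i,j\}$ lying in both $\Skk_1$ and $\Skk_2$, i.e.\ inside the one-element set $\Skk_1\cap\Skk_2$, which is impossible, so $\Cov(B_1,B_2)=0$. Hence the covariance is $\Var(X_1^2)/(4k^2) = \Theta(k^{-2})$, the lower bound requiring the mild nondegeneracy $\Var(X_1^2)>0$ (the degenerate case $X_1\in\{\pm c\}$ a.s.\ should be excluded or flagged separately). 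For $|\Skk_1\cap\Skk_2| = 2k$ we have $A_1=A_2=:A$ and $B_1=B_2=:B$, so the covariance equals $\Var(A)+\Var(B)$; counting surviving configurations gives $\Var(A) = \Var(X_1^2)/(2k)$ and $\Var(B) = \gamma^4\binom{2k}{2}/(k(2k-1))^2 = \gamma^4/(k(2k-1))$, whose sum is $\Theta(k^{-1})$, in particular $\calO(k^{-1})$.

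There is no substantive obstacle here; the proof is a bookkeeping exercise. The two points requiring care are (i) confirming that every third-moment term drops out, so that asymmetry of the underlying law is irrelevant — this is precisely because each such term multiplies a lone mean-zero factor — and (ii) matching the normalizers $1/(2k)$ and $1/(k(2k-1)) = 1/\binom{2k}{2}$ to the number of surviving index patterns, which is most delicate in the $c = 2k$ case where the object becomes a genuine variance rather than a covariance. The only genuinely optional ingredient is the nondegeneracy assumption needed for the $\Theta$ (rather than $\calO$) conclusion when $c = 1$.
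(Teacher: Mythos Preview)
Your proposal is correct and follows essentially the same approach as the paper: expand by bilinearity, use the zero-mean assumption to kill the mixed $\Cov(\overline{X^2},\overline{XY})$ terms, and then count surviving index configurations in the two remaining pieces. Your treatment is in fact slightly more complete than the paper's, since you compute the $c=2k$ case directly (the paper invokes an earlier proposition with $n\to 2k$) and you correctly flag the nondegeneracy condition $\Var(X_1^2)>0$ needed for the $\Theta$ lower bound, which the paper leaves implicit.
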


\begin{proof}[Proof of Proposition \ref{prop:layer3 approach 2, sigma1 and sigma2k}. ]
\label{proof:prop:layer3 approach 2, sigma1 and sigma2k}
Under linear average structure of $h$, i.e., $h(X_1, ..., X_k) = \frac{1}{k} \sum_{i = 1}^k X_i$, 
\begin{align*}
\Cov \left(\overline{X^2}^{(2k)} - \overline{XY}^{(2k)},
        \overline{X^2}^{(2k)'} - \overline{XY}^{(2k)'}\right).
\end{align*}

\emph{Part 1: $|\Skk_1 \cap \Skk_2| = 1$}

W.L.O.G, assume $\Skk_1 = (X_1, X_2, ..., X_k), \Skk_1 = (X_1, X_2', ..., X_k')$, $\Skk_1 \cap \Skk_2 = X_1$, $X_1, .., X_k, X_2', ..., X_k'$ are independent.

For $\Cov(A, B)$, let us only consider $A$, $B$ have overlap; otherwise, it is 0. Therefore, three parts of covariance can be simplified as
\begin{align*}
    &\Cov \left(\overline{X^2}^{(2k)},  \overline{X^2}^{(2k)'}\right) 
    = \frac{1}{4k^2} \Var (X_1^2), \\
    &\Cov \left(\overline{XY}^{(2k)},  \overline{XY}^{(2k)'}\right)
    = \Cov\left(\frac{2}{2k-1} \sum_{j=2}^{k} X_1 X_j, \frac{2}{2k-1}\sum_{j=2}^{2k} X_1 X_j'\right) = 0,
    \\
    & \Cov \left(\overline{X^2}^{(2k)}, \overline{XY}^{(2k)'}\right)
    = \Cov \left(\frac{1}{4k^2} X_1^2, \frac{2}{2k-1}\sum_{j=2}^{2k} X_1 X_j'\right) = 0.
\end{align*}
Since $\Var(X_1) > 0$, we have 
$$\Cov \left(\overline{X^2}^{(2k)} - \overline{XY}^{(2k)},
        \overline{X^2}^{(2k)'} - \overline{XY}^{(2k)'}\right) 
        \asymp \frac{1}{k^2}.$$

\emph{Part 2: $|\Skk_1 \cap \Skk_2| = 2k$}

We borrow the proof of Proposition \ref{prop:layer3 var(x bar)} by replacing  $n$ with $2k$. In other words, we get coefficient $\calO(\frac{1}{k})$ because this is the variance of a fixed-order kernel with $2k$-sample U-statistic.
We have 
\begin{align*}
    \Cov \left(\overline{X^2}^{(2k)} - \overline{XY}^{(2k)},
        \overline{X^2}^{(2k)'} - \overline{XY}^{(2k)'}\right) 
        = \Var\left(\overline{X^2}^{(2k)} - \overline{XY}^{(2k)}\right)
        = \calO(\frac{1}{k}).
\end{align*}
\end{proof}



\section{Additional Simulation Results}
\label{sec:append:simu}

\subsection{Ground Truth in the Simulation}

As mentioned in Section \ref{sec:subsec:simulation setting}, we simulate the ground truth of the expectation of forest predictions: $\E (f(\bm x^*))$ and the variance of forest predictions: $\Var (f(\bm x^*))$ by 10000 simulations. 
Since variance estimators are produced by different packages, we use the corresponding package to generate their ground truth.
The result of the central testing sample (see Section \ref{sec:simulation}) is presented in Table \ref{tab:ground truth E} and Table \ref{tab:ground truth Var}. We observe that there is a small difference between different packages though similar tunning parameters are used to train random forests.

In addition, we present the ``oracle'' CI coverage rate in  Table \ref{tab:CI constructed by true variance}, which  matches $1 - \alpha$. To construct these CIs,  we still use the random forest prediction over 1000 simulations but replace the estimated variance with the ``true variance'', $\Var (f(\bm x^*))$. 
This result also shows the normality of the random forest predictor.

\begin{table}[htbp]
\centering
\caption{Ground Truth of $\E (f(\bm x^*))$ evaluated on central testing sample by 10000 simulations. The number in the bracket is the standard deviation of $\E (f(\bm x^*))$. }
\label{tab:ground truth E}
\begin{tabular}{lll|ll|ll} 
\hline
\begin{tabular}[c]{@{}l@{}}\\\end{tabular} &                                   &        & \multicolumn{2}{c|}{MARS}     & \multicolumn{2}{c}{MLR}          \\ 
\hline
\multicolumn{2}{c}{Tree size}                                                  & nTrees & \if0\blind {RLT}\fi \if1\blind{our package}\fi    & grf/ranger   & \if0\blind {RLT}\fi \if1\blind{our package}\fi     & grf/ranger     \\ 
\hline
\multirow{6}{*}{$k \leq n/2$}              & \multirow{2}{*}{$k=n/2$}          & 2000   & 17.82
  (0.01) & 18.18 (0.01) & 0.503
  (0.004) & 0.498 (0.004)  \\
                                           &                                   & 20000  & 17.82 (0.01)   & 18.18 (0.01) & 0.503 (0.004)   & 0.499 (0.004)  \\ 
\cline{2-7}
                                           & \multirow{2}{*}{$k=n/4$}          & 2000   & 17.45 (0.01)   & 18.00 (0.01) & 0.503 (0.003)   & 0.468 (0.003)  \\
                                           &                                   & 20000  & 17.45 (0.01)   & 18.00 (0.01) & 0.503 (0.003)   & 0.468 (0.003)  \\ 
\cline{2-7}
                                           & \multirow{2}{*}{$k=n/8$}          & 2000   & 17.41 (0.01)   & 18.19 (0.01) & 0.503 (0.002)   & 0.424 (0.002)  \\
                                           &                                   & 20000  & 17.41 (0.01)   & 18.18 (0.01) & 0.503 (0.002)   & 0.424 (0.002)  \\ 
\hline
\multirow{2}{*}{$k > n/2$}                 & \multirow{2}{*}{$k=4n/5$} & 2000   & 18.21 (0.01)   & 18.19 (0.01) & 0.499 (0.005)   & 0.498 (0.005)  \\
                                           &                                   & 20000  & 18.21 (0.01)   & 18.19 (0.01) & 0.498 (0.005)   & 0.498 (0.005)  \\
\hline
\end{tabular}
\end{table}

\begin{table}[htbp]
\centering
\caption{Ground Truth of $\Var (f(\bm x^*))$ evaluated on the central testing sample by 10000 simulations.}
\label{tab:ground truth Var}
\begin{tabular}{lll|ll|ll} 
\hline
\begin{tabular}[c]{@{}l@{}}\\\end{tabular} &                                   &        & \multicolumn{2}{c|}{MARS} & \multicolumn{2}{c}{MLR}   \\ 
\hline
\multicolumn{2}{c}{Tree size}                                                  & nTrees & \if0\blind {RLT}\fi \if1\blind{our package}\fi & grf/ranger  & \if0\blind {RLT}\fi \if1\blind{our package}\fi & grf/ranger  \\ 
\hline
\multirow{6}{*}{$k \leq n/2$}              & \multirow{2}{*}{$k=n/2$}          & 2000   & 0.859       & 0.814       & 0.130        & 0.132       \\
                                           &                                   & 20000  & 0.851       & 0.811       & 0.129       & 0.131       \\ 
\cline{2-7}
                                           & \multirow{2}{*}{$k=n/4$}          & 2000   & 0.523       & 0.527       & 0.075       & 0.077       \\
                                           &                                   & 20000  & 0.517       & 0.519       & 0.074       & 0.077       \\ 
\cline{2-7}
                                           & \multirow{2}{*}{$k=n/8$}          & 2000   & 0.349       & 0.378       & 0.044       & 0.044       \\
                                           &                                   & 20000  & 0.342       & 0.370        & 0.043       & 0.044       \\ 
\hline
\multirow{2}{*}{$k > n/2$}                 & \multirow{2}{*}{$k=4n/5$} & 2000   & 1.334       & 1.348       & 0.214       & 0.213       \\
                                           &                                   & 20000  & 1.331       & 1.341       & 0.213       & 0.212       \\
\hline
\end{tabular}
\end{table}

\begin{table}[htbp]
\centering
\caption{$90\%$ CI Coverage Rate averaged on 50 testing samples, where the true variance is used in constructing the CI. The number in the bracket is the standard deviation of coverage over 50 testing samples. IJ estimator is performed by \texttt{grf} package when $k \leq  n/2$ and \texttt{ranger} package when $k > n/2$.}
\label{tab:CI constructed by true variance}
\resizebox{\textwidth}{!}{
\begin{tabular}{lll|ll|ll}
\hline
  & \multicolumn{1}{l}{}      &        & \multicolumn{2}{c|}{MARS}           & \multicolumn{2}{c}{MLR}             \\ \hline
\multicolumn{2}{c}{Tree size} & nTrees & \if0\blind {RLT}\fi \if1\blind{our package}\fi      & grf/ranger       & \if0\blind {RLT}\fi \if1\blind{our package}\fi      & grf/ranger       \\ \hline
\multirow{6}{*}{$k \leq n/2$} & \multirow{2}{*}{$k=n/2$}                              & 2000 & 90.12\% (0.93\%) & 90.00\% (0.97\%) & 89.97\% (0.86\%) & 90.04\% (0.99\%) \\
  &                           & 20000  & 90.10\% (0.96\%) & 89.95\% (0.97\%) & 89.97\% (0.88\%) & 89.96\% (1.00\%) \\ \cline{2-7} 
  & \multirow{2}{*}{$k=n/4$}  & 2000   & 89.87\% (0.76\%) & 89.69\% (0.84\%) & 90.07\% (1.03\%) & 90.06\% (1.25\%) \\
  &                           & 20000  & 89.83\% (0.78\%) & 89.63\% (0.82\%) & 90.14\% (1.04\%) & 89.98\% (1.21\%) \\ \cline{2-7} 
  & \multirow{2}{*}{$k=n/8$}  & 2000   & 89.53\% (0.78\%) & 89.35\% (0.85\%) & 90.22\% (1.13\%) & 89.91\% (1.17\%) \\
  &                           & 20000  & 89.38\% (0.89\%) & 89.28\% (0.85\%) & 90.20\% (1.12\%) & 89.78\% (1.23\%) \\ \hline
\multirow{2}{*}{$k > n/2$}    & \multicolumn{1}{l}{\multirow{2}{*}{$k=\frac{4}{5}n$}} & 2000 & 90.05\% (0.94\%) & 90.02\% (0.97\%) & 89.86\% (1.05\%) & 89.94\% (0.98\%) \\
  & \multicolumn{1}{l}{}      & 20000  & 90.05\% (1.01\%) & 90.00\% (0.96\%) & 89.88\% (0.98\%) & 89.86\% (0.97\%) \\ \hline
\end{tabular}
}
\end{table}

\subsection{Figures of MLR model}

Figure \ref{fig:3 panel results, MLR} shows the performance of different methods on the MLR model. This is a counterpart of Figure \ref{fig:3 panel results} in Section \ref{sec:simulation}. 

\begin{figure}[htbp]
    \centering
      \begin{minipage}[b]{\textwidth}
        \includegraphics[width=\textwidth]{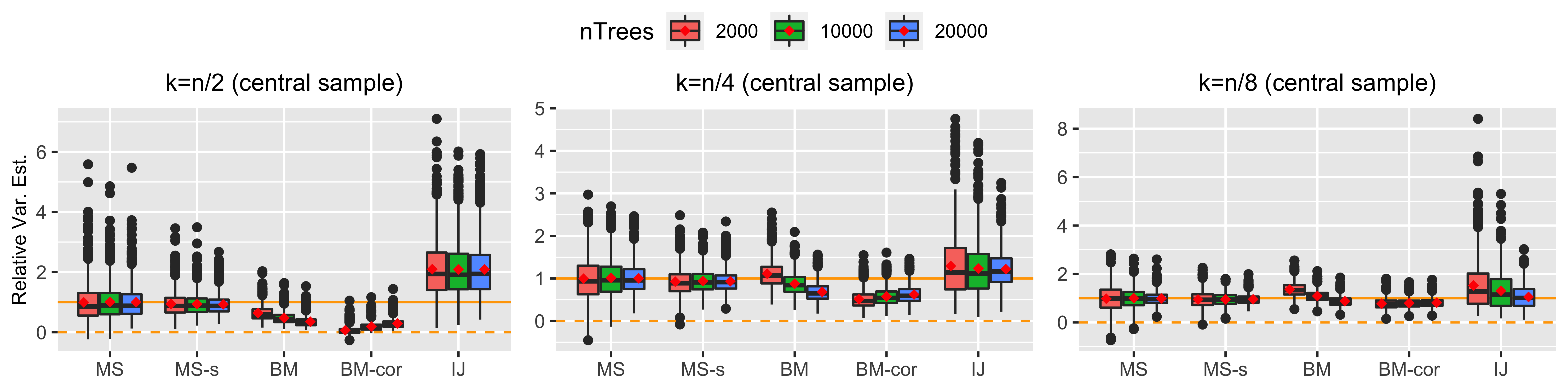}
      \end{minipage}
      \hfill
      \begin{minipage}[b]{\textwidth}
        \includegraphics[width=\textwidth]{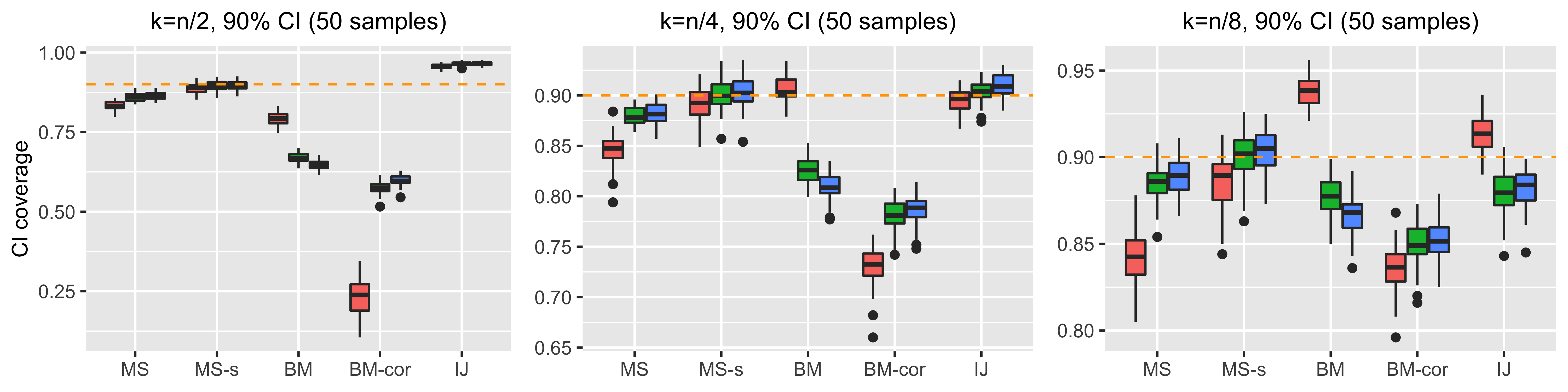}
      \end{minipage}
      \hfill
        \begin{minipage}[b]{\textwidth}
        \includegraphics[width=\textwidth]{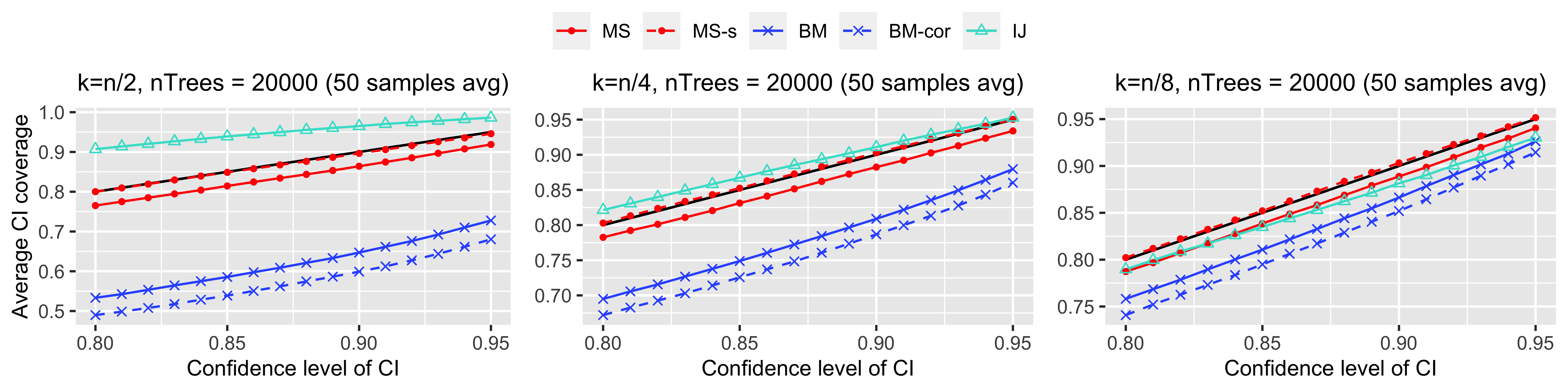}
      \end{minipage}
    \caption{A comparison of different methods on MLR  data. 
    Each column of figure panel corresponds to one tree size: $k = n/2, n/4, n/8$.
    The first row: boxplots of relative variance estimators of the central test sample over 1000 simulations. The diamond symbol in the boxplot indicates the mean.
    The second row:  boxplots of 90\% CI coverage for 50 testing samples. 
    For each method, three side-by-side boxplots represent {\nTrees} as 2000, 10000, 20000. 
    The third row: the coverage rate averaged over 50 testing samples with 20000 {\nTrees}  and the confidence level
    (x-axis) from 80\% to 95\%. The black reference line $y=x$ indicates the desired coverage rate.}
    \label{fig:3 panel results, MLR}
\end{figure}

\section{Additional Information and Results on the Real Data}
\label{sec:append:real}

Table \ref{tab:real data info} describes the covariates of Airbnb data in Section \ref{sec:real data}. 
We use the samples with the price falling in the interval $(0, 500]$ dollars. 
The missing values (NA) in the rating score and bathroom number are replaced. The ``having rating'' covariate is created based on the ``review number''.

\begin{table}[htbp]
\centering
\caption{Covariates information of Airbnb data.}
\label{tab:real data info}
\begin{tabular}{ll}
\hline
Covariate Name & Description \\ \hline
latitude & Latitude of the Airbnb unit. \\
longitude & Longitude of the Airbnb unit. \\
room type & Three types (with \# of samples): Entire home/apt (5547), Private \\
 & room (1839) and Shared room (129). \\
bedroom number & Number of bedrooms in this unit. \\
bathroom number & Number of bathrooms in this unit.  NA values are replaced by 0. \\
accommodates & Maximum accommodates of this unit. \\
reviews number & The number of reviews of this unit. \\
having a rating & It is 1 if the number of reviews is greater than 0; and is 0 otherwise. \\
rating score & The average rating score. NA is replaced by the average score. \\ \hline
\end{tabular}
\end{table}

To train the random forest model, we set {\mtry} (number of variables randomly sampled as candidates at each split) as 3, and set \texttt{nodesize} parameter as 36. 
Here we also present the details of testing samples. The latitude and longitude of SEA Airport, Seattle downtown, and Mercer Island are (47.4502, -122.3088),  (47.6050, -122.3344), and (47.5707, -122.2221) respectively. The ``room type''  ``accommodates'' and ``having a rating'' are fixed as  ``Entire home/apt'', the double of ``bedroom numbers'', and 1 respectively.  We use averages in the training data as the values of ``reviews number'' and ``rating score''.

\end{appendices}


\end{document}